%% file: main.tex
\documentclass{article}
\usepackage{microtype}
\usepackage{graphicx}
\usepackage{subcaption}
\usepackage{booktabs} 
\usepackage[hidelinks]{hyperref}

\usepackage[preprint]{neurips_2026}
\usepackage{algorithm}
\usepackage{algorithmic}

\usepackage[utf8]{inputenc} 
\usepackage[T1]{fontenc}    
\usepackage{natbib}
\usepackage{amsthm}
\usepackage{amsfonts}       
\usepackage{amsmath}
\usepackage{amssymb}
\usepackage{mathtools}
\usepackage{mathrsfs}
\usepackage{hyperref}       
\usepackage{url}            

\usepackage{nicefrac}       
\usepackage{microtype}      
\usepackage{xcolor}         
\usepackage{comment}

\usepackage{svg}
\usepackage{amsmath}
\usepackage{algorithm}
\usepackage{algorithmic}
\usepackage{enumerate}
\usepackage{times}
\usepackage{dsfont}
\usepackage{caption,color}

\usepackage{ulem}
\usepackage{units}

\newcommand{\cA}{\mathcal{A}}

\newcommand{\cD}{\mathcal{D}}
\newcommand{\cE}{\mathcal{E}}
\newcommand{\cF}{\mathcal{F}}
\newcommand{\cG}{\mathcal{G}}

\newcommand{\frakc}{\mathfrak{c}}

\newcommand{\frakb}{\mathfrak{b}}
\newcommand{\fraka}{\mathfrak{a}}
\newcommand{\frakd}{\mathfrak{d}}
\newcommand{\frake}{\mathfrak{e}}

\newcommand{\cM}{\mathcal{M}}
\newcommand{\at}{\tilde{\alpha}}

\newcommand{\cQ}{\mathcal{Q}}

\newcommand{\cR}{\mathcal{R}}
\newcommand{\cS}{\mathcal{S}}
\newcommand{\cT}{\mathcal{T}}
\newcommand{\cX}{\mathcal{X}}
\newcommand{\cY}{\mathcal{Y}}
\newcommand{\cZ}{\mathcal{Z}}

\newcommand{\bE}{\mathbb{E}}
\newcommand{\bN}{\mathbb{N}}
\newcommand{\bP}{\mathbb{P}}
\newcommand{\bR}{\mathbb{R}}

\newcommand{\ust}{^{\star}}

\newcommand{\reg}{\mathrm{reg}}
\newcommand{\regn}{\mathrm{reg}^{(n)}}

\newcommand{\eps}{\epsilon}

\newcommand{\seg}{S$\eps$G}

\newtheorem{assumption}{Assumption}
\newtheorem{condition}{Condition}
\newtheorem{remark}{Remark}

\newcommand{\br}[1]{\left(#1\right)}
\newcommand{\flbr}[1]{\left\{#1\right\}}

\newcommand{\ind}[1]{\mathbf{1}{\flbr{#1}}}

\newcommand{\mycomment}[1]{}

\newcommand{\rs}[1]{\textcolor{red}{#1}}

\newcommand{\al}[1]{\begin{align}#1\end{align}}
\newcommand{\nal}[1]{\begin{align*}#1\end{align*}}
\newcommand{\s}{^{(B)}}
\newcommand{\es}{^{(S \epsilon G)}}
\newtheorem{theorem}{Theorem}[section]
\newtheorem{lemma}[theorem]{Lemma}

\newtheorem{definition}[theorem]{Definition}

\newtheorem{proposition}{Proposition}

\title{Regret and Sample Complexity of Online Q-Learning via Concentration of Stochastic Approximation with Time-Inhomogeneous Markov Chains}

\author{
  Rahul Singh \thanks{MBZUAI, UAE}
  \\
  \And
  Siddharth Chandak \thanks{Stanford University, USA}\\
  \And
  Eric Moulines 
  \footnotemark[1]
  \thanks{EPITA, France}
  \\
\And
Vivek S. Borkar \thanks{Indian Institute of Technology Bombay, India}\\
\And
Nicholas Bambos\footnotemark[2]
}

\begin{document}

\maketitle

\begin{abstract}
We present the first regret bound for classical online Q-learning in infinite-horizon discounted Markov decision processes (MDPs), without relying on optimism or bonus terms. We first analyze Boltzmann Q-learning with decaying temperature and show that its regret depends critically on the suboptimality gap of the MDP: for sufficiently large gaps, the regret is sublinear, while for small gaps it deteriorates and can approach linear growth. To address this limitation, we study a Smoothed $\epsilon_n$-Greedy exploration scheme that combines $\epsilon_n$-greedy and Boltzmann exploration, for which we prove a gap-robust regret bound of near-$\tilde{O}(N^{9/10})$. We also obtain sample complexity guarantees, with both regret and sample complexity bounds holding with high probability. To analyze these algorithms, we develop a high-probability concentration bound for contractive Markovian stochastic approximation with iterate- and time-dependent transition dynamics. This bound may be of independent interest as the contraction factor in our framework is allowed to converge to one asymptotically.
\end{abstract}

\input{introduction}
\input{stochastic_approximation}
\input{q-learning}

\input{q-learning_performance}
\newpage
\bibliography{ref}
\bibliographystyle{abbrvnat}
\newpage
\appendix
\input{appendix_organization}

\input{proof_sketch}

\input{appendix_SA}

\input{appendix_SA_auxiliary}

\input{appendix_Q-learning}
\input{appendix_q-learning_auxiliary}
\input{appendix_eps_softmax}
\input{appendix_eps_softmax_auxiliary}
\input{appendix_miscellaneous}
\end{document}

%% file: introduction.tex
\section{Introduction}\label{sec:intro}
Reinforcement Learning (RL) provides a framework for sequential decision-making and has enabled major advances across domains ranging from large language models (LLMs)~\citep{ouyang2022training} to robotics~\citep{levine2016end} and healthcare~\citep{yu2021reinforcement}. Among the many RL algorithms developed over the past decades, Q-learning~\citep{watkins1992qlearning} remains a seminal method due to its simplicity and model-free nature, and underlies modern approaches such as Deep Q-Networks (DQN)~\citep{mnih2015human}. Classical analyses~\citep{tsitsiklis1994asynchronous,borkar2000ode} established guarantees only in the asymptotic regime, which provides limited insight in practice. This limitation motivates a growing body of finite-time analyses~\citep{qu2020finite,li2024q}.

In this work, we study the sample complexity and regret of online Q-learning in infinite-horizon discounted Markov decision processes (MDPs).~The majority of existing work on the sample complexity of Q-learning focuses on the offline setting, where state-action trajectories are generated by a fixed behavioral policy and the objective is to estimate the optimal $Q\ust$~\citep{qu2020finite,lisampleit,li2024q}. In contrast, online Q-learning uses the evolving Q-value estimates to determine the actions chosen by the agent. Offline analyses primarily focus on controlling the estimation error $\|Q_n - Q\ust\|_\infty$ and, as such, do not provide guarantees on the performance during learning. On the other hand, existing regret analyses for online Q-learning in this setting either consider model-based methods~\citep{he2021nearly} or variants of Q-learning augmented with optimism~\citep{ji2023regret}. 

Despite the practical importance of online Q-learning with simple exploration strategies, prior work does not provide regret or sample complexity guarantees for classical, fully model-free Q-learning without optimism terms. Most notably, Deep Q-Networks (DQN)~\citep{mnih2015human,mnih2013playing}, one of the most influential algorithms in modern reinforcement learning, employs $\epsilon$-greedy exploration. Similarly, Boltzmann (softmax) exploration has been widely adopted in RL applications due to its smooth and differentiable exploration behavior~\citep{sutton1998reinforcement}. Yet even for these foundational exploration strategies, no theoretical guarantees on sample complexity or regret have been established. 

To bridge this gap, we establish sample complexity and regret guarantees for Q-learning with standard Q-value updates under two exploration policies. We first analyze the widely used Boltzmann Q-learning algorithm, in which actions are selected according to a Boltzmann exploration rule. We then study a Smoothed $\epsilon_n$-Greedy (\seg) Q-learning algorithm, whose exploration policy is a convex combination of the uniform policy and the softmax policy. The smoothing is introduced solely to facilitate the analysis and does not alter the practical behavior of the algorithm, which closely mirrors $\epsilon$-greedy exploration. Our results rely on a novel concentration bound for a Markovian stochastic approximation (SA) framework that allows the contraction factor to converge to one asymptotically.

\subsection{Main contributions} \label{sec:contribution}
We summarize the main contributions of this work below. In addition to establishing regret and sample complexity guarantees for online Q-learning in infinite-horizon discounted MDPs, we develop a novel analytical framework for Markovian SA that may be of independent interest.

\textbf{Regret Guarantees for Boltzmann and \seg~Q-learning.} We establish the first regret guarantees for classical, model-free Q-learning without relying on additional optimism terms. We first characterize how the regret of Boltzmann Q-learning depends on the suboptimality gap of the Markov decision process, showing that it can be arbitrarily close to linear for small gaps. To address this limitation, we analyze the Smoothed $\epsilon_n$-Greedy (\seg) Q-learning algorithm and show that it achieves a regret bound of near-$\tilde{O}(N^{9/10})$. While this rate is worse than the $\tilde{O}(\sqrt{N})$ dependence achieved by optimism-based methods, it demonstrates that sublinear regret is attainable for unmodified online Q-learning. 

\textbf{Sample Complexity of Online Q-learning.} We provide high-probability sample complexity guarantees exhibiting sub-Gaussian tail behavior for online Q-learning under Boltzmann and \seg~exploration. Our results show how the exploration parameters, specifically the temperature schedule for the softmax exploration and the sequence $\epsilon_n$ for $\epsilon$-greedy policies, affect the convergence rate. Moreover, our concentration bounds enable us to establish almost sure convergence of the Q-learning iterates, which is nontrivial for online Q-learning with decaying exploration. Together with the regret guarantees, these results illustrate the exploration-exploitation tradeoff in Q-learning. 

\textbf{Concentration Bound for SA with Time-Inhomogeneous Markov Chains.} We develop a novel concentration bound for Markovian SA with time-inhomogeneous dynamics, where the transitions depend on the iterate and an external, time-varying control sequence $\zeta_n$. Our framework allows the mixing time to diverge and the contraction factor to approach one. This is essential for analyzing online Q-learning, where suboptimal actions are chosen with vanishing probability as $n \to \infty$. We combine novel Poisson equation techniques with a noise-averaging argument to obtain sharp concentration bounds despite the weakening contraction. Beyond Q-learning, the resulting concentration inequality may be of independent interest for RL and optimization algorithms.


\subsection{Past works}
We review related works on Q-learning, RL and SA below, focusing on sample complexity, regret guarantees, and finite-time analysis of SA. Our results are discussed in relation to these lines of work.

 There is a vast literature on SA focusing on asymptotic analysis (see \citep{kushner2003stochastic, borkar_book}). Motivated by applications in reinforcement learning and optimization, recent work has studied finite-time performance guarantees for SA. These results can be broadly divided into in-expectation bounds and high-probability bounds. Representative works establishing in-expectation bounds for SA include \citep{srikant2019finite, chen2020finite, chen2024lyapunov, chandak_cdc}. In-expectation bounds are typically easier to obtain than high-probability bounds, as the latter require controlling tail behavior rather than expectations. This work focuses on establishing high-probability guarantees.

There is also a growing literature on high-probability guarantees for RL algorithms via contractive SA. Some works \citep{borkar2021concentration, chen2025concentration} do not consider Markovian noise and therefore apply only to synchronous Q-learning. Other works that incorporate Markovian noise \citep{qu2020finite, chen2025concentration, qian2024almost} assume time-homogeneous Markov chains, which limits their applicability to offline Q-learning and algorithms such as TD(0). In contrast, our concentration bound allows for time-inhomogeneous Markov chains, enabling the analysis of online Q-learning.

Q-learning was introduced in~\citep{watkins1992qlearning}, and its almost sure convergence was established in~\citep{jaakkola1994convergence}. There is a large literature on finite-time analysis of offline Q-learning. These include works such as~\citep{beck2013improved, lee2024final, wainwright2019cone, wainwright2019variance, even2003learning}, which establish high-probability convergence rates for \(\|Q_n - Q\ust\|\). More recently, \citet{li2024q} derived a sample complexity of \(\tilde{O}\!\left((1-\gamma)^{-4} \varepsilon^{-2}\right)\) and showed its optimality. All of these works assume that samples are generated from a behavior policy and therefore correspond to the offline setting. Among results based on SA techniques, the best known sample complexity is $\tilde{O}\!\left((1-\gamma)^{-5} \varepsilon^{-2}\right)$~\citep{qu2020finite, chen2025concentration}. We recover this rate as a special case, while our main results apply to the online setting with adaptively generated samples.

Finite-time analyses of online Q-learning without modifying the Q-value updates are relatively scarce. Prior work includes~\citet{nanda2025minimal}, which establishes finite-time bounds on the estimation error for \seg~Q-learning. However, these results are limited to bounds in expectation and thus do not provide high-probability guarantees or imply regret bounds. On the other hand, \citet{chandak2022concentration} derive high-probability guarantees for SA with controlled Markovian noise, where the Markov process depends on the current iterate. Their analysis, however, does not allow the mixing time to diverge or the contraction factor to approach one, which is essential for establishing regret guarantees in online Q-learning. Our results relax these assumptions and strictly generalize this line of work.

Finally, we review regret guarantees for Q-learning in discounted MDPs. Existing regret analyses primarily consider optimism-based or model-based variants of Q-learning. The work of~\citep{ji2023regret} develops a model-free algorithm based on variance reduction and achieves a regret bound of \(O(\sqrt{N/(1-\gamma)^3})\), valid after a burn-in period. \citet{liu2020regret} proposes an optimism-based Double Q-learning algorithm with regret \(O(\sqrt{N/(1-\gamma)^{1.5}})\), while \citet{he2021nearly} introduces the model-based algorithm UCBVI-\(\gamma\) with the same regret scaling. More recently, \citet{kash2024slowly} use techniques from adversarial bandits to obtain a regret bound of \(O(\sqrt{N/(1-\gamma)^6})\). 

It is worth noting that our regret rates, while worse than the \(\tilde{O}(\sqrt{N})\) achieved by optimism-based methods, align with fundamental limitations of exploration without optimism. In particular, for multi-armed bandits, softmax exploration is known to yield linear regret in the worst case~\citep{cesa2017boltzmann}, while $\epsilon$-greedy strategies without optimism achieve regret of \(\tilde{O}(N^{2/3})\)~\citep{panageas2020multiarmedbandits}. Our \(\tilde{O}(N^{9/10})\) bound for \seg~Q-learning demonstrates that sublinear regret remains achievable in the significantly more complex MDP setting without algorithmic modifications.

\subsection{Outline and notation}
The paper is organized as follows. Section~\ref{sec:SA} presents the Markovian SA framework and our concentration bound result. Section~\ref{sec:q_learning} introduces the Q-learning algorithm and defines the regret for infinite-horizon discounted MDPs. Section~\ref{sec:performance} then presents the Boltzmann and \seg~exploration policies, along with sample complexity and regret guarantees. Proofs are deferred to the appendices.

Throughout this work, $\|\cdot\|$ denotes any compatible norm on $\mathbb{R}^d$ while $\|\cdot\|_\infty$ denotes the $\ell_\infty$ norm. We use $\tilde{O}(\cdot)$ notation to suppress logarithmic factors, i.e.,  $f = \tilde{O}(h)$ if $f = O(h \,\mathrm{polylog}(h))$.
 For event $\cE$, $\ind{\cE}$ is its indicator random variable. We use $\mathbf{0}$ for the all-zero vector of appropriate dimension.

%% file: stochastic_approximation.tex
\section{Concentration bounds for stochastic approximation}\label{sec:SA}
In this section, we first introduce the stochastic approximation (SA) formulation and then state the corresponding concentration bound. Consider the iteration
\al{
	x_{n+1}=x_n+\beta_n\left(F(x_n,y_n)-x_n+M_{n+1}\right),\;\; n\geq 0.\label{def:gen_SA}
}
Here, $x_n \in \cX \subset \bR^d$ denotes the iterate, where $\cX$ is a compact set. The function $F: \cX \times \cY \to \bR^d$ represents the update direction. The sequence $\{y_n\}$ is the underlying Markov chain that induces the Markovian noise, and $\{M_{n+1}\}$ is the martingale difference noise. For $\beta,n_0>0$, and $\fraka\in[0,1)$, the stepsize sequence $\{\beta_n\}$ is of the form
\al{
		\beta_n = \frac{\beta}{(n+n_0)^{1-\fraka}}, n\in\bN.
}

We next state the assumptions, which are motivated by online Q-learning and are standard in the SA literature. The first two assumptions concern the Markov chain and its transition dynamics.
\begin{assumption}\label{assum:1}
The process $\{y_n\}$ is a controlled Markov chain with a finite state space $\cY$ whose transition kernel depends on $(x_n,\zeta_n)$, where $\zeta_n \in \cZ \subseteq \bR^{d_1}$. Let $\{\mathcal{F}_n\}_{n \ge 0}$ denote the filtration generated by $\{(y_m, x_m, M_m)\}_{m=0}^n$. Then, for all $n \ge 0$,
\[
\mathbb{P}(y_{n+1} = y' \mid \mathcal{F}_n)
= p^{(\zeta_n, x_n)}(y_n, y') \quad \text{a.s.}
\]
\end{assumption}
The dynamics of $\{y_n\}$ depend on the iterates $\{x_n\}$ and sequence $\{\zeta_n\}$ which represents an external control process. For Boltzmann Q-learning, $\zeta_n$ corresponds to the temperature schedule $\lambda_n$, while for \seg~Q-learning it corresponds to $(\epsilon_n,\lambda_n)$, the exploration and temperature schedules, respectively. 

\begin{assumption}\label{assum:2}
For each $(\zeta,x) \in \cZ \times \cX$, the Markov chain with transition kernel $p^{(\zeta,x)}$ on the state space $\cY$ is irreducible and admits a unique stationary distribution, denoted by $\mu^{(\zeta,x)} = \{\mu^{(\zeta,x)}(i)\}_{i \in \cY}$.~Define
\[
\mu_{\min}(\zeta) := \min_{x \in \cX} \min_{i \in \cY} \mu^{(\zeta,x)}(i).
\]
We assume that $\mu_{\min}(\zeta) > 0$ for all $\zeta \in \cZ$.

Moreover, there exists a designated state $i\ust \in \cY$ such that, for any $(\zeta,x) \in \cZ \times \cX$ and any initial state, the expected hitting time of $i\ust$ is upper bounded by $\frakc_1 / \mu_{\min}(\zeta)$, for some constant $\frakc_1 > 0$.
\end{assumption}
The condition $\mu_{\min}(\zeta) > 0$ is standard for irreducible Markov chains on finite state spaces. The second part of the assumption controls the chain's sensitivity through the expected hitting time of state $i\ust$.~An equivalent condition could be in terms of the mixing time, since for irreducible finite-state Markov chains, bounds on hitting times and mixing times are equivalent up to constants \citep{levin2017markov}. In particular, the expected hitting time of $i\ust$ scales as $1 / \mu^{(\zeta,x)}(i\ust)$, yielding the stated bound of order $1 / \mu_{\min}(\zeta)$. Unlike prior work with controlled Markov chains~\citep{chandak2022concentration}, we do not assume a uniform bound on the hitting time. This is necessary to analyze online Q-learning where hitting times may diverge as suboptimal actions are played with vanishing probability.

We are interested in the fixed point of the stationary average of the map $F(\cdot,\cdot)$. We define the stationary average of the function $F(\cdot,i)$ under the stationary distribution corresponding to $(\zeta,w)$, with $\zeta\in\cZ, w \in \cX$, as
\nal{
    \bar{F}^{(\zeta,w)}(\cdot)
    := \sum_{i \in \cY} \mu^{(\zeta,w)}(i)\, F(\cdot,i).
}
Then we have the following key contractive assumption. 
\begin{assumption}\label{assum:7}
For each $\zeta \in \cZ$ and $w \in \cX$, the mapping $\bar{F}^{(\zeta,w)} : \cX \to \cX$ is contractive with contraction factor $\alpha(\zeta) \in [0,1)$, i.e.,
\[
\bigl\|\bar{F}^{(\zeta,w)}(x) - \bar{F}^{(\zeta,w)}(z)\bigr\|
\;\le\; \alpha(\zeta)\,\|x - z\|,
\quad \forall x,z \in \cX,
\]
where $\|\cdot\|$ denotes any compatible norm on $\bR^d$. Moreover, the fixed point of $\bar{F}^{(\zeta,w)}$ does not depend on $(\zeta,w)$ and we denote this common fixed point by $x\ust$. 
\end{assumption}
The assumption that the stationary averaged map is contractive and admits a common fixed point is standard in the analysis of Q-learning; see, e.g., \citep{qu2020finite,chandak2022concentration}. In the case of Q-learning, this common fixed point is precisely the optimal Q-value function $Q\ust$ and the norm with respect to which the contraction property holds is the $\ell_\infty$ norm. The following assumption specifies how the contraction factor depends on the transition dynamics of the Markov chain.
\begin{assumption}\label{assum:at}
    Let $\at(\zeta) := 1 - \alpha(\zeta)$. We assume that $\at(\zeta) = \frakc_2\, \mu_{\min}(\zeta),$
for some constant $\frakc_2 > 0$. 
\end{assumption}
The above assumption states that the contraction factor  approaches one as $\mu_{\min}(\zeta)$ decays or equivalently as the mixing time explodes. The dependence of the contraction factor on the mixing time is often explicit in analyses of Q-learning and also appears implicitly in other algorithms, such as variants of TD learning~\citep{chandak2025concentration}. We incorporate this dependence directly into the SA framework to extend concentration bounds for SA to time-inhomogeneous Markov chains. Our next assumption characterizes the time-inhomogeneous nature of the Markov chain induced by the control process and specifies how the control process is allowed to vary over time. 
\begin{assumption}\label{assum:3} 
The control process $\{\zeta_n\}$ satisfies the following conditions:
\begin{itemize}
    \item[(a)] \textbf{Time-varying contraction factor.}
    \[ \at(\zeta_n) \ge \frac{\frakc_3}{(n+n_0)^{\kappa_1}}, \quad \frakc_3>0,\kappa_1 \in [0,1]. \]
    \item[(b)] \label{5b}\textbf{Temporal smoothness.} For all $i \in \cY$, $x \in \cX$, and $n \in \bN$, \[ \sum_{j\in\cY} |p^{(\zeta_n,x)}(i,j) - p^{(\zeta_{n+1},x)}(i,j)| \le L_1(n),\] such that $L_1(n)=\frakc_4/(n+n_0)^{1-\kappa_2}$, where $\frakc_4>0,\kappa_2 \in [0,1]$.
    \item[(c)] \textbf{Iterate sensitivity.} For each $\zeta \in\cZ$, the map $x\mapsto p^{(\zeta,x)}(i,j)$ is Lipschitz, i.e., 
	$$
		\sum_{j\in\cY} |p^{(\zeta,x)}(i,j) - p^{(\zeta,x')}(i,j)| \le L_2(\zeta) \|x-x'\|,\quad \forall i\in \cY, x,x' \in \cX
	$$
    Moreover,
    $
    L_2(\zeta_n) \le \frakc_{5} (n+n_0)^{\kappa_3}\log(n+n_0),
    $
    where $\frakc_{5}>0,~\kappa_3 \in [0,1)$.
\end{itemize}
\end{assumption}
We use the same $n_0$ in the stepsize definition and in Assumption~\ref{assum:3} for notational simplicity; these constants can be chosen independently without affecting the analysis. Condition (a) controls how fast the contraction factor approaches one and can be interpreted as a constraint on the mixing time's growth, since the previous assumptions imply $\mu_{\min}(\zeta_n) \ge \frakc_3/(\frakc_2(n+n_0)^{\kappa_1})$. Conditions (b) and (c) ensure that the transition kernel $p^{(\zeta_n,x_n)}$ varies slowly enough to allow effective averaging of the Markovian noise. For Q-learning, $\kappa_1$, $\kappa_2$, and $\kappa_3$ are determined by the exploration policy, and are central to the regret guarantees. While mixing time diverges under vanishing exploration, growth of the Lipschitz constant $L_2(\zeta)$ is also unavoidable for softmax exploration: as the temperature decays, the transition kernel becomes increasingly sensitive to the iterate (see Figure~\ref{fig:boltzmann_sensitivity}). 

Finally, we have the standard assumption that $\{M_{n+1}\}$ is a bounded martingale difference sequence.
\begin{assumption}\label{assum:6}
The process $\{M_{n+1}\}$, where $M_n\in\bR^d$, is an $\bR^d$-valued bounded martingale difference sequence with respect to the filtration $\{\cF_n\}_{n \ge 0}$, i.e., $\mathbb{E}[M_{n+1} \mid \cF_n] = \mathbf{0} \quad \text{a.s.}$ 
\end{assumption}

\subsection{Concentration bound}
We establish the following high-probability concentration bound on $\|x_n-x\ust\|$. 
\begin{theorem}\label{th:main_concentration}
For any $\delta > 0$, iteration~\eqref{def:gen_SA} satisfies the following with probability at least $1 - \frac{\pi^2 d}{6}\delta$:
$$
\|x_n - x\ust\|
\;\le\;
\tilde{O}\left(\sqrt{\frac{\log\!\left(n^2/\delta\right)}{n}}\;
n^{\,2\kappa_1 + \max\{\fraka + \kappa_3,\;\kappa_2\}}\right),$$
for all $n\in\bN$. Moreover, the iterates $x_n$ converge almost surely to the fixed point $x\ust$.
\end{theorem}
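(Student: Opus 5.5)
We follow the standard route for contractive Markovian SA: decompose the noise with a Poisson equation, then propagate through a time-varying contraction. The Poisson equation is indexed by $(\zeta_n,x_n)$. The new ingredient is bookkeeping for how the constants (hitting times, contraction gap, Lipschitz constants) blow up polynomially in $n$.

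\textbf{Step 1: Poisson decomposition.} For each $(\zeta,x)$ and each component $\ell\le d$, let $V^{(\zeta,x)}(\cdot)$ solve
\[
V(i)=F(x,i)-\bar F^{(\zeta,x)}(x)+\sum_j p^{(\zeta,x)}(i,j)V(j).
\]
Using the hitting-time representation anchored at $i\ust$ (Assumption~\ref{assum:2}), we get $\|V\|\le C/\mu_{\min}(\zeta)$. Perturbation bounds for the same representation give
\[
\|V^{(\zeta,x)}-V^{(\zeta',x')}\|\lesssim \mu_{\min}^{-2}\Bigl(\sup_i\|p^{(\zeta,\cdot)}(i,\cdot)-p^{(\zeta',\cdot)}(i,\cdot)\|_1+L_2(\zeta)\|x-x'\|\Bigr).
\]
Under Assumption~\ref{assum:3}(a), $\mu_{\min}(\zeta_n)^{-1}\lesssim n^{\kappa_1}$. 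We then rewrite $F(x_n,y_n)-\bar F^{(\zeta_n,x_n)}(x_n)$ as the sum of three pieces:
\begin{itemize}
\item[(i)] a martingale difference $V_n(y_{n+1})-\sum_j p_n(y_n,j)V_n(j)$, with size $O(n^{\kappa_1})$;
\item[(ii)] a telescoping term $V_n(y_n)-V_{n+1}(y_{n+1})$;
\item[(iii)] a drift term $V_{n+1}(y_{n+1})-V_n(y_{n+1})$, where $V_n:=V^{(\zeta_n,x_n)}$.
\end{itemize}
By Assumption~\ref{assum:3}(b),(c) and $\|x_{n+1}-x_n\|=O(\beta_n)$ (compactness), term (iii) is $O\bigl(n^{2\kappa_1}(n^{-(1-\kappa_2)}+n^{\kappa_3}\log n\,\beta_n)\bigr)=\tilde O\bigl(n^{2\kappa_1-1+\max\{\kappa_2,\fraka+\kappa_3\}}\bigr)$. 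This is where the exponent $2\kappa_1+\max\{\fraka+\kappa_3,\kappa_2\}$ originates.

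\textbf{Step 2: Unrolling with the time-varying contraction.} Write $e_n=x_n-x\ust$. Because $\bar F^{(\zeta_n,x_n)}(x\ust)=x\ust$ and the map is $\alpha(\zeta_n)$-contractive, we have
\[
\|e_{n+1}\|\le(1-\beta_n\at(\zeta_n))\|e_n\|+\beta_n\|\text{noise}_n\|.
\]
To avoid losing the sign structure, we instead unroll the linear part componentwise via the standard trick: compare with the recursion $z_{n+1}=z_n+\beta_n(\text{noise}_n-z_n)$ and apply contraction to $e_n-z_n$. With $\Phi_{m,n}=\prod_{k=m}^{n-1}(1-\beta_k\at(\zeta_k))$, the error becomes a weighted sum of noise terms. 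We treat the three pieces as follows:
\begin{itemize}
\item The telescoping term is handled by summation by parts, costing one power of $\beta_n n^{\kappa_1}$ plus increments of $\beta_k\Phi$.
\item The drift term is bounded deterministically.
\item The martingale terms (i) and $M_{n+1}$ are bounded by Azuma--Hoeffding, applied to each coordinate and each $n$ at confidence $\delta/n^2$. The union bound over $n$ and $d$ coordinates gives total failure probability $\frac{\pi^2 d}{6}\delta$.
\end{itemize}
The variance proxy is $\sum_k\beta_k^2\Phi_{k+1,n}^2 n^{2\kappa_1}\approx \beta_n n^{2\kappa_1}/\at(\zeta_n)$. This yields $\sqrt{\log(n^2/\delta)/n}$ times polynomial factors in $n$ dominated by the stated exponent.

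\textbf{Step 3: Almost sure convergence, and the main obstacle.} Almost sure convergence follows from Borel--Cantelli: choose $\delta_n$ summable, or note that the bound tends to $0$ when the exponent is below $1/2$ (implicitly assumed for nontriviality).

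The main obstacle is Step 2's bookkeeping of sums like $\sum_k\beta_k k^{p}\Phi_{k+1,n}$ when $\at(\zeta_k)\sim k^{-\kappa_1}$ vanishes. For these we need a lemma of the form
\[
\sum_{k\le n}\beta_k k^{p}\Phi_{k+1,n}\lesssim n^{p+\kappa_1}\quad\text{(up to logs)},
\]
valid when $\beta_k\at(\zeta_k)\gtrsim k^{-(1-\fraka+\kappa_1)}$ and $\fraka-\kappa_1>-1$. The regime $\fraka=0,\kappa_1\to1$ will be treated by choosing $\beta$ large or $n_0$ large. We would prove this lemma first by comparing $\Phi_{k,n}$ with $\exp(-\sum\beta_j\at(\zeta_j))$ and splitting the sum at $n/2$.
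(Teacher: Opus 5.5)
Your decomposition is the paper's in different notation. Your $z_n$ is its $x_n-z_n$, your three Poisson pieces are its martingale term and $T_1,T_2,T_3$, your $e_n-z_n$ is its $z_n-x\ust$, and the Azuma, union-bound and Borel--Cantelli steps coincide. However, the lemma you single out as the main obstacle is asserted in a range where it is false. The bound $\sum_{k\le n}\beta_k k^{p}\Phi_{k+1,n}\lesssim n^{p+\kappa_1}$ requires $\sum_k\beta_k\at(\zeta_k)=\infty$, i.e.\ $1-\fraka+\kappa_1\le 1$, i.e.\ $\kappa_1\le\fraka$. This is exactly why the paper's proof of Proposition~\ref{prop:z_xust} only treats $\fraka=\kappa_1$ (Part~1 of Lemma~\ref{lemma:recursion_bound}, with $\beta\frakc_3$ large) and $\fraka>\kappa_1$ (Part~2, with $n_0$ large), which are Cases C and D of Condition~\ref{con:sa1}. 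Your hypothesis $\fraka-\kappa_1>-1$ admits $\fraka<\kappa_1$. There $\beta_k\at(\zeta_k)\asymp k^{-(1-\fraka+\kappa_1)}$ is summable, so $\inf_n\Phi_{0,n}>0$. Consequently the initial error $\Phi_{0,n}\|x_0-x\ust\|$ never decays, and for the relevant exponents ($p+\kappa_1<0$) any fixed early term of your sum already exceeds $n^{p+\kappa_1}$ for large $n$. Enlarging $\beta$ or $n_0$ only changes constants, not summability. The theorem itself fails in this regime. Take $\cY=\{1,2\}$ with $p^{(\zeta,x)}(i,1)=\zeta$, $F(x,1)=0$, $F(x,2)=x$, $M\equiv 0$, $\zeta_n=c(n+n_0)^{-\kappa_1}$ with $c\le 1/2$, and $\fraka=0$. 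Assumptions~\ref{assum:1}--\ref{assum:14} hold with $\at(\zeta)=\mu_{\min}(\zeta)=\zeta$. Yet $x_{n+1}=x_n(1-\beta_n\mathbf{1}\{y_n=1\})$ and $\sum_n\beta_n\zeta_n<\infty$ give $x_n\to x_\infty\neq 0=x\ust$ a.s.\ whenever $x_0\neq 0$. So you must impose $\kappa_1\le\fraka$ (plus a lower bound on $\beta\frakc_3$ when $\kappa_1=\fraka$) and abandon the plan for the regime $\fraka=0$, $\kappa_1>0$.

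Separately, once you introduce $z$, the martingale and telescoping terms are weighted by $\chi(k+1,n)=\prod_{j=k+1}^{n}(1-\beta_j)$, not by $\Phi_{k+1,n}$. A $\Phi$-weighted representation of $e_n$ as a sum of noise vectors exists only for a linear recursion. Also, the telescoping increments are of size $k^{\kappa_1}$ termwise, so their summation by parts has to happen inside $z$ (the paper's $T_1$). The variance proxy is then $\sum_k\beta_k^2\chi(k+1,n)^2k^{2\kappa_1}\lesssim\beta_n n^{2\kappa_1}$. The factor $1/\at(\zeta_n)\sim n^{\kappa_1}$ enters linearly, through $\|e_n-z_n\|\le\Phi_{0,n}\|x_0-x\ust\|+\sum_{k<n}\beta_k\alpha(\zeta_k)\Phi_{k+1,n}\|z_k\|$, not under the square root. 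The martingale contribution is therefore $n^{-1/2+\fraka/2+2\kappa_1}$ rather than your $n^{-1/2+\fraka/2+3\kappa_1/2}$, which still sits inside the stated exponent. With both corrections your route works and is slightly sharper than the paper's. Bounding $\bar F^{(\zeta_n,x_n)}(x_n)-x\ust$ directly by $\alpha(\zeta_n)\|x_n-x\ust\|$ bypasses the paper's $\Delta_n$ step (Lemma~\ref{lemma:Delta}). That step pays an extra factor $L_2(\zeta_n)/\mu_{\min}(\zeta_n)\sim n^{\kappa_1+\kappa_3}\log n$ through Assumption~\ref{assum:14}. You therefore end with $n^{\kappa_1}\bigl(g(n,\delta)+g_1(n)+g_2(n)\bigr)$ instead of $n^{2\kappa_1+\kappa_3}\log(n+n_0)\bigl(g(n,\delta)+g_1(n)+g_2(n)\bigr)$.
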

We make the dependence on $\delta$ explicit here to illustrate how it enters the bound; henceforth, we suppress this dependence as it appears in the same form in all subsequent results. The above result is a maximal or uniform concentration bound which holds for all time $n\in\bN$. We make a few remarks on the conditions under which the bound holds and its dependence on the hyperparameters.
\begin{remark}
The bound in Theorem~\ref{th:main_concentration} holds for sufficiently small stepsize (Appendix~\ref{sec:cond_n0_sa}) and for hyperparameter choices satisfying the conditions in Appendix \ref{app:assu_hyper}. These conditions are mild and are met by standard stepsize schedules and exploration policies used in online Q-learning. 
\end{remark}
\begin{remark}
The optimal rate is achieved for $\fraka = 0$, corresponding to the stepsize $\Theta(1/n)$, a phenomenon commonly observed in the SA literature~\citep{chen2020finite,qu2020finite}. The dependence on $\kappa_1,\kappa_2,\kappa_3$ is also intuitive: smaller $\kappa_1$ (slower increase of the contraction factor) and smaller $\kappa_2$ and $\kappa_3$ (slower variation of the transition kernel) lead to better rates. The resulting optimal convergence rate is
$\tilde{O}\left(\sqrt{\frac{\log(n^2/\delta)}{n}}\right)$
achieved when $\fraka\! =\! \kappa_1 \!=\! \kappa_2\! =\! \kappa_3 \!= 0$. This matches known concentration bounds for SA with time-homogeneous Markov chains~\citep{chen2025concentration}.
\end{remark}

At a high level, the key challenge in our analysis is handling time-varying transition kernels together with a growing contraction factor that approaches one. To address the resulting time-inhomogeneous Markovian noise, we use solutions of the Poisson equation and develop a novel analysis of how these solutions vary as the transition kernel evolves with the iterates and the control process. This is combined with a noise-averaging argument that yields sharp bounds on the decay of the noise despite these variations and allows us to decouple the stochastic fluctuations from the weakening contraction. 

 \begin{figure}[t]
   \centering
     \begin{subfigure}[b]{0.32\linewidth}
         \centering
         \includegraphics[width=\linewidth]{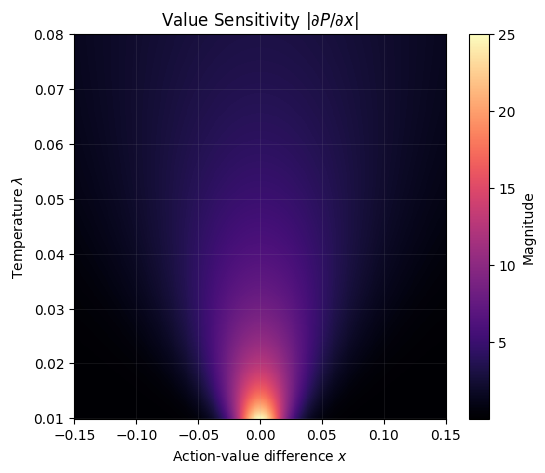}
         \caption{Value Sensitivity $|\frac{\partial P}{\partial x}|$}
         \label{fig:val_sens}
     \end{subfigure}
     \hspace{25pt}
     \begin{subfigure}[b]{0.32\linewidth}
         \centering
         \includegraphics[width=\linewidth]{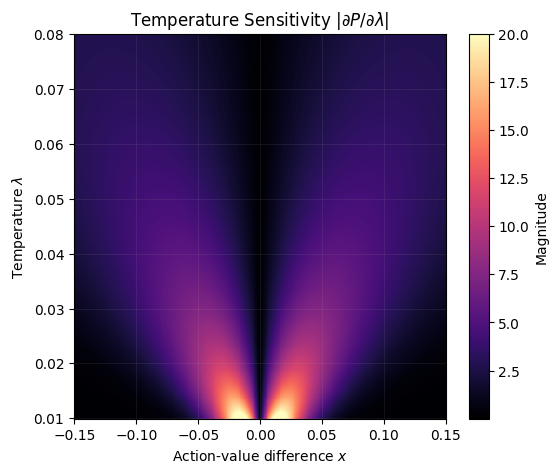}
         \caption{Temp. Sensitivity $|\frac{\partial P}{\partial \lambda}|$}
         \label{fig:temp_sens}
     \end{subfigure}
     \caption{
     Sensitivity of the Boltzmann exploration rule:  heatmaps for a two-action policy, where the probability of selecting the first action is $P(x,\lambda) = (1+e^{-x/\lambda})^{-1}$. As $\lambda$ decreases, sensitivity near the decision boundary increases, illustrating the growth of the Lipschitz constant.}
     \label{fig:boltzmann_sensitivity}
\end{figure}

%% file: q-learning.tex
\section{Online Q-learning}\label{sec:q_learning}
Consider a Markov Decision Process (MDP) $\cM$ with finite state and action spaces $\cS$ and $\cA$, respectively. The controlled transition probabilities are given by $p : \cS \times \cA \times \cS \to [0,1]$, where
$\bP(s_{n+1} = s' \mid s_n = s, a_n = a) = p(s,a,s'),$  for $s,s' \in \cS,\, a \in \cA$. Let $\gamma \in [0,1)$ be the discount factor, and let $r : \cS \times \cA \to [0, R_{\max}]$ denote the reward function. The objective is to choose actions $\{a_n\}_{n \ge 0}$ so as to maximize the expected discounted return
$\bE\!\left[\sum_{n=0}^{\infty} \gamma^n r(s_n,a_n)\right]$, where $s_n$ and $a_n$ denote the state and action at time $n$, respectively. We denote the MDP by $\cM = (\cS,\cA,p,r,\gamma)$.

Let $V\ust : \cS \to \bR$ denote the optimal value function,
\[
V\ust(s) = \sup_{\pi} \bE_\pi\!\left[\sum_{n=0}^{\infty} \gamma^n r(s_n,a_n) \mid s_0 = s \right],
\]
where the supremum is taken over all stationary (possibly randomized) policies. The corresponding optimal action-value function $Q\ust : \cS \times \cA \to \bR$ is defined as
\[
Q\ust(s,a) = r(s,a) + \gamma \sum_{s' \in \cS} p(s,a,s') V\ust(s').
\]
Equivalently, $Q\ust$ is the unique fixed point of the Bellman optimality operator $\cT : \bR^{|\cS||\cA|} \to \bR^{|\cS||\cA|}$:
\[
(\cT Q)(s,a) = r(s,a) + \gamma \sum_{s' \in \cS} p(s,a,s') \max_{a' \in \cA} Q(s',a').
\]
The optimal policy can be characterized directly in terms of $Q\ust$. In particular, actions that maximize the expected discounted return in state $s$ are given by the maximizers of $Q\ust(s,\cdot)$. We define the set of optimal actions in state $s$ as
$\cA\ust(s) := \left\{ a \in \cA : Q\ust(s,a) = \max_{a' \in \cA} Q\ust(s,a') \right\}.$
Any stationary policy $\pi\ust$ that selects actions from $\cA\ust(s)$ with probability one for each state $s$ is optimal. Consequently, the problem of computing an optimal policy reduces to finding the optimal action-value function $Q\ust$. If the reward function $r$ and transition kernel $p$ were known, $Q\ust$ could be computed using dynamic programming. In contrast, when the MDP is accessed through a single trajectory of the controlled Markov chain, the Q-learning algorithm uses the following Q-value updates,
\al{\label{iter:Q-learn}
	Q_{n+1}(s,a)=Q_n(s,a)+\beta_n \ind{s_n\!=\!s,a_n\!=\!a} \left(r(s,a)+\gamma \max_{a'} Q_n(s_{n+1},a')-Q_n(s,a)\right),
}
for all $(s,a)\in \cS\times \cA$ and for all $n\in\bN$. The indicator $\ind{s_n=s,a_n=a}$ arises from the asynchronous nature of Q-learning, since at each time step only the action-value corresponding to the currently visited state–action pair $(s_n,a_n)$ is updated, while all other components remain unchanged.

This asynchronous update can be studied in both offline and online settings: in the offline setting actions are generated according to a fixed behavior policy, while in the online setting considered here, actions are selected using the current Q-values. In online Q-learning, both convergence to $Q\ust$ and cumulative reward are of interest. This setting is relevant in applications where data are generated sequentially and poor decisions incur real costs, such as autonomous driving and healthcare  support.

\subsection{Regret in discounted Q-learning}
We next define the regret of an online learning algorithm in the discounted MDP setting.
\begin{definition}
The regret incurred by an online learning algorithm up to time $N\in\bN$ is defined as
\nal{
	 \cR_N \coloneqq  (1-\gamma) \sum_{n=1}^{N}\br{V\ust(s_{n}) -  \bE\left[\sum_{m=n}^{\infty} \gamma^{m-n} r(s_m,a_m)\; \middle|\; \cF_{n} \right]  }.
}    
\end{definition}
This definition is standard in the literature; see, e.g., Definition~4.1 of~\citep{he2021nearly}. We establish bounds on $\cR_N$ that hold with high probability over the randomness of the learning trajectory. At each time $n$, the regret expression compares the optimal discounted value $V\ust(s_n)$ with the expected discounted value obtained by the algorithm from that time onward and aggregates this loss over time. The normalization factor $(1-\gamma)$ converts this discounted future loss into a quantity comparable to a single time-step reward, so that each term in the sum represents the loss incurred at time $n$.

This regret definition highlights a subtle tension in online Q-learning. On the one hand, the algorithm must quickly reduce the probability of suboptimal actions, as they directly incur regret. On the other hand, changing the policy parameters too aggressively can harm exploration and, more subtly, cause the induced policy to drift away to a possibly worse policy. Balancing rapid exploitation, sustained exploration, and controlled policy drift is a key challenge in the regret analysis.
\vspace{-5pt}


%% file: q-learning_performance.tex
\section{Performance of Boltzmann and \seg~Q-learning}\label{sec:performance}
\vspace{-5pt}
Through sample complexity and regret guarantees, we analyze Boltzmann Q-learning and \seg~Q-learning, which share the update rule \eqref{iter:Q-learn} and differ only in their exploration policies.
\subsection{Boltzmann Q-learning}\label{sec:BQL}
Boltzmann Q-learning selects actions according to a softmax (Boltzmann) distribution over the action space. At each time step, actions with higher estimated Q-values are selected with higher probability, while suboptimal actions continue to be explored depending on the temperature parameter.~This exploration strategy provides a smooth tradeoff between exploration and exploitation and has been widely used in reinforcement learning due to its simplicity.

Let $\sigma: \bR^{|\cA|} \mapsto \Delta(\cA)$ denote the softmax function~\citep{sutton1998reinforcement}. Then in Boltzmann Q-learning, the actions are sampled at time $n$ according to the distribution $\sigma\br{Q_n(s_n,\cdot)\slash \lambda_n}$, i.e., the action $a$ is chosen at state $s_n$ with probability proportional to $\exp\left(Q_n(s_n,a)/\lambda_n\right)$. Here $\{\lambda_n\}$ denotes the temperature sequence or schedule. We choose the temperature schedule to be of the form
	\nal{
		\lambda_n = \frac{1}{\frakb \log(n+n_0)},~\frakb >0,n_0,n\in\bN.
	}
This choice is tuned to optimize the regret bound in our analysis, and logarithmic temperature schedules of this form are known to be optimal in the stochastic multi-armed bandit setting \citep{cesa2017boltzmann}. The complete algorithm is presented in Algorithm~\ref{algo:Q_learning}. The parameter $\frakb$ controls exploration, with larger values leading to faster decay of the temperature and hence reduced exploration. We first present the sample complexity result obtained for $\fraka=0$.
\begin{theorem}\label{th:Q_conc}
	For any $\delta>0$, the iterates $\{Q_n\}$ of the Boltzmann Q-learning algorithm (Algorithm \ref{algo:Q_learning}) satisfy the following with probability at least $1-  \frac{\pi^2 d}{6}\delta$:
	\nal{
		& \|Q_n - Q\ust\|_{\infty}  \le \tilde{O}\left(\frac{n^{-\left(0.5-3 \kappa\s_1 \right)  }}{(1-\gamma)^2}\right),~\forall n\in\bN,
	}
where $\kappa\s_1=\frakb R_{\max}/(1-\gamma)$, $0<\kappa\s_1 < \frac{1}{6}$. In addition, $Q_n \to Q\ust$ almost surely.
\end{theorem}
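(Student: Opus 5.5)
Our plan is to cast Boltzmann Q-learning as an instance of iteration~\eqref{def:gen_SA} and apply Theorem~\ref{th:main_concentration} with $\fraka=0$, after checking Assumptions~\ref{assum:1}--\ref{assum:6} and computing the exponents $\kappa_1,\kappa_2,\kappa_3$ in terms of $\frakb$. We take $x_n=Q_n\in\bR^{|\cS||\cA|}$ and $y_n=(s_n,a_n,s_{n+1})$, which is a controlled Markov chain on a finite set. Its kernel is $p^{(\lambda_n,Q_n)}$: we draw $s_{n+2}\sim p(s_{n+1},a_{n+1},\cdot)$ with $a_{n+1}\sim\sigma(Q_n(s_{n+1},\cdot)/\lambda_n)$ (up to the standard one-step index shift), and the control is $\zeta_n=\lambda_n$. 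The map is $F(Q,y)(s,a)=\ind{s_n=s,a_n=a}\,(r(s,a)+\gamma\max_{a'}Q(s_{n+1},a')-Q(s,a))+Q(s,a)$, and $M_{n+1}\equiv 0$. This fits the form $x_{n+1}=x_n+\beta_n(F(x_n,y_n)-x_n)$. Compactness of $\cX$ comes from the standard invariance argument: with $\beta_n\le1$ the iterates stay in $\{\|Q\|_\infty\le R_{\max}/(1-\gamma)\}$ whenever $Q_0$ does.

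Next we verify contraction. Under stationarity, $\bar F^{(\lambda,w)}(Q)(s,a)=Q(s,a)+\mu(s,a)\,((\cT Q)(s,a)-Q(s,a))$, where $\mu(s,a)$ is the stationary state--action marginal. This map is an $\ell_\infty$ contraction with factor $1-(1-\gamma)\mu_{\min}$ and has fixed point $Q\ust$ for every $(\lambda,w)$. This gives Assumptions~\ref{assum:7} and~\ref{assum:at} with $\frakc_2=1-\gamma$, since $\mu$ of the triple is comparable to the state--action marginal. For Assumption~\ref{assum:2} and part (a) of Assumption~\ref{assum:3}, we use an MDP-level assumption: under the uniform policy (or every policy) the chain is irreducible with a reachable designated state. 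The softmax puts probability at least $\frac{1}{|\cA|}\exp(-\frac{R_{\max}}{(1-\gamma)\lambda_n})=\frac{1}{|\cA|}(n+n_0)^{-\frakb R_{\max}/(1-\gamma)}$ on each action, because Q-values lie in a range of width $R_{\max}/(1-\gamma)$. Propagating this action-level lower bound through a fixed number of steps bounds $\mu_{\min}(\lambda_n)$ and the hitting times. We get $\kappa_1=\kappa\s_1$, with possibly a constant multiple absorbed if we use $|\cS|$-step reachability; we would aim to state the assumption so the exponent is exactly $\kappa\s_1$.

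For parts (b) and (c) of Assumption~\ref{assum:3}, we differentiate the softmax. Its Jacobian in $Q$ has norm $O(1/\lambda)$, so $L_2(\lambda_n)=O(\frakb\log(n+n_0))$ and $\kappa_3=0$. Its derivative in $\lambda$ is $O(\|Q\|_\infty/\lambda^2)$, and $|\lambda_{n+1}-\lambda_n|=O(1/(n\log^2 n))$, so $L_1(n)=O(1/n)$ and $\kappa_2=0$. Plugging $\fraka=\kappa_2=\kappa_3=0$ into Theorem~\ref{th:main_concentration} gives rate $n^{-1/2+2\kappa\s_1}$ times logarithmic factors. The stated exponent $3\kappa\s_1$ is weaker, so it holds a fortiori. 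The extra $\kappa\s_1$ presumably comes from the hidden $\tilde O$ constants depending on $\mu_{\min}$, so we will track them. The $(1-\gamma)^{-2}$ factor comes from $\frakc_2=1-\gamma$ entering the contraction, together with the range bound $R_{\max}/(1-\gamma)$ entering the noise magnitude. The condition $\kappa\s_1<1/6$ makes the exponent positive, and almost sure convergence follows directly from Theorem~\ref{th:main_concentration}.

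The main obstacle is the verification step that links the exploration floor to $\mu_{\min}$ and to hitting times with the correct exponent. Multi-step reachability can inflate the exponent, and the $\tilde O$ constants of Theorem~\ref{th:main_concentration} hide extra factors of $\mu_{\min}^{-1}$. We would first try bounding the hitting time via a direct coupling to the uniform-policy chain, losing a factor $(n+n_0)^{\kappa\s_1}$ per decision on the path. We would then absorb the resulting polynomial factors into the $3\kappa\s_1$ exponent.
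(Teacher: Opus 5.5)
Your overall route is the paper's: cast the recursion as an instance of \eqref{def:gen_SA} with $\zeta_n=\lambda_n$, obtain $\kappa_1=\kappa\s_1$ and $\kappa_2=\kappa_3=0$, and apply the concentration bound with $\fraka=0$. Your encoding $y_n=(s_n,a_n,s_{n+1})$ with $M_{n+1}\equiv 0$ is a workable variant, provided $\cY$ is restricted to triples with $p(s,a,s')>0$ (otherwise $\mu_{\min}=0$). The paper instead keeps $y_n=(s_n,a_n)$ and puts the next-state randomness into a bounded martingale difference.

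The gap is in the step you flag yourself, and your fallback cannot work because the slack it relies on does not exist. The explicit bound proved for Theorem~\ref{th:main_concentration} is $\|x_n-x\ust\|\le 2\log(n+n_0)(n+n_0)^{2\kappa_1+\kappa_3}\frakc_3^{-1}\big(C_4g(n,\delta)+C_5g_1(n)+C_6g_2(n)\big)$. Here $g(n,\delta)$ itself carries a factor $(n+n_0)^{\kappa_1}$. That factor comes from $\|H^{(\zeta_n,x)}\|\le 2\frakc_9\frakc_1/\mu_{\min}(\zeta_n)$, i.e.\ from the size of $M'_n$ in the Azuma step. With $\fraka=\kappa_3=0$ this gives exactly $n^{-(1/2-3\kappa_1)}$. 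The $n^{2\kappa_1}$ you read off the compact display does not include this factor, so the third $\kappa_1$ is $n$-dependent, not a hidden constant. The paper's proof of the present statement substitutes into the explicit bound, which is where $3\kappa\s_1$ comes from.

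Consequently, any extra loss in the decay of $\mu_{\min}(\lambda_n)$ changes $\kappa_1$ itself and cannot be absorbed. Losing $(n+n_0)^{\kappa\s_1}$ per decision along a path of $L$ steps makes the SA exponent $L\kappa\s_1$, the rate $n^{-(1/2-3L\kappa\s_1)}$, and the admissibility condition $6L\kappa\s_1<1$. Under a reachability-only hypothesis (uniform policy irreducible) this loss is real, not an artifact of your coupling. Take a chain where state $k+1$ is reached from $k$ only by an action that is suboptimal by a margin $\eta>0$, and every other action resets to state $1$. The stationary mass of the last state then decays like a product of $L-1$ factors $e^{-\eta/\lambda_n}$, which for long chains is faster than $(n+n_0)^{-\kappa\s_1}$.

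What is missing is a hypothesis that decouples state recurrence from action exploration, so that only one exponential factor is paid. The paper assumes $\mu\s_{\min,\cS}>0$ (Assumption~\ref{assum:16}), i.e.\ a lower bound on stationary state mass that is uniform over all $(\lambda,Q)$. It also works with the diameter $\cD(\cM)$, a supremum of expected hitting times over all stationary policies, taken to be finite. Then $\mu^{(\lambda,Q)}(s,a)=\big(\sum_b\mu^{(\lambda,Q)}(s,b)\big)\sigma(Q(s,\cdot)/\lambda)(a)\ge \mu\s_{\min,\cS}\,|\cA|^{-1}e^{-R_{\max}/(\lambda(1-\gamma))}$, which gives $\kappa_1=\kappa\s_1$ with no multiplicity. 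The hitting time of $(s\ust,a\ust)$ is then of order $\cD(\cM)/\mu_{\min}(\lambda)$: the chain returns to $s\ust$ within $\cD(\cM)$ expected steps, with geometrically many returns before $a\ust$ is played.

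Your parenthetical ``every policy'' option is the right one, but it must be exploited by compactness, not by propagation. If every deterministic stationary policy induces an irreducible chain on $\cS$, then so does every randomized one. Compactness of the set of randomized stationary policies, together with continuity of stationary distributions and expected hitting times, then yields $\mu\s_{\min,\cS}>0$ and $\cD(\cM)<\infty$.

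With this in place, substitute $\kappa_1=\kappa\s_1$, $\kappa_2=\kappa_3=0$, $\fraka=0$ and $\beta\ge 2(1-2\kappa\s_1)$ into the explicit bound. This gives the stated rate, with $\kappa\s_1<1/6$ coming from $2\fraka+6\kappa_1+3\kappa_3<1$. Almost sure convergence then follows as in Theorem~\ref{th:main_concentration}.
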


The above result is obtained by casting the Boltzmann Q-learning algorithm into our Markovian SA framework, where the iterate $x_n$ is given by $Q_n$ and the Markov chain $y_n$ is given by $(s_n,a_n)$. Moreover, the corresponding SA parameters are $\kappa\s_1=\frac{\frakb R_{\max}}{1-\gamma}$ and $\kappa\s_2=\kappa\s_3=0$. The following remark discusses the dependence of sample complexity on $\frakb$ and $\gamma$.

\begin{remark}
The convergence rate of Boltzmann Q-learning improves as $\frakb$ decreases, and can be made arbitrarily close to $\tilde{O}(1\slash \sqrt{n})$ by choosing  $\frakb$ sufficiently small. In the limit $\frakb \downarrow 0$, the policy approaches uniform action sampling, which may be viewed as a special case; we do not emphasize this regime here since it is subsumed by the \seg~exploration policy. The resulting sample complexity is
$\tilde{O}\!\left( \left((1-\gamma)^{-4} \eps^{-2}\right)^{1/(1-6\kappa\s_1)}\right)$, approaching $\tilde{O}\!\left((1-\gamma)^{-4} \eps^{-2}\right)$ as $\frakb \to 0$. This is the first sample complexity result for classical Boltzmann Q-learning, to the best of our knowledge.
\end{remark}


\begin{figure}[t]
\centering
\begin{minipage}{0.48\textwidth}
\begin{algorithm}[H]
\caption{Boltzmann Q-learning}
\label{algo:Q_learning}
\begin{algorithmic}[1]
\REQUIRE stepsize sequence $\beta_n$, discount factor $\gamma\! \in\! (0,1)$, temperature $\lambda_n\! =\frac{1}{\frakb \log(n+n_0)}\! $
\STATE Initialize $Q_0(s,a)$ for all $(s,a)$
\STATE Observe initial state $s_1$
\FOR{$n = 1,2, \ldots$}
    \STATE Sample action $a_n \sim \sigma(Q_n(s_n,\cdot)/\lambda_n)$
    \STATE Execute $a_n$, observe $r(s_n,a_n)$ and $s_{n+1}$
    \STATE Update only $Q_n(s_n,a_n)$ according to~\eqref{iter:Q-learn}
\ENDFOR
\end{algorithmic}
\end{algorithm}
\end{minipage}
\hfill
\begin{minipage}{0.48\textwidth}
\begin{algorithm}[H]
\caption{\seg~Q-learning}
\label{algo:eps-softmax}
\begin{algorithmic}[1]
\REQUIRE stepsize sequence $\beta_n$, discount factor $\gamma \in (0,1)$, temperature $\lambda_n = 1/(n+n_0)^{\frake}$, exploration $\eps_n = 1/(n+n_0)^{\frakd}$
\STATE Initialize $Q_0(s,a)$ for all $(s,a)$
\STATE Observe initial state $s_1$
\FOR{$n = 1,2, \ldots$}
    \STATE Sample action $a_n \sim f^{(\eps_n,\lambda_n,Q_n)}(s_n,\cdot)$
    \STATE Execute $a_n$, observe $r(s_n,a_n)$ and $s_{n+1}$
    \STATE Update only $Q_n(s_n,a_n)$ according to~\eqref{iter:Q-learn}
\ENDFOR
\end{algorithmic}
\end{algorithm}
\end{minipage}
\end{figure}


Our regret guarantees depend on the suboptimality gap of the MDP, defined as 
\nal{
gap(\cM) := \min_{s\in\cS} \min_{a\in\cA\setminus \cA\ust(s)} \left(V\ust(s) -  Q\ust(s,a)  \right).
}
Intuitively, $gap(\cM)$ is the smallest difference between the value of an optimal action and a suboptimal action, over all states. Smaller gaps correspond to harder problems, as optimal and suboptimal actions are harder to distinguish. We now present our regret bound. Due to space constraints, we report only the optimal bound, obtained by setting $\fraka=\kappa\s_1=\frakb R_{max}/(1-\gamma)$ and choosing $\fraka$ to be sufficiently small. Regret guarantees for general hyperparameters are deferred to Appendix~\ref{app:pf_q_learning}.

\begin{theorem}\label{th:regret_bound}
For any $\delta>0$, the regret incurred by the Boltzmann Q-learning algorithm satisfies the following with probability at least $1- \delta \frac{\pi^2 d}{6}$:
\nal{
\cR_N \le \frac{\tilde{O}(N^{.5 + 4\fraka })}{(1-\gamma)^{3.5}} + \tilde{O}\br{ N^{1- \frac{\frakb gap(\cM)}{2}}},\quad \forall N\in\bN,
}
where $\fraka$ must satisfy $\fraka \le \frac{1}{8}$.
\end{theorem}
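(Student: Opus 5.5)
We plan to decompose the per-step regret via a performance-difference argument and then control each piece using the uniform concentration bound of Theorem~\ref{th:Q_conc}. Let $\pi_n$ denote the softmax policy $\sigma(Q_n(s,\cdot)/\lambda_n)$ at time $n$. Write $V^{\pi}$ for the value of a stationary policy. The key observation is that the conditional expectation in $\cR_N$ is the value of the \emph{nonstationary} policy sequence $(\pi_n,\pi_{n+1},\dots)$ started at $s_n$. We will compare it to $V^{\pi_n}(s_n)$ and, in turn, to $V\ust(s_n)$. For the latter, the performance difference lemma gives
\nal{
(1-\gamma)\big(V\ust(s)-V^{\pi_n}(s)\big)=\bE_{s'\sim d^{\pi_n}_s}\Big[\sum_a \pi_n(a|s')\big(V\ust(s')-Q\ust(s',a)\big)\Big].
}
So we need to bound the probability mass $\pi_n$ puts on suboptimal actions, weighted by their gaps.

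Step 1 bounds the suboptimal mass. On the event of Theorem~\ref{th:Q_conc} (with $\fraka$ general), $\|Q_n-Q\ust\|_\infty\le \eta_n:=\tilde O(n^{-(1/2-\kappa)}/(1-\gamma)^2)$. For $a\notin\cA\ust(s)$ and $a\ust\in\cA\ust(s)$, we have $Q_n(s,a\ust)-Q_n(s,a)\ge gap(\cM)-2\eta_n$. Hence
\nal{
\pi_n(a|s)\le \exp\big(-\frakb\log(n+n_0)(gap(\cM)-2\eta_n)\big)\le n^{-\frakb\, gap(\cM)}\,e^{2\frakb\eta_n\log n},
}
and the correction factor is $O(1)$ since $\eta_n\log n\to0$. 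Plugging this into the performance difference (gaps are at most $R_{\max}/(1-\gamma)$) gives $(1-\gamma)(V\ust-V^{\pi_n})\lesssim n^{-\frakb gap/\,\cdot}$. We will use the cruder split into a first term, from the $\eta_n$-perturbation through a linear expansion of the softmax's expected $Q\ust$, and a second term of order $n^{-\frakb gap(\cM)/2}$, obtained by splitting the exponent in half to absorb the transient before $\eta_n<gap/4$. Summing over $n\le N$ yields the $\tilde O(N^{1-\frakb gap(\cM)/2})$ term.

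Step 2 handles policy drift: we bound $|V^{(\pi_n,\pi_{n+1},\dots)}(s)-V^{\pi_n}(s)|$. By a telescoping/simulation argument, this is at most $\sum_{k\ge0}\gamma^k\,\bE\,\|\pi_{n+k}-\pi_n\|_{TV}\,R_{\max}/(1-\gamma)$. The softmax is Lipschitz in $Q/\lambda$, so
\nal{
\|\pi_{n+k}-\pi_n\|\lesssim \log(n+k)\,\|Q_{n+k}-Q_n\|_\infty+\|Q\|_\infty|\lambda_{n+k}^{-1}-\lambda_n^{-1}|,
}
with $\|Q_{n+k}-Q_n\|\le \sum_{j<k}\beta_{n+j}\,2R_{\max}/(1-\gamma)$ and $|\lambda^{-1}_{n+k}-\lambda^{-1}_n|\le \frakb k/n$. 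Summing the geometric series gives a per-step loss of $\tilde O(n^{-(1-\fraka)}(1-\gamma)^{-3})$, which is negligible after summation. There is a subtlety: the future $Q_{n+k}$ is random, so we bound it deterministically via bounded increments, with no concentration needed.

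Step 3 closes the first term: $\sum_n$ of the $\eta_n$-dependent contributions (Lipschitz dependence of $V^{\pi_n}$ on the greedy error, with an extra $(1-\gamma)^{-1.5}$ from converting $Q$-error into value loss via the occupancy measure) gives $\sum_n n^{-(1/2-\kappa)}(1-\gamma)^{-3.5}$. Choosing $\fraka=\kappa\s_1$ makes the general Theorem~\ref{th:main_concentration} exponent $2\kappa_1+\fraka$, so we get a total $\tilde O(N^{1/2+4\fraka})$ after accounting for the logarithmic Lipschitz constant. The condition $\fraka\le1/8$ ensures the exponent is below one and the hyperparameter conditions hold. The main obstacle is Step 1's sharp exponent $gap/2$ together with the exact $(1-\gamma)^{-3.5}$ power. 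I would first try to bound softmax mass directly, and only treat the $\eta_n$ error separately, split at the time $n$ where $\eta_n\le gap/4$; the probability statement is inherited from Theorem~\ref{th:main_concentration}.
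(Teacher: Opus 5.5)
Your argument is correct in substance. It follows a genuinely different route from the paper's and gives a bound with sharper $N$-dependence; your Step~3 is unnecessary.

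The paper uses the same split into the frozen-policy term \eqref{adhoc:2_1} and the drift term \eqref{adhoc:2_2}, but bounds both through transition-kernel distances.

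\textbf{Paper, frozen-policy term.} It combines Lemma~\ref{lemma:sensitivity_discounted} with the global softmax Lipschitz bound of Lemma~\ref{lemma:sensitivity_f}, $\|Q_n-Q\ust\|/\lambda_n+|\cA|e^{-gap(\cM)/\lambda_n}$. The piece $\frakb\log(n+n_0)\|Q_n-Q\ust\|_\infty$ is fed the $\fraka=\kappa\s_1$ rate $\tilde O(n^{-(1/2-4\fraka)})/(1-\gamma)^{2.5}$; the extra $(1-\gamma)^{-1/2}$ comes from Case~C forcing $\beta\propto 1/((1-\gamma)\mu\s_{\min,\cS})$. This piece is exactly the source of $\tilde O(N^{.5+4\fraka})/(1-\gamma)^{3.5}$. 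The exponential piece only yields $N^{1-\frakb gap(\cM)}$.

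\textbf{Paper, drift term.} The $N^{1-\frakb gap(\cM)/2}$ term comes from Proposition~\ref{prop:1}, via Lemmas~\ref{lemma:diff_prob_transition} and~\ref{lemma:diff_kernels}. Condition~\ref{con:n0_BQL}-(II) caps the suboptimal softmax mass by $(n+n_0)^{-\frakb gap(\cM)/2}$ on $\cG$. The temperature change is then bounded crudely by $|\lambda_m-\lambda_n|\le\lambda_n$, which leaves $|\cA|\frakb\log(n+n_0)(n+n_0)^{-\frakb gap(\cM)/2}$ per step.

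\textbf{Your route.} You use saturation inside the performance-difference lemma, so your frozen-policy term has no $\|Q_n-Q\ust\|$-linear part at all. You control drift pathwise via $\|Q_{n+k}-Q_n\|_\infty\lesssim k\beta_n R_{\max}/(1-\gamma)$, $|\lambda_{n+k}^{-1}-\lambda_n^{-1}|\le\frakb k/(n+n_0)$, and the logit-Lipschitz constant of the softmax. Steps~1--2 alone give, on the same event,
\[
\cR_N\le R_{\max}T_0+\frac{R_{\max}|\cA|}{1-\gamma}\,\tilde O\big(N^{1-\frakb gap(\cM)/2}\big)+\frac{\tilde O(N^{\fraka})}{(1-\gamma)^{3}},
\]
where $T_0$ is the deterministic, $\delta$-dependent time after which $\eta_n<gap(\cM)/4$. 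This lies below the stated bound. The statement's second $\tilde O$ suppresses $\gamma$-factors, just as the paper folds its own $\tilde O(N^{1-\frakb gap(\cM)})/(1-\gamma)$ term into it.

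\textbf{What each route buys.} Yours needs neither a unique optimal action per state nor Condition~\ref{con:n0_BQL}; the transient is paid additively instead of by inflating $n_0$. Because your drift bound holds on every sample path, it can be used inside $\bE[\cdot\mid\cF_n]$ directly. Lemma~\ref{lemma:diff_prob_transition}, by contrast, applies on-$\cG$ bounds to future $Q_\ell$, $\ell>n$, even though $\cG$ is not $\cF_n$-measurable. The paper's route buys modularity: everything reduces to kernel perturbations, the same machinery reused for \seg~Q-learning.

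Delete Step~3 and the ``cruder split'' rather than trying to justify them. Within your decomposition there is no $\eta_n$-linear term left to close. The bookkeeping you attach to it also does not match the concentration result in the regime you use:
\begin{itemize}
\item $\eta_n\propto(1-\gamma)^{-2}$ is the $\fraka=0$, $\beta=2$ statement of Theorem~\ref{th:Q_conc}, whereas $\fraka=\kappa\s_1>0$ falls under Case~C.
\item The value conversion costs $(1-\gamma)^{-1}$, not an unexplained $(1-\gamma)^{-1.5}$.
\item The exponent is $\fraka+3\kappa\s_1=4\fraka$.
\end{itemize}
Matching ``the sharp exponent $gap/2$ and the exact $(1-\gamma)^{-3.5}$'' is not an obstacle at all. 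You only need an upper bound, and yours already lies below the right-hand side.
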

We emphasize that our regret guarantees are uniform in time, i.e., they hold for all $N$ and do not require prior knowledge of the horizon. The first term reflects how quickly the iterates $Q_n$ concentrate around $Q\ust$ and improves with increased exploration. The second term captures the regret due to suboptimal action selection and depends on $gap(\cM)$. 

Even with knowledge of $gap(\cM)$, the regret can be near-linear when $gap(\cM)$ is small. This is because $\frakb$ must be chosen small enough to ensure exploration, which slows down exploitation. This behavior is consistent with stochastic bandits, where it has been shown that  Boltzmann policy with any monotone temperature schedule yields suboptimal performance~\citep{cesa2017boltzmann}.
\vspace{-3pt}

\subsection{\seg~Q-learning}
\vspace{-3pt}
The Smoothed $\eps_n$-Greedy (\seg) Q-learning algorithm selects actions according to a convex combination of the uniform policy and a softmax policy. The softmax component is for purely technical reasons and ensures smoothness of the transition kernel with respect to the Q-values. This smoothing does not change the behavior of the algorithm, which remains effectively $\epsilon_n$-greedy. Formally, in \seg~Q-learning the action $a$ at time $n$ is sampled from the distribution $f^{(\eps_n,\lambda_n,Q_n)}(s_n,\cdot)$, where
\al{
f^{(\eps,\lambda,Q)}(s,a) := \eps/|\cA| + (1-\eps)\,\sigma\!\left(Q(s,a)/\lambda\right), \label{def:f_e_l_q}
}
where $\eps \in [0,1]$ and $\lambda > 0$. Equivalently, the policy can be viewed as a two-stage randomization: an action is drawn from a uniform distribution over actions with probability $\eps_n$, and from a Boltzmann distribution with temperature $\lambda_n$ with probability $1-\eps_n$. When $\lambda_n$ decays sufficiently fast, the Boltzmann component concentrates on greedy actions, so the policy closely approximates standard $\epsilon_n$-greedy exploration. Accordingly, for $\frakd,\frake \in [0,1]$, we consider sequences of the following form.
\nal{
\eps_n = \frac{1}{(n+n_0)^{\frakd}}, \quad \lambda_n = \frac{1}{(n+n_0)^{\frake}}, \qquad n_0,n \in \bN.
}

We first present the optimal convergence rate obtained for $\fraka=\frakd=0$, which corresponds to a uniform policy over $\cA$. The general bound for arbitrary choices of $(\fraka,\frakd, \frake)$ is deferred to Appendix~\ref{app:eps_softmax}.
\begin{theorem}\label{th:conc_q_eps_softmax}
For any $\delta>0$, the iterates of the \seg~Q-learning algorithm (Algorithm \ref{algo:eps-softmax}) with $\fraka=\frakd=0$ satisfy the following with probability at least $1- \delta \frac{\pi^2 d}{6}$:
\nal{
\|Q_n - Q\ust \|_{\infty} &\le \frac{1}{(1-\gamma)^{2.5}}\tilde{O}\br{\frac{1}{\sqrt{n}} },~\forall n\in\bN.
}
In addition, $Q_n \to Q\ust$ almost surely. 
\end{theorem}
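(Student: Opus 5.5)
The plan is to cast \seg~Q-learning with $\fraka=\frakd=0$ as an instance of the Markovian SA iteration \eqref{def:gen_SA} and then apply Theorem~\ref{th:main_concentration}, tracking the $(1-\gamma)$ dependence explicitly.

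\textbf{SA formulation.} Take $x_n=Q_n\in\bR^{|\cS||\cA|}$ and $y_n=(s_n,a_n,s_{n+1})$, or equivalently $(s_n,a_n)$ with the next-state randomness placed in $M_{n+1}$. Set
\[
F(Q,(s,a))(s',a')=\ind{s=s',a=a'}\bigl(r(s,a)+\gamma\sum_{\bar s}p(s,a,\bar s)\max_{b}Q(\bar s,b)-Q(s,a)\bigr)+Q(s',a').
\]
The martingale term is $\gamma\ind{\cdot}\bigl(\max_b Q_n(s_{n+1},b)-\sum_{\bar s}p\max_b Q_n\bigr)$. Because $\beta_n\le 1$ and the rewards lie in $[0,R_{\max}]$, induction keeps $Q_n$ in the compact box $\cX=[0,R_{\max}/(1-\gamma)]^{d}$ (after projecting the initialization if needed). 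Hence $M_{n+1}$ is bounded by $O(R_{\max}/(1-\gamma))$, which verifies Assumption~\ref{assum:6}.

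\textbf{Verifying the assumptions.} With $\eps_n\equiv1$ the policy $f^{(1,\lambda_n,Q)}$ is exactly uniform. The transition kernel $p^{(\zeta,x)}((s,a),(s',a'))=p(s,a,s')/|\cA|$ therefore does not depend on $\zeta_n$ or $x$. This gives $L_1\equiv0$ and $L_2\equiv0$, so $\kappa_2=\kappa_3=0$ and Assumption~\ref{assum:3}(b),(c) hold trivially.

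Assumption~\ref{assum:2} needs irreducibility of the MDP under the uniform policy, which is a standing ergodicity assumption to be stated. It gives $\mu_{\min}>0$ as a constant, and a hitting-time bound $\frakc_1/\mu_{\min}$ to a fixed state.

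The averaged map is $\bar F(Q)=Q+D(\cT Q-Q)$ with $D=\mathrm{diag}(\mu)$. It is an $\ell_\infty$-contraction with factor $1-(1-\gamma)\mu_{\min}$, and its fixed point is $Q\ust$. So Assumption~\ref{assum:7} holds, and Assumption~\ref{assum:at} holds with $\frakc_2=1-\gamma$. Since the contraction factor is constant, $\kappa_1=0$. Also $\fraka=0$ gives $\beta_n=\beta/(n+n_0)$.

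\textbf{Applying the theorem and tracking $\gamma$.} Theorem~\ref{th:main_concentration} then gives rate $\tilde O(\sqrt{\log(n^2/\delta)/n})$ and almost sure convergence. The main obstacle is the constant. Theorem~\ref{th:main_concentration} hides it inside $\tilde O$, so I will re-enter its proof (the Appendix SA analysis) to make the constants explicit.

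The $1/\sqrt n$ rate with stepsize $\beta/n$ requires $\beta\,\at>1/2$, i.e. $\beta\gtrsim 1/((1-\gamma)\mu_{\min})$. The Poisson-equation solutions are bounded by (hitting time)$\times\|F\|\lesssim R_{\max}/((1-\gamma)\mu_{\min})$. The noise-averaging step produces a variance term of order $\beta^2\|M\|^2/(\beta\at)$. Combining these gives roughly $\|F\|\,\beta/\sqrt{\beta\at}\sim(1-\gamma)^{-1}\cdot(1-\gamma)^{-1/2}\cdot(1-\gamma)^{-1}$ after substituting $\beta\propto(1-\gamma)^{-1}$. This yields the claimed $(1-\gamma)^{-2.5}$, with $\mu_{\min}$ and $|\cS||\cA|$ absorbed into $\tilde O$.

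I expect the delicate part to be confirming that the transient term, which decays like $n^{-\beta\at}$, and the Markov-noise correction terms from the Poisson solutions are also at most $(1-\gamma)^{-2.5}n^{-1/2}$. I would handle this by choosing $n_0$ large enough as in Appendix~\ref{sec:cond_n0_sa}.
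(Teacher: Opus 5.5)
Your plan is the paper's proof. You cast the iteration as \eqref{def:gen_SA} with $y_n=(s_n,a_n)$, check the assumptions, and apply Theorem~\ref{th:main_concentration} with $\fraka=\kappa_1=0$ and $\beta\asymp 1/((1-\gamma)\mu_{\min})$; Condition~\ref{con:sa1} needs $\beta\frakc_3\ge 2$. You then read the $(1-\gamma)$-dependence off the explicit constants.

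\textbf{Where you differ, and why it helps.} The one substantive difference is the verification step, and yours is an improvement. The paper uses the generic Proposition~\ref{prop:q_learn_assum_eps}. There it takes $L_2=1/\lambda$, dropping the factor $1-\eps$, and $\kappa_2=\kappa_3=\frake$, and then additionally sets $\frake=0$. You observe that $\eps_n\equiv1$ makes the kernel exactly $p(s,a,s')/|\cA|$, so $L_1=L_2=0$ for every $\frake$. This buys several things:
\begin{itemize}
\item It is what a statement fixing only $\fraka=\frakd=0$ actually needs.
\item It lets you replace Assumption~\ref{assum:16_eps} by irreducibility under the uniform policy alone.
\item It matters for the exponent $2.5$. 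With $L_2>0$, Lemma~\ref{lemma:Delta} multiplies $\|x_n-z_n\|$ by $\frac{2\frakc_5\frakc_9\frakc_{10}\frakc_2}{\frakc_3}\log(n+n_0)\propto\frac{R_{\max}}{1-\gamma}\log(n+n_0)$, which costs an extra $(1-\gamma)^{-1}$. This factor does not show up in the paper because its displayed value of $C_4/\frakc_3$ coincides with $C_4$ itself. In your setting $\bar F$ does not depend on $x$, so $\|\Delta_n\|\le\|x_n-z_n\|$. The count is then exactly $\|H\|\cdot\sqrt{\beta}\cdot\at^{-1}\sim(1-\gamma)^{-1}(1-\gamma)^{-1/2}(1-\gamma)^{-1}$.
\item The lower-order terms also improve. $T_3\equiv 0$, and $T_2$ keeps only the $2\frakc_6$ part of part (ii) of Lemma~\ref{lemma:sensitivity_v_x}. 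So after division by $\at$ the $O(1/n)$ remainders carry at most $(1-\gamma)^{-3}$, and they are dominated by the main term once $n+n_0\gtrsim(1-\gamma)^{-1}$. The paper instead has to discard a $(1-\gamma)^{-4}/n$ term ``for fixed $\gamma$''.
\end{itemize}

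\textbf{Two bookkeeping corrections.} First, your expression $\|F\|\beta/\sqrt{\beta\at}=\|F\|\sqrt{\beta/\at}\sim(1-\gamma)^{-2}$ does not describe the two-step argument you intend to re-enter. There the martingale term in $x_n-z_n$ is discounted by $\chi(i+1,n)=\prod_j(1-\beta_j)$, which gives $\|H\|\sqrt{\beta/n}$ (Lemma~\ref{lemma:bound_m+mtilde}). The factor $\at^{-1}$ enters only through Lemma~\ref{lemma:z_x_recur}. Your three-factor product is the correct count.

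Second, enlarging $n_0$ does not tame the initial-condition transient. That transient is
$\prod_{j<n}(1-\beta_j\at)\|z_0-Q^\star\|\le\frac{R_{\max}}{1-\gamma}\big(\frac{n_0}{n+n_0}\big)^{\beta\at}$,
and it stays of order $R_{\max}/(1-\gamma)$ until $n\gtrsim n_0$. The paper's use of Lemma~\ref{lemma:recursion_bound} simply omits this term. Since $\beta\at\ge 2$, the transient is at most $\frac{R_{\max}}{1-\gamma}\sqrt{n_0/n}$, so what you actually need is $n_0\lesssim(1-\gamma)^{-3}$. You also need $n_0\ge\beta$ so that $\beta_n\le 1$, which your box-invariance argument uses. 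Taking $n_0\asymp(1-\gamma)^{-1}$ satisfies both.
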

The resulting sample complexity $\tilde{O}\!\left((1-\gamma)^{-5}\eps^{-2}\right)$ matches prior SA-based bounds~\citep{qu2020finite,chen2025concentration}. Hence, our result recovers known sample complexity bounds for offline Q-learning obtained via SA techniques, while extending them to the online setting.

We now quantify the regret of \seg~Q-learning. We present the regret bound for the choice $\fraka=0.1, \frakd=0.1 - \frake$, and $\frake<0.05$ which yields the best rate within our analysis, and defer the general result to Appendix~\ref{app:eps_softmax}.
\begin{theorem}\label{th:regret_bound_eps_softmax}
For any $\delta>0$, the regret incurred by the \seg~Q-learning algorithm with $\fraka= 0.1, \frakd=0.1 - \frake$ and $\frake<0.05$, satisfies the following with probability at least $1- \delta \frac{\pi^2 d}{6}$:
\nal{
\cR_N \le \tilde{O}\left(\frac{N^{\frac{9}{10}+2\frake}}{(1-\gamma)^{3}}\right),\quad \forall N\in\bN.
}
\end{theorem}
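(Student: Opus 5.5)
We will prove Theorem~\ref{th:regret_bound_eps_softmax} by splitting the regret into an estimation part and an exploration part, and then using the general \seg\ concentration bound (the extension of Theorem~\ref{th:conc_q_eps_softmax} to general $(\fraka,\frakd,\frake)$) to control the estimation part.

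\textbf{Regret decomposition.} Let $\pi_n := f^{(\eps_n,\lambda_n,Q_n)}$ be the policy at time $n$. We will bound the per-step loss
\[
V\ust(s_n) - \bE\Bigl[\textstyle\sum_{m\ge n}\gamma^{m-n} r(s_m,a_m)\,\Big|\,\cF_n\Bigr]
\]
by the discounted sum of future one-step Bellman gaps, $\bE\bigl[\sum_{m\ge n}\gamma^{m-n}(V\ust(s_m)-Q\ust(s_m,a_m))\mid\cF_n\bigr]$. This follows from a performance-difference/telescoping argument:
\[
V\ust(s_m) = Q\ust(s_m,a_m) - \bigl(Q\ust(s_m,a_m) - V\ust(s_m)\bigr),
\]
and $Q\ust(s_m,a_m) = r(s_m,a_m) + \gamma\bE[V\ust(s_{m+1})\mid \cF_m,a_m]$. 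Summing over $n\le N$ and swapping sums, each gap term $V\ust(s_m)-Q\ust(s_m,a_m)$ is counted with weight at most $1/(1-\gamma)$. The factor $1-\gamma$ in the regret cancels this weight. The tail $m>N$ is handled because the bounds below are monotone and summable against $\gamma^{m-n}$.

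Next we replace the realized action by its conditional expectation under $\pi_m$. The resulting martingale difference is bounded by $R_{\max}/(1-\gamma)$. Azuma--Hoeffding together with a union bound over $N$ (probability $\delta/N^2$ each) gives an extra $\tilde O(\sqrt N/(1-\gamma))$. It remains to bound
\[
\sum_{m}\bigl(V\ust(s_m)-\textstyle\sum_a\pi_m(s_m,a)Q\ust(s_m,a)\bigr).
\]

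\textbf{Per-step loss under $\pi_m$.} We split $\pi_m$ into its uniform and softmax components.

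\emph{Uniform component.} It contributes at most $\eps_m R_{\max}/(1-\gamma)$. Summing gives $\tilde O(N^{1-\frakd}/(1-\gamma))$.

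\emph{Softmax component.} Let $e_m=\|Q_m-Q\ust\|_\infty$. The standard softmax bound gives
\[
V\ust(s) - \textstyle\sum_a\sigma(Q_m(s,\cdot)/\lambda_m)(a)\,Q\ust(s,a) \le 2e_m + \lambda_m\log|\cA|,
\]
since softmax is $\lambda\log|\cA|$-suboptimal for $Q_m$, and switching from $Q_m$ to $Q\ust$ costs $2e_m$. No gap dependence enters here, which is what makes the bound gap-robust. Summing gives $\tilde O(N^{1-\frake})$ from the temperature term and $2\sum_m e_m$ from the estimation term.

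\textbf{Estimation error.} We apply Theorem~\ref{th:main_concentration} with $\kappa_1=\frakd$, since $\mu_{\min}\gtrsim\eps_n$. We take $\kappa_2=0$: $|\eps_n-\eps_{n+1}|\lesssim n^{-1-\frakd}$, and the change in $\lambda_n$ moves the softmax by $O(\|Q\|\,|\lambda_n^{-1}-\lambda_{n+1}^{-1}|)=O(n^{\frake-1})$, which fits $L_1(n)=\frakc_4/(n+n_0)^{1-\kappa_2}$ with $\kappa_2=\frake$ at worst. We take $\kappa_3=\frake$, because $L_2\sim 1/\lambda_n=n^{\frake}$. This yields, on the event of probability $1-\delta\pi^2d/6$,
\[
e_n=\tilde O\bigl(n^{-1/2+2\frakd+\fraka+\frake}\bigr)\big/(1-\gamma)^{2}
\]
(or the appropriate power of $1-\gamma$ from the appendix version). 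With $\fraka=0.1$ and $\frakd=0.1-\frake$, the exponent is
\[
-0.5+0.2-2\frake+0.1+\frake = -0.2-\frake.
\]
Summing gives $\tilde O(N^{0.8-\frake})$, so this term is not dominant.

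\textbf{Balancing and the main obstacle.} The terms are $N^{1-\frakd}=N^{0.9+\frake}$, $N^{1-\frake}$, and the estimation sum. The claimed $N^{9/10+2\frake}$ suggests the true concentration exponent from the appendix, with $\fraka$ entering more heavily, is what saturates here. I will therefore recompute the general bound of Appendix~\ref{app:eps_softmax} precisely and check that every term is at most $N^{9/10+2\frake}/(1-\gamma)^3$.

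The term $N^{1-\frake}$ from the softmax temperature is problematic when $\frake<0.05$. The fix I would try first is to avoid the crude bound $\lambda\log|\cA|$. Instead, when $e_m$ is below half the gap between the best and second-best $Q_m$-value, the softmax puts mass at most $|\cA|e^{-c/\lambda_m}$ on actions that are far from optimal. Near-optimal actions cost only $O(e_m)$. This can be made gap-free by thresholding at a level $\eta$: actions within $\eta$ of optimal cost at most $\eta+2e_m$, and the rest have mass $\le |\cA|e^{-\eta/\lambda_m}$. Choosing $\eta=\lambda_m\log n$ gives a per-step loss of $\tilde O(n^{-\frake})$, which is still polynomial. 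Reconciling this term with the stated rate, and tracking the $(1-\gamma)$ powers, is the main obstacle.
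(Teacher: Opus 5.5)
\textbf{Gap: the softmax temperature term.} You flag this gap yourself, and it is real. Your gap-free bound $\lambda_m\log|\cA|$ gives a per-step loss of order $m^{-\frake}$. So does the thresholded variant with $\eta=\lambda_m\log n$. Either way the total is $\tilde O(N^{1-\frake})$, which exceeds $N^{9/10+2\frake}$ for every $\frake<1/30$. The theorem covers that range, and it is the range that matters as $\frake\to 0$.

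No gap-free refinement can repair this. Take a state with two actions whose $Q\ust$-values differ by $\lambda_N$. At every $m\le N$ the softmax keeps mass at least $1/(1+e)$ on the worse action there. So any bound on this component that is uniform in the gap is of order $N\lambda_N=N^{1-\frake}$.

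\textbf{The missing idea.} $gap(\cM)>0$ is a fixed constant of the MDP and may enter the constants hidden in $\tilde O$. The paper puts it into the burn-in requirement on $n_0$ (Condition~\ref{cond:n0_seg-ql}). On $\cG$, once $e_m\le gap(\cM)/4$, every suboptimal action gets softmax mass at most $e^{-(gap(\cM)-2e_m)/\lambda_m}\le e^{-\frac{gap(\cM)}{2}(m+n_0)^{\frake}}$. On $\cG$ the condition $e_m\le gap(\cM)/4$ holds after a deterministic burn-in, or for all $m$ once $n_0$ is large. The mass bound decays faster than any polynomial, so it lies below $(m+n_0)^{-\frakd}$. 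Optimal actions have zero Bellman gap $V\ust(s)-Q\ust(s,a)$. Hence the softmax component costs $O\bigl(\frac{|\cA|R_{\max}}{1-\gamma}(m+n_0)^{-\frakd}\bigr)$ plus a constant from the burn-in. This merges into your exploration term $\tilde O(N^{1-\frakd}/(1-\gamma))=\tilde O(N^{9/10+\frake}/(1-\gamma))$, and you no longer need the $2e_m$ term. This is exactly the role of Lemma~\ref{lemma:sensitivity_f_eps} and the absorption step in Proposition~\ref{prop:bound_R_bar_eps}.

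\textbf{Comparison with the paper, and two steps to tighten.} With this fix your argument goes through, and it is a genuinely different route.
\begin{itemize}
\item The paper compares against the frozen policy $f^{(\eps_n,\lambda_n,Q_n)}$ and pays a policy-drift term $\tilde O(N^{1-(\frakd-\frake)})$ through kernel perturbation (Proposition~\ref{prop:1_eps}). That drift term is what produces the exponent $9/10+2\frake$. The concentration term gives only $N^{1/2+\fraka+3\frakd+2\frake}=N^{9/10-\frake}$, so searching the appendix's concentration bound for the saturating term is misdirected.
\item Your performance-difference identity is exact for the nonstationary policy, so no drift term appears at all.
\item The sum swap should go through the recursion $Z_n=\Delta_n+\gamma\bE[Z_{n+1}\mid\cF_n]$. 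Here $Z_n$ is the unnormalized instantaneous regret and $\Delta_n=V\ust(s_n)-Q\ust(s_n,a_n)$. This yields $\cR_N=\sum_{n\le N}\Delta_n+\gamma(Z_{N+1}-Z_1)+\gamma\sum_{n\le N}\bigl(\bE[Z_{n+1}\mid\cF_n]-Z_{n+1}\bigr)$. The boundary term is $O(R_{\max}/(1-\gamma))$, and the last sum is a bounded martingale handled by Azuma. Your ``monotone and summable tail'' justification does not work as stated, because the bounds on $e_m$ hold only on $\cG$ and do not pass through $\bE[\cdot\mid\cF_n]$.
\item The extra Azuma events enlarge the failure probability. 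You must rescale $\delta$ to obtain exactly $1-\delta\frac{\pi^2 d}{6}$, which costs only logarithmic factors.
\end{itemize}
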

The bound implies that the regret can be made arbitrarily close to $\tilde{O}(N^{9/10})$ by choosing $\frake$ sufficiently small. Smaller $\frake$ corresponds to slower temperature decay and hence increased exploration, which improves the regret rate. To our knowledge, this is the first sublinear regret guarantee for classical Q-learning. $\epsilon$-greedy–type algorithms are weaker than optimism-based methods such as UCB. This separation is well documented in stochastic bandits, where $\epsilon$-greedy incurs $O(N^{2/3})$ regret, while optimism-based algorithms achieve the optimal $O(\sqrt{N})$ rate \citep{panageas2020multiarmedbandits}.
\vspace{-6pt}

\section{Conclusion}\label{sec:conclusion}
\vspace{-5pt}
In this work, we establish sample complexity and regret guarantees for online Q-learning without relying on optimism or variance reduction. Our analysis covers Boltzmann and Smoothed $\epsilon_n$-Greedy exploration schemes and yields the first sublinear regret guarantees for unmodified online Q-learning. A key technical contribution is a novel high-probability bound for SA with controlled, time-inhomogeneous Markovian noise, which allows the mixing time to diverge and the contraction factor to approach one. 

\textbf{Limitations and Future Directions.} We highlight multiple future directions of this work: studying whether the  regret can be improved to the $\tilde{O}(N^{2/3})$ rates obtained for $\epsilon_n$-greedy exploration in bandits, or whether a fundamental separation exists. Second, an important extension of this work would be to allow for Q-learning with function approximation which will greatly increase the scope of this work. Another interesting direction is extending our bounds to average reward RL.

%% file: appendix_organization.tex
\section{Organization of the appendix}
Appendix \ref{app:assu_hyper} formalizes additional assumptions, introduces notation, and states the required conditions on the hyperparameters. Appendix~\ref{sec:proof_outline} provides a proof outline for the general concentration bound. This includes the concentration result for the general stochastic approximation iterates $\{x_n\}$ for the general stochastic approximation. Appendix~\ref{sec:app_pf_sec_SA} proves the main results for the general stochastic approximation framework, while Appendix~\ref{sec:aux_SA} establishes auxiliary results used in the proof of Appendix~\ref{sec:app_pf_sec_SA}. Similarly, Appendix~\ref{app:pf_q_learning} proves the main results for Boltzmann Q-learning, while Appendix~\ref{app:aux_q-learning} contains auxiliary results used in Appendix~\ref{app:pf_q_learning}.~Appendices~\ref{app:eps_softmax} and~\ref{sec:eps_softmax_auxiliary} present the main and auxiliary results, respectively, for \seg~Q-learning. Finally, Appendix~\ref{app:miscellaneous} collects miscellaneous technical results that are used at various points in Appendices \ref{sec:app_pf_sec_SA}-\ref{sec:eps_softmax_auxiliary}.

%% file: proof_sketch.tex
\section{Additional assumptions, notation, and conditions on the hyperparameters}\label{app:assu_hyper}
In this appendix, we first formalize some assumptions used in the stochastic approximation formulation, and introduce the corresponding notation. These assumptions are standard in the literature on stochastic approximation and Q-learning, but were omitted from Section \ref{sec:SA} due to space constraints. We then state the conditions imposed on the hyperparameters, including the stepsize parameters and $\kappa_1,\kappa_2,\kappa_3$.

\subsection{Assumptions}
The first assumption states that the function $F(x,y):\cX\times\cY\mapsto\cX$ is Lipschitz in $x$.
\begin{assumption}	\label{assum:8} 
	The maps $x\mapsto [F(x,i)](\ell), \ell = 1,2,\ldots,d, i\in\cY$ are Lipschitz, i.e., there exists a $\frakc_6 >0$ such that
	\nal{
		|[F(x,i)](\ell)-[F(x',i)](\ell)| & \le \frakc_{6} \|x-x'\|,       
	}   
for all $\ell \in \{1,2,\dots,d\}, x,x'\in \cX,i\in \cY$.     
\end{assumption}
The next assumption states that the iterates $\{x_n\}$ and the noise sequence $\{M_{n}\}$ are bounded.
\begin{assumption}\label{assum:9} 
	$\{M_n\}$ satisfies $|M_n(\ell)| \le \frakc_7~a.s.,~\forall \ell \in \{1,2,\ldots,d\}$.	The process $\{x_n\}$ satisfies $x_n \in \cX \subset \bR^{d}$ with $\cX$ a compact set and hence $\|x_n\|  \le \frakc_8~a.s$.
\end{assumption}
Since $\cY$ is finite and, by Assumption~\ref{assum:8}, the map
$x\mapsto F(x,i)$ is Lipschitz on the compact set $\cX$ for each
$i\in\cY$, the update map $F$ is uniformly bounded on $\cX\times\cY$.
Hence, we may define
\al{
\frakc_9
:=
\sup_{x\in\cX,\ i\in\cY}\|F(x,i)\|
<\infty.\label{def:c9}
}
Therefore,
\[
\|F(x_n,y_n)\|\le \frakc_9,
\qquad
\forall n\in\mathbb{N},
\qquad
\text{a.s.}
\]

For discounted Q-learning, the iterates $\{Q_n\}$ are bounded by $\frac{R_{\max}}{1-\gamma}$, provided that the initialization satisfies the same bound.~Finally, we have the following assumption on the stationary distribution.
\begin{assumption}\label{assum:14}
	\nal{
		\|\mu^{(\zeta,x)} -  \mu^{(\zeta',x')}\| 
		\le  \frac{\frakc_{10}  \| p^{(\zeta,x)} - p^{(\zeta',x')} \|}{\mu_{\min}(\zeta)},
	}   
	where $\frakc_{10} >0$. 
\end{assumption}
Bounds of this form are standard in the sensitivity analysis of stationary distributions of finite-state Markov chains; see, e.g., \citep{cho2000markov}, where such results are typically stated in the $\ell_\infty$ norm. Since all norms are equivalent in finite-dimensional spaces, these bounds can be translated to the norm $\|\cdot\|$ used here at the cost of a dimension-dependent constant, which is absorbed into $\frakc_{10}$. Moreover, this assumption is not fundamental and can be derived under irreducibility and finiteness of the state space. We state it explicitly for clarity and to streamline the presentation of the concentration analysis.

\subsection{Notation}
For constants associated with the Boltzmann Q-learning we will use the superscript $B$.~For those associated with Smoothed $\eps_n$-Greedy Q-learning, we use the superscript $S\epsilon G$.~For example, we write $\kappa\s_1,\kappa\es, \kappa\s_{2}, \kappa\es_{2}$. Moreover, we define the following functions, which will be used throughout the proofs:

\nal{
 g(n,\delta) :&= \sqrt{\frac{\log\br{\frac{n^2}{\delta}}}{(n+n_0)^{1-(\fraka+2\kappa_1)}}},\\
 g_1(n) : & = \frac{\log(n+n_0)}{\br{n+n_0}^{1-(\fraka + 2\kappa_1+\kappa_3)}}\\
 g_2(n) :&= \frac{1}{(n+n_0)^{1-(2\kappa_1 +\kappa_2)}}, \mbox{ where } n\in \bN, \delta >0.
}
If $p$ denotes the one-step transition probabilities of a Markov chain, then we use $p^{n}$ to denote the corresponding $n$-step transition probabilities. For example, in Boltzmann Q-learning and \seg~Q-learning, if $p^{(\eps,\lambda)}$ and $p^{(\eps,\lambda,Q)}$ denote the transition probabilities induced by the policies $\sigma(Q\slash \lambda)$ and $f^{(\eps,\lambda,Q)}$, respectively, then $\br{p^{(\eps,\lambda)}}^{n}$ and $\br{p^{(\eps,\lambda,Q)}}^{n}$ denote the corresponding $n$-step transition probabilities. If $\cE$ is an event, then $\ind{\cE}$ denotes its indicator random variable.  

\subsection{Conditions on the hyperparameters}
We now state the conditions imposed on the hyperparameters $\fraka,\kappa_1,\kappa_2,\kappa_3, n_0,\beta$. These conditions are required for the analysis and are assumed to hold throughout.

\begin{condition}\label{con:sa1}
The quantities $\fraka,\kappa_1,\kappa_2,\kappa_3 \in [0,1]$ satisfy 

\nal{
2\fraka + 6\kappa_1+3\kappa_3<1, \fraka +6\kappa_1+\kappa_2+2\kappa_3 <1.
}
In addition, the following case-specific conditions are imposed according to whether $\fraka =0$ or $\fraka >0$.

Case A: $\fraka = 0$. The following conditions hold,
\begin{enumerate}[(i)]
    \item $\beta \ge 2(1-2\kappa_1)$
    \item $2\kappa_1+\kappa_3 < 1$.
\end{enumerate}

Case B: $\fraka >0$. The following conditions hold,

\begin{enumerate}[(i)]
    \item $\fraka + 2\kappa_1 <1$
    \item $\fraka + 2\kappa_1+\kappa_3<1$ 
    \item $n_0 \ge \br{ \frac{ 2 \left[ (1-\fraka)-2\kappa_1\right]  }{\beta}   }^{\frac{1}{1-\rho}}$, $n_0 \ge \br{\frac{2\br{ 1-\fraka -\kappa_1 }}{\beta}}^{1\slash \fraka }$, $n_0 \ge \br{\frac{2\br{1-\br{\fraka + 2\kappa_1+\kappa_3}}}{\beta}}^{\frac{1}{\fraka}}$,\linebreak $n_0 \ge \br{\frac{1-\br{2\kappa_1 + \kappa_2}}{\beta}}^{1\slash \fraka}$.
\end{enumerate}

We further impose the following conditions depending on the relation between $\fraka$ and $\kappa_1$.

Case C:\label{condition_CaseC} $\fraka = \kappa_1$. The following hold,
\begin{enumerate}[(i)]
    \item $3\fraka + 4\kappa_1 + 2\kappa_3 < 1$, $\beta \frakc_3 \ge 1- (3\fraka + 4\kappa_1 + 2\kappa_3)$
    \item $\beta \frakc_3 \ge 2-\br{4\fraka + 6\kappa_1+4\kappa_3}$
    \item $\fraka + 3\kappa_1+\kappa_2 + \kappa_3 < 1$, $\beta \frakc_3 \ge 2-\br{ 2\fraka + 6\kappa_1+2\kappa_2 + 2\kappa_3}$.
\end{enumerate}

Case D: $\fraka > \kappa_1$. The following hold,
\begin{enumerate}[(i)]
    \item $\fraka+6\kappa_1 +2\kappa_3 < 1$
    \item $4\kappa_1+\kappa_2+\kappa_3 <1$
    \item $n_0 \ge \br{\frac{1-\br{\fraka + 6\kappa_1 + 2\kappa_3} }{\beta \frakc_3}}^{1\slash (\fraka - \kappa_1)}$, $n_0 \ge \br{\frac{2\br{1-\br{\fraka + 4\kappa_1+2\kappa_3}}}{\beta \frakc_3}}^{1\slash (\fraka - \kappa_1)}$, \linebreak $n_0 \ge \br{\frac{2\br{1-\br{4\kappa_1+\kappa_2 + \kappa_3}}}{\beta \frakc_3}}^{1\slash (\fraka - \kappa_1)}$.   \label{condition_caseD_iii}
\end{enumerate}
\end{condition}

\section{Outline of proofs}\label{sec:proof_outline}
In this section we provide a high-level overview of the proofs of the concentration bounds for general SA presented in Section~\ref{sec:SA}.~Detailed derivations are provided in Appendix~\ref{sec:app_pf_sec_SA} and Appendix~\ref{sec:aux_SA}.

\subsection{Concentration of $\{x_n\}$}\label{sec:concentration_xn}
\textit{Proof sketch}: We begin by rewriting the iterations~\eqref{def:gen_SA} as,
\nal{
x_{n+1} = x_n + \beta_n \br{\bar{F}^{(\zeta_n,x_n)}(x_n)-x_n + M_{n+1}+ F(x_n,y_n) - \bar{F}^{(\zeta_n,x_n)}(x_n)}.
}
Define 
\al{
w_n := M_{n+1} + F(x_n,y_n) - \bar{F}^{(\zeta_n,x_n)}(x_n),~n\in\bN. \label{def:noise_eff}
}
$w_n$ is seen to be the ``effective noise'' at time $n$. With this notation, the recursion for $x_n$ becomes,
\al{
x_{n+1} = x_n + \beta_n\br{\bar{F}^{(\zeta_n,x_n)}(x_n)-x_n + w_n}, n\in\bN.\label{def:compact_recur_x}
}
Next, we define an auxiliary process $\{z_n\}$ that evolves as follows,
\al{
z_{n+1}  = z_n + \beta_n \br{\bar{F}^{(\zeta_n,x_n)}(x_n)-z_n },~n\in \bN,\label{def:zn_1}
}
with $z_0 = x_0$. The sequence $\{z_n\}$ is the averaged noise sequence, and is the key to obtaining our concentration result. We want to derive an upper-bound on $\|x_n-x\ust\|$.~We will separately upper-bound the quantities $\|x_n-z_n\|$ and $\|z_n-x\ust\|$ in Sections~\ref{sec:z-x} and~\ref{sec:z-xstar} respectively. Since $\|x_n-x\ust\| \le \|x_n-z_n\|+\|z_n-x\ust\|$, these two bounds are then combined to obtain the desired concentration bound for $\|x_n-x\ust\|$.
\subsubsection{Bounding $\|x_n - z_n\|$}\label{sec:z-x}
\textit{Proofsketch}: We first derive an expression for $x_n-z_n$ by converting the Markov noise $F(x_n,y_n)- \bar{F}^{(\zeta_n,x_n)}(x_n)$ to martingale noise through the Poisson equation.~The resulting decomposition consists of two summation terms, labeled~\eqref{x_z_eq_1} and~\eqref{x_z_eq_2}. The term~\eqref{x_z_eq_1} is a martingale and is controlled with high probability on a ``good set'' $\cG$. The term~\eqref{x_z_eq_2} is further decomposed in Appendix~\ref{pf:prop3}, and its components are bounded separately.~These bounds are then combined in Proposition~\ref{prop:z_x_bound} to obtain the desired bound on $\|x_n-z_n\|$.

Unlike the recursion for $x_n$, the recursion for $z_n$ involves averaging with respect to the stationary distribution $\mu^{(\zeta_n,x_n)}$. Thus, the Markov noise is averaged out in the $z_n$ recursion. To compare $x_n$ and $z_n$, we convert the Markov noise in the recursion for $\{x_n\}$ into martingale noise via the \textit{Poisson trick}. This is a standard tool in the analysis of stochastic approximation with Markov noise, going back to~\citep{metivier1984applications}. We begin by defining the Poisson equation~\citep{meyn2012markov}. The following can be found in~\citep{chandak2022concentration}.
\begin{definition}(Poisson Equation)
	Consider a Markov chain on $\cY$ with transition probability $p^{(\zeta,x)}$, where $\zeta\in\cZ, x\in\cX$. Associated with this is the Poisson equation, a system of $|\cY|$ linear equations parameterized by $(\zeta,x)$ and solved for $H(i), i\in\cY$,
	\al{
		H(i)  = F(x,i) - \sum_{j \in\cY} \mu^{(\zeta,x)}(j)F(x,j)  + \sum_{j \in\cY} p^{(\zeta,x)}(i,j)H(j),~i\in \cY.
		\label{def:Poisson_1}
	}
Consider the Markov chain $\{y_n\}$ with transition probabilities given by $p^{(\zeta,x)}$.~Let $\tau$ denote the hitting time of the designated state $i\ust$ from Assumption~\ref{assum:2}.~For any state $i\in\mathcal{Y}$ define, 
		\nal{
			H^{(\zeta,x)}(i) : = \bE\br{\sum_{n=0}^{\tau} F(x,y_n) \Big| y_0 = s  } -  \bE\br{\sum_{n=0}^{\tau} F(x,y_n) \Big| y_0 = i\ust}.
		}
Denote $H^{(\zeta,x)} = \left\{H^{(\zeta,x)}(i)\right\}_{i\in \cY}$, where $H^{(\zeta,x)}(i) = \br{[H^{(\zeta,x)}(i)](1),[H^{(\zeta,x)}(i)](2),\ldots,[H^{(\zeta,x)}(i)](d)}$.~By the hitting-time representation of the solution to the Poisson equation, $H^{(\zeta,x)}$ solves the Poisson equation~\eqref{def:Poisson_1}~\citep{borkar_markov_book,chandak2022concentration}.
\end{definition}
Several properties of the solution $H^{(\zeta,x)}$ that are useful for the concentration proof are established in Lemma~\ref{lemma:sensitivity_v_x}.~For $m,n \in \bN$, $m\le n$ define 
\nal{
\chi(m,n) := 
\begin{cases}
    \Pi_{j=m}^{n} \br{1-\beta_j}, \mbox{ if } m < n,\\
    1 \mbox{ if } m = n.
\end{cases}
}
Next, we derive an expression for the difference $x_n - z_n$, in which we convert the Markov noise to martingale noise by using the Poisson equation.
\begin{lemma}\label{lemma:decompose_w}
Define
	\al{
		M'_{n+1} : = H^{(\zeta_n,x_n)}(y_{n+1})- \sum_{i\in\cY} p^{(\zeta_n,x_{n})}(y_{n},i)H^{(\zeta_n,x_n)}(i), n\in \bN.\label{def:M_n_prime}
	}
$\{M'_{n}\}$ is a martingale difference sequence w.r.t. the filtration generated by $\{(x_n,y_n)\}$.~Moreover,
\begin{subequations}
	\al{
		& x_n - z_n  = \sum_{i=0}^{n-1} \beta_i \chi(i+1,n) \br{M_{i+1}+M'_{i+1} }\label{x_z_eq_1}\\
		& + \sum_{i=0}^{n-1} \beta_i \chi(i+1,n) \br{H^{(\zeta_i,x_i)}(y_i) - H^{(\zeta_i,x_i)}(y_{i+1})}.\label{x_z_eq_2}
	}
\end{subequations}
\end{lemma}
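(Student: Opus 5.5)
The plan has two parts: check that $\{M'_{n+1}\}$ is a martingale difference sequence, then get the decomposition by unrolling the linear recursion satisfied by $x_n-z_n$.

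\textbf{Martingale property.} Each $H^{(\zeta_n,x_n)}(i)$ is $\cF_n$-measurable, since $\zeta_n$ is an external (deterministic) control sequence and $x_n$ is $\cF_n$-measurable. By Assumption~\ref{assum:1},
\[
\bE\big[H^{(\zeta_n,x_n)}(y_{n+1})\mid\cF_n\big]=\sum_{i\in\cY}p^{(\zeta_n,x_n)}(y_n,i)H^{(\zeta_n,x_n)}(i),
\]
so $\bE[M'_{n+1}\mid\cF_n]=\mathbf 0$. Integrability holds because $H^{(\zeta,x)}$ is finite: $\cY$ is finite, $F$ is bounded by $\frakc_9$, and by Assumption~\ref{assum:2} the expected hitting time is at most $\frakc_1/\mu_{\min}(\zeta)$. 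Hence $\|H^{(\zeta,x)}\|\le 2\frakc_9(1+\frakc_1/\mu_{\min}(\zeta))$. The same argument works for the coarser filtration generated by $\{(x_n,y_n)\}$, because $\sigma(x_m,y_m:m\le n)\subseteq\cF_n$ and the conditional expectation only involves $(x_n,y_n)$, so the tower property applies.

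\textbf{Recursion for $x_n-z_n$.} Subtracting \eqref{def:zn_1} from \eqref{def:compact_recur_x} makes the drift terms $\bar F^{(\zeta_n,x_n)}(x_n)$ cancel. Setting $e_n:=x_n-z_n$, this gives
\[
e_{n+1}=(1-\beta_n)e_n+\beta_n w_n,\qquad e_0=\mathbf 0 .
\]
Next, evaluate the Poisson equation \eqref{def:Poisson_1} with parameters $(\zeta_n,x_n)$ at $i=y_n$:
\[
F(x_n,y_n)-\bar F^{(\zeta_n,x_n)}(x_n)=H^{(\zeta_n,x_n)}(y_n)-\sum_{j}p^{(\zeta_n,x_n)}(y_n,j)H^{(\zeta_n,x_n)}(j).
\]
Adding and subtracting $H^{(\zeta_n,x_n)}(y_{n+1})$ turns this into
\[
M'_{n+1}+H^{(\zeta_n,x_n)}(y_n)-H^{(\zeta_n,x_n)}(y_{n+1}),
\]
so
\[
w_n=M_{n+1}+M'_{n+1}+H^{(\zeta_n,x_n)}(y_n)-H^{(\zeta_n,x_n)}(y_{n+1}).
\]

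\textbf{Unrolling.} By induction on $n$, starting from $e_0=\mathbf 0$,
\[
e_n=\sum_{i=0}^{n-1}\beta_i\prod_{j=i+1}^{n-1}(1-\beta_j)\,w_i .
\]
Substituting the expression for $w_i$ splits the sum into \eqref{x_z_eq_1} and \eqref{x_z_eq_2}.

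The one point needing care is the index convention in $\chi(i+1,n)$. The product that actually appears runs over $j=i+1,\dots,n-1$, and it is the empty product $1$ when $i=n-1$. So $\chi(i+1,n)$ should be read as $\prod_{j=i+1}^{n-1}(1-\beta_j)$. The case $m=n$ in the definition of $\chi$ is the empty-product case, which matches this reading; I would align the indexing in the definition of $\chi$ accordingly. Everything else is routine algebra, and no telescoping of the $H$ terms is required at this stage.
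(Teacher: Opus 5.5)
Your proposal is correct and follows essentially the same route as the paper's proof: subtract the $z_n$ recursion from the $x_n$ recursion, use the Poisson equation to write $F(x_n,y_n)-\bar{F}^{(\zeta_n,x_n)}(x_n)=M'_{n+1}+H^{(\zeta_n,x_n)}(y_n)-H^{(\zeta_n,x_n)}(y_{n+1})$, and unroll from $x_0=z_0$. Your indexing remark is also right. The unrolled weights are $\prod_{j=i+1}^{n-1}(1-\beta_j)$, so the paper's definition of $\chi(m,n)$, which uses upper limit $n$ when $m<n$, is off by one and should be read as you propose.
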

Next, we will analyze the two terms~\eqref{x_z_eq_1} and~\eqref{x_z_eq_2} which appear in the decomposition of $x_n - z_n$ separately. The following result, proved in Appendix~\ref{pf:martingale_conc}, provides a high-probability bound for~\eqref{x_z_eq_1}.
\begin{lemma}\label{lemma:bound_m+mtilde}
Define $\cG$ to be the set on which the following holds: 
\al{
\left\|\sum_{i=0}^{n-1} \beta_i \chi(i+1,n) \br{M_{i+1}+M'_{i+1} }\right\|< \frac{16d\frakc_9 \frakc_1 \frakc_2\sqrt{\beta}}{\frakc_3}~g(n,\delta), \forall n \in \bN.\label{def:G}
}
Then 
\nal{
\bP(\cG^c) \le \delta \frac{\pi^2 d}{6}.
}
\end{lemma}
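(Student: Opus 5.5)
We prove Lemma~\ref{lemma:bound_m+mtilde} coordinatewise with an Azuma--Hoeffding inequality applied separately at each time $n$, followed by a union bound over $n$ and over the $d$ coordinates. The target probability $\delta\pi^2 d/6 = d\sum_{n\ge1}\delta/n^2$ shows that each pair $(n,\ell)$ should be allotted failure probability $\delta/n^2$.

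\textbf{Step 1: bounded increments.} Fix $n$. For $0\le i\le n-1$ set $\xi_{i+1} := \beta_i\chi(i+1,n)(M_{i+1}+M'_{i+1})$. Since the weights $\beta_i\chi(i+1,n)$ are deterministic, $\{\xi_{i+1}\}_{i<n}$ is a martingale difference sequence with respect to $\cF_i$. By Assumption~\ref{assum:9}, $|M_{i+1}(\ell)|\le \frakc_7$. For $M'_{i+1}$ we use the hitting-time representation of $H^{(\zeta,x)}$. Each coordinate of $H^{(\zeta_i,x_i)}(j)$ is at most $2\frakc_9$ times the expected hitting time of $i\ust$, up to a constant. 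By Assumption~\ref{assum:2} this is at most $\frakc_9\frakc_1/\mu_{\min}(\zeta_i)$ times a constant. Assumptions~\ref{assum:at} and~\ref{assum:3}(a) give $1/\mu_{\min}(\zeta_i) = \frakc_2/\at(\zeta_i) \le \frakc_2 (i+n_0)^{\kappa_1}/\frakc_3$. Hence
\[
|M'_{i+1}(\ell)| \lesssim \frakc_9\frakc_1\frakc_2\,(i+n_0)^{\kappa_1}/\frakc_3 .
\]
This is where the constant $\frakc_9\frakc_1\frakc_2/\frakc_3$ in \eqref{def:G} comes from; these bounds are recorded in Lemma~\ref{lemma:sensitivity_v_x}. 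The $\frakc_7$ contribution is absorbed, since the hitting-time bound dominates up to constants.

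\textbf{Step 2: sum of squared ranges.} Next we bound $\sum_{i<n}\beta_i^2\chi(i+1,n)^2 (i+n_0)^{2\kappa_1}$. Use $\chi(i+1,n)\le \exp(-\sum_{j=i+1}^n\beta_j)$ with $\beta_j=\beta/(j+n_0)^{1-\fraka}$. The standard comparison argument, the same one used for SA rates, then gives order $(n+n_0)^{-(1-\fraka-2\kappa_1)}$ times a constant depending on $\beta$. This is exactly where the hyperparameter conditions enter: Condition~\ref{con:sa1}, Case A(i) $\beta\ge 2(1-2\kappa_1)$ when $\fraka=0$, and the lower bounds on $n_0$ in Case B when $\fraka>0$. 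Concretely, the exponent from $\chi^2$ must dominate the polynomial growth of $\beta_i^2(i+n_0)^{2\kappa_1}$. One can bound the sum by an integral, or prove by induction the recursion $s_{n+1}\le(1-\beta_n)^2 s_n + \beta_n^2(n+n_0)^{2\kappa_1}$ with $s_n\le C\beta (n+n_0)^{-(1-\fraka-2\kappa_1)}$. We expect this step to be the main technical nuisance; the induction route is the one we would try first, since it handles both cases uniformly.

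\textbf{Step 3: concentration and union bound.} Azuma--Hoeffding gives, for coordinate $\ell$,
\[
\bP\Bigl(\bigl|\textstyle\sum_i \xi_{i+1}(\ell)\bigr|\ge t\Bigr)\le 2\exp\bigl(-t^2/(2\sum_i c_i^2)\bigr).
\]
Choose $t$ of order $\frac{\frakc_9\frakc_1\frakc_2\sqrt\beta}{\frakc_3}\sqrt{\log(n^2/\delta)\,(n+n_0)^{-(1-\fraka-2\kappa_1)}}$, which is $g(n,\delta)$ times that constant. The failure probability is then at most $\delta/n^2$, with the factor $2$ absorbed into the generous constant $16$. A union bound over the $d$ coordinates and over all $n\ge1$ gives total failure probability at most $d\,\delta\sum_n n^{-2}=\delta\pi^2 d/6$. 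Finally, convert coordinatewise bounds to $\|\cdot\|$ via norm equivalence, paying at most a factor $d$; this accounts for the $d$ in \eqref{def:G}. The case $n=0$ is trivial, since the sum is empty.
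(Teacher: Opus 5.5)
Your proposal follows the paper's proof essentially step for step. You bound $|M'_{i+1}(\ell)|$ through the hitting-time estimate of Lemma~\ref{lemma:sensitivity_v_x} and absorb $\frakc_7$; the paper uses Condition~\ref{con:n0}-\eqref{con:n0-I} for that absorption. You then control $\sum_i(\beta_i\chi(i+1,n)(i+n_0)^{\kappa_1})^2\le 2\beta/(n+n_0)^{1-(\fraka+2\kappa_1)}$; the paper does this via $\chi^2\le\chi$ and Lemma~\ref{lemma:recursion_bound}, which is exactly your induction. Finally you apply Azuma--Hoeffding at level $\delta/n^2$ for each pair $(n,\ell)$, take a union bound, and use $\|v\|\le\sum_\ell|v(\ell)|$, as the paper does.

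The only soft spot, which the paper shares, is the explicit constant. The factor $2$ from the two-sided Azuma bound ends up inside the logarithm as $\log(2n^2/\delta)$, so it is not literally absorbed into the constant $16$. The paper's own computation simply drops this factor, so this is constant bookkeeping rather than a structural gap.
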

The term~\eqref{x_z_eq_1} is therefore controlled on the event $\cG$. We next bound the term~\eqref{x_z_eq_2} and combine this bound with Lemma~\ref{lemma:bound_m+mtilde}. This yields the main result of this section, which is proved in Appendix~\ref{pf:prop3}.
\begin{proposition}\label{prop:z_x_bound}
Consider the event $\cG$ defined in~\eqref{def:G}, that was shown to have a probability greater than $1-\delta\frac{\pi^2 d}{6}$ in Lemma~\ref{lemma:bound_m+mtilde}.~On this event,
\nal{
	\|x_n - z_n\| \le  \Tilde{O}\br{g(n,\delta) + g_1(n)+g_2(n)},~\forall n\in\bN.
}
\end{proposition}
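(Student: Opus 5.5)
Starting from the decomposition of Lemma~\ref{lemma:decompose_w}, on $\cG$ the martingale term~\eqref{x_z_eq_1} is already bounded by $O(g(n,\delta))$ through Lemma~\ref{lemma:bound_m+mtilde}. What remains is a deterministic (pathwise) bound on~\eqref{x_z_eq_2}, namely $\sum_{i=0}^{n-1}\beta_i\chi(i+1,n)\bigl(H^{(\zeta_i,x_i)}(y_i)-H^{(\zeta_i,x_i)}(y_{i+1})\bigr)$. I would handle it by summation by parts. Write $H_i:=H^{(\zeta_i,x_i)}$ and $a_i:=\beta_i\chi(i+1,n)$. Reindex the second half of each summand so that the sum becomes
\nal{
\sum_{i=1}^{n-1}\bigl(a_iH_i(y_i)-a_{i-1}H_{i-1}(y_i)\bigr)+a_0H_0(y_0)-a_{n-1}H_{n-1}(y_n).
}
Each interior term then splits as $(a_i-a_{i-1})H_i(y_i)+a_{i-1}\bigl(H_i(y_i)-H_{i-1}(y_i)\bigr)$. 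This leaves three pieces: boundary terms, a stepsize-variation sum, and a Poisson-solution-variation sum.

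For the size of $H$, I would use the hitting-time representation together with Assumption~\ref{assum:2}. These give $\|H^{(\zeta,x)}(i)\|\le 2\frakc_9\frakc_1/\mu_{\min}(\zeta)$, and Assumption~\ref{assum:3}(a) with Assumption~\ref{assum:at} bounds this by $O((i+n_0)^{\kappa_1})$. Since $\chi(n,n)=1$, the boundary term $a_{n-1}H_{n-1}(y_n)$ is $O(\beta_{n-1}n^{\kappa_1})$, which is dominated by $g_1$. The initial term carries $\chi(1,n)\lesssim (n_0/(n+n_0))^{\beta}$ (for $\fraka=0$) or exponential decay (for $\fraka>0$), and Condition~\ref{con:sa1} (e.g.\ $\beta\ge 2(1-2\kappa_1)$) makes it negligible. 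For the stepsize-variation sum, I would use $a_i-a_{i-1}=\chi(i+1,n)\bigl(\beta_i-\beta_{i-1}(1-\beta_i)\bigr)$, which is $O(\beta_i^2+|\beta_i-\beta_{i-1}|)\chi(i+1,n)=O(\beta_i^2)\chi(i+1,n)$. Then I would apply the standard estimate $\sum_i\beta_i^{1+p}(i+n_0)^{q}\chi(i+1,n)=O(\beta_n^{p}(n+n_0)^{q})$, valid under the $n_0$ and $\beta$ conditions in Condition~\ref{con:sa1}. This gives $O(n^{-(1-\fraka)+\kappa_1})$, again within $g_1$.

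The main obstacle is the Poisson-variation sum $\sum_i a_{i-1}\|H_i(y_i)-H_{i-1}(y_i)\|$. I would establish a sensitivity lemma (the role of Lemma~\ref{lemma:sensitivity_v_x}) of the form
\nal{
\|H^{(\zeta,x)}-H^{(\zeta',x')}\|\lesssim \frac{1}{\mu_{\min}(\zeta)^2}\Bigl(\|p^{(\zeta,x)}-p^{(\zeta',x')}\|+\|x-x'\|\Bigr).
}
The intended route is as follows: write $H=(I-P+\mathbf 1\mu)^{-1}(F-\bar F)$ and use a resolvent identity, so that each inverse costs $1/\mu_{\min}$. The stationary-distribution perturbation is controlled by Assumption~\ref{assum:14}, and the dependence on $x$ through $F$ by Assumption~\ref{assum:8}. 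I would then split the kernel change via the triangle inequality: $\|p^{(\zeta_i,x_i)}-p^{(\zeta_{i-1},x_i)}\|\le L_1(i-1)$ from Assumption~\ref{assum:3}(b), and $\|p^{(\zeta_{i-1},x_i)}-p^{(\zeta_{i-1},x_{i-1})}\|\le L_2(\zeta_{i-1})\|x_i-x_{i-1}\|\le L_2\beta_{i-1}(\frakc_9+\frakc_8+\frakc_7)$ from Assumption~\ref{assum:3}(c) and boundedness.

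Plugging these in, the per-step variation is $O\bigl(i^{2\kappa_1}(i^{-(1-\kappa_2)}+\beta_i i^{\kappa_3}\log i)\bigr)$. Summing against $a_{i-1}$ with the same weighted-sum estimate (now requiring $\beta\frakc_3$ or $n_0$ large, as in Cases C and D) yields $O\bigl(n^{2\kappa_1+\kappa_2-1}\bigr)+O\bigl(\log n\cdot n^{-(1-\fraka)+2\kappa_1+\kappa_3}\bigr)$, which is $O(g_2(n)+g_1(n))$. The delicate part is the squared $1/\mu_{\min}$ blow-up and verifying that the exponents stay summable; this is exactly where the conditions $\fraka+2\kappa_1+\kappa_3<1$ and $2\kappa_1+\kappa_2<1$ enter. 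Combining the three pieces with Lemma~\ref{lemma:bound_m+mtilde} gives $\|x_n-z_n\|\le\tilde O(g(n,\delta)+g_1(n)+g_2(n))$ on $\cG$.
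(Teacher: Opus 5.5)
Your proposal is correct and follows the paper's argument. Your boundary terms plus stepsize-variation sum are the summation-by-parts treatment of $T_1(n)$ in Lemma~\ref{lemma:T1}. Your Poisson-variation sum, split into a $\zeta$-change and an $x$-change, corresponds to $T_2(n)+T_3(n)$, which the paper bounds through Lemma~\ref{lemma:sensitivity_v_x}(ii)--(iii) in Lemmas~\ref{lemma:T2} and~\ref{lemma:T3}. The only cosmetic difference is that you split the kernel change inside a single sensitivity bound, while the paper splits $H$ itself.

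One small correction concerns the conditions. The weights $\beta_{i-1}\chi(i,n)$ decay through $\prod_j(1-\beta_j)$, not through $\prod_j(1-\beta_j\at(\zeta_j))$. So the weighted-sum estimate here needs only the Case A/B conditions of Condition~\ref{con:sa1}, applied with $a=\beta$. The $\beta\frakc_3$ conditions of Cases C and D belong to the $\|z_n-x\ust\|$ recursion, not to this bound.
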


\subsubsection{Bounding $\|z_n - x\ust\|$}\label{sec:z-xstar}
\textit{Proof sketch}: We begin by deriving a recursion for $\|z_{n} -x\ust\|$ in Lemma~\ref{lemma:z_x_recur}. This recursion contains an additive term $\beta_n \Delta_n$.~Lemma~\ref{lemma:Delta} then provides a high-probability upper-bound on $\|\Delta_n\|$. Combining this bound with the recursion from Lemma~\ref{lemma:z_x_recur} yields Proposition~\ref{prop:z_xust}, which gives the bound on $\|z_n - x\ust\|$, and is the main result of this section.

The following result is proved in Appendix~\ref{pf:lemma:z_x_recur}.
\begin{lemma}\label{lemma:z_x_recur}
	We have,
	\nal{
	\|z_{n+1} -x\ust\| & \le \br{1-\beta_n \at(\zeta_n)} \|z_{n} -x\ust\| +\beta_n \|\Delta_n\|,~n\in\bN,
	}
	where $\Delta_n :  = \bar{F}^{(\zeta_n,x_n)}(x_n)-\bar{F}^{(\zeta_n,z_n)}(z_n)$.
\end{lemma}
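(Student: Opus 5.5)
We prove Lemma~\ref{lemma:z_x_recur} directly from the recursion \eqref{def:zn_1} and the contraction property in Assumption~\ref{assum:7}. The plan has three steps: rewrite $z_{n+1}-x\ust$ as a convex combination, apply the triangle inequality, and invoke contraction of $\bar{F}^{(\zeta_n,z_n)}$ at $z_n$.

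First, we use that $x\ust$ is the common fixed point of every averaged map $\bar{F}^{(\zeta,w)}$ (Assumption~\ref{assum:7}). In particular, $x\ust=\bar{F}^{(\zeta_n,z_n)}(x\ust)$, so we may write $x\ust = (1-\beta_n)x\ust + \beta_n \bar{F}^{(\zeta_n,z_n)}(x\ust)$. Subtracting this from \eqref{def:zn_1} gives
\begin{align*}
z_{n+1}-x\ust &= (1-\beta_n)(z_n-x\ust) + \beta_n\bigl(\bar{F}^{(\zeta_n,z_n)}(z_n)-\bar{F}^{(\zeta_n,z_n)}(x\ust)\bigr) \\
&\quad + \beta_n\bigl(\bar{F}^{(\zeta_n,x_n)}(x_n)-\bar{F}^{(\zeta_n,z_n)}(z_n)\bigr).
\end{align*}
The last bracket is exactly $\Delta_n$.

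Second, we take norms and apply the triangle inequality, using $\beta_n\le 1$; this should follow from the stepsize conditions on $\beta,n_0$ in Appendix~\ref{sec:cond_n0_sa}, and we will verify it there. We then apply contraction of $\bar{F}^{(\zeta_n,z_n)}$ with factor $\alpha(\zeta_n)$. This gives
\[
\|z_{n+1}-x\ust\| \le (1-\beta_n)\|z_n-x\ust\| + \beta_n\alpha(\zeta_n)\|z_n-x\ust\| + \beta_n\|\Delta_n\|.
\]
Since $1-\beta_n+\beta_n\alpha(\zeta_n) = 1-\beta_n\at(\zeta_n)$ by the definition in Assumption~\ref{assum:at}, this is the claimed inequality.

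The only delicate point is that the contraction in Assumption~\ref{assum:7} is stated for points of $\cX$, so we need $z_n\in\cX$. We will obtain this by induction. The base case is $z_0=x_0\in\cX$. For the inductive step, $z_{n+1}$ is a convex combination of $z_n\in\cX$ and $\bar{F}^{(\zeta_n,x_n)}(x_n)\in\cX$, the latter because $\bar{F}$ maps $\cX$ into $\cX$. This requires $\beta_n\le 1$ and convexity of $\cX$. If $\cX$ is not assumed convex, we would instead take $\cX$ to be a convex compact set (e.g. a box, as in Q-learning) containing all iterates. This membership check is the main, though minor, obstacle; the rest of the argument is algebra.
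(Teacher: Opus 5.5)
Your proof is correct and is essentially the paper's argument: use $x\ust=\bar{F}^{(\zeta_n,z_n)}(x\ust)$ to rewrite $z_{n+1}-x\ust$ as $(1-\beta_n)(z_n-x\ust)+\beta_n\bigl(\bar{F}^{(\zeta_n,z_n)}(z_n)-\bar{F}^{(\zeta_n,z_n)}(x\ust)+\Delta_n\bigr)$, then apply the triangle inequality and the contraction from Assumption~\ref{assum:7}. Your extra checks, $\beta_n\le 1$ (which holds once $n_0$ is large) and $z_n\in\cX$ (by induction using convexity), are used silently in the paper's proof, so stating them makes your version slightly more complete.
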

Lemma~\ref{lemma:z_x_recur} shows that, to control $\|z_n - x\ust\|$, it remains to bound $\|\Delta_n\|$. This is done in the next lemma, whose proof is given in Appendix~\ref{pf:lemma:Delta}.
\begin{lemma}\label{lemma:Delta}
	On the event $\cG$ we have,
	\nal{
		\|\Delta_n\| \le (n+ n_0)^{\kappa_1+\kappa_3}\Tilde{O}\br{g(n,\delta)+g_1(n)+g_2(n)},~\forall n\in\bN.
	}
    \end{lemma}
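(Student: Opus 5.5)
We need to bound $\|\Delta_n\| = \|\bar{F}^{(\zeta_n,x_n)}(x_n)-\bar{F}^{(\zeta_n,z_n)}(z_n)\|$. The plan is to insert the intermediate term $\bar{F}^{(\zeta_n,x_n)}(z_n)$ and split
\nal{
\|\Delta_n\| \le \bigl\|\bar{F}^{(\zeta_n,x_n)}(x_n)-\bar{F}^{(\zeta_n,x_n)}(z_n)\bigr\| + \bigl\|\bar{F}^{(\zeta_n,x_n)}(z_n)-\bar{F}^{(\zeta_n,z_n)}(z_n)\bigr\|.
}
The two terms will be bounded separately and then combined with Proposition~\ref{prop:z_x_bound}.

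\textbf{First term (same measure, different argument).} Write $\bar{F}^{(\zeta_n,x_n)}(\cdot)=\sum_i \mu^{(\zeta_n,x_n)}(i)F(\cdot,i)$. Each $F(\cdot,i)$ is Lipschitz coordinatewise with constant $\frakc_6$ by Assumption~\ref{assum:8}, and $\mu$ is a probability vector. Hence this term is at most $C\frakc_6\|x_n-z_n\|$, with $C$ a norm-equivalence constant depending only on $d$. (One could also use the contraction of Assumption~\ref{assum:7}, but that requires both points to lie in $\cX$; the Lipschitz bound suffices.)

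\textbf{Second term (same argument, different measure).} This term equals $\sum_i\bigl(\mu^{(\zeta_n,x_n)}(i)-\mu^{(\zeta_n,z_n)}(i)\bigr)F(z_n,i)$, so it is at most $\frakc_9\,C\,\|\mu^{(\zeta_n,x_n)}-\mu^{(\zeta_n,z_n)}\|$, using the bound~\eqref{def:c9}. Here $z_n\in\cX$, since $z_n$ is a convex combination of $z_0=x_0$ and the values $\bar F^{(\zeta_m,x_m)}(x_m)\in\cX$, provided $\cX$ is convex and $\beta_n\le1$. I would note this point explicitly, or simply assume $\cX$ is convex.

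Applying Assumption~\ref{assum:14} and then Assumption~\ref{assum:3}(c) gives
\nal{
\|\mu^{(\zeta_n,x_n)}-\mu^{(\zeta_n,z_n)}\|\le \frac{\frakc_{10}L_2(\zeta_n)\|x_n-z_n\|}{\mu_{\min}(\zeta_n)}.
}
Assumptions~\ref{assum:at} and~\ref{assum:3}(a) give $1/\mu_{\min}(\zeta_n)\le \frakc_2(n+n_0)^{\kappa_1}/\frakc_3$. Together with $L_2(\zeta_n)\le\frakc_5(n+n_0)^{\kappa_3}\log(n+n_0)$, the second term is therefore at most $\tilde O\bigl((n+n_0)^{\kappa_1+\kappa_3}\bigr)\|x_n-z_n\|$, where the logarithm is absorbed into $\tilde O$.

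\textbf{Conclusion.} Adding the two pieces gives $\|\Delta_n\|\le \tilde O\bigl((n+n_0)^{\kappa_1+\kappa_3}\bigr)\|x_n-z_n\|$, because the first term's constant is dominated by the polynomial factor. On $\cG$, Proposition~\ref{prop:z_x_bound} supplies $\|x_n-z_n\|\le\tilde O(g(n,\delta)+g_1(n)+g_2(n))$ for all $n$ simultaneously, which yields the claim.

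The main obstacle is bookkeeping rather than depth. First, one must make sure the norm used in Assumption~\ref{assum:14} matches the one bounding the row-wise differences of $p$ in Assumption~\ref{assum:3}(c), up to a constant depending on $|\cY|$. Second, one must keep the log factor from $L_2$ inside the $\tilde O$.
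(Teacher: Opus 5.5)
Your argument is correct and is essentially the paper's proof. It uses the same insertion of $\bar F^{(\zeta_n,x_n)}(z_n)$, the same control of the measure-change term via $\frakc_9$, Assumption~\ref{assum:14}, Assumption~\ref{assum:3}(c) and $\mu_{\min}(\zeta_n)\ge \frakc_3/(\frakc_2(n+n_0)^{\kappa_1})$, and then Proposition~\ref{prop:z_x_bound} on $\cG$.

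The only difference is that the paper bounds the first term by the contraction factor $\alpha(\zeta_n)<1$ (Assumption~\ref{assum:7}) rather than by the Lipschitz constant of Assumption~\ref{assum:8}. Note that Assumption~\ref{assum:8} is also stated only on $\cX$, so choosing it does not avoid needing $z_n\in\cX$. Your convexity argument supplies $z_n\in\cX$ for either route, and the paper leaves that point implicit.
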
 
Substituting the bound from Lemma~\ref{lemma:Delta} into the recursion of Lemma~\ref{lemma:z_x_recur} gives the following bound on $\|z_n - x\ust\|$. The proof is provided in Appendix~\ref{pf:prop:z_xust}.
\begin{proposition}\label{prop:z_xust}
On the event $\cG$ we have, 
\nal{
 \|z_n - x\ust\| \le \br{n+n_0}^{2\kappa_1+\kappa_3} \tilde{O}\br{ g(n,\delta) + g_1(n)+ g_2(n)},~\forall n\in\bN.
}
\end{proposition}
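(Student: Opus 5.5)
We need to prove Proposition \ref{prop:z_xust}: on $\cG$, $\|z_n - x\ust\| \le (n+n_0)^{2\kappa_1+\kappa_3}\tilde{O}(g(n,\delta)+g_1(n)+g_2(n))$.

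The plan is to unroll the recursion of Lemma~\ref{lemma:z_x_recur} and feed in the bound of Lemma~\ref{lemma:Delta}. On $\cG$, for every $n$,
\[
e_{n+1} \le (1-\beta_n\at(\zeta_n))\, e_n + \beta_n (n+n_0)^{\kappa_1+\kappa_3} h(n), \qquad e_n:=\|z_n-x\ust\|,
\]
where $h(n) := \tilde{O}(g(n,\delta)+g_1(n)+g_2(n))$. By Assumption~\ref{assum:3}(a), $\beta_n\at(\zeta_n)\ge \beta\frakc_3 (n+n_0)^{-(1-\fraka+\kappa_1)}$. Unrolling from $0$ gives
\[
e_n \le \prod_{k=0}^{n-1}\big(1-\beta_k\at(\zeta_k)\big)\, e_0 + \sum_{i=0}^{n-1} \beta_i (i+n_0)^{\kappa_1+\kappa_3} h(i) \prod_{k=i+1}^{n-1}\big(1-\beta_k\at(\zeta_k)\big).
\]
Bound each product by $\exp\big(-\beta\frakc_3\sum_{k=i+1}^{n-1}(k+n_0)^{-(1-\fraka+\kappa_1)}\big)$. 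Since $e_0$ is bounded by compactness, the first term is at least exponentially-in-a-power or polynomially small. It is dominated by the target rate once the case conditions on $\beta\frakc_3$ and $n_0$ are used.

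The main work is the weighted sum, handled term by term for the three pieces of $h$. Each piece is a power (times logs) of the form $(i+n_0)^{-p}$, with $p=\tfrac12(1-\fraka-2\kappa_1)$, $1-\fraka-2\kappa_1-\kappa_3$, or $1-2\kappa_1-\kappa_2$. So we need a lemma of the form
\[
\sum_{i<n} \frac{1}{(i+n_0)^{1-\fraka-\kappa_1-\kappa_3+p}} \prod_{k=i+1}^{n-1}\Big(1-\frac{\beta\frakc_3}{(k+n_0)^{1-\fraka+\kappa_1}}\Big) \lesssim (n+n_0)^{2\kappa_1+\kappa_3-p}.
\]
The heuristic is that the "effective gain" $\beta_i\at(\zeta_i)$ divided into the forcing term $\beta_i(i+n_0)^{\kappa_1+\kappa_3}h(i)$ gives $(i+n_0)^{2\kappa_1+\kappa_3}h(i)$ up to the factor $1/\frakc_3$. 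This is exactly the claimed rate: a contraction of strength $\at$ amplifies a persistent forcing by $1/\at$.

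The proof splits into the three cases of Condition~\ref{con:sa1}. In Case D ($\fraka>\kappa_1$) the effective stepsize $(k+n_0)^{-(1-\fraka+\kappa_1)}$ is nonsummable and decays slower than $1/k$. I'd use the standard comparison $\sum_i \eta_i a_i \prod_{k>i}(1-\eta_k) \le C\sup_i a_i$ for slowly varying $a_i$. It applies when $a_{i}/a_{i+1}\le 1+\eta_i/2$, which is where the $n_0$ lower bounds in Case D(iii) come in. Alternatively, prove by induction that $e_n \le C (n+n_0)^{2\kappa_1+\kappa_3}h(n)$. The inductive step reduces to $(1-\eta_n)r_n+\eta_n r_n/\frakc_3' \le r_{n+1}$-type inequalities, checked via $(1+1/(n+n_0))^{q}\le 1+q/(n+n_0)$. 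In Case C ($\fraka=\kappa_1$) the effective stepsize is exactly $\beta\frakc_3/(k+n_0)$. The product behaves like $((i+n_0)/(n+n_0))^{\beta\frakc_3}$, and the sum is a polynomial convolution. It yields the desired rate provided $\beta\frakc_3$ exceeds the relevant exponent, which is precisely what Case C(i)–(iii) guarantee. Case A ($\fraka=0$ forces $\kappa_1=0$ by Case C/D) is the same computation with $\beta\ge 2$.

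I expect the main obstacle to be the Case C polynomial-convolution bookkeeping. One must check, for each of the three forcing exponents, that $\beta\frakc_3$ exceeds $1$ minus the appropriate power so the sum is not dominated by early terms. The logarithmic factors in $g$ and $g_1$ should be handled by bounding $\log(i^2/\delta)\le\log(n^2/\delta)$ and $\log(i+n_0)\le\log(n+n_0)$ for $i\le n$, pulling them outside the sum. Finally the three pieces are summed, and constants are absorbed into $\tilde{O}$.
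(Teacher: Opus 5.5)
Your proposal is correct and is essentially the paper's argument. You unroll Lemma~\ref{lemma:z_x_recur} with the forcing from Lemma~\ref{lemma:Delta}, then bound each weighted sum by a constant multiple of the forcing-to-effective-gain ratio $(n+n_0)^{2\kappa_1+\kappa_3}$ times the matching piece of $g(n,\delta)+g_1(n)+g_2(n)$, separately in the regimes $\fraka=\kappa_1$ and $\fraka>\kappa_1$, under exactly the Case C/D requirements of Condition~\ref{con:sa1}. The paper differs only in two respects: it cites Lemma~\ref{lemma:recursion_bound} for that step rather than proving it, and it silently drops the initial term $\prod_{k}(1-\beta_k\at(\zeta_k))\|z_0-x\ust\|$, which you correctly account for.

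There are two minor slips. First, the comparison you quote should conclude $\le C a_n$ (your $a_n$), not $\le C\sup_i a_i$, since the latter would not decay for decreasing $a_i$. Your condition $a_i/a_{i+1}\le 1+\eta_i/2$ and your induction (with a constant large enough to cover $\|z_0-x\ust\|$) give exactly the $a_n$ version. Second, when $\fraka=\kappa_1=0$ the relevant requirement is Case C's lower bound on $\beta\frakc_3$, not $\beta\ge 2$.
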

We are now in a position to prove the concentration bound claimed in Theorem~\ref{th:main_concentration}.
\begin{proof}[Proof of Theorem~\ref{th:main_concentration}]
	Using the triangle inequality, $\|x_n -x\ust\| \le \|x_n - z_n\|+\|z_n - x\ust\|$. We substitute the bound on $\|z_n - x\ust\|$ from Proposition~\ref{prop:z_xust}, and the bound on $\|x_n - z_n\|$ from Proposition~\ref{prop:z_x_bound}. The proof is then completed by noting that the bound in Proposition~\ref{prop:z_xust} dominates the one in Proposition~\ref{prop:z_x_bound}. 
\end{proof}

%% file: appendix_SA.tex
\section{Proofs for Concentration of Markovian Stochastic Approximation (Section~\ref{sec:SA})}\label{sec:app_pf_sec_SA}
Section \ref{sec:cond_n0_sa} discusses the conditions that the parameter $n_0$ must satisfy for the results in Section~\ref{sec:SA} to hold. Henceforth, while performing analysis, it will be assumed that the parameter $n_0$ has been chosen sufficiently large so that it satisfies these conditions. 

\subsection{Proof of Lemma~\ref{lemma:decompose_w}}
We restate the lemma for convenience: $\{M'_{n}\}$ is a martingale difference sequence w.r.t. the filtration generated by $\{(x_n,y_n)\}$. Moreover
	\nal{
		 x_n - z_n  & = \sum_{i=0}^{n-1} \beta_i \chi(i+1,n) \br{M_{i+1}+M'_{i+1} } \\
		 & + \sum_{i=0}^{n-1} \beta_i \chi(i+1,n) \br{H^{(\zeta_i,x_i)}(y_i) - H^{(\zeta_i,x_i)}(y_{i+1})}.
	}

\begin{proof}
It follows directly from the definition of $M'_n$ and the Markov property that 
$\{M'_{n}\}$ is a martingale difference sequence with respect to the filtration generated by $\{(x_n,y_n)\}$.

Subtracting the recursion for $z_n$ in~\eqref{def:zn_1} from the recursion for $x_n$ in~\eqref{def:compact_recur_x}, we obtain
	\al{
		x_{n+1} - z_{n+1} = (1-\beta_n) \br{x_n - z_n} + \beta_n w_n,~n\in\bN,\label{x-z}
	}
where the noise term $w_n$ is given by $M_{n+1} + F(x_n,y_n) - \bar{F}^{(\zeta_n,x_n)}(x_n)$.~The term $F(x_n,y_n) - \bar{F}^{(\zeta_n,x_n)}(x_n)$ is the Markov noise. We rewrite it using the Poisson equation.~Since $H^{(\zeta_n,x_n)}$ solves the Poisson equation~\eqref{def:Poisson_1} associated with the Markov chain with transition probability $p^{(\zeta_n,x_n)}$, we have $F(x_n,y_n) - \bar{F}^{(\zeta_n,x_n)}(x_n) = M'_{n+1} + H^{(\zeta_n,x_n)}(y_n) - H^{(\zeta_n,x_n)}(y_{n+1})$. Therefore,
    \nal{
    	w_n & = M_{n+1} + F(x_n,y_n) - \bar{F}^{(\zeta_n,x_n)}(x_n) \notag \\
    	& = M_{n+1} + M'_{n+1} + H^{(\zeta_n,x_n)}(y_n) - H^{(\zeta_n,x_n)}(y_{n+1}).
    }
Substituting this expression for $w_n$ into the recursion~\eqref{x-z}, and using $x_0 = z_0$, gives the desired decomposition of $x_n - z_n$.    
\end{proof}

\subsection{Proof of Lemma~\ref{lemma:bound_m+mtilde}}\label{pf:martingale_conc}
Recall that Lemma~\ref{lemma:bound_m+mtilde} defines $\cG$ to be the event on which 
\nal{
\left\|\sum_{i=0}^{n-1} \beta_i \chi(i+1,n) \br{M_{i+1}+M'_{i+1} }\right\|
 <  \frac{16d\frakc_9 \frakc_1 \frakc_2\sqrt{\beta} }{\frakc_3} g(n,\delta), \forall n \in \bN.
}
We need to show that $\bP(\cG^c) \le \delta \frac{\pi^2 d}{6}$.

\begin{proof}
Lemma~\ref{lemma:sensitivity_v_x} gives a bound on $\|H^{(\zeta_n,x)}(i)\|$. Substituting this bound into the definition of $M'_n$ yields
\nal{
		\|M'_{n}\| & \le \frac{4\frakc_9 \frakc_1 \frakc_2}{\frakc_3} (n+n_0)^{\kappa_1},~n\in\bN.
}
Fix a $\ell \in \{1,2,\ldots,d\}$. For each fixed $n$, the partial sums $\{\sum_{i=0}^{n-1} \beta_i \chi(i+1,n) \br{M_{i+1}(\ell)+M'_{i+1}(\ell) } \}$ form a martingale. The $i$-th martingale difference is bounded as follows,
\nal{
| \beta_i \chi(i+1,n) \br{M_{i+1}(\ell)+M'_{i+1}(\ell)} | 
& \le \beta_i \chi(i+1,n) | \br{M_{i+1}(\ell)+M'_{i+1}(\ell) } | \\
& \le  \beta_i \chi(i+1,n) \br{ \frac{4\frakc_9 \frakc_1 \frakc_2}{\frakc_3}(i+n_0)^{\kappa_1}+ \frakc_7}\\
& \le  \frac{8\frakc_9 \frakc_1 \frakc_2}{\frakc_3} \beta_i \chi(i+1,n)(i+n_0)^{\kappa_1},
}
where the second inequality follows by substituting the bound on $\|M'_{i}\|$ derived above, and that on $\|M_{i}\|$ from Assumption~\ref{assum:9}. The final inequality follows because $n_0$ is chosen sufficiently large so that Condition~\ref{con:n0}-\eqref{con:n0-I} holds. Note that $\chi(m+1,n) = \Pi_{j=m+1}^n (1-\beta_j)$.~We now apply the Azuma--Hoeffding inequality. To this end, we first bound the sum of the squared increments. We have,
\begin{align}
    \sum_{i=0}^{n}(\beta_i\chi(i+1,n)(i+n_0)^{\kappa_1})^2 &\leq \sum_{i=0}^{n}\chi(i+1,n)(i+n_0)^{2\kappa_1}\beta_i^2 \notag\\
& \le \sum_{i=0}^{n}\chi(i+1,n)(i+n_0)^{2\kappa_1} \frac{\beta^2}{(i+n_0)^{2-2\fraka}} \notag\\
& = \sum_{i=0}^{n} \frac{\beta^2}{(i+n_0)^{2-2\fraka-2\kappa_1}} \Pi_{j=i+1}^{n} \br{1 - \beta_j }, \label{eq:adhoc_bound_azuma}
\end{align}
where we used $\beta_i = \frac{\beta}{(i+n_0)^{1-\fraka}}$.~We bound the final summation in \eqref{eq:adhoc_bound_azuma} using Lemma~\ref{lemma:recursion_bound} with $b_i = \frac{\beta^2}{(i+n_0)^{2-2\fraka -2\kappa_1}}$, $b = \beta^2$, $\rho' = 2-2\fraka-2\kappa_1$, $a_i = \beta_i = \frac{\beta}{(i+n_0)^{1-\fraka}}$ and hence $\rho = 1-\fraka$, $a=\beta$. We have $\rho' - \rho = (1-\fraka) - 2\kappa_1$.

There are two cases:

Case A: $\rho = 1$, equivalently $\fraka = 0$. In this case we need $\rho' = 2(1-\kappa_1) \in  (1,2]$, equivalently $0\le \kappa_1 < \frac{1}{2}$. We also require the hyperparameter $\beta$ to satisfy $\beta \ge 2(1-2\kappa_1)$. 

Case B: $\rho < 1$, equivalently $\fraka >0$. Here we use Part 2 of Lemma~\ref{lemma:recursion_bound}. The condition $\rho'>\rho$ becomes $2-2\fraka -2\kappa_1 > 1-\fraka $, i.e. $\fraka + 2\kappa_1 <1$. In addition Lemma~\ref{lemma:recursion_bound} requires $n_0 \ge \br{\frac{2\br{(1-\fraka) - 2\kappa_1}}{\beta}}^{1\slash (1-\rho)}$.

The above requirements are guaranteed by Condition~\ref{con:sa1} 1 and the corresponding
hyperparameter conditions. Therefore, applying Lemma~\ref{lemma:recursion_bound} to~\eqref{eq:adhoc_bound_azuma} gives 
\nal{
\sum_{i=0}^{n}(\beta_i\chi(i+1,n)(i+n_0)^{\kappa_1})^2 &\le 2\frac{\beta^2}{(n+n_0)^{2-2\fraka -2\kappa_1}} \br{  \frac{(n+n_0)^{1-\fraka}}{\beta} } \\
& = \frac{2\beta}{(n+n_0)^{1-(\fraka +2\kappa_1)}}.
}
Consequently, applying the Azuma--Hoeffding inequality to the martingale $\{\sum_{i=0}^{n-1} \beta_i \chi(i+1,n) \br{M_{i+1}(\ell)+M'_{i+1}(\ell) } \}$ shows that the probability of the event
\nal{
|\sum_{i=0}^{n-1}\beta_i \chi(i+1,n) \br{M_{i+1}(\ell)+M'_{i+1}(\ell) }|  > 
\sqrt{2\br{ \frac{  8\frakc_9 \frakc_1 \frakc_2 }{\frakc_3}  }^2 \frac{2\beta}{(n+n_0)^{1-(\fraka +2\kappa_1)}} \log\br{\frac{n^2}{\delta}}}
} 
is at most $\frac{\delta}{n^2}$. The proof follows by noting that 
\nal{
\|\sum_{i=0}^{n-1}\beta_i \chi(i+1,n) \br{M_{i+1}+M'_{i+1} }\| \le \sum_{\ell = 1}^{d}|\sum_{i=0}^{n-1}\beta_i \chi(i+1,n) \br{M_{i+1}(\ell)+M'_{i+1}(\ell) }|,
}
and then applying a union bound over all $n$ and $\ell$.
\end{proof}

\subsection{Proof of Proposition~\ref{prop:z_x_bound} (Bounding $\|x_n - z_n\|$)}\label{pf:prop3}
Proposition~\ref{prop:z_x_bound} claims that on the set $\cG$, we have $\|x_n - z_n\| \le  \Tilde{O}\br{g(n,\delta) + g_1(n)+g_2(n)},~\forall n\in\bN$.

\begin{proof}
Recall from Lemma~\ref{lemma:decompose_w} that $x_n-z_n$ decomposes into the two terms ~\eqref{x_z_eq_1} and~\eqref{x_z_eq_2}.~We first control the term~\eqref{x_z_eq_2}.~We have 
	\al{
		\eqref{x_z_eq_2} = & \sum_{i=0}^{n-1} \beta_i \chi(i+1,n) \br{H^{(\zeta_i,x_i)}(y_i) - H^{(\zeta_i,x_i)}(y_{i+1})} \notag\\
		& = \underbrace{\sum_{i=0}^{n-1} \beta_i \chi(i+1,n) \br{H^{(\zeta_i,x_i)}(y_i) - H^{(\zeta_{i+1},x_{i+1})}(y_{i+1})}}_{T_1(n)} \notag\\
		& + \underbrace{\sum_{i=0}^{n-1} \beta_i \chi(i+1,n) \br{H^{(\zeta_{i+1},x_{i+1})}(y_{i+1}) - H^{(\zeta_{i+1},x_{i})}(y_{i+1})}}_{T_2(n)} \notag\\
		& +\underbrace{\sum_{i=0}^{n-1} \beta_i \chi(i+1,n) \br{H^{(\zeta_{i+1},x_i)}(y_{i+1}) - H^{(\zeta_i,x_{i})}(y_{i+1})}}_{T_3(n)}. 
		\label{def:T1-T3}
	}
The three terms $T_1(n),T_2(n),T_3(n)$ are bounded separately in Appendix~\ref{sec:T1_T3_bound}, specifically in Lemmas~\ref{lemma:T1},~\ref{lemma:T2}, and~\ref{lemma:T3} respectively.~Taking norms in the decomposition of $x_n - z_n$ from Lemma~\ref{lemma:decompose_w} and then utilizing the above decomposition of \eqref{x_z_eq_2} into $T_1(n)+T_2(n)+T_3(n)$, we get,
	\nal{
		\| x_n - z_n \|  \le \| \sum_{i=0}^{n-1} \beta_i \chi(i+1,n) \br{M_{i+1}+M'_{i+1} } \| 
		 + \|T_1(n)\| + \|T_2(n)\| + \|T_3(n)\|.
	} 	
On the event $\cG$, Lemma~\ref{lemma:bound_m+mtilde} controls the martingale term. Substituting this
bound together with the bounds on $\|T_1(n)\|$, $\|T_2(n)\|$, and $\|T_3(n)\|$
from Lemmas~\ref{lemma:T1},~\ref{lemma:T2},~\ref{lemma:T3} yields the following bound on the event $\cG$: 
\nal{
\|x_n - z_n\| & \le \frac{16d\sqrt{\beta}\frakc_9 \frakc_1 \frakc_2}{\frakc_3} \sqrt{\frac{\log\br{\frac{n^2}{\delta}}}{(n+n_0)^{1-(\fraka+2\kappa_1)}}}+  \br{  \frac{8\frakc_9 \frakc_1 \frakc_2}{(2\fraka+\kappa_1)(n+n_0)^{1-(\fraka + \kappa_1)}} }\\
& + \frac{2\beta \frakc_1 \frakc_2 \br{\frakc_{10} + 2\frakc_9 \frakc_1}\frakc_2 \frakc_9 \frakc_{5}\br{\frakc_7 + \frakc_8 +\frakc_9} \log(n+n_0) }{(2\fraka +2\kappa_1+\kappa_3)(n+ n_0)^{1-(\fraka+2\kappa_1+\kappa_3)}} \\
& + \frac{2\frakc^2_2\frakc_1 \frakc_9 \frakc_4 \br{ \frakc_{10} + 2 \frakc_1}}{(n+n_0)^{1-\br{2\kappa_1 +\kappa_2}  }}.
}
Since $n_0$ has been chosen to be sufficiently large so that Condition~\ref{con:n0}-\eqref{con:n0-II} is met, we have the following simplified bound,
\nal{
	\|x_n - z_n\| &\le \frac{16d\sqrt{\beta}\frakc_9 \frakc_1 \frakc_2}{\frakc_3}\sqrt{\frac{\log\br{\frac{n^2}{\delta}}}{(n+n_0)^{1-(\fraka+2\kappa_1)}}} \\
    & + \frac{4\beta \frakc_1 \frakc_2 \br{\frakc_{10} + 2\frakc_9 \frakc_1}\frakc_2 \frakc_9 \frakc_{5}\br{\frakc_7 + \frakc_8 +\frakc_9} \log(n+n_0) }{(2\fraka +2\kappa_1+\kappa_3)(n+ n_0)^{1-(\fraka+2\kappa_1+\kappa_3)}}\\
& +\frac{2\frakc^2_2\frakc_1 \frakc_9 \frakc_4 \br{ \frakc_{10} + 2 \frakc_1}}{\br{n+n_0}^{1-\br{2\kappa_1+\kappa_2}     }}\\
    & = C_1 g(n,\delta) + C_2 g_1(n) +C_3 g_2(n),
}
where
\al{
C_1 :   & = \frac{16d\sqrt{\beta}\frakc_9 \frakc_1 \frakc_2}{\frakc_3}, \notag\\
C_2:    & = \frac{4\beta \frakc_1 \frakc_2 \br{\frakc_{10} + 2\frakc_9 \frakc_1}\frakc_2 \frakc_9 \frakc_{5}\br{\frakc_7 + \frakc_8 +\frakc_9} }{(2\fraka +2\kappa_1+\kappa_3)}, \notag\\
C_3 : & = 2\frakc^2_2\frakc_1 \frakc_9 \frakc_4 \br{ \frakc_{10} + 2 \frakc_1}.\label{def:C1-C3}
}
The probability bound for $\cG^c$ follows from Lemma~\ref{lemma:bound_m+mtilde}.
\end{proof}

\subsection{Proof of Lemma~\ref{lemma:z_x_recur}}\label{pf:lemma:z_x_recur}
Lemma~\ref{lemma:z_x_recur} claims that 
\nal{
\|z_{n+1} -x\ust\|  \le \br{1-\beta_n \at(\zeta_n)} \|z_{n} -x\ust\| +\beta_n \|\Delta_n\|,~n\in\bN,
}
where $\Delta_n  = \bar{F}^{(\zeta_n,x_n)}(x_n)-\bar{F}^{(\zeta_n,z_n)}(z_n)$.

\begin{proof}
By the definition of $\Delta_n$, recursion~\eqref{def:zn_1} can be re-written as,
\nal{
z_{n+1}  = z_n + \beta_n \br{\bar{F}^{(\zeta_n,z_n)}(z_n)-z_n + \Delta_n }.
}
Hence 
\nal{
	z_{n+1} -x\ust & = (1-\beta_n)(z_n - x\ust) + \beta_n\br{\bar{F}^{(\zeta_n,z_n)}(z_n) + \Delta_n -x\ust}\\
	& = (1-\beta_n)(z_n - x\ust) + \beta_n\br{\bar{F}^{(\zeta_n,z_n)}(z_n)-\bar{F}^{(\zeta_n,z_n)}(x\ust) + \Delta_n},
}
where the second step follows since under Assumption~\ref{assum:7}, $x\ust$ is the unique fixed point of the function $\bar{F}^{(\zeta_n,z_n)}(\cdot)$.~Taking norms and using triangle inequality gives
\al{
	\|z_{n+1} -x\ust\| & \le (1-\beta_n)\| z_n - x\ust\| + \beta_n \| \bar{F}^{(\zeta_n,z_n)}(z_n)-\bar{F}^{(\zeta_n,z_n)}(x\ust) + \Delta_n\| \notag\\
	& \le \br{1-\beta_n (1-\alpha(\zeta_n))}\|z_{n} -x\ust\| +\beta_n \|\Delta_n\| \notag \\
	& = \br{1-\beta_n \at(\zeta_n)} \|z_{n} -x\ust\| +\beta_n \|\Delta_n\|, 
}
where the second inequality uses the contraction property of $\bar{F}^{(\zeta_n,z_n)}(\cdot)$ from Assumption~\ref{assum:7}. This proves the lemma.
\end{proof}

\subsection{Proof of Lemma~\ref{lemma:Delta}}\label{pf:lemma:Delta}
Lemma~\ref{lemma:Delta} states that on the event $\cG$,
	\nal{
		\|\Delta_n\| \le (n+ n_0)^{\kappa_1+\kappa_3}\Tilde{O}\br{g(n,\delta)+g_1(n)+g_2(n)},~\forall n\in\bN.
	}
\begin{proof}
	To bound $\|\Delta_n\|$ we note, 
	\al{
		\| \Delta_n\| & = \|\bar{F}^{(\zeta_n,x_n)}(x_n)-\bar{F}^{(\zeta_n,z_n)}(z_n)\| \notag \\
		& \le \|\bar{F}^{(\zeta_n,x_n)}(x_n)-\bar{F}^{(\zeta_n,x_n)}(z_n)\| + \| \bar{F}^{(\zeta_n,x_n)}(z_n) -\bar{F}^{(\zeta_n,z_n)}(z_n)\| \notag\\
		& \le \alpha(\zeta_n) \|x_n - z_n\|  + \| \bar{F}^{(\zeta_n,x_n)}(z_n) -\bar{F}^{(\zeta_n,z_n)}(z_n)\| \notag\\
		&\le \alpha(\zeta_n) \|x_n - z_n\| + \frac{\frakc_9 \frakc_{10} }{\mu_{\min}(\zeta_n)} \|p^{(\zeta_n,x_n)} - p^{(\zeta_n,z_n)}\| \notag\\
		&\le \alpha(\zeta_n) \|x_n - z_n\| + \frac{\frakc_9 \frakc_{10} L_2(\zeta_n)}{\mu_{\min}(\zeta_n)} \|x_n - z_n \| \notag\\
		&\le \alpha(\zeta_n) \|x_n - z_n\| + \frac{\frakc_{5} \frakc_9 \frakc_{10} \frakc_2}{\frakc_{3}} \log(n+n_0)(n+n_0)^{\kappa_1+\kappa_3}   \|x_n - z_n\| \notag\\
		& = \left\{ \alpha(\zeta_n) + \frac{\frakc_{5} \frakc_9 \frakc_{10}  \frakc_2 (n+n_0)^{\kappa_1+\kappa_3} \log(n+n_0) }{\frakc_{10}}  \right\} \|x_n - z_n\|\notag\\
        & \le \frac{2 \frakc_{5} \frakc_9 \frakc_{10}  \frakc_2}{\frakc_{3}} (n+n_0)^{\kappa_1+\kappa_3} \log(n+n_0) \|x_n - z_n\|
        ,\label{ineq:d_n}
	}
where the second inequality follows from the contraction property in Assumption~\ref{assum:7}. The third inequality uses the bound on $F(x_n,y_n)$ from~Assumption~\ref{assum:9} and the Lipschitz property of the stationary distribution in~Assumption~\ref{assum:14}. The fourth inequality follows from the Lipschitz property of transition probabilities in Assumption~\ref{assum:3} while the second last inequality follows since $\frac{L_2(\zeta_n)}{\mu_{\min}(\zeta_n)} \le \frac{\frakc_{5} \frakc_2}{\frakc_{3}} (n+ n_0)^{\kappa_1+\kappa_3}\log(n+n_0)$.~Last inequality follows from the fact that $n_0$ has been chosen to be sufficiently large so that Condition~\ref{con:n0}-\eqref{con:n0-III} is met.

By Proposition~\ref{prop:z_x_bound}, on the event $\cG$,
\nal{
\|x_n - z_n\|\le C_1 g(n,\delta)+C_2 g_1(n)+C_3 g_2(n),
}
where $C_1,C_2,C_3$ are as in~\eqref{def:C1-C3}.~Proof then follows by substituting this bound in~\eqref{ineq:d_n} and letting
\al{
C_4 :=  \frac{2 \frakc_{5} \frakc_9 \frakc_{10}  \frakc_2}{\frakc_{3}} C_1, \qquad 
C_5:= \frac{2 \frakc_{5} \frakc_9 \frakc_{10}  \frakc_2}{\frakc_{3}} C_2, \qquad 
C_6 := \frac{2 \frakc_{5} \frakc_9 \frakc_{10}  \frakc_2}{\frakc_{3}} C_3,\label{def:C4-6}
}
so that 
\nal{
\|\Delta_n\| \le (n+ n_0)^{\kappa_1+\kappa_3}\log(n+n_0) \left[C_4 g(n,\delta) + C_5 g_1(n) + C_6 g_2(n)\right].
}
This proves the claimed bound.
\end{proof}

\subsection{Proof of Proposition~\ref{prop:z_xust}}\label{pf:prop:z_xust}
Proposition~\ref{prop:z_xust} states that, on the event $\cG$,
\nal{
	\|z_n - x\ust\| \le \br{n+n_0}^{2\kappa_1+\kappa_3} \tilde{O}\br{ g(n,\delta) + g_1(n)+ g_2(n)},~\forall n\in\bN.
}
\begin{proof}
From Lemma~\ref{lemma:z_x_recur} we have, 
\al{
	\|z_{n+1} -x\ust\|  \le  \br{1-\beta_n \at(\zeta_n)} \|z_{n} -x\ust\| +\beta_n \|\Delta_n\|.\label{adhoc:prop_2_1}
}	
On the event $\cG$, Lemma~\ref{lemma:Delta} gives,
\al{
\|\Delta_n\| \le (n+ n_0)^{\kappa_1+\kappa_3}\log(n+n_0) \left[C_4 g(n,\delta) + C_5 g_1(n) + C_6 g_2(n)\right].\label{adhoc:prop_2_2}
}
We now identify the decay rates of the three terms in the right-hand side,
\al{
\beta_n (n+ n_0)^{\kappa_1+\kappa_3}g(n,\delta) &= \tilde{O}\br{ \frac{\beta}{n^{\frac{3}{2}-\br{\frac{3\fraka}{2} + 2\kappa_1 + \kappa_3} }}  },\notag \\
\beta_n (n+ n_0)^{\kappa_1+\kappa_3}g_1(n) & = \tilde{O}\br{ \frac{\beta}{n^{2-\br{2\fraka + 3\kappa_1+2\kappa_3} }  }},\notag\\
\beta_n (n+ n_0)^{\kappa_1+\kappa_3}g_2(n) & = \tilde{O}\br{ \frac{\beta}{n^{2-\br{\fraka+ 3\kappa_1+\kappa_2 + \kappa_3} }  }}.\label{adhoc:prop_2_3}
}
From Assumption~\ref{assum:3}-(a) we have $\beta_n \at(\zeta_n) \ge \br{\frac{\beta}{(n+n_0)^{1-\fraka}} }  \br{\frac{\frakc_3}{(n+n_0)^{\kappa_1}} } = \frac{\beta \frakc_3}{(n+n_0)^{1-\fraka+\kappa_1}}$.~We substitute~\eqref{adhoc:prop_2_2} and~\eqref{adhoc:prop_2_3} into~\eqref{adhoc:prop_2_1} and then  we use Lemma~\ref{lemma:recursion_bound} to bound the three resulting summation terms. For all three summations we use $a = \beta \frakc_3, \rho = 1-\fraka +\kappa_1$, $b=1$, while $\rho'$ is taken to be (i) $\frac{3}{2}-\br{\frac{3\fraka}{2} + 2\kappa_1 + \kappa_3}$ and (ii) $2-\br{2\fraka + 3\kappa_1+2\kappa_3}$ and (iii) $2-\br{ \fraka + 3\kappa_1+\kappa_2 + \kappa_3}$. Then consider the following two possibilities ($1-\rho = \fraka-\kappa_1$) which arise when applying Lemma~\ref{lemma:recursion_bound} (note that there are two parts in Lemma~\ref{lemma:recursion_bound}):

Case A: $\rho=1$ which is equivalent to $\fraka = \kappa_1$. In this case Part 1 of Lemma~\ref{lemma:recursion_bound} applies.~We require $\rho' >1$ and $a>2(\rho'-1)$. This condition reduces to the following (corresponding to the three summation terms discussed above):
\begin{enumerate}[(i)]
    \item $\frac{3\fraka}{2} + 2\kappa_1 + \kappa_3 < \frac{1}{2}$, $\beta \frakc_3 \ge 2(\frac{1}{2}-\br{\frac{3\fraka}{2} + 2\kappa_1 + \kappa_3} ) = 1- (3\fraka + 4\kappa_1 + 2\kappa_3)$ 
    \item $2\fraka + 3\kappa_1+2\kappa_3 < 1$, $\beta \frakc_3 \ge 2-\br{4\fraka + 6\kappa_1+4\kappa_3}$
    \item $\fraka + 3\kappa_1+\kappa_2 + \kappa_3 < 1$, $\beta \frakc_3 \ge 2-\br{ 2\fraka + 6\kappa_1+2\kappa_2 + 2\kappa_3}$
\end{enumerate}
These requirements are guaranteed by Condition~\ref{con:sa1} Case~C.

Case B: $\rho<1$, equivalently $\fraka > \kappa_1$. In this case, Part 2 of Lemma~\ref{lemma:recursion_bound} applies.~The required conditions are $\rho'>\rho$ and $n_0 \ge \br{ \frac{2(\rho'-\rho)}{\beta \frakc_3} }^{1\slash (\fraka - \kappa_1)}$. This yields the following conditions for the three cases corresponding to the three summation terms discussed above:
\begin{enumerate}[(i)]
    \item $\frac{\fraka}{2}+3\kappa_1 +\kappa_3 < \frac{1}{2}$, $n_0 \ge \br{\frac{1-\br{\fraka + 6\kappa_1 + 2\kappa_3} }{\beta \frakc_3}}^{1\slash (\fraka - \kappa_1)}$.
    \item $1> \fraka + 4\kappa_1+2\kappa_3$, $n_0 \ge \br{\frac{2\br{1-\br{\fraka + 4\kappa_1+2\kappa_3}}}{\beta \frakc_3}}^{1\slash (\fraka - \kappa_1)}$
    \item $1>4\kappa_1+\kappa_2+\kappa_3$, $n_0 \ge \br{\frac{2\br{1-\br{4\kappa_1+\kappa_2 + \kappa_3}}}{\beta \frakc_3}}^{1\slash (\fraka - \kappa_1)}$.
\end{enumerate}
These requirements are guaranteed by Condition~\ref{con:sa1} Case D-\eqref{condition_caseD_iii}.

An application of Lemma~\ref{lemma:recursion_bound} gives the following bound on $\|z_n-x\ust\|$ in terms of $\|\Delta_n\|$,
\nal{
\|z_n - x\ust\| \le \frac{2\beta_n \|\Delta_n \|}{\beta_n \at(\zeta_n)}.
}
Upon substituting $\at(\zeta_n)= \frac{\frakc_{3}}{(n+n_0)^{\kappa_1}}$ this gives,
\nal{
	\|z_n - x\ust\| \le \log(n+n_0)\frac{\br{n+n_0}^{2\kappa_1+\kappa_3}}{\frakc_{3}} \br{C_4 g(n,\delta) + C_5 g_1(n)+C_6 g_2(n)}.
}
This completes the proof.
\end{proof}

\subsection{Proof of Theorem~\ref{th:main_concentration} (bounding $\|x_n-x\ust\|$)}
We prove the explicit high-probability concentration bound here. The $\tilde{O}(\cdot)$ statement in Theorem~\ref{th:main_concentration} follows by suppressing constants and logarithmic factors.

Fix $\delta>0$. On the event $\cG$ Proposition~\ref{prop:z_x_bound} gives,
\nal{
\|x_n - z_n\| \le C_1 g(n,\delta) + C_2 g_1(n) +C_3 g_2(n),~\forall n\in\bN,
}
while Proposition~\ref{prop:z_xust} gives,
\nal{
 \|z_n - x\ust\| \le \log(n+n_0)\frac{\br{n+n_0}^{2\kappa_1+\kappa_3}}{\frakc_{3}} \br{C_4 g(n,\delta) + C_5 g_1(n)+C_6 g_2(n)},~\forall n\in\bN.
}
Therefore, by the triangle inequality,
\nal{
\|x_n -x\ust\|& \le \|x_n - z_n\|+ \|z_n - x\ust\| \\
& \le \left[ C_1 g(n,\delta) + C_2 g_1(n) +C_3 g_2(n)\right] \\
& + \left[\log(n+n_0)\frac{\br{n+n_0}^{2\kappa_1+\kappa_3}}{\frakc_{3}} \br{C_4 g(n,\delta) + C_5 g_1(n)+C_6 g_2(n)}\right]~\forall n\in\bN.
}
The quantities $C_1,C_2,C_3,C_4,C_5,C_6$ are as in~\eqref{def:C1-C3} and~\eqref{def:C4-6}. For sufficiently large $n_0$, the second term dominates the first, and hence we have,
\nal{
\|x_n -x\ust\| \le  2\left[\log(n+n_0)\frac{\br{n+n_0}^{2\kappa_1+\kappa_3}}{\frakc_{3}} \br{C_4 g(n,\delta) + C_5 g_1(n)+C_6 g_2(n)}\right]~\forall n\in\bN.
}
It remains to prove almost sure convergence. For all $\delta >0$ we have that the probability of the following event is greater than or equal to $1- \delta \frac{\pi^2 d}{6}$:
\nal{
	\left\{\|x_n - x\ust\|  \le 2\log(n+n_0)\frac{\br{n+n_0}^{2\kappa_1+\kappa_3}}{\frakc_{3}} \br{C_4 g(n,\delta) + C_5 g_1(n)+C_6 g_2(n)}, \forall n\in\bN \right\}
	}
Now, we choose a sequence $\{\delta_k\}_{k\in\bN}$ with $\delta_k =\frac{1}{k^2}$. Define the event 
\nal{
& \cE_k := \\
& \left\{ 
\exists n\in\bN: \|x_n - x\ust\| > 2\log(n+n_0)\frac{\br{n+n_0}^{2\kappa_1+\kappa_3}}{\frakc_{3}} \br{C_4 g(n,\delta_k) + C_5 g_1(n)+C_6 g_2(n)}   
\right\}.
}
The high-probability bound above implies
\nal{
\bP(\cE_k) <  \frac{\pi^2 d}{6 k^2}.
}
Since $\sum_{k=1}^{\infty} \frac{\pi^2 d}{6 k^2} < \infty$, the Borel-Cantelli lemma yields
\nal{
\bP\br{ \cE_k \mbox{ i.o.} } = 0.
}
Thus, with probability one, for all sufficiently large $k$, the above bound holds uniformly over all $n\in\bN$. Finally, for each fixed $k$, the right-hand side converges to zero as
$n\to\infty$.~Indeed, suppressing logarithmic factors,
\nal{
& \log(n+n_0)\frac{\br{n+n_0}^{2\kappa_1 +\kappa_3}}{\frakc_{10}} \br{C_4 g(n,\delta_k) + C_5 g_1(n)+C_6 g_2(n)} \\
& = \tilde{O}\br{\sqrt{ \frac{1}{(n+n_0)^{1-(\fraka + 2\kappa_1)}}  } 
+\frac{1}{(n+n_0)^{1-(\fraka+2\kappa_1 +\kappa_3)}} 
+ \frac{1}{(n+n_0)^{1-(2\kappa_1 + \kappa_2)}} }.
}
The exponents are positive under Condition~\ref{con:sa1}. Therefore $\|x_n - x\ust\|\to 0$ almost surely. This also yields an almost sure rate of convergence.~This completes the proof.

\subsection{Conditions on $n_0$ for analyzing SA~\eqref{def:gen_SA}}\label{sec:cond_n0_sa}
Recall that $n_0$ enters the following quantities:
\begin{enumerate}[(i)]
    \item The step-sizes,
    \nal{
    \beta_n = \frac{\beta_0}{(n+n_0)^{1-\fraka}}.
    }
    \item The contraction factor in Assumption~\ref{assum:at}, 
    \nal{
    \at(\zeta_n) \ge \frac{\frakc_3}{(n+n_0)^{\kappa_1}}.
    }
    \item The quantity $L_1(n)$ in Assumption~\ref{assum:3}, where 
    \nal{
    L_1(n) = \frac{\frakc_4}{(n+n_0)^{1-\kappa_2}}.
    }
    Note that $L_1(n)$ bounds the rate at which transition probabilities change with $n$.
    \item The quantity $L_2(n)$ in Assumption~\ref{assum:3}, where 
    \nal{
    L_2(\zeta_n) \le \frakc_{5} (n+n_0)^{\kappa_3}\log(n+n_0).
    }
    The quantity $L_2(\zeta_n)$ controls the sensitivity of the transition probabilities at time $n$.
\end{enumerate}
For the results in Section~\ref{sec:SA} on general stochastic approximation, we assume that $n_0$ is sufficiently large, so that the following conditions hold. 

\begin{remark}
    Note that if we instead work with the stepsize $\beta_n=\frac{\beta}{(n+1)^{1-\fraka}}$, then the results would instead hold from some time $n'$ onwards, where $n'$ satisfies the below conditions which depend on the problem parameters. Hence, our results hold even without any knowledge of the problem parameters. We only require the condition that $n_0$ is large enough to ensure that the results frome time $0$.
\end{remark} 

\begin{condition}\label{con:n0}
The parameter $n_0$ is chosen sufficiently large so that the following conditions are satisfied:
\begin{enumerate}[(I)]
    \item \label{con:n0-I}
    \al{
    \frac{2\frakc_9 \frakc_1 \frakc_2}{\frakc_3}  n^{\kappa_1}_0 & > \frakc_7. \label{con:1}
    }
    \item \label{con:n0-II}
    \al{
    \frac{2\beta \frakc_1 \frakc_2 \br{\frakc_{10} + 2\frakc_9 \frakc_1}\frakc_2 \frakc_9 \frakc_{5}\br{\frakc_7 + \frakc_8 +\frakc_9} \log(n+n_0) }{(2\fraka +2\kappa_1+\kappa_3)(n+ n_0)^{1-(\fraka+2\kappa_1+\kappa_3)}} 
& > \frac{8\frakc_9 \frakc_1 \frakc_2}{(2\fraka+\kappa_1)(n+n_0)^{1-(\fraka + \kappa_1)}  },\notag\\
&~\forall n\in\bN.\label{con:2}
    }
    \item \label{con:n0-III}
    \al{
    \frakc_{5} \frakc_9 \frakc_{10}  \frakc_2 (n+n_0)^{\kappa_1+\kappa_3} \log(n+n_0) & > 1, \qquad \forall n \in\bN.\label{con:3}
    }
    \item \label{con:n0-IV}
    \al{
    \frac{4\frakc_9 \frakc_1 \frakc_2}{(2\fraka +\kappa_1)}   \br{  \frac{1}{(n+n_0)^{1-2\fraka - \kappa_1}} } &> n^{\fraka}_0 \br{\frac{2\frakc_9 \frakc_1\frakc_2 }{(n+ n_0)^{1-\kappa_1}}} + \frac{2\frakc_9 \frakc_1\frakc_2 }{(n+n_0)^{1-\fraka-\kappa_1}},\forall n\in\bN. \label{con:4}
    }
    \item \label{con:n0-V}
    \al{
    \br{\frakc_{10} + 2\frakc_1} \frakc_9 \frakc_2 \frakc_{5} \log(\ell+n_0) (\ell+n_0)^{\kappa_1 +\kappa_3} & > 
2 \frakc_{6},~\forall \ell \in\bN. \label{con:5}
    }
\end{enumerate}
\end{condition}

%% file: appendix_SA_auxiliary.tex
\section{Auxiliary results for Appendix~\ref{sec:app_pf_sec_SA}}
\label{sec:aux_SA}
\subsection{Some properties of the solution to the Poisson equation defined by~\eqref{def:Poisson_1}}
\begin{lemma}\label{lemma:sensitivity_v_x}
Consider the Poisson equation defined by~\eqref{def:Poisson_1} and parameterized by $(\zeta,x)$; its solution is denoted by $H^{(\zeta,x)}$.~Then, for every $n\in\bN$, the following bounds hold,
	\begin{enumerate}[(i)]
		\item \label{lemma:6-1-1}
        For all $x\in\cX$
		\nal{
			\| H^{(\zeta_n,x)}\| & \le \frac{2\frakc_9 \frakc_1 \frakc_2}{\frakc_3} (n+n_0)^{\kappa_1},~a.s.
		}
		\item \label{lemma:6-1-2}
        For all $x,x' \in \cX$ and $\ell \in \left\{1,2,\ldots,d\right\}$
		\nal{
			& \| [H^{(\zeta_n,x)}(\cdot)](\ell) - [H^{(\zeta_n,x')}(\cdot)](\ell) \|_{\infty} \\
			&\le  
            \frac{\frakc_1 \frakc_2 (n+n_0)^{\kappa_1}}{\frakc_3} \br{2 \frakc_{6} +
            \br{\frakc_{10}+2\frakc_1}\frakc_9 \frac{\frakc_{5} \frakc_2(n+n_0)^{\kappa_3+\kappa_1}\log(n+n_0)}{\frakc_3} } \|x-x'\|,~a.s.
		}
		\item \label{lemma:6-1-3}	
        For all $x\in \cX$ and $\ell \in \left\{1,2,\ldots,d\right\}$        
		\nal{
			\| [H^{(\zeta_n,x)}(\cdot)](\ell) - [H^{(\zeta_{n+1},x)}(\cdot)](\ell) \|   \le \frac{ \frakc^2_2\frakc_1 \frakc_9 \frakc_4 \br{ \frakc_{10} + 2 \frakc_1} }{\frakc^2_3 (n+n_0)^{ 1-\br{2\kappa_1 + \kappa_2}  }},~a.s.		
		}			
	\end{enumerate}	
\end{lemma}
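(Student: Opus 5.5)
We prove the three bounds of Lemma~\ref{lemma:sensitivity_v_x} from the hitting-time representation of $H^{(\zeta,x)}$ together with the assumptions on hitting times, the stationary distribution and the transition kernel. The recurring ingredient is the bound
\nal{
\bE_i[\tau] \le \frac{\frakc_1}{\mu_{\min}(\zeta_n)} = \frac{\frakc_1\frakc_2}{\at(\zeta_n)} \le \frac{\frakc_1\frakc_2}{\frakc_3}(n+n_0)^{\kappa_1},
}
which holds for every initial state $i$. It comes from Assumption~\ref{assum:2}, combined with Assumption~\ref{assum:at} and Assumption~\ref{assum:3}-(a).

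\textbf{Part (i).} We use the centered form of the representation, $H^{(\zeta,x)}(i)=\bE_i\bigl[\sum_{m=0}^{\tau}(F(x,y_m)-\bar F^{(\zeta,x)}(x))\bigr]$ minus the same quantity started from $i\ust$. This is equivalent to the stated form, since adding a constant vector to $H$ still solves \eqref{def:Poisson_1}. Each summand is bounded by $2\frakc_9$ by \eqref{def:c9}. Hence each expectation is at most $2\frakc_9\,\bE[\tau+1]$ up to constants, and the difference of the two terms is controlled in the same way. Inserting the hitting-time bound above gives $\|H^{(\zeta_n,x)}\|\le \frac{2\frakc_9\frakc_1\frakc_2}{\frakc_3}(n+n_0)^{\kappa_1}$. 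The constant may need adjusting, for instance by absorbing the $+1$ into $\frakc_1$ or by using the fact that $\tau=0$ when starting at $i\ust$.

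\textbf{Parts (ii) and (iii).} Both are perturbation bounds, and we handle them with one device. Let $H=H^{(\zeta,x)}$ and $H'=H^{(\zeta',x')}$, and take both normalized so that $H(i\ust)=H'(i\ust)=0$. Subtracting the two Poisson equations \eqref{def:Poisson_1} gives
\nal{
D(i) := H(i)-H'(i) = g(i) + \sum_j p^{(\zeta,x)}(i,j)\,D(j),
}
where
\nal{
g(i) = \bigl[F(x,i)-F(x',i)\bigr] - \bigl[\bar F^{(\zeta,x)}(x)-\bar F^{(\zeta',x')}(x')\bigr] + \sum_j \bigl(p^{(\zeta,x)}(i,j)-p^{(\zeta',x')}(i,j)\bigr)H'(j).
}
This is a Poisson-type equation with $D(i\ust)=0$. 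Unrolling it along the chain $p^{(\zeta,x)}$ up to the hitting time of $i\ust$ gives $D(i)=\bE_i\bigl[\sum_{m<\tau} g(y_m)\bigr]$. There is a subtlety: the centering of $g$ must be consistent, which is why one works with a fixed stationary reference. Granting this, $\|D\|_\infty \le \frac{\frakc_1\frakc_2}{\frakc_3}(n+n_0)^{\kappa_1}\,\|g\|_\infty$.

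It remains to bound $\|g\|_\infty$, term by term.
\begin{itemize}
\item The first term of $g$ is at most $\frakc_6\|x-x'\|$ by Assumption~\ref{assum:8}.
\item The centering term splits into $\bar F^{(\zeta,x)}(x)-\bar F^{(\zeta,x)}(x')$, which is bounded by $\frakc_6\|x-x'\|$, plus $\sum_j(\mu^{(\zeta,x)}-\mu^{(\zeta',x')})(j)F(x',j)$. The latter is at most $\frakc_9\frakc_{10}\|p^{(\zeta,x)}-p^{(\zeta',x')}\|/\mu_{\min}$ by Assumption~\ref{assum:14}.
\item The kernel-difference term is at most $\|p^{(\zeta,x)}-p^{(\zeta',x')}\|\,\|H'\|$. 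Applying part (i) contributes a factor $2\frakc_9\frakc_1/\mu_{\min}$.
\end{itemize}

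In part (ii) we take $\zeta=\zeta'=\zeta_n$. The kernel difference is at most $L_2(\zeta_n)\|x-x'\|$, and $L_2(\zeta_n)/\mu_{\min}(\zeta_n)\le \frac{\frakc_5\frakc_2}{\frakc_3}(n+n_0)^{\kappa_1+\kappa_3}\log(n+n_0)$. This yields exactly the bracket $2\frakc_6+(\frakc_{10}+2\frakc_1)\frakc_9\frac{\frakc_5\frakc_2}{\frakc_3}(n+n_0)^{\kappa_1+\kappa_3}\log(n+n_0)$, multiplied by the hitting-time prefactor.

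In part (iii) we take $x=x'$. The $F$-terms vanish, and the kernel difference is at most $L_1(n)=\frakc_4(n+n_0)^{-(1-\kappa_2)}$ by Assumption~\ref{assum:3}-(b). There are now two factors of $1/\mu_{\min}$: the hitting-time prefactor and the $1/\mu_{\min}$ inside $g$. Together they give $(n+n_0)^{2\kappa_1}$, which produces the rate $(n+n_0)^{-(1-(2\kappa_1+\kappa_2))}$ with constant $\frakc_2^2\frakc_1\frakc_9\frakc_4(\frakc_{10}+2\frakc_1)/\frakc_3^2$.

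\textbf{Main obstacle.} The delicate step is the mixed time indices in part (iii). The term $H'$ and the measure $\mu_{\min}$ are evaluated at $\zeta_{n+1}$ while the chain runs under $\zeta_n$. We need $\mu_{\min}(\zeta_{n+1})$ to be comparable to $\mu_{\min}(\zeta_n)$; this holds because $(n+1+n_0)^{\kappa_1}\le 2(n+n_0)^{\kappa_1}$, so only constants are affected. We also need the normalization at $i\ust$ to be compatible with the hitting-time representation in the presence of the centering. I would verify both points first and absorb any stray factor of $2$ into the constants.
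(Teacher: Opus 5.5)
Your proof is correct.

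\textbf{Part (i).} This follows the paper: the hitting-time representation, summands bounded through $\frakc_9$, the hitting-time bound of Assumption~\ref{assum:2}, and $\mu_{\min}(\zeta_n)\ge \frakc_3/(\frakc_2(n+n_0)^{\kappa_1})$.

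\textbf{Parts (ii) and (iii).} Here the routes differ. The paper gives no argument of its own: it imports the Poisson-sensitivity estimate from Appendix~B.2 of \citet{chandak2022concentration} and substitutes $L_2(\zeta_n)$, $L_1(n)$ and the lower bound on $\mu_{\min}$. You prove that estimate instead. You subtract the two Poisson equations, write $D=H-H'$ (normalized by $D(i\ust)=0$) as a hitting-time sum along $p^{(\zeta,x)}$, and bound the forcing $g$ term by term. This recovers exactly the paper's intermediate bracket $\frac{\frakc_1}{\mu_{\min}}\bigl(2\frakc_6+(\frakc_{10}+2\frakc_1)\frakc_9 L/\mu_{\min}\bigr)$, and it shows where each piece comes from.

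The citation is shorter, but it hides the index mismatch you caught in (iii). The $H'$-term carries $1/\mu_{\min}(\zeta_{n+1})\le\frac{\frakc_2}{\frakc_3}(n+n_0)^{\kappa_1}(1+\frac{1}{n+n_0})^{\kappa_1}$. Your proof therefore gives (iii) with $2\frakc_1$ replaced by $2(1+\frac{1}{n+n_0})^{\kappa_1}\frakc_1\le 4\frakc_1$. This does not matter for Lemma~\ref{lemma:T3}, but the displayed constant is not literally recovered. The paper's derivation, which writes $\mu_{\min}(\zeta_n)$ throughout, does not justify it either.

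\textbf{Smaller corrections.} First, the centered and uncentered representations do not differ by a constant. With $\tau$ the first hitting time ($\tau=0$ from $i\ust$), they differ by $\bar F^{(\zeta,x)}(x)\,\bE_i[\tau]$. For constant $F\equiv c\neq\mathbf{0}$, the uncentered expression equals $c\,\bE_i[\tau]$, which is not constant, whereas every solution of \eqref{def:Poisson_1} is constant in that case. So the centered form is the right object, but for that reason, not by equivalence.

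Second, with the centered form no constant adjustment is needed in (i). Since $y_\tau=i\ust$, the boundary terms cancel and $H(i)=\bE_i\bigl[\sum_{m<\tau}(F(x,y_m)-\bar F^{(\zeta,x)}(x))\bigr]$. This gives $\|H(i)\|\le 2\frakc_9\bE_i[\tau]\le 2\frakc_9\frakc_1/\mu_{\min}(\zeta)$ exactly, by the same device you use for $D$.

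Third, the centering subtlety you flag for $D$ does not arise. The representation $D(i)=\bE_i[\sum_{m<\tau}g(y_m)]$ uses only the equation at $i\neq i\ust$ together with $D(i\ust)=0$. The solvability condition $\sum_i\mu^{(\zeta,x)}(i)g(i)=0$ holds automatically, because $D$ is a difference of two solutions.
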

\begin{proof}
	\begin{enumerate}[(i)]
		\item We will show $\|H^{(\zeta,x)}\| \le \frac{2\frakc_9 \frakc_1}{\mu_{\min}(\zeta)}, \forall \zeta \in \bR_+, x\in\cX$.
		Consider the Markov chain $\{y_n\}$ with transition probabilities given by $p^{(\zeta,x)}$.~Let $\tau$ denote the hitting time of the designated state $i\ust$ from Assumption~\ref{assum:2}. We have, for any state $s\in\mathcal{Y}$,
		\nal{
			H^{(\zeta,x)}(s) = \bE\br{\sum_{n=0}^{\tau} F(x,y_n) \Big| y_0 = s  } -  \bE\br{\sum_{n=0}^{\tau} F(x,y_n) \Big| y_0 = i\ust}.
		}
		Since $\|F(x,y)\|\le \frakc_9$, and by Assumption~\ref{assum:2} $\sup_{s\in\cY}\bE\left[\tau \mid y_0 = s\right] \le \frac{\frakc_1}{\mu_{\min}(\zeta)}$, each of these two expectations has a norm at most $\frakc_9 \frac{\frakc_1}{\mu_{\min}(\zeta)}$. Hence $\|H^{(\zeta,x)}\| \le \frac{2\frakc_9 \frakc_1}{\mu_{\min}(\zeta)}, \forall \zeta \in \bR_+, x\in\cX$.~Substituting the lower bound $\mu_{\min}(\zeta_n)\ge \frac{\frakc_3}{\frakc_2 (n+n_0)^{\kappa_1}}$ which follows from Assumption~\ref{assum:at} and Assumption~\ref{assum:3}-(a) gives $\| H^{(\zeta_n,x)}\| \le \frac{2\frakc_9 \frakc_1 \frakc_2}{\frakc_3} (n+n_0)^{\kappa_1},~a.s.$ This completes the proof. 
		\item By the sensitivity estimate for Poisson equations stated in Appendix B.2 of~\citep{chandak2022concentration}, for every $\ell\in\{1,\ldots,d\}$ and every $x,x'\in\mathcal{X}$,
		\nal{
			& \| [H^{(\zeta_n,x)}(\cdot)](\ell) - [H^{(\zeta_n,x')}(\cdot)](\ell) \|_{\infty} \\ 
			&\le  \frac{\frakc_1}{\mu_{\min}(\zeta)} \br{2 \frakc_{6} +
            \br{\frakc_{10}+2\frakc_1}\frakc_9 \frac{L_2(\zeta)}{\mu_{\min}(\zeta)} } \|x-x'\| \\
            &\le  \frac{\frakc_1 \frakc_2 (n+n_0)^{\kappa_1}}{\frakc_3} \br{2 \frakc_{6} +
            \br{\frakc_{10}+2\frakc_1}\frakc_9 \frac{\frakc_{5} \frakc_2(n+n_0)^{\kappa_3+\kappa_1}\log(n+n_0)}{\frakc_3} } \|x-x'\|,
		}
		where we have used the upper-bound $L_2(\zeta_n) \le \frakc_{5} (n+n_0)^{\kappa_3}\log(n+n_0)$ from Assumption~\ref{assum:3}(c), and the lower-bound $\mu_{\min}(\zeta_n)\ge \frac{\frakc_3}{\frakc_2(n+n_0)^{\kappa_1}}$ from Assumption~\ref{assum:3}(a).		
		\item Again using the Poisson-equation sensitivity estimate from Appendix B.2 of~\citep{chandak2022concentration}, for every $n\in\mathbb{N}$, $x\in\mathcal{X}$, and $\ell\in\{1,\ldots,d\}$,
		\nal{
			\| [H^{(\zeta_n,x)}(\cdot)](\ell) - [H^{(\zeta_{n+1},x)}(\cdot)](\ell) \|  
			& \le \frac{\frakc_1}{\mu_{\min}(\zeta_n)} \br{ \frac{\frakc_{10}}{\mu_{\min}(\zeta_n)}L_1(n) \frakc_9 + L_1(n)\frac{2\frakc_9 \frakc_1}{\mu_{\min}(\zeta_n)}}\\
			&  = \frac{\frakc_1 \frakc_9 L_1(n)}{\mu_{\min}(\zeta_n)^2} \br{ \frakc_{10} + 2 \frakc_1}.
		}			
		Substituting $L_1(n) = \frac{\frakc_{4}}{\br{n+n_0}^{1-\kappa_2}}$ from Assumption~\ref{assum:3} (b); and using $\mu_{\min}(\zeta_n)\ge \frac{\frakc_3}{\frakc_2(n+n_0)^{\kappa_1}}$ which in turn follows from Assumption~\ref{assum:7} and Assumption~\ref{assum:6}, gives
		\nal{
			\| [H^{(\zeta_n,x)}(\cdot)](\ell) - [H^{(\zeta_{n+1},x)}(\cdot)](\ell) \|   \le \frac{ \frakc^2_2\frakc_1 \frakc_9 \frakc_4 \br{ \frakc_{10} + 2 \frakc_1} }{\frakc^2_3 (n+n_0)^{ 1-\br{2\kappa_1 + \kappa_2}  }},~a.s.		
		}	
        This completes the proof.
    \end{enumerate}	
\end{proof}

\subsection{Bounds on $T_1(n),T_2(n),T_3(n)$ used in proof of Proposition~\ref{prop:z_x_bound}}
\label{sec:T1_T3_bound}
\begin{lemma}[Bounding $\|T_1(n)\|$]\label{lemma:T1}  
For every $n\in\bN$
	\nal{
		\|T_1(n)\| \le   \frac{8\frakc_9 \frakc_1 \frakc_2}{\frakc_3 (n+n_0)^{1-(\fraka + \kappa_1)}}.
	}        
\end{lemma}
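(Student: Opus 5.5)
The sum $T_1(n)$ has the telescoping structure $h_i - h_{i+1}$, where $h_i := H^{(\zeta_i,x_i)}(y_i)$. I will use summation by parts (an Abel transformation) so that only the boundary terms and the differences of the weights remain. Each of these is then controlled by the uniform bound $\|h_i\| \le \frac{2\frakc_9\frakc_1\frakc_2}{\frakc_3}(i+n_0)^{\kappa_1}$ from Lemma~\ref{lemma:sensitivity_v_x}-\eqref{lemma:6-1-1}.

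\textbf{Abel transformation.} Write $a_i := \beta_i\chi(i+1,n)$ for $0\le i\le n-1$. Then
\nal{
T_1(n) = \sum_{i=0}^{n-1} a_i (h_i - h_{i+1}) = a_0 h_0 - a_{n-1}h_n + \sum_{i=1}^{n-1}(a_i - a_{i-1}) h_i .
}

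\textbf{Boundary terms.} The last weight is $a_{n-1}\approx \beta_{n-1} = \beta (n-1+n_0)^{-(1-\fraka)}$. Multiplied by $\|h_n\|\lesssim (n+n_0)^{\kappa_1}$, this gives a contribution of order $\frac{2\frakc_9\frakc_1\frakc_2}{\frakc_3}\beta (n+n_0)^{-(1-\fraka-\kappa_1)}$, which has the claimed rate. The first weight $a_0 = \beta_0\chi(1,n)$ is exponentially small in $\sum_j\beta_j$, so it decays much faster. If needed it can be absorbed, using $n_0$ large as in Condition~\ref{con:n0}-\eqref{con:n0-IV}, which is exactly the inequality comparing $n_0^{\fraka}(n+n_0)^{-(1-\kappa_1)}$ and $(n+n_0)^{-(1-\fraka-\kappa_1)}$ against $(n+n_0)^{-(1-2\fraka-\kappa_1)}/(2\fraka+\kappa_1)$.

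\textbf{Interior terms.} Compute
\nal{
a_i - a_{i-1} = \chi(i+1,n)\big(\beta_i - \beta_{i-1}(1-\beta_i)\big) = \chi(i+1,n)\big(\beta_i-\beta_{i-1} + \beta_{i-1}\beta_i\big).
}
Since $\beta_i$ is decreasing, $|\beta_i-\beta_{i-1}|\le (1-\fraka)\beta (i+n_0)^{-(2-\fraka)}$, and $\beta_{i-1}\beta_i \lesssim \beta^2 (i+n_0)^{-(2-2\fraka)}$. Both terms are therefore $O(\beta_i\cdot (i+n_0)^{-(1-\fraka)})$. Multiplying by $\|h_i\|$, I obtain a sum of the form
\nal{
\sum_{i} \chi(i+1,n)\,\beta_i\,\frac{C}{(i+n_0)^{1-\fraka-\kappa_1}} .
}
This is exactly the weighted sum handled by Lemma~\ref{lemma:recursion_bound}, applied with $\rho=1-\fraka$ and $\rho' = 2-2\fraka-\kappa_1$ (absorbing one $\beta_i$). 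It yields a bound of order $(n+n_0)^{-(1-\fraka-\kappa_1)}$, up to a factor like $1/(2\fraka+\kappa_1)$. The hypotheses of that lemma, $\rho'>\rho$ and the lower bound on $n_0$, or $\beta$ large enough when $\fraka=0$, are covered by Condition~\ref{con:sa1}. An alternative that avoids the lemma is to telescope directly: since $\sum_i(a_i-a_{i-1})\le a_{n-1}$ when the weights $a_i$ are increasing in $i$, and $\|h_i\|\le \|h\|_{\max}(n+n_0)^{\kappa_1}$, the interior sum is at most $a_{n-1}\cdot\frac{2\frakc_9\frakc_1\frakc_2}{\frakc_3}(n+n_0)^{\kappa_1}$. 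Monotonicity of $a_i$ holds once $\beta_i-\beta_{i-1}+\beta_i\beta_{i-1}\ge0$, which holds whenever $\beta_{i-1}\beta_i \ge \beta_{i-1}-\beta_i$. This is true for large $n_0$ when $\fraka>0$, and when $\beta>1-\fraka$ if $\fraka=0$. Under monotonicity, the whole $T_1(n)$ is bounded by roughly $2\cdot 2 a_{n-1}\max_i\|h_i\|$, giving the constant $8\frakc_9\frakc_1\frakc_2/\frakc_3$ (with $\beta$ normalized into the constants).

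\textbf{Main obstacle.} The delicate step is this sign and monotonicity control of $a_i-a_{i-1}$. It is what lets the interior sum collapse to a single boundary-sized term without losing a factor of $n$. I would first try the direct monotonicity argument, and fall back on Lemma~\ref{lemma:recursion_bound} with the $n_0$ conditions if monotonicity fails for small $i$.
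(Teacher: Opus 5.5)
Your main line is exactly the paper's proof: the Abel summation, the uniform bound on $\|h_i\|$ from Lemma~\ref{lemma:sensitivity_v_x}, Lemma~\ref{lemma:recursion_bound} with $\rho=1-\fraka$ and $\rho'=2-2\fraka-\kappa_1$ for the interior sum, and Condition~\ref{con:n0}-\eqref{con:n0-IV} for the boundary terms. Two small corrections on that route. Lemma~\ref{lemma:recursion_bound} produces no factor $1/(2\fraka+\kappa_1)$. Also, for $\fraka=0$ the weight $a_0\le\frac{\beta}{n_0}\bigl(\frac{1+n_0}{n+n_0}\bigr)^{\beta}$ decays only polynomially, so for small $n$ it is of the same order as the target rather than ``much faster''; that is why an $n_0$ condition is invoked.

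The monotone-weights argument you would try first is a genuinely different route, and a cleaner one. The condition $\beta_{i-1}\beta_i\ge\beta_{i-1}-\beta_i$ is exactly $\beta\ge1$ when $\fraka=0$. When $\fraka>0$ it holds uniformly in $i$ once $n_0^{\fraka}\gtrsim(1-\fraka)/\beta$. Under this condition, together with $\beta_j\le 1$, you get $0\le a_0\le\cdots\le a_{n-1}=\beta_{n-1}$. With $M:=\frac{2\frakc_9\frakc_1\frakc_2}{\frakc_3}(n+n_0)^{\kappa_1}\ge\max_{i\le n}\|h_i\|$, this gives
\[
\|T_1(n)\|\le a_0M+a_{n-1}M+(a_{n-1}-a_0)M=2\beta_{n-1}M .
\]
So the factor is $2$, not $4$: the first boundary term cancels against the telescoped interior sum.

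This route bypasses Lemma~\ref{lemma:recursion_bound} and Condition~\ref{con:n0}-\eqref{con:n0-IV} entirely. It also avoids the requirement $\beta\ge2(1-\kappa_1)$ that the paper's Case A needs. That requirement is not implied by Condition~\ref{con:sa1} Case A, which only gives $\beta\ge2(1-2\kappa_1)$. Your $\beta\ge1$, by contrast, is implied because $6\kappa_1<1$.

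The price is an explicit factor $\beta$: you get $\frac{4\beta\frakc_9\frakc_1\frakc_2}{\frakc_3}\frac{(n+n_0)^{\kappa_1}}{(n-1+n_0)^{1-\fraka}}$. This matches the stated constant $8\frakc_9\frakc_1\frakc_2/\frakc_3$ only for $\beta\lesssim2$. However, the paper's $\beta$-free constant rests on bounding $|\beta_i-\beta_{i-1}|+\beta_i^2$ by $2\beta(i+n_0)^{-(2-2\fraka)}$, which needs $\beta\le1$ when $\fraka=0$. Carried out honestly through Lemma~\ref{lemma:recursion_bound}, the $\beta_{i-1}\beta_i$ part alone contributes order $\beta(n+n_0)^{-(1-\fraka-\kappa_1)}$. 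So both routes really give a constant proportional to $\max\{1,\beta\}$. This is harmless, since the lemma is only used inside $\tilde O(\cdot)$ bounds.
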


\begin{proof}
	We have,
	\al{
		T_1(n) & = \sum_{i=0}^{n-1} \beta_i \chi(i+1,n) \br{H^{(\zeta_i,x_i)}(y_i) - H^{(\zeta_{i+1},x_{i+1})}(y_{i+1})}\notag \\
		& = \left\{\beta_0 \chi(1,n) \br{H^{(\zeta_0,x_0)}(y_0) -\beta_{n-1} \chi(n,n) H^{(\zeta_n,x_{n})}(y_{n})} \right\}\notag\\
		& + \sum_{i=1}^{n-2} H^{(\zeta_i,x_i)}(y_i) \left\{ \beta_i \chi(i+1,n) - \beta_{i-1} \chi(i,n) \right\}\notag\\
		& = \left\{\beta_0 \chi(1,n) \br{H^{(\zeta_0,x_0)}(y_0) -\beta_{n-1} \chi(n,n) H^{(\zeta_n,x_{n})}(y_{n})} \right\}\notag\\
		& + \sum_{i=1}^{n-2} H^{(\zeta_i,x_i)}(y_i) \chi(i+1,n) \left\{ \beta_i - \beta_{i-1} (1-\beta_i)  \right\}\notag\\
		& = \left\{\beta_0 \chi(1,n) \br{H^{(\zeta_0,x_0)}(y_0) -\beta_{n-1} \chi(n,n) H^{(\zeta_n,x_{n})}(y_{n})} \right\}\notag\\
		& + \sum_{i=1}^{n-2}  \chi(i+1,n)  H^{(\zeta_i,x_i)}(y_i)\left\{ \beta_i - \beta_{i-1} + \beta_{i-1} \beta_i  \right\}.\label{T1_manipulate}
	}
	We continue with bounding the summation term above. Since from Lemma~\ref{lemma:sensitivity_v_x}-\eqref{lemma:6-1-1} we have $\| H^{(\zeta_i,x_i)}(y_i)\|\le \frac{2\frakc_9 \frakc_1}{\mu_{\min}(\zeta_i)}$ we get,
	\nal{
		\|H^{(\zeta_i,x_i)}(y_i) \left\{ \beta_i - \beta_{i-1} + \beta_{i-1} \beta_i  \right\}\| & 
		\le   \frac{2\frakc_9 \frakc_1}{\mu_{\min}(\zeta_i)} \br{|  \beta_i - \beta_{i-1} | + \beta^2_i} \\
		& \approx   \frac{2\frakc_9 \frakc_1 \frakc_2 (i+n_0)^{\kappa_1} }{\frakc_3}  \br{ \frac{\beta}{(i+n_0)^{2-\fraka}} + \frac{\beta^2}{(i+n_0)^{2(1-\fraka)}} } \\
		&\le \frac{2\frakc_9 \frakc_1 \frakc_2}{\frakc_3}  \br{  \frac{2\beta}{(i+n_0)^{2-(2\fraka +\kappa_1)}} }.
	}    
    This gives us the following bound on the summation term,
	\nal{
		& \sum_{i=1}^{n-2} \| H^{(\zeta_i,x_i)}(y_i) \chi(i+1,n) \left\{ \beta_i - \beta_{i-1} + \beta_{i-1} \beta_i  \right\} \| \\
		&\le \frac{4\beta \frakc_9 \frakc_1 \frakc_2}{\frakc_3} \sum_{i=1}^{n-2} \frac{ 1 }{(i+n_0)^{2-2\fraka -\kappa_1}} \Pi_{j=i}^{n}(1-\beta_j).
	}
    We now use Lemma~\ref{lemma:recursion_bound} to bound the summation term above.~In the setting of Lemma~\ref{lemma:recursion_bound} we use $b_i = \frac{ 1 }{(i+n_0)^{2-2\fraka -\kappa_1}}$, $a_j = \beta_j = \frac{\beta}{(n_0 + j)^{1-\fraka}}$, $\rho = 1-\fraka,\rho'= 2-2\fraka-\kappa_1, a = \beta$. There are two cases:

    Case A: $\fraka =0$, equivalently $\rho=1$. In this case we apply Part 1 of Lemma~\ref{lemma:recursion_bound}, which requires $\rho' = 2-\kappa_1 \in (1,2]$, i.e. $0\le \kappa_1 < 1$, and also $\beta \ge 2(1-\kappa_1)$. 

    Case B: $\fraka >0$, equivalently $\rho <1$: In this case we apply Part 2 of Lemma~\ref{lemma:recursion_bound}, which requires $2-2\fraka-\kappa_1 > 1-\fraka$, i.e. $\fraka + \kappa_1 < 1$ and also $n_0 \ge \br{\frac{2\br{ 1-\fraka -\kappa_1 }}{\beta}}^{1\slash \fraka }$.

The above requirements are guaranteed by Condition~\ref{con:sa1}.~In both these cases we get the following bound,
    \nal{
    \frac{4\beta \frakc_9 \frakc_1 \frakc_2}{\frakc_3}  \sum_{i=1}^{n-2} \frac{ 1 }{(i+n_0)^{2-2\fraka -\kappa_1}} \Pi_{j=i}^{n}(1-\beta_j) 
    &\le  \frac{8\frakc_9 \frakc_1 \frakc_2 (n_0 + n)^{1-\fraka} }{\frakc_3 (n+n_0)^{2-2\fraka -\kappa_1}}\\
    & = \frac{8\frakc_9 \frakc_1 \frakc_2}{\frakc_3 (n+n_0)^{1-\fraka -\kappa_1}}.
    }
    Since $\|H^{(\zeta_n,x)}(y)\| \le \frac{2\frakc_9 \frakc_1}{\mu_{\min}(\zeta_n)}\le \frac{2\frakc_9 \frakc_1\frakc_2}{\frakc_3} (n+n_0)^{\kappa_1}$ and $\beta_i \chi(i+1,n) \le \frac{\beta}{(i+n_0)^{1-\fraka}}\br{\frac{i+1+n_0}{n+n_0}}^\beta$ (Lemma~\ref{lemma:chi_approx}) we get, 
	\nal{
		& \| \beta_0 \chi(1,n) \br{H^{(\zeta_0,x_0)}(y_0) -\beta_{n-1} \chi(n,n) H^{(\zeta_n,x_{n})}(y_{n})} \| \\
        & \le \frac{\beta }{\frakc_3 n^{1-\fraka -\beta}_0 (n+n_0)^{\beta}} \left[2\frakc_9 \frakc_1 \frakc_2 n^{\kappa_1}_0 + \frac{2\beta \frakc_9 \frakc_1 \frakc_2}{(n+n_0)^{1-(\fraka+\kappa_1)}}\right]\\
        & \le \frac{\beta }{\frakc_3 n^{1-\fraka -\beta}_0 (n_0)^{\beta}} \left[2\frakc_9 \frakc_1 \frakc_2 n^{\kappa_1}_0 + \frac{2\beta \frakc_9 \frakc_1 \frakc_2}{(n+n_0)^{1-(\fraka+\kappa_1)}}\right]\\
        & = \frac{2\beta \frakc_9 \frakc_1 \frakc_2}{\frakc_3 n^{1-(\fraka + \kappa_1)}_0}
        + \frac{2\beta^2 \frakc_9 \frakc_1 \frakc_2}{\frakc_3 (n_0)^{1-\fraka}(n+n_0)^{1-(\fraka+\kappa_1)}}.
    }
	Proof is completed by substituting the above two bounds into~\eqref{T1_manipulate} and noting that since $n_0$ has been taken to be sufficiently large, Condition~\ref{con:n0}-\eqref{con:n0-IV} is met.
\end{proof}

\begin{lemma}\label{lemma:T2}(Bounding $T_2(n)$)
	For every $n\in\bN$,
	\nal{
		\| T_2(n) \|  \le   \frac{2\beta \frakc_1 \frakc_2 \frakc^{-1}_3 \br{\frakc_{10} + 2\frakc_1} \frakc_2 \frakc_9 \frakc_{5}\br{\frakc_7 + \frakc_8 +\frakc_9} \log(n+n_0)}{(2\fraka+2\kappa_1 +\kappa_3)(n+ n_0)^{1-(\fraka+2\kappa_1 +\kappa_3)}}, n\in\bN.
	}
\end{lemma}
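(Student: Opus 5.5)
We bound each summand of $T_2(n)$ pointwise using the iterate-sensitivity estimate for the Poisson solution, Lemma~\ref{lemma:sensitivity_v_x}-\eqref{lemma:6-1-2}, and then sum the weighted terms with Lemma~\ref{lemma:recursion_bound}.

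\textbf{Pointwise bound.} Recall
\[
T_2(n)=\sum_{i=0}^{n-1}\beta_i\chi(i+1,n)\bigl(H^{(\zeta_{i+1},x_{i+1})}(y_{i+1})-H^{(\zeta_{i+1},x_i)}(y_{i+1})\bigr).
\]
Each summand compares Poisson solutions with the same control $\zeta_{i+1}$ but different iterates $x_{i+1}$ and $x_i$. Lemma~\ref{lemma:sensitivity_v_x}-\eqref{lemma:6-1-2}, applied at time $i+1$ and coordinatewise (the factor $d$ is absorbed into constants through norm equivalence), gives a bound of the form
\[
\frac{\frakc_1\frakc_2(i+n_0)^{\kappa_1}}{\frakc_3}\Bigl(2\frakc_6+(\frakc_{10}+2\frakc_1)\frakc_9\frac{\frakc_5\frakc_2(i+n_0)^{\kappa_1+\kappa_3}\log(i+n_0)}{\frakc_3}\Bigr)\|x_{i+1}-x_i\|.
\]
Condition~\ref{con:n0}-\eqref{con:n0-V} lets the $\log$-growing term absorb $2\frakc_6$ at the cost of a factor $2$, so the prefactor is at most a constant times $(i+n_0)^{2\kappa_1+\kappa_3}\log(i+n_0)$. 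Here we use $(i+1+n_0)\le 2(i+n_0)$, which only changes constants.

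\textbf{Increment bound.} From \eqref{def:gen_SA},
\[
\|x_{i+1}-x_i\|\le\beta_i\bigl(\|F(x_i,y_i)\|+\|x_i\|+\|M_{i+1}\|\bigr)\le\beta_i(\frakc_9+\frakc_8+\frakc_7),
\]
using Assumption~\ref{assum:9} and \eqref{def:c9}. Hence the $i$-th summand is bounded by a constant times
\[
\beta_i^2\chi(i+1,n)(i+n_0)^{2\kappa_1+\kappa_3}\log(i+n_0)
=\beta^2\chi(i+1,n)\frac{\log(i+n_0)}{(i+n_0)^{2-(2\fraka+2\kappa_1+\kappa_3)}}.
\]

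\textbf{Summation.} We bound $\log(i+n_0)\le\log(n+n_0)$ and apply Lemma~\ref{lemma:recursion_bound} with $b_i=(i+n_0)^{-\rho'}$, $\rho'=2-(2\fraka+2\kappa_1+\kappa_3)$, $a_j=\beta_j$, $\rho=1-\fraka$, $a=\beta$. The gap is $\rho'-\rho=1-(\fraka+2\kappa_1+\kappa_3)$. We split into two cases.
\begin{itemize}
\item[(A)] $\fraka=0$: Part 1 of Lemma~\ref{lemma:recursion_bound} applies. It needs $2\kappa_1+\kappa_3<1$, which is Case A(ii) of Condition~\ref{con:sa1}, and a lower bound on $\beta$ coming from the stepsize conditions.
\item[(B)] $\fraka>0$: Part 2 applies. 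It needs $\fraka+2\kappa_1+\kappa_3<1$ and $n_0\ge\bigl(2(1-(\fraka+2\kappa_1+\kappa_3))/\beta\bigr)^{1/\fraka}$, both listed in Case B of Condition~\ref{con:sa1}.
\end{itemize}
In either case the weighted sum is at most $2\,b_n/a_n$, that is, of order $(n+n_0)^{-(1-(\fraka+2\kappa_1+\kappa_3))}/\beta$. Multiplying by $\beta^2$ and collecting constants gives the claimed $\beta\log(n+n_0)/(n+n_0)^{1-(\fraka+2\kappa_1+\kappa_3)}$ rate.

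\textbf{Main obstacle.} The step needing care is matching the constant exactly, in particular the factor $1/(2\fraka+2\kappa_1+\kappa_3)$. That factor suggests a comparison-with-integral evaluation of the sum rather than a direct use of Lemma~\ref{lemma:recursion_bound}. I would first try Lemma~\ref{lemma:recursion_bound} as above, which yields a bound of the same order, and then adjust constants. The rate in $n$, which is what Proposition~\ref{prop:z_x_bound} uses, is unaffected by this choice.
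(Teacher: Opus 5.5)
Your proposal is correct and follows the paper's proof step for step. The paper also derives a one-step bound from Lemma~\ref{lemma:sensitivity_v_x}-\eqref{lemma:6-1-2}, using $\|x_{i+1}-x_i\|\le\beta_i(\frakc_7+\frakc_8+\frakc_9)$ and Condition~\ref{con:n0}-\eqref{con:n0-V} to absorb $2\frakc_6$, then sums with $\log(i+n_0)\le\log(n+n_0)$ and Lemma~\ref{lemma:recursion_bound} ($\rho=1-\fraka$, $\rho'=2-(2\fraka+2\kappa_1+\kappa_3)$) in the cases $\fraka=0$ and $\fraka>0$.

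No integral comparison is needed for the factor $1/(2\fraka+2\kappa_1+\kappa_3)$. The paper's own argument ends with a bound that does not contain it. Under Condition~\ref{con:sa1} we have $2\fraka+2\kappa_1+\kappa_3<1$, so this factor exceeds one, and the stated bound is simply weaker than what Lemma~\ref{lemma:recursion_bound} delivers.
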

\begin{proof}
Recall that $T_2(n) = \sum_{i=0}^{n-1} \beta_i \chi(i+1,n) \br{H^{(\zeta_{i+1},x_{i+1})}(y_{i+1}) - H^{(\zeta_{i+1},x_{i})}(y_{i+1})}$.

\textit{Step 1}: We first prove the following one-step bound:
	\nal{
		\|H^{(\zeta_{i+1},x_{i+1})}(y_{i+1}) - H^{(\zeta_{i+1},x_{i})}(y_{i+1})\| \le \beta \frac{\frakc_1 \frakc_2  \br{\frakc_{10} + 2\frakc_1} \frakc_2 \frakc_9 \frakc_{5}\br{\frakc_7 + \frakc_8 +\frakc_9} \log(i+n_0)}{\frakc_3 (i+ n_0)^{1-\br{\fraka+2\kappa_1 +\kappa_3}}}.
	}
	Consider the iterations~\eqref{def:gen_SA}.~By Assumption~\ref{assum:9}, $\|F(x_n,Y_n)-x_n+M_{n+1}\| \le \frakc_7 + \frakc_8 +\frakc_9$. Since $\beta_n = \frac{\beta}{(n+n_0)^{1-\fraka}}$, we obtain $\|x_{i+1}-x_i\| = \|\beta_i \left(F(x_i,y_i)-x_i+M_{i+1}\right) \|\le \beta_i \br{\frakc_7 + \frakc_8 +\frakc_9}$.~We have,
	\nal{
		& \|H^{(\zeta_{i+1},x_{i+1})}(y_{i+1}) - H^{(\zeta_{i+1},x_{i})}(y_{i+1})\| \\
		& \le \frac{\frakc_1 \frakc_2}{\frakc_3} (i+ n_0)^{\kappa_1}  \br{2 \frakc_{6} +\br{\frakc_{10} + 2 \frakc_1} \frakc_9 \frakc_2 \frakc_{5} \log(i+n_0) (i + n_0)^{\kappa_1 +\kappa_3}  } \|x_i - x_{i+1}\|\\
		& \le \frac{\beta_i  \frakc_1 \frakc_2}{\frakc_3}(i+n_0)^{\kappa_1}  \br{2 \frakc_{6} +\br{\frakc_{10} + 2\frakc_1} \frakc_9 \frakc_2 \frakc_{5} \log(i+n_0) (i+n_0)^{\kappa_1 +\kappa_3}  }\br{\frakc_7 + \frakc_8 +\frakc_9}\\
		& \le  \beta \frac{\frakc_1 \frakc_2  \br{\frakc_{10} + 2\frakc_1} \frakc_2 \frakc_9 \frakc_{5}\br{\frakc_7 + \frakc_8 +\frakc_9} \log(i+n_0)}{\frakc_3 (i+ n_0)^{1-\br{\fraka+2\kappa_1 +\kappa_3}}},
	}   
	where the first inequality follows from Lemma~\ref{lemma:sensitivity_v_x}-\eqref{lemma:6-1-2}, the second follows from the bound on $\|x_i - x_{i+1}\|$ that was derived above, while in the third we have used $\beta_i = \frac{\beta}{(i+n_0)^{1-\fraka}}$. In the last inequality we have also used the fact that since $n_0$ is sufficiently large so that Condition~\ref{con:n0}-\eqref{con:n0-V} is met, the second term inside the parentheses dominates $2\frakc_6$.  

	\textit{Step 2}: We now return to $T_2(n)$. We have,
	\nal{
		& \| T_2(n) \| \\
        & = \| \sum_{i=0}^{n-1} \beta_i \chi(i+1,n) \br{H^{(\zeta_{i+1},x_{i+1})}(y_{i+1}) - H^{(\zeta_{i+1},x_{i})}(y_{i+1})}\|\\
		&\le \beta \sum_{i=0}^{n-1}\frac{\chi(i+1,n)}{(i+n_0)^{1-\fraka}} \|H^{(\zeta_{i+1},x_{i+1})}(y_{i+1}) - H^{(\zeta_{i+1},x_{i})}(y_{i+1})\|\\
		& \le \beta^2 \br{\frakc_1 \frakc_2 \frakc^{-1}_3 \br{\frakc_{10} + 2\frakc_1} \frakc_2 \frakc_9 \frakc_{5}\br{\frakc_7 + \frakc_8 +\frakc_9} \log(n+n_0)}\\
        &\qquad \times\sum_{i=0}^{n-1} \frac{\chi(i+1,n)}{\br{i+n_0}^{1-\fraka}} \frac{1}{(i+ n_0)^{1-\br{\fraka+2\kappa_1+\kappa_3}}}\\
		& = \beta^2 \br{\frakc_1 \frakc_2 \frakc^{-1}_3  \br{\frakc_{10} + 2\frakc_1} \frakc_2 \frakc_9 \frakc_{5}\br{\frakc_7 + \frakc_8 +\frakc_9} \log(n+n_0)}\sum_{i=0}^{n-1}  \frac{\chi(i+1,n)}{(i+ n_0)^{2-\br{2\fraka+2\kappa_1+\kappa_3}}},
	}
	where in the second inequality we have used the bound derived in Step 1.~We bound the final summation using Lemma~\ref{lemma:recursion_bound}. In the notation of Lemma~\ref{lemma:recursion_bound}, we take $a_j = \beta_j = \frac{\beta}{(n_0 + j)^{1-\fraka}}$, $\rho=1-\fraka$, $b_i = \frac{1}{(i+ n_0)^{2-\br{2\fraka+2\kappa_1+\kappa_3}}}$, $b=1$, $\rho' = 2-\br{2\fraka+2\kappa_1+\kappa_3}$, $a=\beta$.~There are two cases:

    Case A: $\fraka = 0$, equivalently $\rho=1$. In this case we use Part 1 of Lemma~\ref{lemma:recursion_bound}, which requires $\rho' = 2-\br{2\kappa_1+\kappa_3} \in (1,2]$, i.e. $2\kappa_1+\kappa_3 < 1$. It also requires $\beta \ge 2\br{1-\br{2\kappa_1+\kappa_3}}$.

    Case B: $\fraka >0$, equivalently $\rho<1$. In this case we use Part 2 of Lemma~\ref{lemma:recursion_bound}, which requires $\rho'>\rho$, i.e., $\fraka + 2\kappa_1+\kappa_3<1$. It also requires $n_0 \ge \br{\frac{2\br{1-\br{\fraka + 2\kappa_1+\kappa_3}}}{\beta}}^{\frac{1}{\fraka}}$.

The above requirements are guaranteed by Condition~\ref{con:sa1}. Hence Lemma~\ref{lemma:recursion_bound} gives
\nal{
\|T_2(n)\| & \le 2\beta^2 \br{\frakc_1 \frakc_2 \frakc^{-1}_3 \br{\frakc_{10} + 2\frakc_1} \frakc_2 \frakc_9 \frakc_{5}\br{\frakc_7 + \frakc_8 +\frakc_9} \log(n+n_0)}  \frac{(n+n_0)^{1-\fraka}}{\beta (n+ n_0)^{2-\br{2\fraka+2\kappa_1+\kappa_3}}}\\
& =  \frac{2\beta \br{\frakc_1 \frakc_2 \frakc^{-1}_3 \br{\frakc_{10} + 2\frakc_1} \frakc_2 \frakc_9 \frakc_{5}\br{\frakc_7 + \frakc_8 +\frakc_9} \log(n+n_0)} }{(n+ n_0)^{1-\br{\fraka+2\kappa_1+\kappa_3}}}.
}

\end{proof}

\begin{lemma}\label{lemma:T3}(Bounding $T_3(n)$)
For every $n\in\bN$
\nal{
\|T_3(n)\| \le \frac{2\frakc^2_2\frakc_1 \frakc_9 \frakc_4\br{ \frakc_{10} + 2 \frakc_1}}{\frakc^2_3 (n+n_0)^{1-\br{2\kappa_1 +\kappa_2}  }}.
} 
\end{lemma}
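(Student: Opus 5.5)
The plan is to bound $T_3(n)$ term by term with the triangle inequality, and then sum the resulting weights using Lemma~\ref{lemma:recursion_bound}, exactly as was done for $T_2(n)$. Recall
\[
T_3(n)=\sum_{i=0}^{n-1}\beta_i\chi(i+1,n)\bigl(H^{(\zeta_{i+1},x_i)}(y_{i+1})-H^{(\zeta_i,x_i)}(y_{i+1})\bigr).
\]
In each summand the iterate $x_i$ is held fixed and only the control parameter moves from $\zeta_i$ to $\zeta_{i+1}$. This is precisely the situation covered by Lemma~\ref{lemma:sensitivity_v_x}-\eqref{lemma:6-1-3}, applied with $x=x_i$ and $n=i$. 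It gives, almost surely, a per-term bound
\[
\bigl\|H^{(\zeta_{i+1},x_i)}(y_{i+1})-H^{(\zeta_i,x_i)}(y_{i+1})\bigr\|\le \frac{K}{(i+n_0)^{1-(2\kappa_1+\kappa_2)}},\qquad K:=\frac{\frakc_2^2\frakc_1\frakc_9\frakc_4(\frakc_{10}+2\frakc_1)}{\frakc_3^2}.
\]
This bound is uniform in $x_i$ and $y_{i+1}$, so no probabilistic argument is needed. One technicality: the lemma is stated componentwise. I would either pass to the norm at the cost of a constant absorbed into $K$, or note that the Poisson-sensitivity estimate of \citep{chandak2022concentration} holds directly for $\|\cdot\|$ because $\|F\|\le\frakc_9$.

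Next I would write $\beta_i=\beta(i+n_0)^{-(1-\fraka)}$, which gives
\[
\|T_3(n)\|\le \beta K\sum_{i=0}^{n-1}\frac{\chi(i+1,n)}{(i+n_0)^{2-(\fraka+2\kappa_1+\kappa_2)}}.
\]
I then apply Lemma~\ref{lemma:recursion_bound} with $a_j=\beta_j$, $a=\beta$, $\rho=1-\fraka$, $b=1$ and $\rho'=2-(\fraka+2\kappa_1+\kappa_2)$. The lemma yields the bound $2\,(n+n_0)^{\rho}/\bigl(\beta(n+n_0)^{\rho'}\bigr)=\tfrac{2}{\beta}(n+n_0)^{-(1-(2\kappa_1+\kappa_2))}$. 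Multiplying by $\beta K$ gives exactly $2K/(n+n_0)^{1-(2\kappa_1+\kappa_2)}$, which is the claim.

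The main obstacle is verifying the hypotheses of Lemma~\ref{lemma:recursion_bound}, which splits into two cases.
\begin{itemize}
\item Case A ($\fraka=0$, so $\rho=1$): I need $\rho'\in(1,2]$, i.e.\ $2\kappa_1+\kappa_2<1$, together with $\beta\ge 2(1-(2\kappa_1+\kappa_2))$. The first condition follows from $\fraka+6\kappa_1+\kappa_2+2\kappa_3<1$ in Condition~\ref{con:sa1}. The second follows from Case A(i), $\beta\ge 2(1-2\kappa_1)$.
\item Case B ($\fraka>0$, so $\rho<1$): I need $\rho'>\rho$, i.e.\ $2\kappa_1+\kappa_2<1$ again, together with $n_0\ge\bigl(2(1-(2\kappa_1+\kappa_2))/\beta\bigr)^{1/\fraka}$. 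The paper lists this last requirement as $n_0\ge\bigl((1-(2\kappa_1+\kappa_2))/\beta\bigr)^{1/\fraka}$, without the factor $2$. I would strengthen the assumed lower bound on $n_0$ by that factor, which is harmless since $n_0$ is taken sufficiently large throughout.
\end{itemize}
A last small point: Lemma~\ref{lemma:recursion_bound} is applied with the same index convention for $\chi(i+1,n)$ as in the proofs of Lemmas~\ref{lemma:T1} and~\ref{lemma:T2}, so the argument carries over verbatim.
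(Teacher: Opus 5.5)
Your proposal is correct and follows the paper's proof essentially line by line: the triangle inequality, the per-term bound from Lemma~\ref{lemma:sensitivity_v_x}-\eqref{lemma:6-1-3}, substitution of $\beta_i$, and then Lemma~\ref{lemma:recursion_bound} with $\rho=1-\fraka$ and $\rho'=2-(\fraka+2\kappa_1+\kappa_2)$ in the two cases $\fraka=0$ and $\fraka>0$. Your two side remarks are also accurate: the paper silently uses the componentwise estimate of Lemma~\ref{lemma:sensitivity_v_x}-\eqref{lemma:6-1-3} as a bound on the vector norm, and its Case B requirement on $n_0$ drops the factor $2$ that Part 2 of Lemma~\ref{lemma:recursion_bound} requires.
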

\begin{proof}
Recall that $T_3(n) = \sum_{i=0}^{n-1} \beta_i \chi(i+1,n) \br{H^{(\zeta_{i+1},x_i)}(y_{i+1}) - H^{(\zeta_i,x_{i})}(y_{i+1})}$. We have, 
	\nal{
		\|T_3(n) \| & \le  \sum_{i=0}^{n-1} \beta_i \chi(i+1,n)  \|\br{H^{(\zeta_{i+1},x_i)}(y_{i+1}) - H^{(\zeta_i,x_{i})}(y_{i+1})}\| \\
		& \le   \frac{\frakc^2_2\frakc_1 \frakc_9 \frakc_4 \br{ \frakc_{10} + 2 \frakc_1} }{\frakc^2_3} \sum_{i=0}^{n-1} \beta_i \chi(i+1,n)  \frac{1}{(i+n_0)^{1-\br{2\kappa_1 +\kappa_2}  }}\\
		& =   \frac{\beta \frakc^2_2\frakc_1 \frakc_9 \frakc_4 \br{ \frakc_{10} + 2 \frakc_1}}{\frakc^2_3} \sum_{i=0}^{n-1}  \frac{1}{(i+n_0)^{2-\br{\fraka + 2\kappa_1 +\kappa_2}  }} \Pi_{j=i+1}^{n} \br{1-\beta_j},        
	}
	where the second inequality follows from Lemma~\ref{lemma:sensitivity_v_x}~\eqref{lemma:6-1-3} and we have used $\beta_i = \frac{\beta}{\br{i + n_0}^{1-\fraka}}$.
    We now bound the final summation using Lemma~\ref{lemma:recursion_bound}. In the notation of Lemma~\ref{lemma:recursion_bound}, take $a_j = \beta_j$, $a=\beta,\rho = 1-\fraka$, $b_i = \frac{1}{(i+n_0)^{2-\br{\fraka + 2\kappa_1 +\kappa_2}  }}$, $\rho' = 2-\br{\fraka + 2\kappa_1 +\kappa_2}$. There are two cases:

    Case A: $\fraka = 0$, equivalently $\rho=1$. In this case we use Part 1 of Lemma~\ref{lemma:recursion_bound}, which requires $\rho' = 2-\br{\fraka + 2\kappa_1 +\kappa_2} \in (1,2]$, i.e. $2\kappa_1 +\kappa_2<1$. It also requires $\beta \ge 2\br{\rho'-1 } = 2 \br{1-\br{\fraka + 2\kappa_1 +\kappa_2}}$, i.e. $\beta \ge 2 \br{1-\br{\fraka + 2\kappa_1 +\kappa_2}}$.
    
    Case B: $\fraka >0$, equivalently $\rho<1$: In this case we use Part 2 of Lemma~\ref{lemma:recursion_bound}, which requires $\rho' > \rho$, i.e. $2\kappa_1 + \kappa_2 <1$. It also requires $n_0 \ge \br{\frac{1-\br{2\kappa_1 + \kappa_2}}{\beta}}^{1\slash \fraka}$.

It is easily verified that the above are satisfied under Condition~\ref{con:sa1}.~Substituting the bound on summation term yields:
\nal{
\|T_3(n)\| & \le  \frac{2\beta \frakc^2_2\frakc_1 \frakc_9 \frakc_4 \br{ \frakc_{10} + 2 \frakc_1}}{\frakc^2_3}\br{\frac{(n+n_0)^{1-\fraka}}{\beta(n+n_0)^{2-\br{\fraka + 2\kappa_1 +\kappa_2}  }} }\\
& = \frac{2\frakc^2_2\frakc_1 \frakc_9 \frakc_4 \br{ \frakc_{10} + 2 \frakc_1}}{\frakc^2_3 (n+n_0)^{1-\br{2\kappa_1 +\kappa_2}  }}.
}
This completes the proof.
\end{proof}

%% file: appendix_Q-learning.tex
\section{Boltzmann Q-learning}\label{app:pf_q_learning}
This section proves the results for Boltzmann Q-learning stated in Section~\ref{sec:BQL}.
We first verify that Boltzmann Q-learning fits into the stochastic approximation
framework~\eqref{def:gen_SA}. We then use the concentration result from Theorem~\ref{th:main_concentration} to prove the high-probability convergence bound for $\{Q_n\}$, and finally
derive the regret bound. The regret proof uses the decomposition of the instantaneous regret into the two
terms~\eqref{adhoc:2_1} and~\eqref{adhoc:2_2}, introduced below in Section~\ref{subsec:BQL_regret}. Section~\ref{pf:prop_5} bounds both terms, Section~\ref{sec:regret_BQL_detailed} combines these bounds to prove Theorem~\ref{th:regret_bound}, and
Section~\ref{sec:con_n0_sm} records the additional requirements on $n_0$ that are specific to
Boltzmann Q-learning. Throughout Sections~\ref{app:pf_q_learning} and~\ref{app:aux_q-learning}, $n_0$ is assumed to satisfy
Condition~\ref{con:n0} together with the Boltzmann-specific conditions stated in
Section~\ref{sec:con_n0_sm}.

\subsection{Assumptions, conditions and definitions}
In this section, we show that Boltzmann Q-learning can be analyzed within
the stochastic approximation framework~\eqref{def:gen_SA}. We impose a mild recurrence
condition on the underlying MDP in Assumption~\ref{assum:16} and require the hyperparameters
to satisfy Condition~\ref{con:hyper1}.~Proposition~\ref{prop:q_learn_assum} then identifies the constants appearing in the general stochastic approximation framework and verifies the assumptions
needed to apply Theorem~\ref{th:main_concentration} to the Boltzmann Q-learning iterates.

We begin with some notation. Throughout our discussions involving MDPs and online Q-learning algorithms, for a state $s\in\cS$ we will denote by $\mathcal{T}_{hit.,s}$ the hitting time of state $s$.~In the analysis of online Q-learning, we consider the Markov chain $\{(s_n,a_n)\}$ induced by a
fixed pair $(\zeta,Q)$. For Boltzmann Q-learning, the control variable $\zeta$ is the temperature parameter $\lambda$, whereas for \seg~Q-learning, it is the pair $(\eps,\lambda)$. Thus, in the
Boltzmann case, the control space is $\mathcal{Z}=\mathbb{R}_+$, the Markov
chain in the stochastic approximation framework is $y_n=(s_n,a_n)$ and the state space is $\mathcal{Y}=\mathcal{S}\times\mathcal{A}$.~Accordingly, $\cT_{\mathrm{hit},(s,a)}$ denotes the hitting time of the
state-action pair $(s,a)$ for the chain $\{y_n\}$.

\begin{definition}
	Diameter of the MDP $\cM$ is defined as $\cD(\cM) :=\sup_{\pi} \sup_{s,s'\in\cS} \bE\left\{ \mathcal{T}_{hit.,s'}| s_0 = s\right\} $, where supremum is taken over the class of all stationary policies for the MDP $\cM$.
\end{definition}
For \(Q\in\mathbb{R}^{\mathcal{S}\times\mathcal{A}}\) and \(\lambda>0\), we
write \(\sigma(Q/\lambda)\) for the stationary Boltzmann policy that samples $a_n \sim \sigma\br{Q(s_n,\cdot)\slash \lambda}$ by $\sigma(Q\slash \lambda)$.

We also define the compact set $\cQ:= \left\{Q \in \bR^{\cS\times \cA}: 0\le Q(s,a) \le \frac{R_{\max}}{1-\gamma},\forall (s,a)\right\}$.~We impose the following recurrence condition on the MDP $\cM$ while analyzing the Boltzmann Q-learning algorithm. 
\begin{assumption}\label{assum:16}
	Define
	\nal{
		\mu\s_{\min,\cS} := \min_{s\in\cS}\inf_{\substack{ \lambda\in\bR_+ \\ Q\in\cQ}}\sum_{a\in\cA} \mu^{(\lambda,Q)}(s,a)
	}    
    as the minimum value of the stationary probability over all the states. Then $\mu\s_{\min,\cS} >0$.
\end{assumption}
We now show that Boltzmann Q-learning fits into the stochastic approximation
framework introduced in Section~\ref{sec:SA}. The next proposition identifies the
corresponding constants and the values of \(\kappa_1^{(B)},\kappa_2^{(B)},\kappa_3^{(B)}\).

\begin{proposition}\label{prop:q_learn_assum}
The Boltzmann Q-learning recursion can be written in the stochastic approximation form~\eqref{def:gen_SA}. Moreover, if the MDP $\cM$ satisfies
Assumption~\ref{assum:16}, then Assumptions~\ref{assum:1}-~\ref{assum:14} hold with
\nal{
& \frakc_1 = \cD(\cM),\quad\frakc_2 = 1-\gamma,\quad\frakc_{3} = \frac{(1-\gamma)}{|\cA|}\mu\s_{\min,\cS},\quad
\frakc_4 = \frac{|\cA|R_{\max}}{1-\gamma}\frakb,\quad \frakc_5 = |\cS|^2|\cA|^2 \frakb\\
& \frakc_6 =  1+\gamma,\quad\frakc_7 = \frakc_8 = \frakc_9 = \frac{R_{\max}}{1-\gamma},\quad\frakc_{10} = \mu\s_{\min,\cS}\cD(\cM).
}
Furthermore, the functions $L_1: \bN \mapsto \bR_+$, and $L_2: \bR_+ \mapsto \bR_+$ may be chosen as
\nal{
L_1(n) = \frac{|\cA|R_{\max}}{\br{1-\gamma}\br{n+n_0}}\frakb, \qquad L_2(\lambda) = \frac{|\cS|^2 |\cA|^2}{\lambda}.
}
The contraction margin satisfies
\nal{
\tilde{\alpha}(\lambda) = (1-\gamma)\mu_{\min}(\lambda).  
}
Also
\nal{
\kappa\s_1 = \frac{R_{\max}}{1-\gamma}\frakb, \kappa\s_2 = 0, \kappa\s_3 = 0.
}
\end{proposition}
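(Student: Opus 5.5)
We will verify the assumptions one at a time, after first recasting the algorithm. Set $x_n=Q_n$, $y_n=(s_n,a_n)$, $\zeta_n=\lambda_n$ and $\cZ=\bR_+$. Define the $(s,a)$ component of $F$ by
\[
[F(Q,(s',a'))](s,a):=Q(s,a)+\ind{s'=s,a'=a}\Big(r(s,a)+\gamma\textstyle\sum_{s''}p(s,a,s'')\max_{b}Q(s'',b)-Q(s,a)\Big),
\]
and put $M_{n+1}(s,a)=\ind{s_n=s,a_n=a}\gamma\big(\max_b Q_n(s_{n+1},b)-\sum_{s''}p(s,a,s'')\max_bQ_n(s'',b)\big)$. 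Then \eqref{iter:Q-learn} has exactly the form \eqref{def:gen_SA}.

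\textbf{Assumptions~\ref{assum:1} and~\ref{assum:6}, and the boundedness constants.} The transition kernel of $y_n$ is
\[
p^{(\lambda,Q)}((s,a),(s',a'))=p(s,a,s')\,\sigma(Q(s',\cdot)/\lambda)(a').
\]
Hence Assumption~\ref{assum:1} holds. $M_{n+1}$ is a martingale difference with respect to $\cF_n$, so Assumption~\ref{assum:6} holds. Boundedness uses the standard induction: if $Q_0\in\cQ$, then every $Q_n$ stays in $\cQ$. This gives $\frakc_7=\frakc_8=\frakc_9=R_{\max}/(1-\gamma)$. Each component of $F$ is Lipschitz in $\ell_\infty$ with constant at most $1+\gamma$, giving $\frakc_6$.

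\textbf{Assumption~\ref{assum:2}.} The stationary law is $\mu^{(\lambda,Q)}(s,a)=\mu_{\cS}(s)\,\sigma(Q(s,\cdot)/\lambda)(a)$, where $\mu_{\cS}$ is the state marginal. Assumption~\ref{assum:16} gives $\mu_{\cS}(s)\ge\mu\s_{\min,\cS}>0$, so irreducibility follows.

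For the hitting time of a fixed pair $i\ust=(s\ust,a\ust)$, a geometric-trials argument works. Each visit to $s\ust$ costs at most $\cD(\cM)$ steps in expectation, and succeeds in playing $a\ust$ with probability at least $\min_a\sigma(\cdot)(a)$. This yields $\bE\tau\le\cD(\cM)/\min_{s,a}\sigma\le \cD(\cM)\mu\s_{\min,\cS}... $. We only need the form $\frakc_1/\mu_{\min}(\lambda)$, and the inequality $\mu_{\min}(\lambda)\le \min_a\sigma$ (which holds because $\mu_{\cS}\le1$) gives $\frakc_1=\cD(\cM)$.

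\textbf{Assumptions~\ref{assum:7} and~\ref{assum:at}.} The averaged map is
\[
\bar F^{(\lambda,w)}(Q)(s,a)=(1-\mu(s,a))Q(s,a)+\mu(s,a)(\cT Q)(s,a).
\]
Its fixed point is $Q\ust$, independent of $(\lambda,w)$. It is an $\ell_\infty$ contraction with modulus $1-(1-\gamma)\min_{s,a}\mu(s,a)$. This gives $\at(\lambda)=(1-\gamma)\mu_{\min}(\lambda)$, so $\frakc_2=1-\gamma$.

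\textbf{Assumption~\ref{assum:3}.} This is the main quantitative step, and the paper's constants need bookkeeping.

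For (a): since $Q\in\cQ$, all logits differ by at most $R_{\max}/(1-\gamma)$. Hence
\[
\sigma(Q(s,\cdot)/\lambda_n)(a)\ge \frac{e^{-R_{\max}/((1-\gamma)\lambda_n)}}{|\cA|}=\frac{(n+n_0)^{-\frakb R_{\max}/(1-\gamma)}}{|\cA|}.
\]
Therefore $\mu_{\min}(\lambda_n)\ge \mu\s_{\min,\cS}(n+n_0)^{-\kappa\s_1}/|\cA|$, which gives $\frakc_3$ and $\kappa\s_1$.

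For (b): the derivative of the softmax in $1/\lambda$ is bounded by the logit range $R_{\max}/(1-\gamma)$ times $|\cA|$. Also, $1/\lambda_n$ increments by $\frakb\log\frac{n+1+n_0}{n+n_0}\le \frakb/(n+n_0)$. Together these give $L_1(n)$ and $\kappa\s_2=0$.

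For (c): the softmax Jacobian in $Q$ has entries bounded by $1/\lambda$. Summing over $|\cS||\cA|$ targets and converting norms gives $L_2(\lambda)\le|\cS|^2|\cA|^2/\lambda=|\cS|^2|\cA|^2\frakb\log(n+n_0)$, so $\kappa\s_3=0$. The $\log$ factor is allowed by the assumption.

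\textbf{Assumption~\ref{assum:14}.} We invoke the standard perturbation bound for stationary distributions of finite-state chains (e.g.\ \citep{cho2000markov}). It gives sensitivity of order the mean hitting time, i.e.\ $\cD(\cM)$-type factors over $\mu_{\min}$, which we absorb into $\frakc_{10}$.

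The main obstacle is obtaining the sharp exponent $\kappa\s_1=\frakb R_{\max}/(1-\gamma)$ and simultaneously a hitting-time bound whose denominator is exactly $\mu_{\min}(\lambda)$. For this I would lean on the product structure of $\mu^{(\lambda,Q)}$ and the uniform diameter bound.
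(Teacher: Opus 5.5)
Your proposal follows the paper's proof essentially step for step: the same SA embedding and kernel $p^{(\lambda,Q)}$, the product form $\mu^{(\lambda,Q)}(s,a)=\mu_{\cS}(s)\,\sigma(Q(s,\cdot)/\lambda)(a)$, the $\ell_\infty$ contraction with margin $(1-\gamma)\mu_{\min}(\lambda)$, the softmax lower bound giving $\frakc_3$ and $\kappa^{(B)}_1$, softmax-derivative bounds for $L_1$ and $L_2$, and Cho--Meyer for Assumption~\ref{assum:14}. It is more careful in two places: your geometric-trials argument with $\mu_{\min}(\lambda)\le\min_{s,a}\sigma(Q(s,\cdot)/\lambda)(a)$ actually proves the hitting-time bound that the paper only asserts (up to $\cD(\cM)$ becoming $\cD(\cM)+1$), and differentiating in $1/\lambda$ makes the $L_1(n)$ estimate exact where the paper uses an approximation. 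The one place you fall short of the statement is $\frakc_{10}$, which you leave unspecified. However, the paper obtains $\mu^{(B)}_{\min,\cS}\cD(\cM)$ by replacing $|\cA|e^{R_{\max}/(\lambda(1-\gamma))}$ with $\mu^{(B)}_{\min,\cS}/\mu_{\min}(\lambda)$, which is no larger given $\mu_{\min}(\lambda)\ge\mu^{(B)}_{\min,\cS}|\cA|^{-1}e^{-R_{\max}/(\lambda(1-\gamma))}$, so that step does not preserve the upper bound. Your hitting-time observation used inside Cho--Meyer gives a valid $\frakc_{10}$ of order $\cD(\cM)$.
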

\begin{proof}
We first verify that the online Q-learning recursion is an instance of the
stochastic approximation recursion~\eqref{def:gen_SA}, and Boltzmann Q-learning algorithm can be analyzed within the general framework of stochastic approximation that was introduced in Section~\ref{sec:SA}. The controlled Markov process $\{y_n\}$ is taken to be $y_n = (s_n,a_n)$, while $\zeta_n = \lambda_n$. Temperature sequence is chosen as $\lambda_n = \frac{1}{\frakb \log(n+n_0)},~\frakb >0$.~For fixed \((\lambda,Q)\), the stationary policy $\sigma(Q\slash \lambda)$
induces a Markov chain on \(\mathcal{S}\times\mathcal{A}\) with transition
kernel
\al{
	p^{(\lambda,Q)}((s,a),(s',a'))  = p(s,a,s') \sigma(Q(s',\cdot)\slash \lambda)(a'),~\mbox{ where }(s,a), (s',a')\in\cS\times\cA.\label{def:CTP}
}
Define
\al{
	[F(Q,s,a)](i,u) : &= \ind{s=i,a=u} \left(r(i,u)+\gamma \sum_{s'\in\cS}p(i,u,s')\max_{a'} Q(s',a')-Q(i,u) \right) \notag\\
	&+ Q(i,u), \label{def:F_QL} \mbox{ where }~(i,u)\in \cS \times \cA.
}
Also define
\al{
	M_{n+1}(s,a) & = \gamma \ind{s_n=s,a_n=a} \left(\max_{a'\in\cA} Q_n(s_{n+1},a') - \sum_{s'\in\cS}p(s,a,s')\max_{a'\in\cA} Q_n(s',a')\right), \label{def:M_QL}\\
	& n=0,1,\ldots.\notag
}
With these definitions, the Q-learning iterations can be written componentwise as
\al{
	Q_{n+1}(s,a)
	=Q_n(s,a)+\beta_n\left([F(Q_n,s_n,a_n)](s,a)-Q_n(s,a)+M_{n+1}(s,a)\right),
	~(s,a)\in\cS\times\cA.\label{def:Q_iterations}
}
It is easily verified that $\{M_{n}\}$ is a martingale difference sequence w.r.t. the filtration generated by the process $\{(s_n,a_n)\}$. The maps $F(\cdot,s,a)$ are seen to be Lipschitz, and $\frakc_6$ can be taken to be $1+\gamma$.~Thus, with $y_n=(s_n,a_n)$ and $\zeta_n=\lambda_n$, Boltzmann Q-learning is an instance of the stochastic approximation recursion~\eqref{def:gen_SA}.

Recall $\cQ= \left\{Q \in \bR^{\cS\times \cA}: 0\le Q(s,a) \le \frac{R_{\max}}{1-\gamma},\forall (s,a)\right\}$.~We will firstly show that $Q_n \in \cQ$, and hence the iterates remain within a compact set.~Assume that $Q_m \in\cQ, m\le n$. Now $|Q_{n+1}(s_n,a_n)| \le (1-\beta_n)|Q_n(s_n,a_n)| + \beta_n | \br{r(s_n,a_n) + \gamma \max_{a\in\cA}Q_{n}(s_{n+1},a) }| \le (1-\beta_n)\frac{R_{\max}}{1-\gamma} + \beta_n \br{R_{\max} +\gamma\frac{R_{\max}}{1-\gamma}  } = \frac{R_{\max}}{1-\gamma}$. Non-negativity of $Q_n(s,a)$ is immediate from the recursion.~Hence $\frakc_8$ can be taken to be $\frac{R_{\max}}{1-\gamma}$.~This also shows boundedness of $M_n$, so that $\frakc_7 = \frac{R_{\max}}{1-\gamma}$. Boundedness of $F(Q_n,s_n,a_n)$ also follows and we can set $\frakc_9 = R_{\max} + \frac{\gamma R_{\max}}{1-\gamma}= \frac{R_{\max}}{1-\gamma}$.~Hence we have shown Assumption~\ref{assum:9} holds.

Assumption~\ref{assum:8}, i.e. the Lipschitz property of $F$~\eqref{def:F_QL} is easily seen to hold.~The contraction property of Assumption~\ref{assum:7} is proved in Lemma~\ref{lemma:contraction}. In particular, the stationary averaged map has contraction factor 
\nal{
\alpha(\lambda) = 1 -  (1-\gamma) \mu_{\min}(\lambda),
}
where 
\nal{
\mu_{\min}(\lambda)  : = \inf_{Q\in\cQ}\min_{(s,a)\in\cS\times\cA} \mu^{(\lambda,Q)}(s,a),~\lambda \in \bR_+.
}
Therefore
\[
\tilde{\alpha}(\lambda)
=
1-\alpha(\lambda)
=
(1-\gamma)\mu_{\min}(\lambda),
\]
and hence Assumption~\ref{assum:at} holds with $\frakc_2 = 1-\gamma$. 

The lower bound on $\sigma(Q(s,\cdot)\slash \lambda)(a)$ from Lemma~\ref{lemma:lower_bound_action_prob} yields the following lower bound on action probabilities,
\nal{
	\min_{a\in\cA,s\in\cS} \min_{Q \in\cQ}\sigma(Q(s,\cdot)\slash \lambda)(a) \ge  \frac{1}{|\cA|\exp\br{\frac{R_{\max}}{\lambda (1-\gamma)}}}.
}	
Since $\mu^{(\lambda,Q)}(s,a) = \br{\sum_{b\in\cA}\mu^{(\lambda,Q)}(s,b)} \sigma(Q(s,\cdot)\slash \lambda)(a)$, this gives us $\mu_{\min}(\lambda) \ge  \frac{\mu\s_{\min,\cS}}{|\cA|\exp\br{\frac{R_{\max}}{\lambda (1-\gamma)}}}$, where $\mu\s_{\min,\cS}$ is as in~Assumption~\ref{assum:16}. This shows $\mu_{\min}(\lambda)>0$.~Choose a state-action pair $(s\ust,a\ust) \in \cS\times \cA$ as designated state-action pair for the process $\{y_n\} = \{(s_n,a_n)\}$. Under the application of policy $\sigma(Q\slash \lambda)$, we can show that the mean hitting time to $(s\ust,a\ust)$ can be upper-bounded as $\frac{\cD(\cM)}{\mu_{\min}(\lambda)}$, where $\cD(\cM)$ is the MDP diameter. Hence, the hitting-time bound in Assumption~\ref{assum:2} holds with $\frakc_1 = \cD(\cM)$.~Since $\lambda_n = \frac{1}{\frakb \log(n+n_0)}$, we have $\at(\lambda_n)\ge \frac{(1-\gamma)\mu\s_{\min,\cS}}{|\cA|} \frac{1}{(n+n_0)^{\frakb R_{\max}\slash (1-\gamma)}}$. Hence Assumption~\ref{assum:3}~(a) holds with $\kappa_1 = \frac{\frakb R_{\max}}{1-\gamma}$ and $\frakc_{3} = \frac{(1-\gamma)}{|\cA|}\mu\s_{\min,\cS}$.

To verify Assumption~\ref{assum:14}, we note that
\nal{
	 \|\mu^{(\lambda,Q)} -  \mu^{(\lambda',Q')}\|_{\infty} 
	&\le  \frac{\max_{(s,a),(s',a')\neq (s,a)} \bE\br{ \cT_{hit.,(s',a')}| s_0 = s, a_0 = a }  }{2} \| p^{(\lambda,x)} - p^{(\lambda',x')} \|_{\infty}\\
	& \le \frac{\cD(\cM)}{\br{\cA|\exp\br{\frac{R_{\max}}{\lambda (1-\gamma)}}}^{-1}} \| p^{(\lambda,x)} - p^{(\lambda',x')} \|_{\infty}\\
	& = \cD(\cM)\frac{\mu\s_{\min,\cS}}{ \mu_{\min}(\lambda)} \| p^{(\lambda,x)} - p^{(\lambda',x')} \|_{\infty},    
}   
where the first inequality follows from Theorem 3.1 of~\citep{cho2000markov}, the second can be shown, while the third follows since we have shown $\mu_{\min}(\lambda)\ge \mu\s_{\min,\cS} \br{|\cA|\exp\br{\frac{R_{\max}}{\lambda (1-\gamma)}}}^{-1}$. Hence Assumption~\ref{assum:14} holds with $\frakc_{10} = \mu\s_{\min,\cS}\cD(\cM)$.

We will now show that Assumption~\ref{assum:3} (c) on the transition kernel also holds. Upon using the expression for $p^{(\lambda,Q)}$ from~\eqref{def:CTP} we get,
\nal{
	& \sum_{(s',a') \in\cS \times \cA} |p^{(\lambda,Q)}((s,a),(s',a')) - p^{(\lambda,Q')}((s,a),(s',a'))| \\
	& \le \sum_{(s',a') \in\cS \times \cA} p(s,a,s') | \sigma(Q(s',\cdot)\slash \lambda)(a') - \sigma(Q'(s',\cdot)\slash \lambda)(a')|\\
	& \le |\cS||\cA|\|Q-Q'\|_{\infty}\sum_{(s',a') \in\cS \times \cA} p(s,a,s') \frac{1}{\lambda}\\
	& \le \frac{|\cS|^2|\cA|^2}{\lambda} \|Q-Q'\|_{\infty},
}
the third inequality follows from the fundamental theorem of calculus after using the expression for the derivatives of $\sigma(Q\slash \lambda)$ w.r.t. $Q$ that was derived in Lemma~\ref{lemma:diff_softmax_Q}. Thus, we have shown that Assumption~\ref{assum:3} (c) holds with $L_2(\lambda) = \frac{|\cS|^2|\cA|^2}{\lambda}$. Since $\lambda_n = \frac{1}{\frakb \log(n+n_0)},~\frakb >0$, this gives us $L_2(\lambda_n) \le  |\cS|^2|\cA|^2 \frakb \log(n+n_0)$. Hence we can take $\kappa_3$ to be $0$ and $\frakc_{5}=|\cS|^2|\cA|^2\frakb$ in Assumption~\ref{assum:3} (c).

We will now show that Assumption~\ref{assum:3} (b) holds. For $y = (s,a), y' = (s',a')$ we have ,
\nal{
|p^{(\lambda_n,Q)}(y,y') - p^{(\lambda_{n+1},Q)}(y,y')| &  = p(s,a,s') |   \sigma(Q(s',\cdot)\slash \lambda_n)(a') -  \sigma(Q(s',\cdot)\slash \lambda_{n+1})(a') | \\
& \le  \frac{p(s,a,s')}{\lambda^2_n} \|Q(s',\cdot)\|_{1} |\lambda_n - \lambda_{n+1}| \\
&=  p(s,a,s')\frakb^2 \log(n+n_0)^2 \|Q(s',\cdot)\|_{1} |\lambda_n - \lambda_{n+1}|  \\
& \le \frac{p(s,a,s')\frakb  \|Q(s',\cdot)\|_{1}}{n+n_0}\\
& \le \frac{p(s,a,s')}{(n+n_0)}  \left\{\frac{\frakb |\cA| R_{\max}}{(1-\gamma)}  \right\},
}
to get first inequality we use the fundamental theorem of calculus and the bound on the magnitude of the derivative of probabilities $\sigma(Q\slash \lambda)$ w.r.t. $\lambda$ from Lemma~\ref{lemma:diff_softmax_Q}. The second inequality follows from $\lambda_n = \frac{1}{\frakb \log(n+n_0)}$, so that $\lambda_n - \lambda_{n+1} \approx \frac{1}{\frakb (n+n_0)\log(n+n_0)^2 }$. Upon performing a summation over state $s'$ we get 
\nal{
\sum_{y'\in\cY} |p^{(\lambda_n,Q)}(y,y') - p^{(\lambda_{n+1},Q)}(y,y')| \le  \frac{\frakb |\cA| R_{\max}}{(1-\gamma)(n+n_0)}  .
}		
This shows Assumption~\ref{assum:3} (b) holds with $\frakc_4=\frac{\frakb |\cA|R_{\max}}{1-\gamma}$ and $\kappa_2 =0$.
\end{proof}
\begin{lemma}\label{lemma:contraction}
For each $(\lambda,Q)\in \bR_+ \times \cQ$, the operator $\sum\limits_{(s,a)\in\cS \times \cA}\mu^{(\lambda,Q)}(s,a) F(\cdot,s,a)$ is a contraction in the $\|\cdot\|_{\infty}$ norm, with contraction factor given by 
\nal{
\alpha(\lambda) = 1 -  (1-\gamma) \mu_{\min}(\lambda),
}
where
\nal{
    \mu_{\min}(\lambda)  = \inf_{Q\in\cQ}\min_{(s,a)\in\cS\times\cA} \mu^{(\lambda,Q)}(s,a).
}
\end{lemma}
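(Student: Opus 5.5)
We compute the stationary-averaged operator explicitly and then bound it componentwise. Fix $(\lambda,Q)\in\bR_+\times\cQ$ and write $\mu=\mu^{(\lambda,Q)}$. By the definition of $F$ in \eqref{def:F_QL}, the indicator $\ind{s=i,a=u}$ picks out a single term of the average, so for every component $(i,u)$
\[
\bar F(Q')(i,u) := \sum_{(s,a)}\mu(s,a)\,[F(Q',s,a)](i,u) = \mu(i,u)\,(\cT Q')(i,u) + \bigl(1-\mu(i,u)\bigr)\,Q'(i,u),
\]
where $\cT$ is the Bellman optimality operator. The key structural fact is therefore that $\bar F$ is a componentwise convex combination of $\cT$ and the identity, with weight $\mu(i,u)\in(0,1]$ on $\cT$.

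For two arguments $Q',Q''\in\cQ$ we then get, componentwise,
\[
|\bar F(Q')(i,u)-\bar F(Q'')(i,u)| \le \mu(i,u)\,|(\cT Q')(i,u)-(\cT Q'')(i,u)| + \bigl(1-\mu(i,u)\bigr)\,|Q'(i,u)-Q''(i,u)|.
\]
We bound the first difference by the standard estimate $|(\cT Q')(i,u)-(\cT Q'')(i,u)|\le \gamma\sum_{s'}p(i,u,s')\,|\max_{a'}Q'(s',a')-\max_{a'}Q''(s',a')|\le\gamma\|Q'-Q''\|_\infty$, which uses the nonexpansiveness of $\max$. This yields
\[
|\bar F(Q')(i,u)-\bar F(Q'')(i,u)| \le \bigl(1-(1-\gamma)\mu(i,u)\bigr)\|Q'-Q''\|_\infty .
\]

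Since $\mu(i,u)\ge\mu_{\min}(\lambda)$ by the definition of $\mu_{\min}(\lambda)$ as an infimum over $Q\in\cQ$ and over pairs, each coefficient is at most $1-(1-\gamma)\mu_{\min}(\lambda)$, uniformly in $(i,u)$ and in $Q$. Taking the maximum over $(i,u)$ gives the claimed contraction factor $\alpha(\lambda)$ in $\|\cdot\|_\infty$. Moreover, $\alpha(\lambda)<1$ because Proposition~\ref{prop:q_learn_assum} shows $\mu_{\min}(\lambda)>0$ via Lemma~\ref{lemma:lower_bound_action_prob} and Assumption~\ref{assum:16}.

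The same decomposition gives the remaining requirements of Assumption~\ref{assum:7}. The fixed point is common to all $(\lambda,Q)$: $\bar F(Q')=Q'$ holds iff $\mu(i,u)\bigl((\cT Q')(i,u)-Q'(i,u)\bigr)=0$ for all $(i,u)$, which, since $\mu>0$, is equivalent to $\cT Q'=Q'$, i.e.\ $Q'=Q\ust$. The map sends $\cQ$ into $\cQ$ because $\cT$ maps $\cQ$ into $\cQ$ (as shown in the boundedness argument above) and convex combinations preserve the box constraints.

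The main obstacle is only bookkeeping. One must interpret $F$ correctly so that the martingale term absorbs the sampling of $s_{n+1}$, which is what makes the expectation over $s'$ appear inside $\cT$. One must also ensure that the contraction factor is uniform over the averaging parameter $Q$; this uniformity is automatic because $\mu_{\min}(\lambda)$ is defined with an infimum over $\cQ$.
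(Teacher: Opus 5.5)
Your proof is correct and follows essentially the same route as the paper. Both arguments bound the $(i,u)$-th component of the averaged difference by $(1-\mu(i,u))\,|Q'(i,u)-Q''(i,u)| + \gamma\,\mu(i,u)\sum_{s'}p(i,u,s')\,|\max_{a'}Q'(s',a')-\max_{a'}Q''(s',a')|$ and then use $\mu(i,u)\ge\mu_{\min}(\lambda)$. Your explicit rewriting of $\bar F$ as a componentwise convex combination of $\cT$ and the identity, and your check that the fixed point $Q\ust$ is common to all $(\lambda,Q)$, are clean additions that the paper's lemma leaves implicit.
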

\begin{proof}
	For $Q,Q', \tilde{Q}\in\bR^{\cS\times\cA}$, the magnitude of the $(i,b)$-th component of the vector $\sum_{(s,a)\in\cS\times \cA}\mu^{(\lambda,\tilde{Q})}(s,a)  \left[ F(Q,s,a) - F(Q',s,a) \right]$ is bounded as follows,
	\nal{
		& \Big| \br{1-\mu^{(\lambda,Q)}(i,b) } \left[ Q(i,b)-Q'(i,b) \right]  \\
        &  + \mu^{(\lambda,Q)}(i,b)  \gamma \sum_{s'\in\cS} p(i,b,s') \max_{a'\in\cA} Q(s',a') - \max_{a'\in\cA} Q'(s',a')\Big|\\
		& \le \br{1-\mu^{(\lambda,Q)}(i,b) } | Q(i,b)-Q'(i,b)  | \\
        & + \gamma \mu^{(\lambda,Q)}(i,b) \sum_{s'\in\cS} p(i,b,s') | \max_{a'\in\cA} Q(s',a') - \max_{a'\in\cA} Q'(s',a')|\\
		& \le \br{1-\mu^{(\lambda,Q)}(i,b) } \max_{a'\in\cA}| Q(i,a')-Q'(i,a')  | \\
        & + \gamma \mu^{(\lambda,Q)}(i,b) \sum_{s'\in\cS} p(i,b,s') \max_{a'\in\cA} |  Q(s',a') - Q'(s',a')|\\
		& \le  \br{1-(1-\gamma)\mu^{(\lambda,Q)}(i,b) } \|Q-Q'\|_{\infty}\\
		& \le  \br{1-\at(\lambda)} \|Q-Q'\|_{\infty}.
	}
	This completes the proof.
\end{proof}
In summary, Proposition~\ref{prop:q_learn_assum} allows us to use the concentration result of Theorem~\ref{th:main_concentration} to Boltzmann Q-learning.~Condition~\ref{con:sa1} reduces to the following conditions.~Henceforth we will assume that the hyperparameters are chosen so that Condition~\ref{con:hyper1} is satisfied. 

\begin{condition}\label{con:hyper1}
The quantities $\fraka,\kappa\s_1$ satisfy
\nal{
\fraka,\kappa\s_1 \in [0,1], 2\fraka + 6\kappa\s_1<1.
}
Furthermore, consider the following four cases (these four possibilities arise while applying Lemma~\ref{lemma:recursion_bound}): 

Case A: $\fraka = 0$, then we have,
\nal{
\beta  \ge 2(1-2\kappa\s_1),\qquad 2\kappa_1 < 1.
}
Case B: $\fraka >0$, then we have
\nal{
\fraka + \kappa\s_1 < 1,\qquad \fraka + 2\kappa\s_1<1,
}   
moreover 
$n_0 \ge \br{ \frac{ 2 \left[ (1-\fraka)-2\kappa_1\right]  }{\beta}   }^{\frac{1}{1-\rho}}$,
$n_0 \ge \br{\frac{2\br{ 1-\fraka -\kappa_1 }}{\beta}}^{1\slash \fraka }$
$n_0 \ge \br{\frac{2\br{1-\br{\fraka + 2\kappa_1}}}{\beta}}^{\frac{1}{\fraka}}$
$n_0 \ge \br{\frac{1-\br{2\kappa_1}}{\beta}}^{1\slash \fraka}$.

Case C: $\fraka = \kappa_1$: We have
\nal{
7\fraka < 1, \qquad \beta \frakc_3 \ge 1- 7\fraka, \qquad \beta \frakc_3 \ge 2- 8\fraka.
}

Case D: $\fraka > \kappa\s_1$: We have $\fraka+6\kappa\s_1 < 1$. Moreover,

$n_0 \ge \br{\frac{1-\br{\fraka + 6\kappa_1 } }{\beta \frakc_3}}^{1\slash (\fraka - \kappa_1)}$, $n_0 \ge \br{\frac{2\br{1-\br{\fraka + 4\kappa_1}}}{\beta \frakc_3}}^{1\slash (\fraka - \kappa_1)}$, $n_0 \ge \br{\frac{2\br{1-\br{4\kappa_1}}}{\beta \frakc_3}}^{1\slash (\fraka - \kappa_1)}$.
\end{condition}

\subsection{Concentration of $\{Q_n\}$ (Theorem~\ref{th:Q_conc})}\label{sec:th_Q_conc}
By Proposition~\ref{prop:q_learn_assum}, if the MDP $\cM$ satisfies Assumption~\ref{assum:16} and the
hyperparameters satisfy Condition~\ref{con:hyper1}, then Boltzmann Q-learning satisfies
Assumptions~\ref{assum:1}-~\ref{assum:14} of the stochastic approximation framework. We may therefore
apply Theorem~\ref{th:main_concentration} to analyze $\{Q_n\}$.~Thus, on the high-probability event from Theorem~\ref{th:main_concentration}, for all $n\in\mathbb{N}$,
\nal{
	\|Q_n - Q\ust\|_{\infty} \le 2\log(n+n_0)\frac{\br{n+n_0}^{2\kappa\s_1+\kappa\s_3}}{\frakc_{3}} \br{C\s_4 g(n,\delta) + C\s_5 g_1(n)+C\s_6 g_2(n)}.
} 
We now analyze this bound in detail.~Upon substituting for $C\s_4,C\s_5,C\s_6$ from~\eqref{def:C4-6} the above becomes,
\al{
	\|Q_n - Q\ust\|_{\infty} & \le \frac{4|\cA|^3|\cS|^2 \frakb}{(1-\gamma)^2  \mu\s_{\min,\cS}} R_{\max} \cD(\cM)    \log(n+n_0)\notag \\
    & \times \br{n+n_0}^{2\kappa\s_1+\kappa\s_3}\br{C\s_1 g(n,\delta) + C\s_2 g_1(n)+C\s_3 g_2(n)},\label{ineq:q-qstar_adhoc}
    } 
where $C\s_1,C\s_2,C\s_3$~\eqref{def:C1-C3} reduce to, 
\nal{
C\s_1 & = 16 R_{\max}|\cS| |\cA| \cD(\cM) \sqrt{\beta},\\
C\s_2 & = 6\beta |\cS|^2 |\cA|^2 \cD(\cM)^2  R_{\max} \br{\frac{\mu_{\min,\cS} }{|\cA|} + \frac{2 R_{\max}}{1-\gamma} },\\
C\s_3 & =  2 D(\cM)^2 R^2_{\max} \frakb \br{ \mu\s_{\min,\cS}  + 2|\cA|}.
}
Moreover,
\nal{
\br{n+n_0}^{2\kappa\s_1 +\kappa\s_3} g(n,\delta) & = \tilde{O}\br{ (n+n_0)^{-\br{ .5 - (\fraka + 3\kappa\s_1)  }} },\\
\br{n+n_0}^{2\kappa\s_1 +\kappa\s_3} g_1(n) & = \tilde{O}\br{ (n+n_0)^{-\br{1- (2\fraka + 4\kappa\s_1 )  } }  },\\
\br{n+n_0}^{2\kappa\s_1 +\kappa\s_3} g_2(n) & = 
\tilde{O}\br{ (n+n_0)^{-\br{1- (\fraka + 4\kappa\s_1 )  } }  }.
}
Upon substituting these in the bound~\eqref{ineq:q-qstar_adhoc}, and keeping only the explicit
dependence on $\beta$, $n$, and $1-\gamma$ gives
\nal{
\|Q_n - Q\ust\|_{\infty} & \le \frac{\tilde{O}\br{\sqrt{\beta}(n+n_0)^{-\br{ .5 - (\fraka + 3\kappa\s_1)  }}} }{(1-\gamma)^{2}}+ \frac{\tilde{O}\br{\beta (n+n_0)^{-\br{1- (2\fraka + 4\kappa\s_1 )  } }  }}{(1-\gamma)^3} \\
& + \frac{\tilde{O}\br{ (n+n_0)^{-\br{1- (\fraka + 4\kappa\s_1 )  } }  }}{(1-\gamma)^2}.
}

We will now choose the parameters $\fraka,\kappa\s_1$ so as to optimize the convergence rates w.r.t. $n$. The parameters $\fraka,\kappa\s_1$ must satisfy Condition~\ref{con:hyper1}. We consider the following two possibilities:

\begin{enumerate}[1.)]
    \item \label{case:1_main_paper} $\fraka=0$: It is required that $0< \kappa\s_1 < \frac{1}{6}$, and the rate is given by $\tilde{O}\br{n^{-\min\left\{ .5 - 3\kappa\s_1,1-4\kappa\s_1\right\} }}$, which is optimized by letting $\kappa\s_1 \to 0$. We choose $\beta = 2$, which gives $\frac{\tilde{O}\br{n^{-\br{ .5 - 3\kappa\s_1} }}}{\br{1-\gamma}^2}$.
    \item $\fraka >0$: We further have the following possibilities:
    \begin{itemize}
        \item $\fraka = \kappa\s_1$: It is required that $\fraka < 1\slash 7$, and the rate is given by $\tilde{O}\br{n^{-\min\left\{ .5 - 4\fraka,1-6\fraka,1-5\fraka\right\} }}$. Note that we must also have $\fraka>0$ since $\kappa\s_1>0$.~The condition on $\beta$ becomes $\beta \ge \frac{\max \{1-7\fraka,2-8\fraka\}}{\mu\s_{\min,\cS}(1-\gamma)}$ and we choose $\beta = \frac{2|\cA|}{\mu\s_{\min,\cS}(1-\gamma)}$.~This gives us,
\nal{
\|Q_n - Q\ust\|_{\infty} &\le \frac{\tilde{O}\br{(n+n_0)^{-\br{ .5 -  4\fraka  }}} }{(1-\gamma)^{2.5}}+ \frac{\tilde{O}\br{ (n+n_0)^{-\br{1-6\fraka   } }  }}{(1-\gamma)^4}\\
&+ \frac{\tilde{O}\br{ (n+n_0)^{-\br{1-5\fraka   } }  }}{(1-\gamma)^2},
}
and optimal rate is obtained by letting $\fraka \to 0^+$.
        \item $\fraka > \kappa\s_1$: It is required that $\fraka + 6\kappa\s_1<1$, $2\kappa\s_1<1+\fraka$, and the rate is given by $\tilde{O}\br{\br{n+n_0}^{-\min\left\{.5 - (\fraka + 3\kappa\s_1),1- (2\fraka + 4\kappa\s_1 ),1- (\fraka + 4\kappa\s_1 ) \right\} }}$. To obtain optimal rates we let $\fraka,\kappa\s_1 \to 0^{+}$ with $\fraka$ slightly greater than $\kappa\s_1$ which gives the rate $\tilde{O}\br{\br{n+n_0}^{-\br{.5 - \br{\fraka+3\kappa\s_1}   }}}\slash \br{1-\gamma}^2$
        \item $\fraka < \kappa\s_1$: It is required that $\fraka + \kappa\s_1<1$, $2\kappa\s_1<1+\fraka$, and the rate is given by $\tilde{O}\br{\br{n+n_0}^{-\min\left\{.5 - (\fraka + 3\kappa\s_1),1- (2\fraka + 4\kappa\s_1 ),1- (\fraka + 4\kappa\s_1 ) \right\} }}$. Optimal rate is approached by setting $\fraka = \kappa\s_1\slash 2$, and letting $\kappa\s_1 \to 0^+$.~The rate is then given by $\frac{\tilde{O}\br{ (n+n_0)^{-\br{.5 - \frac{7}{2}\kappa\s_1}} }}{(1-\gamma)^2}$. 
    \end{itemize}
\end{enumerate}
We note that only the first case, i.e. with $\fraka = 0$, is presented in the main paper.~This completes the proof.

\begin{remark}
We note that $\kappa\s_1 = \frac{\frakb R_{\max}}{1-\gamma}$. Since $\kappa\s_1 \le 1$, note that $\frakb$ would be tuned according to $\gamma$ so as to satisfy $\kappa\s_1 = \frac{\frakb R_{\max}}{1-\gamma} = O(1)$. This would ensure that it does not blow-up in the limit $\gamma \to 1$. This is required in order to obtain a decent convergence result.
\end{remark}

\subsection{Regret of Boltzmann~Q-learning}\label{subsec:BQL_regret}
Here we provide a proof sketch of the regret bound of Boltzmann Q-learning. More specifically, this subsection gives the regret decomposition used to prove Theorem~\ref{th:regret_bound}.~The
two resulting terms are bounded in Section~\ref{pf:prop_5}, and the final regret bound is
obtained in Section~\ref{sec:regret_BQL_detailed} by summing these estimates over time.

\begin{definition}
	For the stationary policy $\sigma(Q\slash \lambda)$ we let $V^{(\lambda,Q)}: \cS \mapsto \bR$ denote its value function, defined by $V^{(\lambda,Q)}(s) := \bE\left(\sum_{n=0}^{\infty} \gamma^n r(s_n,a_n)| s_0 = s \right)$ where expectation is taken under the measure induced by $\sigma(Q\slash \lambda)$. 
\end{definition}

\textit{Proof sketch}: For each time $n\in\bN$, define the instantaneous regret (the regret from time $n$ onwards) 
\nal{
\reg^{(n)} : =  (1-\gamma)\br{V\ust(s_{n}) -  \bE\left[\sum_{m=n}^{\infty} \gamma^{m-n} r(s_m,a_m) \Big| \cF_{n} \right]  }.
}
Then the cumulative regret up to time $N$ is $\cR_N  = \sum_{n=1}^{N} \reg^{(n)}$.~We will bound $\regn$ with high probability and then sum the resulting bounds over $n$.~We decompose $\regn$ as
\begin{subequations}
		\al{
&	\regn   =  (1-\gamma)\br{V\ust(s_{n}) - V^{(\lambda_n,Q_n)}(s_n)   }\label{adhoc:2_1}		\\
& +(1-\gamma) \br{ V^{(\lambda_n,Q_n)}(s_n) -  \bE\left[\sum_{m=n}^{\infty} \gamma^{m-n} r(s_m,a_m) \Big| \cF_{n} \right] }.\label{adhoc:2_2}		
}
\end{subequations}
The first term measures the loss of the stationary Boltzmann policy
$\sigma(Q_n \slash \lambda_n)$ relative to an optimal policy. The second term measures the discrepancy between the value of the frozen policy
$\sigma(Q_n \slash \lambda_n)$ and the actual future return of the learning algorithm,
whose policy continues to change after time $n$.

Lemma~\ref{lemma:sensitivity_discounted} shows that~\eqref{adhoc:2_1} can be bounded by the distance between the
transition kernel induced by the optimal policy and the transition kernel induced by the current Boltzmann
policy $\sigma(Q_n\slash \lambda_n)$.~After bounding the distance between these transition kernels, we get the following bound on~\eqref{adhoc:2_1}. It is proved in Appendix~\ref{pf:prop_5}.

\begin{proposition}[Bounding~\ref{adhoc:2_1}]
\label{prop:bound_R_bar}
For the Boltzmann Q-learning algorithm in Algorithm~\ref{algo:Q_learning}, on the event $\cG$, we
have, for every $n\in\mathbb{N}$,
\nal{
& (1-\gamma)\br{V\ust(s_{n}) - V^{(\lambda_n,Q_n)}(s_n)   } \\
& \le \frac{\gamma R_{\max}}{(1-\gamma)} \tilde{O}\left\{\sqrt{\frac{1}{n^{1-(2\fraka+6\kappa\s)}}} +\frac{1}{n^{1-(2\fraka + 4\kappa\s)}} + \frac{1}{n^{\frakb gap(\cM)}}\right\}.
}	
\end{proposition}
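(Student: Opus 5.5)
Our approach is a simulation (performance-difference) argument that compares the optimal policy $\pi\ust$ with the frozen Boltzmann policy $\sigma(Q_n/\lambda_n)$. The text indicates this is the content of Lemma~\ref{lemma:sensitivity_discounted}. Fix a deterministic optimal policy $\pi\ust$ that is greedy with respect to $Q\ust$. By the standard identity
\[
V\ust(s)-V^{(\lambda_n,Q_n)}(s)=\frac{1}{1-\gamma}\,\bE_{s'\sim d^{(\lambda_n,Q_n)}_s}\Big[\sum_{a}\big(\pi\ust(a|s')-\sigma(Q_n(s',\cdot)/\lambda_n)(a)\big)Q\ust(s',a)\Big],
\]
it suffices to bound, uniformly in $s'$, the per-state loss $V\ust(s')-\sum_a \sigma(Q_n(s',\cdot)/\lambda_n)(a)Q\ust(s',a)$. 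Alternatively, one can write $V\ust-V^{(\lambda_n,Q_n)}=(I-\gamma P_{\sigma})^{-1}\gamma(P_{\pi\ust}-P_\sigma)V\ust$ plus a reward mismatch term. This bounds the gap by $\frac{\gamma R_{\max}}{(1-\gamma)^2}\max_{s'}\|\pi\ust(\cdot|s')-\sigma(Q_n(s',\cdot)/\lambda_n)\|_1$. After the $(1-\gamma)$ prefactor, this yields the stated $\gamma R_{\max}/(1-\gamma)$ constant. In either form, the quantity to control is the total probability mass that $\sigma(Q_n(s',\cdot)/\lambda_n)$ places on suboptimal actions $a\notin\cA\ust(s')$.

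To bound this mass, fix $s'$, an optimal action $a\ust\in\cA\ust(s')$, and a suboptimal action $a$. Then
\[
\sigma(Q_n(s',\cdot)/\lambda_n)(a)\le \exp\Big(\frac{Q_n(s',a)-Q_n(s',a\ust)}{\lambda_n}\Big)\le \exp\Big(\frac{-gap(\cM)+2\|Q_n-Q\ust\|_\infty}{\lambda_n}\Big).
\]
With $\lambda_n^{-1}=\frakb\log(n+n_0)$, the exponent produces a factor $(n+n_0)^{-\frakb\, gap(\cM)}$ multiplied by $\exp(2\frakb\log(n+n_0)\|Q_n-Q\ust\|_\infty)$. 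The $\frac{1}{2}$ factor and the extra additive terms in the statement suggest splitting into two regimes. In the first regime, $\|Q_n-Q\ust\|_\infty\le gap(\cM)/2$, which holds for $n$ large on $\cG$. There the suboptimal mass is at most $|\cA|(n+n_0)^{-\frakb\, gap(\cM)}$, up to constants depending on how the exponent is split; this gives the third term. In the second regime, the mass is at most one, but on $\cG$ it can occur only when the concentration bound of Theorem~\ref{th:Q_conc} (general-$\fraka$ version, Section~\ref{sec:th_Q_conc}) exceeds $gap(\cM)/2$. On those indices, $1\le 2\|Q_n-Q\ust\|_\infty/gap(\cM)$, so the mass is dominated by that bound divided by the gap.

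There is also a contribution from ties among optimal actions and from comparing the softmax with the greedy policy on $\cA\ust(s')$. Since $Q\ust(s',a)$ is equal across $a\in\cA\ust(s')$, any redistribution of mass within $\cA\ust(s')$ costs nothing in the per-state loss. This is why we use the performance-difference form rather than a raw $\ell_1$ distance of kernels. If we are forced to use the kernel-distance form of Lemma~\ref{lemma:sensitivity_discounted}, we would instead choose $\pi\ust$ to copy the softmax restricted to $\cA\ust(s')$, renormalized.

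The remaining step plugs in the concentration rates on $\cG$. From Section~\ref{sec:th_Q_conc}, $\|Q_n-Q\ust\|_\infty$ is bounded by $\tilde O(n^{-(.5-(\fraka+3\kappa\s))}+n^{-(1-(2\fraka+4\kappa\s))}+\dots)$. The stated exponent $\sqrt{n^{-(1-(2\fraka+6\kappa\s))}}$ matches the $g(n,\delta)$ term multiplied by $(n+n_0)^{2\kappa_1}$, and the second term matches $g_1$. We would absorb the $g_2$ term into the second one, which is valid since $\fraka\ge0$. We expect the main obstacle to be the factor $\exp(2\frakb\log(n)\|Q_n-Q\ust\|_\infty)$ when $\|Q_n-Q\ust\|_\infty$ is small but nonzero. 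This factor is at most $n^{2\frakb\|Q_n-Q\ust\|}$, which tends to a constant because $\log n\cdot\|Q_n-Q\ust\|_\infty\to0$ on $\cG$ given the polynomial decay. We would handle it by showing the product is $O(1)$ for $n\ge n_0$, after enlarging $n_0$ as in Section~\ref{sec:con_n0_sm}. As a fallback, we would simply accept the weakened exponent $\frakb\,gap(\cM)/2$, which the regret theorem already uses.
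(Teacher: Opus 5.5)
Your argument is correct in substance, but it takes a different route from the paper.

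\textbf{The paper's route.} The paper bounds the value gap by the kernel distance $\|p^{(0,Q\ust)}-p^{(\lambda_n,Q_n)}\|$ (Lemma~\ref{lemma:sensitivity_discounted}). It then applies the triangle inequality through $\sigma(Q\ust/\lambda_n)$ (Lemma~\ref{lemma:sensitivity_f}). The leg $\sigma(Q_n/\lambda_n)-\sigma(Q\ust/\lambda_n)$ is controlled only by the worst-case Lipschitz constant $1/\lambda_n=\frakb\log(n+n_0)$ of the softmax, and this is exactly where the two concentration terms come from. The leg $\sigma(Q\ust/\lambda_n)-\sigma(Q\ust/0)$ gives $|\cA|(n+n_0)^{-\frakb\, gap(\cM)}$.

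\textbf{Your route.} You use the performance-difference identity, so only the suboptimal mass of $\sigma(Q_n/\lambda_n)$ matters, and you keep the perturbation $2\|Q_n-Q\ust\|_\infty$ inside the exponent. This buys two things:
\begin{itemize}
\item it handles ties among optimal actions, which Lemma~\ref{lemma:sensitivity_f} tacitly excludes by assuming a unique $a\ust(s)$;
\item it exploits the saturation of the softmax rather than its global Lipschitz constant.
\end{itemize}

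\textbf{The regime split is misstated.} Assuming $\|Q_n-Q\ust\|_\infty\le gap(\cM)/2$ only makes the exponent $(-gap(\cM)+2\|Q_n-Q\ust\|_\infty)/\lambda_n$ nonpositive, so it yields no decay. Your ``fallback'' rate $\frakb\, gap(\cM)/2$ would not give the third term as stated.

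What actually closes the argument is your final observation. On $\cG$, $\sup_n \log(n+n_0)\,\|Q_n-Q\ust\|_\infty$ is finite, or at most $1$ after enlarging $n_0$ in a $\delta$-dependent way, as Condition~\ref{con:n0_BQL} already does. Hence every suboptimal action has probability at most $C(n+n_0)^{-\frakb\, gap(\cM)}$ for all $n$, with $C=C(\delta)$. The case split then becomes unnecessary, and you obtain
$(1-\gamma)\br{V\ust(s_n)-V^{(\lambda_n,Q_n)}(s_n)}\le C|\cA|R_{\max}(n+n_0)^{-\frakb\, gap(\cM)}/(1-\gamma)$.
This is sharper than the proposition and implies it. The paper's first two terms are a by-product of its Lipschitz step.

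\textbf{The prefactor.} Your argument yields $R_{\max}/(1-\gamma)$, not $\gamma R_{\max}/(1-\gamma)$, because the reward-mismatch term $r_{\pi\ust}-r_\sigma$ carries no factor $\gamma$. The paper's $\gamma$ comes from treating rewards as state-only in Lemma~\ref{lemma:sensitivity_discounted}. For fixed $\gamma$ this difference is absorbed in $\tilde{O}$.
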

The corresponding bound for the second term,~\eqref{adhoc:2_2}, is stated next and proved
in Appendix~\ref{pf:prop_5}.
\begin{proposition}\label{prop:1}
	(Bounding~\eqref{adhoc:2_2})
	For the Boltzmann Q-learning algorithm (Algorithm~\ref{algo:Q_learning}) we have the following bound for $n\in\bN$,
	\nal{
		 \Bigg|  V^{(\lambda_{n},Q_{n})}(s_{n}) - \bE\br{\sum_{m=n}^{\infty} \gamma^{m-n} r(s_m) \Big| \cF_{n}} \Bigg|  & \le     \tilde{O}\br{
    \frac{1}{n^{\frac{1}{2} - (\fraka + 3\kappa\s ) + \frac{\frakb gap(\cM)}{2}  }} }\\
    &
    + \tilde{O}\br{
    \frac{1}{n^{1- (2\fraka + 4\kappa\s) + \frac{\frakb gap(\cM)}{2} }} + \frac{1}{n^{\frac{\frakb gap(\cM)}{2}}}
    }.
	}	
\end{proposition}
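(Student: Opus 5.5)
The approach is to avoid comparing the frozen policy $\sigma(Q_n/\lambda_n)$ with the time-varying policy $\{\sigma(Q_m/\lambda_m)\}_{m\ge n}$ directly. Instead I would go through $V\ust$ with the triangle inequality:
\[
\Bigl|V^{(\lambda_n,Q_n)}(s_n)-\bE\Bigl[\textstyle\sum_{m\ge n}\gamma^{m-n}r(s_m,a_m)\,\Big|\,\cF_n\Bigr]\Bigr|
\le \bigl(V\ust(s_n)-V^{(\lambda_n,Q_n)}(s_n)\bigr)+\Bigl(V\ust(s_n)-\bE\Bigl[\textstyle\sum_{m\ge n}\gamma^{m-n}r(s_m,a_m)\,\Big|\,\cF_n\Bigr]\Bigr).
\]
Both terms on the right are nonnegative, so the identity is exact. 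Each is a suboptimality of some (possibly nonstationary) policy relative to $\pi\ust$. I would control both with the performance-difference identity. For any policy sequence $\{\pi_m\}$ started at $s_n$, it reads
\[
V\ust(s_n)-\bE\Bigl[\textstyle\sum_{m\ge n}\gamma^{m-n}r(s_m,a_m)\Bigr]=\bE\Bigl[\textstyle\sum_{m\ge n}\gamma^{m-n}\sum_{a}\pi_m(a\mid s_m)\bigl(V\ust(s_m)-Q\ust(s_m,a)\bigr)\Bigr].
\]
Since $0\le V\ust-Q\ust\le R_{\max}/(1-\gamma)$, and the summand vanishes on $\cA\ust(s)$, each term is at most $\frac{R_{\max}}{1-\gamma}\sum_{m\ge n}\gamma^{m-n}\,\bE[\,\mathrm{sub}_m\mid\cF_n]$. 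Here $\mathrm{sub}_m:=\max_s\sigma(Q_m(s,\cdot)/\lambda_m)(\cA\setminus\cA\ust(s))$ is the suboptimal mass. Ties among optimal actions therefore cause no trouble.

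The key step is a deterministic softmax estimate. Let $e:=\|Q-Q\ust\|_\infty$. Every suboptimal action has $Q(s,a)\le \max_b Q(s,b)-gap(\cM)+2e$. Hence
\[
\sigma(Q(s,\cdot)/\lambda)(\cA\setminus\cA\ust(s))\le |\cA|\exp\bigl(-(gap(\cM)-2e)/\lambda\bigr).
\]
With $\lambda_m^{-1}=\frakb\log(m+n_0)$ this becomes $|\cA|(m+n_0)^{-\frakb gap(\cM)}\exp\bigl(2\frakb e\log(m+n_0)\bigr)$. I would split into two regimes. When $e\log(m+n_0)$ is small, I use $\exp(x)\le 1+2x$ to get $(m+n_0)^{-\frakb gap(\cM)}\bigl(1+O(e\log m)\bigr)$. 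When $e\le gap(\cM)/4$, I simply get $(m+n_0)^{-\frakb gap(\cM)/2}$. Inserting the Theorem~\ref{th:Q_conc} (via Theorem~\ref{th:main_concentration}) rate $e_m=\tilde O\bigl(m^{-(1/2-(\fraka+3\kappa\s))}+m^{-(1-(2\fraka+4\kappa\s))}\bigr)$ gives the three terms in the statement: the pure $n^{-\frakb gap(\cM)/2}$ term and the two products of $e_n$ with $n^{-\frakb gap(\cM)/2}$. I would handle the crossover between regimes by taking $n_0$ large, recorded in Section~\ref{sec:con_n0_sm}.

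The main obstacle is that $\bE[\cdot\mid\cF_n]$ integrates over future iterates $Q_m$, $m>n$, while the good event $\cG$ is not $\cF_n$-measurable. So I cannot plug the high-probability bound on $e_m$ inside the conditional expectation. My plan is to avoid the future event entirely. First, the iterates move deterministically slowly: $\|Q_m-Q_n\|_\infty\le \frac{R_{\max}}{1-\gamma}\sum_{j=n}^{m-1}\beta_j$, and $\lambda_m^{-1}-\lambda_n^{-1}\le \frakb\log\frac{m+n_0}{n+n_0}$. I would split the discounted sum at a window $K_n\asymp \log(n+n_0)/(1-\gamma)$. For $m\le n+K_n$, I bound $\|Q_m-Q\ust\|_\infty\le e_n+\tilde O(K_n\beta_n)$ almost surely given $\cF_n$; this uses only $e_n$, which is known on $\cG$ at time $n$, and the drift term is lower order. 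For $m>n+K_n$, I bound the summand crudely by $R_{\max}/(1-\gamma)$, and the factor $\gamma^{K_n}$ makes it $O(n^{-1})$, which is absorbed. Summing the geometric series over the window contributes only a $1/(1-\gamma)$ factor, hidden in $\tilde O$. Applying the same estimate at $m=n$ to the frozen-policy term finishes the proof. If the $\exp(2\frakb e\log m)$ factor turns out too lossy when $e_n\log n$ is not small, the fallback is to keep only the $gap(\cM)/2$ regime and absorb the early times into the constant.
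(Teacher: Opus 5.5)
Your argument is correct, and it takes a genuinely different route from the paper's. The paper never passes through $V\ust$. Instead it compares the frozen chain $p^{(\lambda_n,Q_n)}$ with the realized kernels $p^{(\lambda_\ell,Q_\ell)}$, $\ell\ge n$, using the product perturbation bound $\|p^k-\prod_{\ell}p_\ell\|_\infty\le k\Delta$ (Lemma~\ref{lemma:diff_prob_transition}). There $\Delta$ comes from softmax-derivative estimates, which are small because the policy is nearly greedy on $\cG$ (Lemma~\ref{lemma:diff_kernels}). The $Q$-drift gives the two product terms, the temperature drift gives the pure $n^{-\frakb gap(\cM)/2}$ term, and $\sum_{m}\gamma^{m-n}(m-n)$ is a constant. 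That template compares the two processes directly, and the paper reuses it for \seg~Q-learning.

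You instead bound both suboptimalities with the performance-difference identity. Everything then reduces to the discounted suboptimal action mass and the elementary estimate $|\cA|e^{-(gap(\cM)-2e)/\lambda}$. You need no kernel-perturbation or derivative lemmas, and ties in $\cA\ust(s)$ cost nothing, whereas the derivative bounds in Lemma~\ref{lemma:diff_kernels} assume a unique optimal action. Your route is also more rigorous on exactly the point you flag. The paper plugs the $\cG$-bound for future iterates $Q_m$, $m>n$, into $\bE[\cdot\mid\cF_n]$, and treats the conditional law of $s_{m+1}$ as a product of the realized random kernels. Your almost-sure drift bound on a window $K_n\asymp\log(n+n_0)/(1-\gamma)$, plus the tail $\gamma^{K_n}\le(n+n_0)^{-c}$, instead gives a bound on the $\cF_n$-measurable event $\{\|Q_n-Q\ust\|_\infty\le\mathrm{bound}(n,\delta)\}\supseteq\cG$.

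Two bookkeeping remarks. First, your estimates do not literally produce the three terms of the statement, and they need not. Regime~2 gives $\tilde O(n^{-\frakb gap(\cM)/2})$, which already dominates the first two terms: their exponents exceed $\frakb gap(\cM)/2$ because $\fraka+3\kappa\s_1<\tfrac12$ under Condition~\ref{con:hyper1}. Regime~1 gives products with $n^{-\frakb gap(\cM)}$, not $n^{-\frakb gap(\cM)/2}$. In fact, since $\mathrm{bound}(n,\delta)\log(n+n_0)\to0$, regime~1 with your window makes the suboptimal mass $O(n^{-\frakb gap(\cM)})$. Because your second term is exactly $\regn/(1-\gamma)$, you bound the instantaneous regret directly at this sharper rate. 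The paper's factor $1/2$ comes from the lossy step $2\,\mathrm{bound}(\ell,\delta)\le gap(\cM)/2$ in Lemma~\ref{lemma:diff_kernels}.

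Second, like the paper, you need $\cF_n$ to be the history up to $s_n$ before $a_n$ is drawn, since your $m=n$ term uses $\sigma(Q_n(s_n,\cdot)/\lambda_n)$. If $a_n\in\cF_n$, as Assumption~\ref{assum:1} literally states, that term is $V\ust(s_n)-Q\ust(s_n,a_n)$, which is not small.
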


\begin{proof}[Proof of Theorem~\ref{th:regret_bound}] (Regret bound for Boltzmann Q-learning)
Recall the decomposition of the instantaneous regret into the two terms
~\eqref{adhoc:2_1},~\eqref{adhoc:2_2}. Proposition~\ref{prop:bound_R_bar} bounds~\eqref{adhoc:2_1}, while Proposition~\ref{prop:1} bounds~\eqref{adhoc:2_2}.~Substituting these two bounds into the regret decomposition gives a
high-probability upper bound on $\regn$. Since
\[
\cR_N=\sum_{n=1}^{N}\regn,
\]
the cumulative regret bound follows by summing the resulting estimate over
$n=1,\ldots,N$. The detailed summation argument, together with the additional
hyperparameter condition needed for the regret bound, is given in Appendix~\ref{sec:regret_BQL_detailed}.
\end{proof}

\subsection{Proofs of Proposition~\ref{prop:bound_R_bar} and~\ref{prop:1}}
\label{pf:prop_5}
We first prove Proposition~\ref{prop:bound_R_bar}. Recall that Proposition~\ref{prop:bound_R_bar} states that on the event $\cG$,
\nal{
& (1-\gamma)\br{V\ust(s_{n}) - V^{(\lambda_n,Q_n)}(s_n)   } \\
& \le \frac{\gamma R_{\max}}{(1-\gamma)} \times \tilde{O}\left\{\sqrt{\frac{1}{(n)^{1-(2\fraka+6\kappa\s_1)}}} +\frac{1}{\br{n}^{1-(2\fraka + 4\kappa\s_1)}} + \frac{1}{(n)^{\frakb gap(\cM)}}\right\}.
}
\begin{proof}(Proposition~\ref{prop:bound_R_bar})
Recall that for Boltzmann Q-learning, we have $\kappa_{2}\s=\kappa_{3}\s=0$ and $\kappa\s_1 = \frac{\frakb R_{\max}}{1-\gamma}, \lambda_{n}\s = \frac{1}{\frakb \log(n+n_0)}$ so that,
\nal{
 g(n,\delta) = \sqrt{\frac{\log\br{\frac{n^2}{\delta}}}{(n+n_0)^{1-(\fraka+2\kappa\s_1)}}}, g_1(n)  = \frac{\log(n+n_0)}{\br{n+n_0}^{1-(\fraka + 2\kappa\s_1)}},g_2(n) = \frac{1}{(n+n_0)^{1-2\kappa\s}}.
}
On the event $\cG$, Theorem~\ref{th:Q_conc} gives the explicit concentration bound,
	\nal{
		\|Q_n - Q\ust\|  &\le \frac{2\frakb\log(n+n_0)^2\br{n+n_0}^{2\kappa\s_1}}{\frakc_{3}} \br{C_{4}\s g(n,\delta) + C_{5}\s g_1(n)+C_{6}\s g_2(n)},\\
        &\forall n\in\bN.
	}
We have,
\al{	
	& (1-\gamma)\br{V\ust(s_{n}) - V^{(\lambda_n,Q_n)}(s_n)   }  \left[\frac{R_{\max} \gamma}{(1-\gamma)}\right]^{-1} \notag \\
    & \le \|p^{(0,Q\ust)} - p^{(\lambda_{n},Q_{n})} \| \notag\\
	& \le  \left[\frac{\|Q_{n}-Q\ust\|}{\lambda_{n}}+ |\cA|e^{-\frac{gap(\cM)}{\lambda_{n}}} \right] \notag\\
	& \le \left[ \frac{\|Q_{n}-Q\ust\|}{\lambda_{n}}+ \frac{|\cA|}{(n+n_0)^{\frakb  gap(\cM)}} \right] \notag\\
	& \le \frakb \log(n+n_0) \left[\frac{2\frakb\log(n+n_0)^2\br{n+n_0}^{2\kappa\s_1}}{\frakc_{3}} \br{C_{4}\s g(n,\delta) + C_{5}\s g_1(n)+C_{6}\s g_2(n)}\right] \notag \\
    & +  \frac{|\cA|}{(n+n_0)^{\frakb gap(\cM)}} \label{adhoc:star_1},
}
where in the first inequality we have used Lemma~\ref{lemma:sensitivity_discounted}, in the second inequality we have used Lemma~\ref{lemma:sensitivity_f} while the third inequality follows since $\lambda_n = \frac{1}{\frakb \log(n+n_0)}$, we have also substituted the bound on $\|Q_{n}-Q\ust\|$ that was derived in Theorem~\ref{th:Q_conc}.~Proof is completed by noting that since $n_0$ is chosen sufficiently large so that Condition~\ref{con:n0_BQL}-\eqref{con:n0_BQL-I} holds, the term $C\s_6g_2(n)$ is absorbed into $C\s_5g_1(n)$.
\end{proof}
\begin{remark}
    We note that we have used the fact that since Condition~\ref{con:hyper1} is satisfied, we have $\fraka + 3\kappa\s_1<1$. This ensures that our upper-bounds decay with $n$.
\end{remark}
Proposition~\ref{prop:1} claims that for Boltzmann Q-learning we have
\nal{
 \Bigg| \bE\br{\sum_{m=n}^{\infty} \gamma^{m-n} r(s_m) \Big| \cF_{n}} - V^{(\lambda_{n},Q_{n})}(s_{n}) \Bigg|   & \le     \tilde{O}\br{
    \frac{1}{n^{\frac{1}{2} - (\fraka + 3\kappa\s_1 ) + \frac{\frakb gap(\cM)}{2}  }} } \\
    & + \tilde{O}\br{
    \frac{1}{n^{1- (2\fraka + 4\kappa\s_1) + \frac{\frakb gap(\cM)}{2} }} + \frac{1}{n^{\frac{\frakb gap(\cM)}{2}}}
    }.
}	

\begin{proof}(Proposition~\ref{prop:1})
	We have,
	\nal{
		\bE\br{\sum_{m=n}^{\infty} \gamma^{m-n} r(s_m) \Big| \cF_{n}}  =  \sum_{m=n}^{\infty} \gamma^{m-n} \sum_{i\in\cS} \bP(s_m = i|\cF_{n} ) r(i).
	}
	Also,
	\nal{
		V^{(\lambda_{n},Q_{n})}(s) = \sum_{m=n}^{\infty} \gamma^{m-n} \sum_{i\in\cS} (p^{(\lambda_{n},Q_{n})})^{m-n}(s,i) r(i).
	}
	Note that $\bE\br{\sum_{m=n}^{\infty} \gamma^{m-n} r(s_m,a_m) \Big| \cF_{n}}$ depends upon the probabilities $\left\{\bP(s_m |\cF_{n})\right\}_{m>n}$, while $V^{(\lambda_{n},Q_{n})}(s_{n})$ depends upon $(p^{(\lambda_{n},Q_{n})})^{m-n}, m\ge n$.~Hence, in order to upper-bound $\Big| \bE\br{\sum_{m=n}^{\infty} \gamma^{m-n} r(s_m,a_m) \Big| \cF_{n}} - V^{(\lambda_{n},Q_{n})}(s_{n})\Big|$, we will need to bound the quantities $|\bP(s_m |\cF_{n})-(p^{(\lambda_{n},Q_{n})})^{m-n}|$ for $m \ge n$. This has been done in Lemma~\ref{lemma:diff_prob_transition}, and we will use those bounds here.
	
	Upon subtracting the above two expressions and taking the magnitude, we get the following,
	\nal{
		& \Big|\bE\br{\sum_{m=n}^{\infty} \gamma^{m-n} r(s_m) \Big| \cF_{n}} - V^{(\lambda_{n},Q_{n})}(s_{n})  \Big| \\
		& \le \Big| \sum_{m=n}^{\infty} \gamma^{m-n} \br{\sum_{i\in\cS} \bP(s_m = i|\cF_{n} ) r(i) -   (p^{(\lambda_{n},Q_{n})})^{m-n}(s_{n},i) } r(i)\Big|  \\
		& \le R_{\max} \sum_{m=n}^{\infty} \gamma^{m-n} (m-n) \times \\
        &\left[ \frac{4|\cA|\log(n+n_0)}{\frakc_{3} (n+n_0)^{\frakb gap(\cM) \slash 2- 2\kappa\s_1}}  \br{C_{4}\s g(n,\delta)+C_{5}\s g_1(n)+C_{6}\s g_2(n)} 
        + \frac{|\cA| \frakb \log(n+n_0)}{(n+n_0)^{\frakb gap(\cM) \slash 2}} \right]\\	
		& \le \br{R_{\max} \int_{0}^{\infty} x e^{-x} dx } \times \\
        & \left[\frac{4|\cA|\log(n+n_0)}{\frakc_{3} (n+n_0)^{\frakb gap(\cM) \slash 2- 2\kappa\s_1}}  \br{C_{4}\s g(n,\delta)+C_{5}\s g_1(n)+C_{6}\s g_2(n)} \right. \\
        & \left. \qquad + \frac{|\cA| \frakb \log(n+n_0)}{(n+n_0)^{\frakb gap(\cM) \slash 2}}\right]\\
        & \le \br{R_{\max} \int_{0}^{\infty} x e^{-x} dx } \left[\frac{4|\cA|\log(n+n_0)}{\frakc_{3} (n+n_0)^{\frakb gap(\cM) \slash 2- 2\kappa\s_1}}  \br{C_{4}\s g(n,\delta)+2C_{5}\s g_1(n)} \right. \\
        & \left. \qquad + \frac{|\cA| \frakb \log(n+n_0)}{(n+n_0)^{\frakb gap(\cM) \slash 2}}\right],
	}	
	where in the second inequality we have used Lemma~\ref{lemma:diff_prob_transition}, while in the last we have used the fact that $n_0$ is sufficiently large so that Condition~\ref{con:n0_BQL}~\eqref{con:n0_BQL-I} is satisfied.~Note that in our proof we have used the fact that under Condition~\ref{con:hyper1} we have
    $\fraka + 3\kappa\s_1 < \frac{1}{2}$, which implies that the conditions $\fraka + 3\kappa\s_1 < \frac{1}{2}, \fraka + 2\kappa\s_1< \frac{1}{2}$ are satisfied, which in turn ensures that our bounds decay with $n$.~This completes the proof. 
\end{proof}

\subsection{Theorem~\ref{th:regret_bound} (Boltzmann Q-learning regret bound)}
\label{pf:regret_boltzmann}\label{sec:regret_BQL_detailed}
We first record the additional exploration condition used in the regret
analysis.
\begin{condition}\label{con:hyper3}
\nal{
    \frac{\frakb R_{\max}}{1-\gamma} \le \fraka.
    }
\end{condition}
Equivalently, the temperature decay parameter $\frakb$ is small enough relative to
$\fraka$.~This ensures sufficient exploration.~We note that the above condition can always be satisfied by choosing $\frakb$ to be sufficiently small.
\begin{proof}
Cumulative regret is given by $\cR_N = \sum_{n=1}^{N} \regn$, where $\regn$ is the instantaneous regret at time $n$ and can be decomposed as in~\eqref{adhoc:2_1}-\eqref{adhoc:2_2}.~This is repeated here for convenience,
\nal{
\regn   & =  (1-\gamma)\br{V\ust(s_{n}) - V^{(\lambda_n,Q_n)}(s_n)   } \\
& +(1-\gamma) \br{ V^{(\lambda_n,Q_n)}(s_n) -  \bE\left[\sum_{m=n}^{\infty} \gamma^{m-n} r(s_m,a_m) \Big| \cF_{n} \right] }.	
}
The first term above is bounded in Proposition~\ref{prop:bound_R_bar}, while the second term is bounded in Proposition~\ref{prop:1}. Upon substituting these bounds into the above decomposition, we get the following bound on $\cG$,
\nal{
& \regn \le \frac{\gamma R_{\max}\frakb \log(n+n_0)}{1-\gamma} \\
& \times \left[\frac{2\frakb\log(n+n_0)^2\br{n+n_0}^{2\kappa\s_1}}{\frakc_{3}} \br{C_{4}\s g(n,\delta) + C_{5}\s g_1(n)+C_{6}\s g_2(n)}\right] \notag \\
& +  \frac{\gamma R_{\max}|\cA|}{(1-\gamma)(n+n_0)^{\frakb gap(\cM)}}  + \br{R_{\max} \int_{0}^{\infty} x e^{-x} dx } \times \\
& \Bigg\{4\frac{|\cA|\log(n+n_0)}{\frakc_{3} (n+n_0)^{\frakb gap(\cM) \slash 2- 2\kappa\s_1}}  \br{C_{4}\s g(n,\delta)+2C_{5}\s g_1(n)} + \frac{|\cA| \frakb \log(n+n_0)}{(n+n_0)^{\frakb gap(\cM) \slash 2}}\Bigg\}.
}
Upon summing the above, we get ($\frakc_{3} = \frac{(1-\gamma)\mu\s_{\min,\cS}}{|\cA|}$),
\nal{
& \cR_N \le  \frac{2\frakb^2 \log(N+n_0)^3 \gamma R_{\max}}{(1-\gamma)\frakc_{3}} \times \left\{  \frac{C_{4}\s \sqrt{\log(N^2\slash \delta)} \br{N+n_0}^{\frac{1}{2}+ \fraka + 3\kappa\s_1} }{\frac{1}{2}+ \fraka + 3\kappa\s_1} \right.\\
& \left. \qquad +  \frac{2C_{5}\s \log(N+n_0)\br{N+n_0}^{2\fraka + 4\kappa\s_1} }{2\fraka + 4\kappa\s_1}
+  \frac{C_{6}\s \br{N+n_0}^{\fraka + 4\kappa\s_1} }{\fraka + 4\kappa\s_1}
\right\}\\
& +  \frac{\gamma R_{\max}|\cA| \br{(N+n_0)^{1-\frakb gap(\cM)} +  n^{1-\frakb gap(\cM)}_0}       }{(1-\gamma)|1-\frakb gap(\cM)|}\\
&+ 4 \br{R_{\max} \int_{0}^{\infty} x e^{-x} dx } \frac{|\cA|C_{4}\s\log(N+n_0)\sqrt{\log(N^2\slash \delta)}}{\frakc_{3}\Big|\frac{1}{2} + (\fraka+3\kappa\s_1) -\frac{\frakb gap(\cM)}{2} \Big|} \\
& \times \left\{ \br{N+n_0}^{\frac{1}{2} + (\fraka+3\kappa\s_1) -\frac{\frakb gap(\cM)}{2} } + \br{n_0}^{\frac{1}{2} + (\fraka+3\kappa\s_1) -\frac{\frakb gap(\cM)}{2} }     \right\}\\
&+ 2 \br{R_{\max} \int_{0}^{\infty} x e^{-x} dx } \frac{|\cA|C_{5}\s\log(N+n_0)^2}{\frakc_{3}\Big|  (2\fraka + 4\kappa\s_1) -\frac{\frakb gap(\cM)}{2}  \Big|} \\
& \times \left\{ \br{N+n_0}^{ (2\fraka + 4\kappa\s_1) -\frac{\frakb gap(\cM)}{2} } + \br{n_0}^{ (2\fraka + 4\kappa\s_1) -\frac{\frakb gap(\cM)}{2} }     \right\}\\
& +  \br{R_{\max} \int_{0}^{\infty} x e^{-x} dx } \frac{|\cA| \frakb \log(N+n_0) \br{(N+n_0)^{1-\frakb gap(\cM) \slash 2} + (n_0)^{1-\frakb gap(\cM) \slash 2}} }{|1-\frakb gap(\cM) \slash 2 |},
}
where
\nal{
\frac{C\s_4}{\frakc_3} & =  \frac{2 \frakc_{11} \frakc_3 \frakc_6  \frakc_5}{\frakc^2_{3}} C\s_1 = 
\frac{32 d \sqrt{\beta} \frakb |\cS|^2 |\cA|^4 \cD(\cM)^2 R^2_{\max}}{(1-\gamma)^2 \mu^2_{\min,\cS}},\\
\frac{C\s_5}{\frakc_3} & =  \frac{2 \frakc_{11} \frakc_3 \frakc_6  \frakc_5}{\frakc^2_{3}}C\s_2 = 
\frac{24\beta \frakb^2 |\cS|^2 |\cA|^4 \cD(\cM)^3 R^3_{\max}\br{\mu_{\min,\cS}(1-\gamma) + 2R_{\max} |\cA|} }{|\cA|(1-\gamma)^3 \mu^2_{\min,\cS} \br{2\fraka(1-\gamma)+ 2\frakb R_{\max}} }
,\\
\frac{C\s_6}{\frakc_3} & =  \frac{2 \frakc_{11} \frakc_3 \frakc_6  \frakc_5}{\frakc^2_{3}} C\s_3 = 
\frac{4\frakb^2 |\cS|^2 |\cA|^4 \cD(\cM)^3 R^3_{\max} \br{\mu_{\min,\cS}+2|\cA|} }{(1-\gamma)^2\mu^2_{\min,\cS}}.
}
Note that hyperparameters must be chosen so that Condition~\ref{con:hyper1} and Condition~\ref{con:hyper3} are satisfied.~Recall that $\kappa\s_1 = \frac{\frakb R_{\max}}{1-\gamma}$, $\frakc_3 = \frac{(1-\gamma)\mu\s_{\min,\cS}}{|\cA|}$.~It is easily verified that the following constraints ensure that Condition~\ref{con:hyper1} and Condition~\ref{con:hyper3} hold 
\nal{
2\fraka + 6 \kappa\s_1 < 1, \frac{\frakb R_{\max}}{1-\gamma} \le \fraka,
}
together with the corresponding case-specific conditions:

Case A: $\fraka = 0$.~Then 
\nal{
0 \le \kappa\s_1 < \frac{1}{2}, \qquad \beta\s \ge 2(1-2\kappa\s_1).
}

Case B: $\fraka >0$.~Then
\nal{
\fraka + 2 \kappa\s_1 <1.
}

Case C: $\fraka = \kappa\s_1$. Then
\nal{
7\fraka < 1, \beta \frakc_3 \ge 1- 7\fraka, \beta \frakc_3 \ge 2- 8\fraka.
}

Case D: $\fraka > \kappa\s_1$. Then $\fraka+6\kappa\s_1 < 1$. 

Note that we cannot choose $\fraka > \kappa\s_1$ since Condition~\ref{con:hyper3} forbids this.~Moreover, since $\frakb >0$, we cannot choose $\fraka = 0$.~Thus, both Case A and Case D are ruled out. We are thus left with Case B and Case C, which are discussed next.

Case B: $\fraka \neq \kappa\s_1, \fraka >0$. It is easily seen that optimal dependence on $N$ is attained by letting $\fraka \to 0^+$, and $\kappa\s_1 = \frac{1}{2\br{ \frac{\br{1-\gamma}gap(\cM)}{R_{\max}}+ 3}}$, which gives 
\al{
\cR_N \le \frac{\tilde{O}\br{ N^{\frac{\br{1-\gamma}gap(\cM) + 6R_{\max}}{ 2\br{  \br{1-\gamma}gap(\cM) + 3R_{\max}  }    }} }}{\br{1-\gamma}^4}.\label{ineq:regret_bound_2_adhoc}
}

Case C: $\fraka = \kappa\s_1$. We choose $\fraka = \kappa\s_1$ and $\beta\s =\frac{2|\cA|}{(1-\gamma)\mu\s_{\min,\cS}}$.~In this case the above bound can be written as
\al{
\cR_N \le \frac{\tilde{O}(N^{.5 + 4\fraka })}{(1-\gamma)^{3.5}} + \frac{\tilde{O}(N^{6\fraka})}{(1-\gamma)^5} + \frac{\tilde{O}(N^{1- \frakb gap(\cM)})}{(1-\gamma)} + \tilde{O}\br{ N^{1- \frac{\frakb gap(\cM)}{2}}},\label{ineq:regret_bound_1_adhoc}
}
where $\fraka$ must satisfy $\fraka \le \frac{1}{8}$.

Note that in the main paper we have only presented the bound~\eqref{ineq:regret_bound_1_adhoc}. Both~\eqref{ineq:regret_bound_1_adhoc} and~\eqref{ineq:regret_bound_2_adhoc} yield near-linear in $N$ regret bound in the limit $gap(\cM)\to 0$, however~\eqref{ineq:regret_bound_1_adhoc} has a better dependence upon the effective horizon $\frac{1}{1-\gamma}$.   
\end{proof}

\begin{remark}[Choice of $\fraka$ when $gap(\cM)$ is known]
The power of $N$ in the bound~\eqref{ineq:regret_bound_1_adhoc} is governed by 
\nal{
\max\left\{ 6 \fraka, .5 + 4\fraka, 1-\frac{\fraka (1-\gamma)gap(\cM)}{2 R_{\max}}  \right\}, \mbox{ where } 0< \fraka < \frac{1}{8},
}
where we have substituted $\fraka = \kappa\s_1 = \frac{\frakb R_{\max}}{1-\gamma}$.

Since $\fraka < \frac{1}{8}$, we have $6\fraka < .5 + 4 \fraka$, so that the above reduces to, 
\nal{
\max\left\{ .5 + 4\fraka, 1-\frac{\fraka (1-\gamma) gap(\cM)}{2 R_{\max}}  \right\}, \mbox{ where } 0< \fraka < \frac{1}{8}.
}
If the quantity $gap(\cM)$ is known, then the optimal choice of $\kappa\s_1$ is obtained by solving 
\nal{
\min_{0\le \fraka < \frac{1}{8}} \max\left\{.5 + 4\fraka , 1-\frac{(1-\gamma) \fraka gap(\cM)}{2 R_{\max}}  \right\}.
}
The optimal choice of $\fraka$ is $\frac{1}{8 + \frac{(1-\gamma)gap(\cM)}{R_{\max}}}$, and the optimal value of the objective function is $\frac{16 + \frac{(1-\gamma)gap(\cM)}{R_{\max}} }{16 + 2\frac{(1-\gamma)gap(\cM)}{R_{\max}}}$.
\end{remark}

\subsection{Conditions on $n_0$}\label{sec:con_n0_sm}
The regret analysis of Boltzmann Q-learning relies on the concentration results
from Section~\ref{sec:SA}.~Therefore, $n_0$ must satisfy Condition~\ref{con:n0} presented in Section~\ref{sec:cond_n0_sa}.~In
addition, the regret proof requires the following conditions specific to Boltzmann Q-learning.

\begin{condition}\label{con:n0_BQL}
(Conditions on $n_0$ for Boltzmann Q-learning)
\begin{enumerate}[(I)]
    \item \label{con:n0_BQL-I}
    \al{
    C\s_{5} g_1(n) & > C\s_{6} g_2(n),~\forall n\in\bN.\label{con:6}
    }
    \item \label{con:n0_BQL-II}
    \al{
    \frac{gap(\cM)}{2}  & > \frac{4\log(\ell+n_0) (\ell +n_0)^{2\kappa\s_1}}{\frakc_{3}} \br{C_{4}\s g(\ell,\delta)+C_{5}\s g_1(\ell)+C_{6}\s g_2(\ell)},\notag\\
    &~\forall \ell \in\bN. \label{con:7}
    }
\end{enumerate}
Note that the functions $g(\cdot,\delta),g_1(\cdot),g_2(\cdot)$ are parameterized by $n_0$.

\end{condition}
These two additional conditions are specific to Boltzmann~Q-learning and will be assumed to be satisfied while performing analysis in Section~\ref{app:pf_q_learning} and Section~\ref{app:aux_q-learning}. 

%% file: appendix_q-learning_auxiliary.tex
\section{Auxiliary results for Appendix~\ref{app:pf_q_learning} (Boltzmann Q-learning)}
\label{app:aux_q-learning}
In this section we will prove results that are used in proofs of Appendix~\ref{app:pf_q_learning}. 

Lemmas~\ref{lemma:sensitivity_discounted} and~\ref{lemma:sensitivity_f} are used while proving Proposition~\ref{prop:bound_R_bar}. These are proved now.   
\begin{lemma}\label{lemma:sensitivity_discounted}
	We have,
	\nal{
		(1-\gamma)\br{V\ust(s_{n}) - V^{(\lambda_n,Q_n)}(s_n)   }  \le \frac{\gamma R_{\max}}{(1-\gamma)}\|p^{(0,Q\ust)} - p^{(\lambda_{n},Q_{n})} \|,~n\in\bN,
	}
where $p^{(\lambda,Q)}$ is as in~\eqref{def:CTP}.    
\end{lemma}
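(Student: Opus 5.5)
The plan is a standard simulation-lemma argument for discounted MDPs, applied to Markov chains on $\cS\times\cA$. Here $p^{(0,Q\ust)}$ is read as the kernel \eqref{def:CTP} induced by the greedy (zero-temperature) policy with respect to $Q\ust$, which is optimal. Write $P_\star := p^{(0,Q\ust)}$ and $P_n := p^{(\lambda_n,Q_n)}$. Both are stochastic matrices on $\cY=\cS\times\cA$, and both have the form $p(s,a,s')\pi(a'\mid s')$ for the respective policies. Let $r$ be viewed as a vector on $\cY$. Define the state-action value vectors $\mathbf{q}_\star := \sum_{k\ge 0}\gamma^k P_\star^k r = (I-\gamma P_\star)^{-1} r$ and $\mathbf{q}_n := (I-\gamma P_n)^{-1} r$. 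With this notation $V\ust(s) = \sum_a \pi_\star(a\mid s)\,\mathbf{q}_\star(s,a)$, and $V^{(\lambda_n,Q_n)}(s) = \sum_a \sigma(Q_n(s,\cdot)/\lambda_n)(a)\,\mathbf{q}_n(s,a)$.

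The most convenient route avoids the first-action mismatch by working one step ahead. I will use the Bellman relations $V\ust(s) = \max_a [r(s,a)+\gamma \sum_{s'}p(s,a,s')V\ust(s')]$ and the policy-evaluation equation for $V^{(\lambda_n,Q_n)}$. Subtracting these gives the performance-difference identity
\[
V\ust - V^{\pi_n} = (I-\gamma P^{\pi_n}_{\cS})^{-1}\big(T^{\pi_\star}V\ust - T^{\pi_n}V\ust\big),
\]
where $P^{\pi_n}_{\cS}$ is the state kernel of the current policy. Alternatively, and more directly matching the claimed form, I will use the resolvent identity on $\cY$:
\[
\mathbf{q}_\star - \mathbf{q}_n = \gamma (I-\gamma P_n)^{-1}(P_\star - P_n)\,\mathbf{q}_\star.
\]
This follows from $(I-\gamma P_n)^{-1}-(I-\gamma P_\star)^{-1} = -\gamma(I-\gamma P_n)^{-1}(P_\star-P_n)(I-\gamma P_\star)^{-1}$.

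Next I take $\ell_\infty$ norms. Since $P_n$ is stochastic, $\|(I-\gamma P_n)^{-1}\|_{\infty\to\infty}\le 1/(1-\gamma)$. The row sums satisfy $\|(P_\star-P_n)v\|_\infty \le \|P_\star-P_n\|\,\|v\|_\infty$, where $\|\cdot\|$ is the max-row $\ell_1$ norm used throughout the paper. Finally $\|\mathbf{q}_\star\|_\infty \le R_{\max}/(1-\gamma)$. These combine to give $\|\mathbf{q}_\star-\mathbf{q}_n\|_\infty \le \gamma R_{\max}\|P_\star-P_n\|/(1-\gamma)^2$. Multiplying by $(1-\gamma)$ produces exactly the prefactor $\gamma R_{\max}/(1-\gamma)$ of the lemma.

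The remaining step, which I expect to be the main subtlety, is passing from the state-action gap to the state-value gap $V\ust(s_n)-V^{(\lambda_n,Q_n)}(s_n)$. The first action at $s_n$ is drawn from different policies under the two value functions. To handle this I will express both values as one-step expectations from the pair $(s_{n-1},a_{n-1})$, equivalently compare $\sum_{s'}p(s,a,s')V(s')$, so that the first action is absorbed into the kernel $P$. Concretely, $\gamma\sum_{s'}p(\cdot,\cdot,s')\big(V\ust(s')-V^{\pi_n}(s')\big) = \big(\mathbf{q}_\star-\mathbf{q}_n\big)$ holds componentwise, and the kernel mismatch at the first step is already counted in $(P_\star-P_n)$. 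If this indexing does not close cleanly, the fallback is to write $V\ust(s)-V^{\pi_n}(s) = \sum_a(\pi_\star-\pi_n)(a\mid s)\mathbf{q}_\star(s,a) + \sum_a\pi_n(a\mid s)(\mathbf{q}_\star-\mathbf{q}_n)(s,a)$. The first term is then bounded by the policy difference, which is dominated by a row of $\|P_\star-P_n\|$ by choosing any predecessor $(s,a)$ with $p(s,a,s_n)>0$. This may cost a constant that can be absorbed, and since the bound is only needed in $\tilde O$ form, that is acceptable.
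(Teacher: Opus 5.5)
Your resolvent step is correct and gives $\|\mathbf{q}_\star-\mathbf{q}_n\|_\infty\le \gamma R_{\max}\|P_\star-P_n\|/(1-\gamma)^2$. The gap is in the last step, passing to $V\ust(s_n)-V^{(\lambda_n,Q_n)}(s_n)$, which does not produce the stated inequality.

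\textbf{Primary route.} The identity $\mathbf{q}_\star-\mathbf{q}_n=\gamma\sum_{s'}p(\cdot,\cdot,s')(V\ust-V^{\pi_n})(s')$ controls only a $p(s,a,\cdot)$-average of the value gap. To isolate the particular state $s_n$ (using $V\ust\ge V^{\pi_n}$), you must divide by $\gamma\max_{(s,a)}p(s,a,s_n)$. That cancels the factor $\gamma$ and introduces an MDP-dependent constant.

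\textbf{Fallback.} The decomposition itself is correct, but its first term satisfies only
$(1-\gamma)\sum_a(\pi_\star-\pi_n)(a\mid s_n)Q\ust(s_n,a)\le R_{\max}\|\pi_\star(\cdot\mid s_n)-\pi_n(\cdot\mid s_n)\|_1$.
This carries no factor $\gamma$. It is tied to $\|P_\star-P_n\|$, which sees the policy at $s_n$ only with weight $p(s,a,s_n)$, only through the same predecessor constant. So ``absorbing a constant'' proves a different, weaker inequality, not the lemma.

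This gap cannot be closed, because with rewards $r(s,a)$ as in Section~\ref{sec:q_learning} the lemma is false. At $\gamma=0$ the right-hand side vanishes. The left-hand side equals $\max_a r(s_n,a)-\sum_a\pi_n(a\mid s_n)r(s_n,a)$, which is positive whenever $r(s_n,\cdot)$ is non-constant, since the Boltzmann policy charges every action. By continuity the same failure occurs for all small $\gamma>0$, because the right-hand side is $O(\gamma)$ (at most $2\gamma R_{\max}/(1-\gamma)$ in the max-row $\ell_1$ norm).

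The paper's proof reaches the stated constant only by writing the reward as a state function $r(j)$ and treating $p^{(\lambda,Q)}$ as a kernel on $\cS$. Then $V^{\pi}(i)=\sum_\ell\gamma^\ell\sum_j(P^\pi)^\ell(i,j)r(j)$, the $\ell=0$ terms cancel, and Lemma~\ref{lemma:sensitivity_prob_dist} together with $\sum_\ell\ell\gamma^\ell=\gamma/(1-\gamma)^2$ finishes. Under that convention your resolvent identity, run on $\cS$ as $V\ust-V^{\pi_n}=\gamma(I-\gamma P^{\pi_n}_{\cS})^{-1}(P^{\pi_\star}_{\cS}-P^{\pi_n}_{\cS})V\ust$, gives exactly the stated bound in one line. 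It is a cleaner alternative to the paper's telescoping.

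In the paper's actual state--action setting, the true statement is your fallback without the predecessor trick:
\[
(1-\gamma)\bigl(V\ust(s_n)-V^{(\lambda_n,Q_n)}(s_n)\bigr)\le R_{\max}\|\pi_\star(\cdot\mid s_n)-\pi_n(\cdot\mid s_n)\|_1+\frac{\gamma R_{\max}}{1-\gamma}\|P_\star-P_n\|.
\]
Bound the first term directly via Lemma~\ref{lemma:sensitivity_f}. This suffices for Proposition~\ref{prop:bound_R_bar}, up to replacing its prefactor $\gamma R_{\max}/(1-\gamma)$ by $R_{\max}/(1-\gamma)$.
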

\begin{proof}
	We begin by bounding the difference between the value functions of two given policies $\sigma(Q\slash \lambda), \sigma(Q'\slash \lambda')$. For state $i\in\cS$ we have,
	\al{
		| V^{(\lambda,Q)}(i) - V^{(\lambda',Q')}(i) | & \le \sum_{\ell = 0}^{\infty} \gamma^{\ell} \sum_{j \in\cS} | (p^{(\lambda,Q)})^{\ell}(i,j)-(p^{(\lambda',Q')})^{\ell}(i,j)| r(j) \notag \\
		& \le R_{\max} \sum_{\ell = 0}^{\infty} \gamma^{\ell} \ell \| p^{(\lambda,Q)} - p^{(\lambda',Q')} \| \notag\\
		& \le R_{\max}\| p^{(\lambda,Q)} - p^{(\lambda',Q')} \| \sum_{\ell = 0}^{\infty} \gamma^{\ell} \ell \notag \\
		& =R_{\max}\| p^{(\lambda,Q)} - p^{(\lambda',Q')} \|\frac{\gamma}{(1-\gamma)^2}, \label{ineq:adhoc_3}
	}
	where in the second inequality we have used Lemma~\ref{lemma:sensitivity_prob_dist}.~Now
	\nal{
		(1-\gamma)\br{V\ust(s_{n}) - V^{(\lambda_{n},Q_{n})}(s_{n}) } & = (1-\gamma)\br{V^{(0,Q\ust)}(s_{n}) - V^{(\lambda_{n},Q_{n})}(s_{n}) } \\
		& \le R_{\max}\frac{\gamma}{(1-\gamma)}\|p^{(0,Q\ust)} - p^{(\lambda_{n},Q_{n})} \|,
	}
	where in the last inequality we have used~\eqref{ineq:adhoc_3} with $(\lambda,Q)$ set equal to $(0,Q\ust)$ and $(\lambda',Q')$ to $(\lambda_{n},Q_{n})$. This concludes the proof.
\end{proof}

\begin{lemma}\label{lemma:sensitivity_f}
	The difference between the action probabilities of $\sigma(Q\slash \lambda)$ and the optimal policy $\sigma(Q\ust\slash 0)$ can be bounded as follows,
	\nal{
		\|\sigma(Q\slash \lambda)- \sigma(Q\ust\slash 0) \|  & \le \frac{\|Q-Q\ust\|}{\lambda}+ |\cA| e^{-\frac{gap(\cM)}{\lambda}},\mbox{ where } (\lambda,Q) \in \bR_+ \times \cQ.
	}
\end{lemma}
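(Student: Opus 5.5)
The plan is to insert the intermediate policy $\sigma(Q\ust/\lambda)$ and use the triangle inequality:
\[
\|\sigma(Q/\lambda)-\sigma(Q\ust/0)\|\le \|\sigma(Q/\lambda)-\sigma(Q\ust/\lambda)\| + \|\sigma(Q\ust/\lambda)-\sigma(Q\ust/0)\|.
\]
The first piece is a smoothness (Lipschitz) estimate for the softmax at fixed temperature. The second piece is a pure "temperature-to-zero" estimate at the true $Q\ust$, which is where $gap(\cM)$ enters.

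\emph{Conventions.} I interpret $\sigma(Q\ust/0)$ as the pointwise limit of $\sigma(Q\ust/\lambda)$ as $\lambda\downarrow 0$. This is the policy that is uniform over $\cA\ust(s)$ in each state $s$. I work state by state, with the sup norm over actions, and then take a maximum over states.

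\emph{First term.} Fix $s$ and set $v=Q(s,\cdot)/\lambda$, $p=\sigma(v)$. The Jacobian of the softmax is $J=\mathrm{diag}(p)-pp^{\top}$. Its row $a$ has absolute row sum $p_a(1-p_a)+p_a\sum_{b\neq a}p_b=2p_a(1-p_a)\le \tfrac12$. Hence $\|J\|_{\infty\to\infty}\le 1$ uniformly in $v$. By the fundamental theorem of calculus along the segment from $Q\ust(s,\cdot)/\lambda$ to $Q(s,\cdot)/\lambda$ (this is the derivative computation of Lemma~\ref{lemma:diff_softmax_Q}), we get
\[
\|\sigma(Q(s,\cdot)/\lambda)-\sigma(Q\ust(s,\cdot)/\lambda)\|_\infty\le \|Q(s,\cdot)-Q\ust(s,\cdot)\|_\infty/\lambda \le \|Q-Q\ust\|_\infty/\lambda .
\]

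\emph{Second term.} Fix $s$ and write $V\ust(s)=\max_a Q\ust(s,a)$. Then
\[
\sigma(Q\ust(s,\cdot)/\lambda)(a)=\frac{e^{(Q\ust(s,a)-V\ust(s))/\lambda}}{|\cA\ust(s)|+\sum_{b\notin\cA\ust(s)}e^{(Q\ust(s,b)-V\ust(s))/\lambda}}.
\]
For every $b\notin\cA\ust(s)$, the definition of $gap(\cM)$ gives $Q\ust(s,b)-V\ust(s)\le -gap(\cM)$. So each exponential in the denominator sum is at most $e^{-gap(\cM)/\lambda}$.
\begin{itemize}
\item For suboptimal $a$, the denominator is at least $1$, so the probability is at most $e^{-gap(\cM)/\lambda}$. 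The limit policy assigns such $a$ probability $0$, so the discrepancy is at most $e^{-gap(\cM)/\lambda}$.
\item For optimal $a$, the probability is $1/(|\cA\ust(s)|+\eta)$ with $0\le\eta\le|\cA|e^{-gap(\cM)/\lambda}$. Its distance to $1/|\cA\ust(s)|$ is at most $\eta/|\cA\ust(s)|^2\le |\cA|e^{-gap(\cM)/\lambda}$.
\end{itemize}
Taking the maximum over $a$ and $s$ bounds the second term by $|\cA|e^{-gap(\cM)/\lambda}$. Combining the two pieces gives the claim.

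\emph{Main obstacle.} The only delicate point is bookkeeping of norms: the constant $1$ in front of $\|Q-Q\ust\|/\lambda$ relies on using the sup norm over actions. If the paper's $\|\cdot\|$ on policies is instead an $\ell_1$ norm over actions, the Jacobian bound becomes $\|J\|_{\infty\to 1}\le 2$. Similarly, the second term must be summed over suboptimal actions, which is still at most $|\cA|e^{-gap(\cM)/\lambda}$ once the optimal-action deficit is counted. In that case the constants change by a factor of at most $2$. I would state this explicitly, since such constants are absorbed into $\tilde O(\cdot)$ in Proposition~\ref{prop:bound_R_bar} anyway. A second minor point is ties among optimal actions, which the limit convention for $\sigma(Q\ust/0)$ handles.
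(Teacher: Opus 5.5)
Your proposal is correct and follows the paper's proof essentially step for step. It uses the same triangle inequality through $\sigma(Q\ust/\lambda)$, a fundamental-theorem-of-calculus estimate from the softmax derivatives of Lemma~\ref{lemma:diff_softmax_Q} for the first term, and the bound $e^{-gap(\cM)/\lambda}$ on suboptimal action probabilities for the second. Two refinements go slightly beyond the paper. Your Jacobian row-sum bound $2p_a(1-p_a)\le\tfrac12$ is what actually justifies the constant $1$ in front of $\|Q-Q\ust\|/\lambda$, since the paper only cites an entrywise derivative bound of $1/\lambda$. Your treatment of ties (a uniform limit over $\cA\ust(s)$) generalizes the paper's assumption of a unique optimal action.
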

\begin{proof}
	
	\al{
		\|\sigma(Q\slash \lambda)- \sigma(Q\ust\slash 0) \| 
		& =  \| \sigma(Q\slash \lambda) - \sigma(Q\ust\slash \lambda)\| + \|\sigma(Q\ust\slash \lambda) - \sigma(Q\ust \slash 0) \| \notag\\	
		& \le \frac{\|Q-Q\ust\|}{\lambda}+ \|\sigma(Q\ust\slash \lambda) - \sigma(Q\ust \slash 0) \| ,\label{eq:adhoc_1}
	}
	where the second inequality follows from the fundamental theorem of calculus after using the bound on derivatives established in Lemma~\ref{lemma:diff_softmax_Q} $\|\frac{\partial \sigma(Q(s,\cdot))(a)\slash \lambda)}{\partial Q(s,b)}\| \le \frac{1}{\lambda}$.~To bound $\|\sigma(Q\ust\slash \lambda) - \sigma(Q\ust \slash 0) \|$ we note that the probability assigned by $\sigma(Q\ust\slash \lambda)$ to a sub-optimal action $a$ in state $s\in\cS$ can be upper-bounded as,
	\nal{
		\sigma(Q\ust(s,\cdot)\slash \lambda)(a) & = \frac{\exp\br{Q\ust(s,a)\slash \lambda}}{\sum_{b\in\cA} \exp\br{Q\ust(s,b)\slash \lambda}} \\
		& \le \frac{\exp\br{Q\ust(s,a)\slash \lambda}}{ \exp\br{\max_{b\in\cA}Q\ust(s,b)\slash \lambda}} \\
		& \le \exp\br{-\frac{\br{\max_{b\in\cA} Q\ust(s,b)-Q\ust(s,a)}}{\lambda}}\\
		& \le \exp\br{\frac{-gap(\cM)}{\lambda}}.
	}
	This also gives $\sigma(Q(s,\cdot)\ust\slash \lambda)(a\ust(s)) \ge 1 - |\cA| \exp\br{\frac{-gap(\cM)}{\lambda}}$.~For the vector $\sigma(Q(s,\cdot)\ust\slash 0)$ all elements except that corresponding to $a\ust(s)$ are $0$, while that corresponding to $a\ust(s)$ is equal to $1$. Thus the magnitude of the elements of the vector $\sigma(Q\ust\slash \lambda) - \sigma(Q\ust\slash 0)$ can be upper-bounded by $|\cA| \exp\br{\frac{-gap(\cM)}{\lambda}}$.~The proof is then completed by substituting this bound in~\eqref{eq:adhoc_1}.
\end{proof}
We next prove Lemma~\ref{lemma:diff_prob_transition}, which is used in the proof of Proposition~\ref{prop:1}.

\begin{lemma}\label{lemma:diff_prob_transition}
	For Boltzmann Q-learning (Algorithm \ref{algo:Q_learning}), on the event $\cG$, the following holds for all $m>n$:
	\nal{
&	| \bP(s_{m+1}= i| s_{n}=j) - (p^{(\lambda_{n},Q_n)})^{m+1-n}(j,i)|  \\
&\le (m-n) \left\{ 4\frac{|\cA|\log(n+n_0)}{\frakc_{10} (n+n_0)^{\frakb gap(\cM) \slash 2- 2\kappa\s_1}}  \br{C_{4}\s g(n,\delta)+C_{5}\s g_1(n)+C_{6}\s g_2(n)} \right. \\
& \left. + \frac{|\cA| \frakb \log(n+n_0)}{(n+n_0)^{\frakb gap(\cM) \slash 2}} \right\}
	}
where $i,j \in\cS$.
\end{lemma}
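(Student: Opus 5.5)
We prove Lemma~\ref{lemma:diff_prob_transition} by a telescoping (hybrid) argument that compares the actual, time-inhomogeneous chain to the chain frozen at time $n$.

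\textbf{Step 1: telescoping.} Conditionally on $\cF_n$, the state process evolves from time $n$ onward under the kernels $P_k := p^{(\lambda_k,Q_k)}$ on $\cS$, $k\ge n$. These kernels are random, since $Q_k$ is adapted. The frozen kernel is $P := p^{(\lambda_n,Q_n)}$. Write the $(m+1-n)$-step law as an expectation of the product $P_nP_{n+1}\cdots P_m$. Then
\[
P_n\cdots P_m - P^{m+1-n}=\sum_{k=n+1}^{m} P_n\cdots P_{k-1}\,(P_k-P)\,P^{m-k}.
\]
The $k=n$ term vanishes. Each product of stochastic matrices is a contraction in the $\ell_1\to\ell_1$ row sense, so each entry of the difference is bounded by $\sum_{k=n+1}^{m}\bE[\|P_k-P\|\mid\cF_n]$. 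This sum has at most $m-n$ terms, which gives the factor $(m-n)$. It therefore remains to bound $\|P_k-P\|$ uniformly for $k\ge n$ on $\cG$.

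\textbf{Step 2: anchoring at the optimal kernel.} We use the triangle inequality through the optimal kernel,
\[
\|P_k-P\|\le\|P_k-p^{(0,Q\ust)}\|+\|p^{(0,Q\ust)}-P\|,
\]
and apply Lemma~\ref{lemma:sensitivity_f} to each piece. Each piece is bounded by $\|Q_k-Q\ust\|/\lambda_k+|\cA|e^{-gap(\cM)/\lambda_k}$. Here $e^{-gap(\cM)/\lambda_k}=(k+n_0)^{-\frakb gap(\cM)}$, which is at most $(n+n_0)^{-\frakb gap(\cM)/2}$. On $\cG$, Theorem~\ref{th:Q_conc} in its explicit form gives
\[
\|Q_k-Q\ust\|\le \frac{2\log(k+n_0)(k+n_0)^{2\kappa\s_1}}{\frakc_3}\br{C\s_4g+C\s_5g_1+C\s_6g_2}(k).
\]
This quantity is eventually decreasing in $k$, so we bound it by its value at $n$, up to constants absorbed by $n_0$. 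Multiplying by $1/\lambda_k=\frakb\log(k+n_0)$ produces the logarithmic factors.

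\textbf{Step 3: the $gap$-dependent factor (main obstacle).} The stated bound has the finer structure $(n+n_0)^{-(\frakb gap(\cM)/2-2\kappa\s_1)}$ multiplying the concentration term. A plain triangle inequality does not give this, so I would refine Lemma~\ref{lemma:sensitivity_f} as follows.

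On $\cG$, Condition~\ref{con:n0_BQL}-\eqref{con:n0_BQL-II} ensures $\|Q_k-Q\ust\|<gap(\cM)/2$. Hence the greedy action of $Q_k$ coincides with the optimal action, and every suboptimal action remains separated from it by at least $gap(\cM)/2$ under $Q_k$.

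Next, apply the mean value theorem to $\sigma$ along the segment from $Q_n$ (resp.\ $Q\ust$) to $Q_k$. Along this segment the softmax places mass at most $e^{-gap(\cM)/(2\lambda)}=(k+n_0)^{-\frakb gap(\cM)/2}$ on suboptimal actions. The softmax Jacobian (Lemma~\ref{lemma:diff_softmax_Q}) has entries of size $\sigma_a(\delta_{ab}-\sigma_b)/\lambda$. Every such entry carries a factor equal to the mass on some suboptimal action: this is clear when $a$ is suboptimal, and when $a$ is the optimal action the factor $1-\sigma_{a}$ is itself the suboptimal mass. So the Jacobian is $O\br{(n+n_0)^{-\frakb gap(\cM)/2}/\lambda}$.

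This yields
\[
\|P_k-P\|\lesssim \frac{\log(n+n_0)}{(n+n_0)^{\frakb gap(\cM)/2}}\br{\|Q_k-Q\ust\|+\|Q_n-Q\ust\|}+\frac{|\cA|\frakb\log(n+n_0)}{(n+n_0)^{\frakb gap(\cM)/2}}.
\]
The second term arises from the temperature change $\lambda_k\neq\lambda_n$, handled via the $\lambda$-derivative in Lemma~\ref{lemma:diff_softmax_Q}, which is suppressed by the same factor. Inserting the concentration bound and collecting constants gives the claim. The constant prefactor in the statement is written with $\frakc_{10}$; I expect it should match $\frakc_3$.

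Verifying that the Jacobian carries the suppression factor uniformly along the whole segment is the delicate part. It relies on the separation $gap(\cM)/2$ holding at every point of the segment, which follows from convexity of the $\ell_\infty$ ball around $Q\ust$.
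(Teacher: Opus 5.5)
Your proposal is correct at the paper's level of rigor and follows essentially its proof. The paper likewise applies the telescoping perturbation bound for products of stochastic matrices with $p^{(\lambda_n,Q_n)}$ frozen, and its Lemma~\ref{lemma:diff_kernels} is precisely your Step~3 rather than the anchoring at $p^{(0,Q\ust)}$ of Step~2: it uses Condition~\ref{con:n0_BQL}-\eqref{con:n0_BQL-II} on $\cG$ to push suboptimal action probabilities below $(n+n_0)^{-\frakb\, gap(\cM)/2}$, so that the softmax derivatives in $Q$ and in $\lambda$ inherit this factor, together with $\|Q_m-Q_n\|\le 2\, bound(n,\delta)$ and $|\lambda_m-\lambda_n|\le\lambda_n$.

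Your extra care tightens several points the paper glosses over: separation along the whole segment via convexity, monotonicity of the concentration bound in $k$, the vanishing $k=n$ term giving exactly $m-n$, and $\frakc_3$ instead of $\frakc_{10}$ (which is indeed what the paper's derivation yields). Two caveats remain, both shared with the paper. First, keeping the $1/\lambda$ factor of the $Q$-derivative from Lemma~\ref{lemma:diff_softmax_Q}, as you rightly do, costs an extra $\frakb\log(n+n_0)$ beyond the displayed constant; the paper's derivative estimate (i) silently drops it, and this is harmless at the $\tilde{O}$ level. Second, a bound on $\|P_k-P\|$ valid only on the non-$\cF_n$-measurable event $\cG$ does not by itself control $\bE[\|P_k-P\|\mid\cF_n]$.
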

\begin{proof}
We first present a standard perturbation bound for products of stochastic matrices. Let $p$ be a fixed transition matrix and let $p_1,\ldots,p_k$ be time-varying transition matrices. Suppose that we have $\|p-p_{\ell}\|_{\infty} < \Delta$ for all $m$. Then,
\nal{
p^n - \Pi_{\ell=1}^{n}p_{\ell} = (p-p_1)p_2 \cdots p_n + p(p-p_2)p_3 \cdots p_n + \cdots + p^{n-1}(p-p_n),
}
and hence,
\al{
\| p^n - \Pi_{\ell=1}^{n}p_{\ell} \|_{\infty} \le \sum_{k=1}^{n} \|p^{k-1}\|_{\infty} \|p-p_k\|_{\infty}\|p_{k+1}p_{k+1}\cdots p_{n}\|_{\infty} \le n\Delta,\label{ineq:adhoc_1}
}
where we have used $ \|p^{k-1}\|_{\infty}=1$ and $\|p_{k+1}p_{k+1}\cdots p_{n}\|_{\infty} =1$ and also $\|p-p_k\|_{\infty} \le \Delta$.

We now come back to the proof of statement in lemma.~We apply this bound with $p = p^{(\lambda_n,Q_n)}$, $p_{\ell}$ to be $p^{(\lambda_{\ell},Q_{\ell})}, n \le \ell \le m$.~By Lemma~\ref{lemma:diff_kernels}, on the event $\cG$ we have,
\nal{
\|p^{(\lambda_n,Q_n)} - p^{(\lambda_{\ell},Q_{\ell})}\|  & \le 4\frac{|\cA|\log(n+n_0)}{\frakc_{3} (n+n_0)^{\frakb gap(\cM) \slash 2- 2\kappa\s_1}}  \br{C_{4}\s g(n,\delta)+C_{5}\s g_1(n)+C_{6}\s g_2(n)}\\
        & + \frac{|\cA| \frakb \log(n+n_0)}{(n+n_0)^{\frakb gap(\cM) \slash 2}}.        
}
Proof is then completed by substituting in~\eqref{ineq:adhoc_1} the above bound on $p^{(\lambda_n,Q_n)} - p^{(\lambda_{\ell},Q_{\ell})}$. 
\end{proof}
The next technical lemma supplies the kernel perturbation estimate used above.
\begin{lemma}\label{lemma:diff_kernels}
	For Boltzmann Q-learning algorithm (Algorithm \ref{algo:Q_learning}), on the event $\cG$, for all $m>n$,
	\nal{
			& \Big| p^{(\lambda_{m},Q_{m})}(j,i) -  p^{(\lambda_{n},Q_{n})}(j,i) \Big| \\
            & \le 4\frac{|\cA|\log(n+n_0)}{\frakc_{10} (n+n_0)^{\frakb gap(\cM) \slash 2- 2\kappa\s_1}}  \br{C_{4}\s g(n,\delta)+C_{5}\s g_1(n)+C_{6}\s g_2(n)}\\
        & + \frac{|\cA| \frakb \log(n+n_0)}{(n+n_0)^{\frakb gap(\cM) \slash 2}},~i,j\in\cS.
	}
\end{lemma}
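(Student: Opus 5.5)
The plan is to split the kernel difference through the optimal kernel $p^{(0,Q\ust)}$ with the triangle inequality:
\[
\big|p^{(\lambda_m,Q_m)}(j,i)-p^{(\lambda_n,Q_n)}(j,i)\big|\le \big\|p^{(\lambda_m,Q_m)}-p^{(0,Q\ust)}\big\|+\big\|p^{(0,Q\ust)}-p^{(\lambda_n,Q_n)}\big\|.
\]
Each term is then bounded separately.

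By \eqref{def:CTP}, $p^{(\lambda,Q)}((s,a),(s',a'))=p(s,a,s')\,\sigma(Q(s',\cdot)/\lambda)(a')$. Hence each kernel distance is at most $\max_{s'}\|\sigma(Q(s',\cdot)/\lambda)-\sigma(Q\ust(s',\cdot)/0)\|$, and Lemma~\ref{lemma:sensitivity_f} gives
\[
\big\|p^{(\lambda_\ell,Q_\ell)}-p^{(0,Q\ust)}\big\|\le \frac{\|Q_\ell-Q\ust\|}{\lambda_\ell}+\frac{|\cA|}{(\ell+n_0)^{\frakb\, gap(\cM)}}, \qquad \ell\in\{n,m\}.
\]
On $\cG$, I then insert the explicit bound on $\|Q_\ell-Q\ust\|$ from the proof of Theorem~\ref{th:Q_conc}, namely $\log(\ell+n_0)\,(\ell+n_0)^{2\kappa\s_1}\big(C\s_4 g+C\s_5 g_1+C\s_6 g_2\big)/\frakc_3$ up to constants, together with $1/\lambda_\ell=\frakb\log(\ell+n_0)$.

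The main obstacle is the $m$-term: the bound must depend only on $n$, uniformly in $m>n$. The $\lambda$-dependence is harmless, since $1/\lambda_\ell$ grows only logarithmically. The real difficulty is that the concentration bound for $\|Q_m-Q\ust\|$ at time $m$ need not be monotone. The factor $(m+n_0)^{2\kappa\s_1}$ grows, while $g$, $g_1$, $g_2$ decay. Under Condition~\ref{con:hyper1}, however, the total exponents $1/2-(\fraka+3\kappa\s_1)$ and $1-(2\fraka+4\kappa\s_1)$ are positive. So the product is eventually decreasing, and for $n_0$ large it is bounded by a constant multiple of its value at $n$. I would formalize this as a monotonicity lemma for $h(\ell)=\log(\ell+n_0)^c(\ell+n_0)^{-r}$ with $r>0$ and $\ell\ge 0$, which is where the factor $4$ comes from. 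The $gap$ term $|\cA|(\ell+n_0)^{-\frakb\, gap(\cM)}$ is likewise monotone.

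To recover the sharper displayed form, with $(n+n_0)^{-\frakb\, gap(\cM)/2}$ multiplying the concentration term, I would refine Lemma~\ref{lemma:sensitivity_f}. Condition~\ref{con:n0_BQL}-\eqref{con:n0_BQL-II} ensures $\|Q_\ell-Q\ust\|<gap(\cM)/2$ on $\cG$, so the empirical greedy action coincides with the optimal one. Suboptimal actions then carry softmax mass at most $e^{-gap(\cM)/(2\lambda_\ell)}=(\ell+n_0)^{-\frakb\, gap(\cM)/2}$. For the greedy coordinate, the derivative of $\sigma$ with respect to $Q$ is bounded by $\lambda^{-1}$ times the off-greedy probabilities (Lemma~\ref{lemma:diff_softmax_Q}). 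This gives
\[
\big|\sigma(Q_\ell/\lambda_\ell)-\sigma(Q\ust/\lambda_\ell)\big|\lesssim |\cA|\,\frac{\|Q_\ell-Q\ust\|}{\lambda_\ell}\,(\ell+n_0)^{-\frakb\, gap(\cM)/2}.
\]
Adding the direct suboptimal-mass term $|\cA|\,\frakb\log(n+n_0)\,(n+n_0)^{-\frakb\, gap(\cM)/2}$ and combining the two time indices via the monotonicity argument above yields the stated bound.
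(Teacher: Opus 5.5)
Your plan works, but by a different route from the paper's. The paper pivots through $p^{(\lambda_n,Q_m)}$. It bounds the $Q$-move and the $\lambda$-move by derivative estimates taken where every suboptimal softmax probability is at most $(\ell+n_0)^{-\frakb\, gap(\cM)/2}$, and then uses $\|Q_m-Q_n\|\le 2\,\mathrm{bound}(n,\delta)$ and $|\lambda_m-\lambda_n|\le\lambda_n$. The displayed second term is exactly its $\lambda$-move.

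You pivot through $p^{(0,Q\ust)}$ instead, which buys three things:
\begin{itemize}
\item you need no $\lambda$-derivative;
\item your temperature contribution $2|\cA|(n+n_0)^{-\frakb\, gap(\cM)}$ is dominated by the displayed second term;
\item your mean-value path from $Q\ust$ to $Q_\ell$ stays inside the ball where the mass bound holds, a point the paper leaves implicit.
\end{itemize}
The price is the factor $1/\lambda_\ell=\frakb\log(\ell+n_0)$ in the $Q$-term. That factor is genuinely present by Lemma~\ref{lemma:diff_softmax_Q}. The displayed first term lacks it only because the paper's derivative bound (i) drops it; also, the $\frakc_{10}$ in the statement should read $\frakc_3$, as at the end of the paper's proof. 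So your chain gives the statement up to a $\frakb\log(n+n_0)$ factor, not with its literal constants. The factor $4$ is $2\times 2$ from $\|Q_m-Q_n\|\le 2\,\mathrm{bound}(n,\delta)$, not from a monotonicity lemma.

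Two steps need repair.

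\textbf{The margin.} Condition~\ref{con:n0_BQL}-(II) says $2\,\mathrm{bound}(\ell,\delta)<gap(\cM)/2$, i.e.\ $\|Q_\ell-Q\ust\|<gap(\cM)/4$, and that is what you need. From $\|Q_\ell-Q\ust\|<gap(\cM)/2$ alone, the greedy margin $gap(\cM)-2\|Q_\ell-Q\ust\|$ can be near $0$, and the mass bound $e^{-gap(\cM)/(2\lambda_\ell)}$ does not follow.

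\textbf{The monotonicity lemma.} This is the more important issue. Your lemma for $\log(\ell+n_0)^c(\ell+n_0)^{-r}$ does not cover $g(\ell,\delta)$, whose factor $\sqrt{\log(\ell^2/\delta)}$ depends on $\ell$ rather than $\ell+n_0$. For $n=1$ and $m=n_0$, the ratio $\mathrm{bound}(m,\delta)/\mathrm{bound}(n,\delta)$ is of order $\sqrt{\log(n_0^2/\delta)/\log(1/\delta)}$, which grows with $n_0$. So no constant independent of $n_0$ and $\delta$ works. The paper makes the same step silently when it writes $\|Q_m-Q\ust\|\le\mathrm{bound}(n,\delta)$.

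You do not need this comparison at all. At time $m$, use only $\mathrm{bound}(m,\delta)<gap(\cM)/4$. The $Q$-part of your $m$-term then becomes at most a constant times $|\cA|\frakb\, gap(\cM)\log(m+n_0)(m+n_0)^{-\frakb\, gap(\cM)/2}$, which is nonincreasing in $m$ once $n_0\ge e^{2/(\frakb\, gap(\cM))}$.

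Simpler still, under the paper's unique-optimal-action convention the mass bound alone suffices. It puts every suboptimal probability at both times $n<m$ in $[0,(n+n_0)^{-\frakb\, gap(\cM)/2}]$. Every entry of the kernel difference is then at most $2(|\cA|-1)(n+n_0)^{-\frakb\, gap(\cM)/2}$. The displayed second term dominates this once $\frakb\log n_0\ge 2$, with no derivative or monotonicity argument needed.
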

\begin{proof}[Proof of Lemma~\ref{lemma:diff_kernels}]
For notational simplicity, assume that each state has a unique optimal action,
denoted by $a^\star(s)$. The same argument applies under a fixed tie-breaking
rule.
	
	\emph{Step I}: We will derive upper-bounds on $\Big| \frac{\partial \sigma(Q(s,\cdot)\slash \lambda)(a)}{\partial Q} |_{(\lambda_n,Q_n)} \Big|$ and $\Big| \frac{\partial \sigma(Q(s,\cdot)\slash \lambda)(a)}{\partial \lambda} |_{(\lambda_n,Q_n)}  \Big|$.~The concentration result of Theorem~\ref{th:main_concentration} gives us the following for $n\in\bN$,
	\nal{
		 | Q_n(s,a) - Q\ust(s,a)|  &\le \frac{2\log(n+n_0) (n+n_0)^{2\kappa\s_1}}{\frakc_{3}} \br{C_{4}\s g(n,\delta)+C_{5}\s g_1(n)+C_{6}\s g_2(n)}\\
         & =: bound(n,\delta), \forall n\in\bN.
	}
	Note that $Q\ust$ is the optimal Q-function. Hence for an action $a$ that is sub-optimal in state $s$, we have the following upper-bound on its probability
	\nal{
		\sigma(Q_{\ell}(s,\cdot)\slash \lambda_{\ell})(a) & = \frac{\exp\br{Q_\ell(s,a)\slash \lambda_\ell}}{\sum_{b\in\cA} \exp\br{Q_\ell(s,b)\slash \lambda_\ell}} \\
		& \le  \frac{\exp\br{Q\ust(s,a)\slash \lambda_\ell + bound(\ell,\delta) \slash \lambda_\ell}}{ \exp\br{ Q\ust(s,a\ust(s))\slash \lambda_\ell - bound(\ell,\delta) \slash \lambda_\ell } } \\
		& \le e^{-gap(\cM)\slash \lambda_\ell + 2 bound(\ell,\delta) \slash \lambda_\ell}\\
		& \le e^{\frakb \log(\ell+n_0)\br{ -gap(\cM) + 2 bound(\ell,\delta) } }.
	}
	Note that $n_0$ is sufficiently large so that Condition~\ref{con:n0_BQL}-\eqref{con:n0_BQL-II} is met.~Hence
	\nal{
		\sigma(Q_{\ell}(s,\cdot)\slash \lambda_{\ell})(a) & \le e^{  -\frac{\frakb gap(\cM) \log(\ell+n_0)}{2}  }\\
		& =  \frac{1}{(\ell+n_0)^{ \frakb gap(\cM) \slash 2}}.
	}	
	Hence, the action probabilities are upper-bounded by $\frac{1}{(\ell+n_0)^{ \frakb gap(\cM) \slash 2}}$ for sub-optimal actions, and lower bounded by $1- \frac{|\cA|}{(\ell+n_0)^{ \frakb gap(\cM) \slash 2}}$ for optimal actions. We substitute this bound on action probabilities into the expression for derivatives derived in Lemma~\ref{lemma:diff_softmax_Q}, and obtain the following:
	\begin{enumerate}[(i)]
		\item 
		\nal{
			\Bigg|  \frac{\partial \sigma(Q(s,\cdot)\slash \lambda)(a)}{\partial Q_\ell(s,a')} \Big|_{\lambda = \lambda_{\ell},Q=Q_{\ell}}  \Bigg| \le  
            \frac{|\cA|}{(\ell+n_0)^{\frakb gap(\cM) \slash 2}}.
		}
		\item 
		\nal{
			\Bigg| \frac{\partial \sigma(Q(s,\cdot)\slash \lambda)(a)}{\partial \lambda} \Big|_{\lambda = \lambda_\ell,Q=Q_\ell } \Bigg|		
			\le 
            \frac{|\cA|}{(\ell+n_0)^{\frakb gap(\cM) \slash 2}} (\frakb \log(\ell+n_0))^2.
		}     
	\end{enumerate}
	
	\emph{Step II}: On the set $\cG$ we have,
	\nal{
		& \Big|   p^{(\lambda_{m},Q_{m})}(j,i) -p^{(\lambda_{n},Q_{n})}(j,i) \Big| \\
		& \le \Big| p^{(\lambda_{n},Q_{m})}(j,i) -p^{(\lambda_{n},Q_{n})}(j,i) \Big| + \Big|   p^{(\lambda_{m},Q_{m})}(j,i) -p^{(\lambda_{n},Q_{m})}(j,i)    \Big|\\
        & \le \frac{|\cA|}{(n+n_0)^{\frakb gap(\cM) \slash 2}} \|Q_m - Q_n\| +
        \frac{|\cA|}{(n+n_0)^{\frakb gap(\cM) \slash 2}} (\frakb \log(n+n_0))^2 |\lambda_m - \lambda_n|\\
        & \le 2\frac{|\cA|}{(n+n_0)^{\frakb gap(\cM) \slash 2}} \times \\
        & \left\{ \frac{2\log(n+n_0) (n+n_0)^{2\kappa\s_1}}{\frakc_{3}} \br{C_{4}\s g(n,\delta)+C_{5}\s g_1(n)+C_{6}\s g_2(n)} \right\} \\
        & +\frac{|\cA|}{(n+n_0)^{\frakb gap(\cM) \slash 2}} (\frakb \log(n+n_0))^2 \lambda_n \\
        & = 4\frac{|\cA|\log(n+n_0)}{\frakc_{3} (n+n_0)^{\frakb gap(\cM) \slash 2- 2\kappa\s_1}}  \br{C_{4}\s g(n,\delta)+C_{5}\s g_1(n)+C_{6}\s g_2(n)}\\
        & + \frac{|\cA| \frakb \log(n+n_0)}{(n+n_0)^{\frakb gap(\cM) \slash 2}},
	}
	where in the second inequality we have used the bounds (i), (ii) on derivatives derived in Step I above along with the fundamental theorem of calculus~\citep{folland1999real}, while the third inequality follows since $\|Q_m - Q_n\|\le \|Q_m - Q\ust\|+ \|Q\ust-Q_n\|\le 2 bound(n,\delta)$. For fourth inequality we have used $\lambda_m = \frac{1}{\frakb \log(m+n_0)}$.
\end{proof}

%% file: appendix_eps_softmax.tex
\section{Analysis of \seg~Q-learning}\label{app:eps_softmax}
This section proves the concentration and regret bounds for S$\epsilon$G
Q-learning.~Section~\ref{sec:acd_segql} verifies that S$\epsilon$G Q-learning fits into the
stochastic approximation framework of Section~\ref{sec:SA}. Section~\ref{sec:conc_seg-ql} proves
concentration of the iterates $\{Q_n\}$. Section~\ref{sec:regret_seg-ql} derives the regret
decomposition and regret bound. Finally, Section~\ref{sec:cond_n0_eps_sm} records the additional
requirements on $n_0$ needed for the regret analysis.~These conditions will be assumed to hold throughout in Section~\ref{app:eps_softmax} and Section~\ref{sec:eps_softmax_auxiliary}.

Recall that for \seg~Q-learning (Algorithm~\ref{algo:eps-softmax}), the temperature sequence is given by $\lambda_n = 1/(n+n_0)^{\frake}$, while exploration sequence by $\eps_n = 1/(n+n_0)^{\frakd}$. The step size is given by $\beta_n = \frac{\beta}{(n+n_0)^{1-\fraka}}$.

\subsection{Assumptions, conditions and definitions}\label{sec:acd_segql}
We impose the following recurrence condition on the underlying MDP. Recall that $\mu^{(\epsilon,\lambda,Q)}$ denotes the stationary distribution of the
state-action Markov chain induced by the policy $f^{(\epsilon,\lambda,Q)}$.
\begin{assumption}\label{assum:16_eps}
	Define
	\nal{
    \mu\es_{\min,\cS} := \min_{s\in\cS}\inf_{\substack{ \eps\ge 0\\\lambda\in\bR_+ \\ Q\in\cQ}}\sum_{a\in\cA} \mu^{(\eps,\lambda,Q)}(s,a).
	}    
Then $\mu\es_{\min,\cS} >0$.
\end{assumption}
This condition says that every state has uniformly positive stationary mass
under the stationary policies considered in the analysis. Equivalently, it rules out states that become transient under some stationary policy.

We now show that \seg~Q-learning can be written in the stochastic
approximation form~\eqref{def:gen_SA}. This allows us to apply Theorem~\ref{th:main_concentration}.

\begin{proposition}\label{prop:q_learn_assum_eps}
\seg~Q-learning can be written as the stochastic approximation recursion~\eqref{def:gen_SA}. Moreover, if the MDP $\cM$ satisfies Assumption~\ref{assum:16_eps}, then Assumptions~\ref{assum:1}-~\ref{assum:14} hold with
\nal{
& \frakc\es_1 = \cD(\cM), \frakc\es_2 = 1-\gamma, \frakc\es_3 =  \frac{2(1-\gamma)\mu\es_{\min,\cS}}{|\cA|}, \frakc\es_4 = \frac{2 R_{\max}}{1-\gamma}\\
& \frakc\es_5 = 1,\frakc\es_6 = 1+\gamma,
\frakc\es_7 = \frakc\es_8 = \frakc\es_9 = \frac{R_{\max}}{1-\gamma}, \frakc\es_{10} = \mu\es_{\min,\cS} \cD(\cM).
}
Furthermore,
\nal{
\kappa\es_1 = \frakd,~~\kappa\es_2 = \frake,~~\kappa\es_3 = \frake.
}
The functions in Assumption~\ref{assum:3} may be chosen as, 
\nal{
L_1(n) & = \frac{2}{(n+n_0)^{1-\frake}}, n\in\bN,\\
L_2(\eps,\lambda) &= \frac{1}{\lambda}, \eps,\delta>0.
}
\end{proposition}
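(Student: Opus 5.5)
The plan is to mirror the proof of Proposition~\ref{prop:q_learn_assum} for the Boltzmann case. Most of the structural checks carry over unchanged, and only the constants tied to the policy $f^{(\eps,\lambda,Q)}$ need to be recomputed.

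\textbf{Framework and the structural assumptions.} Set $y_n=(s_n,a_n)$, $\zeta_n=(\eps_n,\lambda_n)$ and $\cY=\cS\times\cA$. The transition kernel is
\[
p^{(\eps,\lambda,Q)}((s,a),(s',a'))=p(s,a,s')\,f^{(\eps,\lambda,Q)}(s',a').
\]
Take $F$ and $M_{n+1}$ exactly as in~\eqref{def:F_QL} and~\eqref{def:M_QL}. Then the update~\eqref{iter:Q-learn} is~\eqref{def:gen_SA}, because the update does not depend on the exploration policy. For the same reason the invariance of $\cQ$, the martingale property and the constants $\frakc_6=1+\gamma$, $\frakc_7=\frakc_8=\frakc_9=R_{\max}/(1-\gamma)$ follow verbatim from the Boltzmann proof. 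The contraction assumption (Assumption~\ref{assum:7}) follows from Lemma~\ref{lemma:contraction}, whose proof used only stationary masses. This gives $\at(\zeta)=(1-\gamma)\mu_{\min}(\zeta)$, so $\frakc_2=1-\gamma$, and the common fixed point is $Q\ust$.

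\textbf{Mixing constants.} Since $f^{(\eps,\lambda,Q)}(s,a)\ge \eps/|\cA|$, we get $\mu_{\min}(\eps,\lambda)\ge \mu\es_{\min,\cS}\,\eps/|\cA|$ by Assumption~\ref{assum:16_eps}. With $\eps_n=(n+n_0)^{-\frakd}$ this gives $\at(\zeta_n)\ge \frac{(1-\gamma)\mu\es_{\min,\cS}}{|\cA|}(n+n_0)^{-\frakd}$, hence $\kappa_1=\frakd$. The stated $\frakc_3$ carries an extra factor $2$; I expect it either comes from a slightly sharper lower bound on the softmax component or is simply a harmless constant to track. The hitting-time bound with $\frakc_1=\cD(\cM)$ follows as before: reach $s\ust$ within expected time $\cD(\cM)$, then play $a\ust$ with probability at least $\mu_{\min}/\mu\es_{\min,\cS}$, and use a geometric-trials argument. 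Assumption~\ref{assum:14} with $\frakc_{10}=\mu\es_{\min,\cS}\cD(\cM)$ follows from Theorem~3.1 of~\citep{cho2000markov} exactly as in the Boltzmann case.

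\textbf{Lipschitz constants (the main obstacle).} For iterate sensitivity, write
\[
\bigl|f^{(\eps,\lambda,Q)}-f^{(\eps,\lambda,Q')}\bigr|=(1-\eps)\,\bigl|\sigma(Q/\lambda)-\sigma(Q'/\lambda)\bigr|.
\]
Lemma~\ref{lemma:diff_softmax_Q} plus the fundamental theorem of calculus then gives an $O(1/\lambda)$ Lipschitz constant, so $L_2=1/\lambda_n=(n+n_0)^{\frake}$ and $\kappa_3=\frake$. Obtaining the clean constant $\frakc_5=1$ requires using the softmax Jacobian in $\ell_1$ form, namely $\sum_{a}|\partial\sigma_a|\le 2/\lambda$ times a sup difference. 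Otherwise a dimensional factor is absorbed, as was done in the Boltzmann proof.

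Temporal smoothness is the delicate step, since both $\eps_n$ and $\lambda_n$ change. I would split it as
\[
\bigl|f_{n}-f_{n+1}\bigr|\le |\eps_n-\eps_{n+1}|\bigl(1/|\cA|+\sigma\bigr)+(1-\eps_{n+1})\bigl|\sigma(Q/\lambda_n)-\sigma(Q/\lambda_{n+1})\bigr|.
\]
\emph{First term.} Summed over actions it is at most $2|\eps_n-\eps_{n+1}|=O((n+n_0)^{-1-\frakd})$.
\emph{Second term.} The $\lambda$-derivative bound from Lemma~\ref{lemma:diff_softmax_Q} is $O(\|Q\|_\infty/\lambda^2)$, and $|\lambda_n-\lambda_{n+1}|\le \frake(n+n_0)^{-1-\frake}$. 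Together these give $O\!\bigl(\tfrac{R_{\max}}{1-\gamma}(n+n_0)^{2\frake-1-\frake}\bigr)=O\!\bigl(\tfrac{R_{\max}}{1-\gamma}(n+n_0)^{-(1-\frake)}\bigr)$.
Combining the two terms gives $L_1(n)\propto (n+n_0)^{-(1-\frake)}$, so $\kappa_2=\frake$, with $\frakc_4=2R_{\max}/(1-\gamma)$ after using $\frake\le 1$. Keeping track of the sharp derivative bound, so that the $|\cA|$ factor disappears, is where I would be most careful.
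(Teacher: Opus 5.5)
Your proposal is correct and follows essentially the same route as the paper. The main ingredients coincide: the same kernel $p(s,a,s')f^{(\eps,\lambda,Q)}(s',a')$, the contraction lemma reused through stationary masses, the Cho--Meyer bound for Assumption~\ref{assum:14}, the softmax-derivative plus fundamental-theorem-of-calculus estimate for $L_2$, and the same split of the temporal change into an $\eps$-part and a $\lambda$-part for $L_1$ (your $\ell_1$ treatment of the softmax Jacobian is what actually justifies the dimension-free $\frakc_4,\frakc_5$). The factor $2$ in $\frakc_3$ does not come from a sharper softmax floor, which for $\frake>0$ is stretched-exponentially small: the paper combines $\at(\eps_n,\lambda_n)\ge \frac{(1-\gamma)\mu\es_{\min,\cS}}{|\cA|}\bigl(\eps_n+e^{-R_{\max}(n+n_0)^{\frake}/(1-\gamma)}\bigr)$ with $\eps_n> e^{-R_{\max}(n+n_0)^{\frake}/(1-\gamma)}$, which only gives the bracket $<2\eps_n$ (the opposite of what is needed), so your constant $(1-\gamma)\mu\es_{\min,\cS}/|\cA|$ is the one the argument supports, and only $\kappa_1=\frakd$ matters downstream.
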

\begin{proof}
The proof is parallel to the proof of Proposition~\ref{prop:q_learn_assum}, so we only highlight the
parts that differ from Boltzmann Q-learning. The external control variable is
now $\{(\eps_n,\lambda_n)\}$.

~We recall that we use $\eps_n = \frac{1}{(n+n_0)^{\frakd}},\lambda_n = \frac{1}{(n+n_0)^{\frake}}$, where $\frakd\in [0,1],\frake >0,\frakd > \frake$.~The stationary policy $f^{(\eps,\lambda,Q)}$ induces a Markov chain on the space $\cS\times \cA$, with the transition kernel $p^{(\eps,\lambda,Q)}$ as follows,
\al{
	p^{(\eps,\lambda,Q)}((s,a),(s',a'))  = p(s,a,s')f^{(\eps,\lambda,Q)}(s',a'),~\mbox{ where }(s,a), (s',a')\in\cS\times\cA,\label{def:CTP_1}
}
and stationary distribution is denoted $\mu^{(\eps,\lambda,Q)}$.~$\{M_n\}$, the function $F$ and the property $Q_n \in \cQ$ remain same as that defined previously for Boltzmann~Q-learning.~Proof of the contraction property is also almost the same, with a minor difference that now the contraction factor is given by $\alpha(\eps,\lambda)  = 1 -  (1-\gamma) \mu_{\min}(\eps,\lambda)$, where $\mu_{\min}(\eps,\lambda) = \inf_{ Q\in\cQ}\min_{(s,a)\in\cS\times\cA} \mu^{(\eps,\lambda,Q)}(s,a)$.

The lower bound on $\sigma(Q(s,\cdot)\slash \lambda)(a)$ from Lemma~\ref{lemma:lower_bound_action_prob} now yields the following lower bound on action probabilities,
\nal{
	\min_{a\in\cA,s\in\cS} \min_{Q \in\cQ}f^{(\eps,\lambda,Q)}(s,a) \ge  \frac{\eps}{|\cA|} + \frac{1}{|\cA|\exp\br{\frac{R_{\max}}{\lambda (1-\gamma)}}}.
}	
Hence 
\nal{
\mu_{\min}(\eps,\lambda) \ge  \frac{\frakc\es_7}{|\cA|} \br{\eps + \frac{1}{\exp\br{\frac{R_{\max}}{\lambda (1-\gamma)}}}},
}
where $\frakc\es_7$ is as in~Assumption~\ref{assum:16_eps}.~This shows $\mu_{\min}(\eps,\lambda)>0$.~Choose a state-action pair $(s\ust,a\ust)\in\cS\times\cA$ to be a designated state for the controlled Markov process $\{y_n\} = \{(s_n,a_n)\}$. Under the application of policy $f^{(\eps,\lambda,Q)}$, we can show that the mean hitting time to $(s\ust,a\ust)$ can be upper-bounded as $\frac{\cD(\cM)}{\mu_{\min}(\eps,\lambda)}$, where $\cD(\cM)$ is the MDP diameter. Hence, the hitting-time bound in Assumption~\ref{assum:2} holds with $\frakc_1 = \cD(\cM)$. We have $\at(\eps,\lambda) = (1-\gamma) \mu_{\min}(\eps,\lambda)$, so that Assumption~\ref{assum:at} holds with $\frakc_2 = 1-\gamma$.~Upon substituting $\lambda_n = \frac{1}{(n+n_0)^{\frake}}, \eps_n = \frac{1}{(n+n_0)^\frakd}$  into the expression for $\mu_{\min}(\eps,\lambda)$ we get,
\nal{
\at(\eps_n,\lambda_n)\ge (1-\gamma)\frac{\frakc\es_7}{|\cA|} \left\{\frac{1}{(n+n_0)^\frakd} + \exp\br{- \frac{R_{\max}}{1-\gamma} (n+n_0)^\frake } \right\}.
}
Since $n_0$ is chosen sufficiently large so that we have (Condition~\ref{cond:n0_seg-ql}-\eqref{cond:n0_seg-ql-IV})
\nal{
\frac{1}{(n+n_0)^\frakd} > \exp\br{- \frac{R_{\max}}{1-\gamma} (n+n_0)^\frake },\forall n\in\bN, 
}
we get
\nal{
\at(\eps_n,\lambda_n)\ge \frac{2(1-\gamma) \mu\es_{\min,s} }{|\cA|(n+n_0)^\frakd}.
}
Hence Assumption~\ref{assum:3}~(a) holds with $\kappa_1 = \frakd$ and $\frakc_{3} = \frac{2(1-\gamma)\mu\es_{\min,\cS}}{|\cA|}$.~We denote $\kappa\es_1 =\frakd$. Similar to proof of Proposition~\ref{prop:q_learn_assum} it can be shown that $Q_n(s,a) \le \frac{R_{\max}}{1-\gamma}$, and hence we can take $\frakc\es_7 = \frakc\es_8 = \frakc\es_9 = \frac{R_{\max}}{1-\gamma}$. Similarly, Lipschitz property of $F(\cdot,s,a)$ also follows similar to Proposition~\ref{prop:q_learn_assum}, and $\frakc\es_6$ can be taken to be $1+\gamma$.

To verify Assumption~\ref{assum:14}, we note that 
\nal{
	  & \|\mu^{(\eps,\lambda,Q)} -  \mu^{(\eps',\lambda',Q')}\|_{\infty} \\
	&\le  \frac{\max_{(s,a),(s',a')\neq (s,a)} \bE\br{ \cT_{hit.,(s',a')}| s_0 = s, a_0 = a }  }{2} \| p^{(\eps,\lambda,Q)} - p^{(\eps',\lambda',Q')} \|_{\infty}\\
	& \le \frac{\cD(\cM)}{\frac{1}{|\cA|}\br{ \eps + \frac{1}{\exp\br{\frac{R_{\max}}{\lambda (1-\gamma)}}} } } \| p^{(\eps,\lambda,Q)} - p^{(\eps',\lambda',Q')} \|_{\infty}\\
	& = \frac{ \mu\es_{\min,\cS}}{ \mu_{\min}(\lambda)} \cD(\cM) \| p^{(\eps,\lambda,Q)} - p^{(\eps',\lambda',Q')} \|_{\infty},    
}   
where the first inequality follows from Theorem 3.1 of~\citep{cho2000markov},  the second follows since the hitting time to $(s\ust,a\ust)$ can be bounded as the reciprocal of minimum action probability, while the third follows since we have shown $\mu_{\min}(\eps,\lambda) \ge  \frac{\mu\es_{\min,\cS}}{|\cA|} \br{\eps + \frac{1}{\exp\br{\frac{R_{\max}}{\lambda (1-\gamma)}}}}$. Hence Assumption~\ref{assum:14} holds with $\frakc_{10} = \mu\es_{\min,\cS} \cD(\cM)$.

We will now show Assumption~\ref{assum:3}, the Lipschitz property of the transition kernel. Upon using the expression for $p^{(\eps,\lambda,Q)}$ we get,
\nal{
	 & \sum_{(s',a') \in\cS \times \cA} |p^{(\eps,\lambda,Q)}((s,a),(s',a')) - p^{(\eps,\lambda,Q')}((s,a),(s',a'))| \\
	& \le \sum_{(s',a') \in\cS \times \cA} p(s,a,s') | f^{(\eps,\lambda,Q)}(s',a') - f^{(\eps,\lambda,Q')}(s',a')|\\
	& \le (1-\eps)\sum_{(s',a') \in\cS \times \cA} p(s,a,s') | \sigma(Q(s',\cdot)\slash \lambda)(a') - \sigma(Q'(s',\cdot)\slash \lambda)(a')|\\
	& \le \br{\sum_{(s',a') \in\cS \times \cA} p(s,a,s') \frac{1}{\lambda}}  \|Q-Q'\|_{\infty}\\
	& \le \frac{ \|Q-Q'\|_{\infty}}{\lambda},
}
the third inequality follows from the fundamental theorem of calculus after using the expression for the derivatives of $\sigma(Q\slash \lambda)$ w.r.t. $Q$ that was derived in Lemma~\ref{lemma:diff_softmax_Q}. Thus, $L_2(\eps,\lambda)$ can be taken to be $\frac{1}{\lambda}$. Since $\lambda_n = \frac{1}{(n+n_0)^{\frake}}$, this gives us $L_2(\eps_n,\lambda_n) \le (n+n_0)^{\frake}$. This shows that Assumption~\ref{assum:3} also holds with $\frakc_{5} = 1$ and $\kappa_3 = \frake$. We denote $\kappa\es_{3} = \frake$.

For $y = (s,a), y' = (s',a')$ we have ,
\nal{
& |p^{(\eps_n,\lambda_n,Q)}(y,y') - p^{(\eps_{n+1},\lambda_{n+1},Q)}(y,y')| \\
& \le |p^{(\eps_n,\lambda_n,Q)}(y,y') - p^{(\eps_{n+1},\lambda_{n},Q)}(y,y')|  + |p^{(\eps_{n+1},\lambda_n,Q)}(y,y') - p^{(\eps_{n+1},\lambda_{n+1},Q)}(y,y')| \\
& = p(s,a,s') |   f^{(\eps_n,\lambda_n,Q)}(s',a') -  f^{(\eps_{n+1},\lambda_n,Q)}(s',a') | \\
& + p(s,a,s') |   f^{(\eps_{n+1},\lambda_n,Q)}(s',a') -  f^{(\eps_{n+1},\lambda_{n+1},Q)}(s',a') | \\
& \le  p(s,a,s') \br{\eps_n - \eps_{n+1}} + p(s,a,s') | \sigma(Q(s',\cdot)\slash \lambda_n)(a') - \sigma(Q(s',\cdot)\slash \lambda_{n+1})(a')  | \\
& \le  p(s,a,s') \frac{\frakd}{(n+n_0)^{1+\frakd}} + p(s,a,s') 
| \sigma(Q(s',\cdot)\slash \lambda_n)(a') - \sigma(Q(s',\cdot)\slash \lambda_{n+1})(a') | \\
& \le  \frac{p(s,a,s')\frakd}{(n+n_0)^{1+\frakd}}  + \frac{p(s,a,s')}{\lambda^2_n} \|Q(s',\cdot)\|_{1} |\lambda_n - \lambda_{n+1}| \\
& \le  \frac{p(s,a,s')\frakd}{(n+n_0)^{1+\frakd}}  + p(s,a,s')(n+n_0)^{2\frake} \|Q(s',\cdot)\|_{1} \frac{\frake}{(n+n_0)^{1+\frake} } \\
& \le \frac{p(s,a,s')\frakd}{(n+n_0)^{1+\frakd}}  + \frac{p(s,a,s') \|Q(s',\cdot)\|_{1}}{(n+n_0)^{1-\frake}}\\
& \le 2\frac{p(s,a,s') \|Q(s',\cdot)\|_{1}}{(n+n_0)^{1-\frake}},
}
where in the third inequality we have used $\eps_n = \frac{1}{(n+n_0)^{\frakd}}$ so that $\eps_n - \eps_{n+1} \approx \frac{\frakd}{(n+n_0)^{1+\frakd}}$, to get fourth inequality we use the fundamental theorem of calculus and the bound on the magnitude of the derivative of probabilities $\sigma(Q\slash \lambda)$ w.r.t. $\lambda$ from Lemma~\ref{lemma:diff_softmax_Q}.~To obtain the sixth inequality we have used $\lambda_n = \frac{1}{(n+n_0)^{\frake}}$ so that $\lambda_n - \lambda_{n+1} \approx \frac{\frake}{(n+n_0)^{1+\frake} }$. In the second last inequality we have also used $\frake \leq 1$. The last inequality follows since $n_0$ is sufficiently large so that Condition~\ref{cond:n0_seg-ql}-\eqref{cond:n0_seg-ql-I} is met.~Upon performing a summation over state $s'$ and substituting the bound on $\|Q\|$ we get 
\nal{
\sum_{y'\in\cY} |p^{(\eps_n,\lambda_n,Q)}(y,y') - p^{(\eps_{n+1},\lambda_{n+1},Q)}(y,y')| \le \frac{2 R_{\max}}{(1-\gamma)(n+n_0)^{1-\frake}}.
}		
Hence, Assumption~\ref{assum:3}-(c) holds, we can take $L_1(n) = \frac{2}{(n+n_0)^{1-\frake}}$, and $\frakc\es_4 = \frac{2R_{\max}}{1-\gamma}, \kappa\es_2 = \frake$.
\end{proof}

It is easily verified that for \seg~Q-learning, Condition~\ref{con:sa1} reduces to the following condition on the hyperparameters $\fraka,\frakd,\frake$. Hence, while analyzing  \seg~Q-learning, the following condition will be assumed to hold.
\begin{condition}(\seg~Q-learning Hyperparameters)
\label{con:hyper_seg}
We must have $\fraka,\frakd,\frake \ge 0$. Moreover, they satisfy
\nal{
2\fraka + 6\frakd +3\frake<1.
}
In addition, the following case-specific requirements are imposed:

Case A: $\fraka =0$. We require,
\nal{
\beta \ge 2(1-2\frakd).
}

Case B: $\fraka = \frakd$. We require, 
\nal{
\beta  \ge \frac{|\cA|}{2(1-\gamma)\mu\es_{\min,\cS}} \max \left\{ 1- (3\fraka + 4\frakd + 2 \frake),2-\br{4\fraka + 6\frakd +4 \frake},2-\br{ 2\fraka + 6\frakd  + 4 \frake}\right\}.
}
\end{condition}

\subsection{Concentration of $\{Q_n\}$ for \seg~Q-learning (Proof of Theorem~\ref{th:conc_q_eps_softmax})}
\label{sec:conc_seg-ql}
We will now prove Theorem~\ref{th:conc_q_eps_softmax}. Recall that for \seg~Q-learning we have $\kappa\es_{2}=\frake$ and $\kappa\es_{3} = \frake$, $\kappa\es_1 =\frakd$. Also,
$\lambda_{n} = \frac{1}{\br{n+n_0}^{\frake}}, \eps_n = \frac{1}{(n+n_0)^{\frakd}}$, and $\frakd > \frake$. Hence, the functions $g(\cdot,\cdot),g_1(\cdot),g_2(\cdot)$ take the following form,
\nal{
 g(n,\delta) &= \sqrt{\frac{\log\br{\frac{n^2}{\delta}}}{(n+n_0)^{1-(\fraka+2\kappa\es_1)}}} = \sqrt{\frac{\log\br{\frac{n^2}{\delta}}}{(n+n_0)^{1-(\fraka+2\frakd)}}},\qquad \delta >0, n\in\bN,\\
 g_1(n) & = \frac{\log(n+n_0)}{\br{n+n_0}^{1-(\fraka + 2\kappa\es_1 +\kappa\es_{3})}} =  \frac{\log(n+n_0)}{\br{n+n_0}^{1-(\fraka + 2\frakd +\frake)}},\qquad n\in\bN,\\
 g_2(n) &= \frac{1}{(n+n_0)^{1-(2\kappa\es_1 +\kappa\es_{2})}} = \frac{1}{(n+n_0)^{1-(2\frakd+\frake)}},\qquad n\in\bN.
}
Under Assumption~\ref{assum:16_eps} and Condition~\ref{con:hyper_seg}, Proposition~\ref{prop:q_learn_assum_eps} allows us to use Theorem~\ref{th:main_concentration} to infer that the following concentration result holds: Fix $\delta \in (0,1)$. Then, with a probability greater than $1- \delta \frac{\pi^2 d}{6}$ we have (note that $\frakc\es_{3} = \frac{2(1-\gamma)\frakc\es_7}{|\cA|}$):
\al{
	& \|Q_n - Q\ust\|_{\infty} \notag \\
    & \le 2\log(n+n_0)\br{n+n_0}^{2\frakd+\frake} \br{\frac{C\es_4}{\frakc\es_{3}} g(n,\delta) + \frac{C\es_5}{\frakc\es_{3}} g_1(n)+ \frac{C\es_6}{\frakc\es_{3}} g_2(n)},\notag\\
    & \forall n\in\bN.\label{qn-q*_final} 
}
The quantities $\frac{C\es_4}{\frakc\es_{3}},\frac{C\es_5}{\frakc\es_{3}},\frac{C\es_6}{\frakc\es_{3}}$ are computed using~\eqref{def:C4-6} and~\eqref{def:C1-C3},
\nal{
\frac{C\es_4}{\frakc\es_{3}} &=  \frac{8|\cA|^2 \frakd  \sqrt{\beta} R^2_{\max} \cD(\cM)^2   }{(1-\gamma)^2 \mu\es_{\min,\cS}},\\
\frac{C\es_5}{\frakc\es_{3}} & = \frac{6|\cA|^2 \beta R^3_{\max} \cD(\cM)^3 }{(1-\gamma)^3 \mu\es_{\min,\cS}(2\fraka+2\frakd +\frake)}\br{ (1-\gamma)\mu\es_{\min,\cS}+2R_{
\max}}, \\
\frac{C\es_6}{\frakc\es_{3}} &= \frac{2|\cA|^2 R^3_{\max} \cD(\cM)^3 }{(1-\gamma)^2\mu\es_{\min,\cS}}\br{\mu\es_{\min,\cS}+2}.
} 
Also,
\nal{
\br{n+n_0}^{2\frakd+\frake} g(n,\delta) & = n^{-\br{.5 - \br{\frac{\fraka}{2} + 3\frakd +\frake} }   }\\
\br{n+n_0}^{2\frakd+\frake} g_1(n) & = n^{-\br{1-(\fraka + 4\frakd +2\frake)}}\\
\br{n+n_0}^{2\frakd+\frake} g_2(n) & = n^{-\br{1-(4\frakd +2\frake)}}.
}
The detailed bound is now obtained by substituting these into~\eqref{qn-q*_final}. 

\begin{remark}[On Condition~\ref{con:hyper_seg}]
It is easily verified that Condition~\ref{con:hyper_seg} implies $.5-\br{\frac{\fraka}{2}+3\frakd+\frake}>0,1- \br{\fraka+ 4\frakd +2\frake }>0, 1- \br{ 4\frakd +2\frake }>0$, which in turn ensures that the upper-bound on $\|Q_n - Q\ust\|_{\infty}$ decays with $n$. This allows us to show that $Q_n$ converges to $Q\ust$.
\end{remark}
We now optimize the convergence rate by choosing the hyperparameters of~\seg~Q-learning appropriately.~These must be chosen so as to ensure that Condition~\ref{con:hyper_seg} holds. We set $\fraka = \frakd = \frake=0$. We also set $\beta  = \frac{|\cA|}{(1-\gamma)\mu\es_{\min,\cS}}$. These satisfy Condition~\ref{con:hyper_seg}, and give us the following convergence guarantees for~\seg~Q-learning.

Fix a $\delta \in (0,1)$. Then, with a probability greater than 
$1- \delta \frac{\pi^2 d}{6}$ we have
\nal{
\|Q_n - Q\ust \|_{\infty} &\le \frac{1}{(1-\gamma)^{2.5}}\tilde{O}\br{\frac{1}{(n+n_0)^{.5 }} }
+ \frac{1}{(1-\gamma)^{4}}\tilde{O}\br{ \frac{1}{n+n_0 } },~\forall n\in\bN.
}
For a fixed $\gamma<1$ we write this as,
\nal{
\|Q_n - Q\ust \|_{\infty} &\le \frac{1}{(1-\gamma)^{2.5}}\tilde{O}\br{\frac{1}{(n+n_0)^{.5 }} },~\forall n\in\bN,
}
which gives us a sample complexity of $\tilde{O}\br{ \frac{1}{\eps^2 (1-\gamma)^5} }$.~This concentration bound is stated in Theorem~\ref{th:conc_q_eps_softmax} of main paper.

\subsection{Regret of \seg~Q-learning}\label{sec:regret_seg-ql}
For analyzing regret of \seg~Q-learning we require that the hyperparameters be chosen so that they satisfy the following additional condition.
\begin{condition}
(\seg~Q-learning Hyperparameters)
\label{con:seg_regret_1}
 \al{
2\fraka + 6\frakd +4\frake <1,\frake < \frakd , \fraka -\frake <1, \frakd \le \fraka, \frake>0.\label{hyper:6}
}   
\end{condition}
It is easily verified that the first condition above is equivalent to $2\fraka + 6\kappa\es_1+2\kappa\es_3 + 2\frake<1$.~We now state some definitions.
\begin{definition}
For $(\eps,\lambda,Q)\in \bR_+ \times \bR_+ \times \cQ$, we let $V^{(\eps,\lambda,Q)}$ be the value function of the policy $f^{(\eps,\lambda,Q)}$, i.e. 
\nal{
V^{(\eps,\lambda,Q)}(s):= \bE\left\{ \sum_{n=0}^{\infty} \beta^n r(s_n,a_n) | s_0 = s \right\},~s\in\cS,
}
where $a_n$ is sampled from the distribution $f^{(\eps,\lambda,Q)}(s_n,\cdot)$, i.e. $a_n \sim f^{(\eps,\lambda,Q)}(s_n,\cdot)$, and expectation is taken under the measure induced by this policy.    
\end{definition}
We use $f^{(0,0,Q\ust)}$ to denote an optimal policy that in each state $s$ samples actions uniformly at random from the set $\cA\ust(s)$.~Recall the definition of instantaneous regret $\reg^{(n)} =  (1-\gamma)\br{V\ust(s_{n}) -  \bE\left[\sum_{m=n}^{\infty} \gamma^{m-n} r(s_m,a_m) \Big| \cF_{n} \right]  }$, so that the cumulative regret is given by $\cR_N  = \sum_{n=1}^{N} \reg^{(n)}$.~We have the following decomposition of $\regn$,
\begin{subequations}
		\al{
\regn   & =  (1-\gamma)\br{V\ust(s_{n}) - V^{(\eps_n,\lambda_n,Q_n)}(s_n)   }\label{adhoc:2_1_eps}		\\
& +(1-\gamma)\br{ V^{(\eps_n,\lambda_n,Q_n)}(s_n) -  \bE\left[\sum_{m=n}^{\infty} \gamma^{m-n} r(s_m,a_m) \Big| \cF_{n} \right] }.\label{adhoc:2_2_eps}		
}
\end{subequations}
We will upper-bound the terms~\eqref{adhoc:2_1_eps} and~\eqref{adhoc:2_2_eps} separately.

We begin by showing that~\eqref{adhoc:2_1_eps} can be upper-bounded as the distance between the transition probabilities under the optimal policy $f^{(0,0,Q\ust)}$, and that under the current policy $f^{(\eps_n,\lambda_n,Q_n)}$.~We omit its proof since it is similar to that of Lemma~\ref{lemma:sensitivity_discounted}.
\begin{lemma}\label{lemma:sensitivity_discounted_eps}
	We have,
	\nal{
		(1-\gamma)\br{V\ust(s_{n}) - V^{(\eps_n,\lambda_n,Q_n)}(s_n)   }  \le \frac{\gamma R_{\max}}{1-\gamma}\|p^{(0,0,Q\ust)} - p^{(\eps_n,\lambda_{n},Q_{n})} \|,~\forall n\in\bN.
	}    
\end{lemma}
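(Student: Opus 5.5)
The plan is to follow the proof of Lemma~\ref{lemma:sensitivity_discounted} with the Boltzmann kernels $p^{(\lambda,Q)}$ replaced by the \seg~kernels $p^{(\eps,\lambda,Q)}$ from~\eqref{def:CTP_1}. Two ingredients are needed. The first is a perturbation bound on value functions of stationary policies in terms of their induced transition kernels. The second is the identification $V\ust = V^{(0,0,Q\ust)}$.

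\textbf{Step 1 (value perturbation).} Fix two parameter triples $(\eps,\lambda,Q)$ and $(\eps',\lambda',Q')$. Expand each value function as a discounted sum over the $\ell$-step state distributions of the induced chain:
\[
V^{(\eps,\lambda,Q)}(i)=\sum_{\ell\ge 0}\gamma^\ell\sum_j (p^{(\eps,\lambda,Q)})^\ell(i,j)\,r(j),
\]
and similarly for the primed triple. Take the difference and bound it termwise, using $r\le R_{\max}$. The $\ell$-step kernel difference is controlled by Lemma~\ref{lemma:sensitivity_prob_dist}, or equivalently by the telescoping identity~\eqref{ineq:adhoc_1}: for stochastic matrices,
\[
\|p^\ell-(p')^\ell\|\le \ell\,\|p-p'\|.
\]
Summing gives
\[
|V^{(\eps,\lambda,Q)}(i)-V^{(\eps',\lambda',Q')}(i)|\le R_{\max}\,\|p^{(\eps,\lambda,Q)}-p^{(\eps',\lambda',Q')}\|\sum_{\ell\ge 0}\ell\gamma^\ell=\frac{\gamma R_{\max}}{(1-\gamma)^2}\,\|p^{(\eps,\lambda,Q)}-p^{(\eps',\lambda',Q')}\|.
\]
This step uses only that each kernel is a stochastic matrix on $\cS\times\cA$, so the \seg~structure plays no role.

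\textbf{Step 2 (optimal policy and conclusion).} The policy $f^{(0,0,Q\ust)}$ selects actions uniformly from $\cA\ust(s)$ in each state. Any stationary policy supported on $\cA\ust(s)$ is optimal, so $V\ust(s)=V^{(0,0,Q\ust)}(s)$ for every $s$. Apply Step~1 with $(\eps,\lambda,Q)=(0,0,Q\ust)$ and $(\eps',\lambda',Q')=(\eps_n,\lambda_n,Q_n)$, then multiply by $(1-\gamma)$; this gives the claimed bound. The bound holds for every realization of $(\eps_n,\lambda_n,Q_n)$, because the value-function difference is a deterministic function of these parameters evaluated at $s_n$.

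\textbf{Main point to check.} The only delicate step is making sense of the degenerate triple $(0,0,Q\ust)$, since the softmax at $\lambda=0$ is not defined by the formula. I would define $f^{(0,0,Q\ust)}$ directly as the uniform distribution on $\cA\ust(s)$. This is a legitimate stationary policy with a well-defined kernel, so Step~1 applies without change. Everything else is exactly as in Lemma~\ref{lemma:sensitivity_discounted}.
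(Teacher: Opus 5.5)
\textbf{Gap in Step~1.} Your Steps~1--2 are exactly the argument the paper intends: it omits this proof and refers to Lemma~\ref{lemma:sensitivity_discounted}, whose proof is the same computation. However, Step~1 has a genuine gap, and it is inherited from that proof. The kernel $p^{(\eps,\lambda,Q)}$ lives on $\cS\times\cA$ and the reward is $r(s,a)$. So $\sum_{\ell\ge0}\gamma^\ell(p^{(\eps,\lambda,Q)})^\ell r$ evaluated at a pair $(s,a)$ is the action-value $Q^{\pi}(s,a)$ of $\pi=f^{(\eps,\lambda,Q)}$, not $V^{\pi}(s)$.

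To obtain $V^{\pi}(s)=\sum_a\pi(a|s)Q^{\pi}(s,a)$, the chain must start from the policy-dependent law $\delta_s\otimes\pi(\cdot|s)$. The resulting first-action discrepancy is not controlled by $\|p-p'\|$. Already at $\ell=0$ it is the immediate-reward difference $\sum_a(\pi(a|s)-\pi'(a|s))r(s,a)\neq0$, which your factor $\ell$ sets to zero. Reading the expansion on $\cS$ instead does not help, because then the reward vector $\sum_a\pi(a|s)r(s,a)$ itself depends on the policy.

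\textbf{Counterexamples.} The inequality genuinely fails. Take one state with two actions of rewards $R_{\max}$ and $0$, and let $q\ge\eps_n/2$ be the mass $f^{(\eps_n,\lambda_n,Q_n)}$ puts on the bad action. The left side equals $qR_{\max}$. In the max-row-sum norm of Lemma~\ref{lemma:sensitivity_prob_dist}, the kernel distance is $2q$, so the right side is $2\gamma qR_{\max}/(1-\gamma)$, which is below $qR_{\max}$ whenever $\gamma<1/3$. Placing the disagreement at a state entered only with probability $\eta$ breaks Step~1 for every $\gamma$: the left side stays of order $(1-\gamma)qR_{\max}$, while the right side is $O(\eta q)$.

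\textbf{Repair.} Write $p^{\pi}$ for $p^{(\eps_n,\lambda_n,Q_n)}$, and set $\pi=f^{(\eps_n,\lambda_n,Q_n)}$, $s=s_n$. Split
\[
V\ust(s)-V^{\pi}(s)=\sum_{a}\pi(a|s)\br{V\ust(s)-Q\ust(s,a)}+\sum_{a}\pi(a|s)\br{Q\ust(s,a)-Q^{\pi}(s,a)}.
\]
Your telescoping argument applies verbatim to the second sum. Indeed, $Q\ust=\sum_\ell\gamma^\ell(p^{(0,0,Q\ust)})^\ell r$ and $Q^{\pi}=\sum_\ell\gamma^\ell(p^{\pi})^\ell r$ start from the same pair, which gives $\gamma R_{\max}\|p^{(0,0,Q\ust)}-p^{\pi}\|/(1-\gamma)^2$. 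The first sum is at most
\[
\pi(\cA\setminus\cA\ust(s)|s)\,\frac{R_{\max}}{1-\gamma}\le\tfrac12\|f^{(0,0,Q\ust)}(s,\cdot)-\pi(\cdot|s)\|_1\,\frac{R_{\max}}{1-\gamma}.
\]

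Hence the correct statement carries the extra term $\tfrac{R_{\max}}{2}\|f^{(0,0,Q\ust)}(s_n,\cdot)-f^{(\eps_n,\lambda_n,Q_n)}(s_n,\cdot)\|_1$ on the right. Proposition~\ref{prop:bound_R_bar_eps} already bounds the kernel distance through the policy distance of Lemma~\ref{lemma:sensitivity_f_eps}. So this extra term is controlled by the same estimate up to constants, and the downstream rates are unaffected. Your Step~2 and your treatment of the degenerate triple $(0,0,Q\ust)$ are fine.
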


Upon bounding these transition probabilities $\|p^{(0,0,Q\ust)} - p^{(\eps_n,\lambda_{n},Q_{n})} \|$ in the above result, we get the following bound.

\begin{proposition}[Bounding~\ref{adhoc:2_1_eps}]\label{prop:bound_R_bar_eps}
For \seg~Q-learning (Algorithm~\ref{algo:eps-softmax}) we have the following bound that holds on the high probability set $\cG$~\eqref{def:G}:
\nal{
& (1-\gamma)\br{V\ust(s_{n}) - V^{(\eps_n,\lambda_n,Q_n)}(s_n)   }  \le \frac{\gamma R_{\max}}{1-\gamma} \times \\
& \left[\frac{2}{(n+n_0)^{\frakd}} + 2\log(n+n_0)\frac{\br{n+n_0}^{2\kappa\es_1+\kappa\es_{3}+\frake}}{\frakc\es_{10}} \br{C\es_4 g(n,\delta) + 2 C\es_5 g_1(n)} \right].
}	
\end{proposition}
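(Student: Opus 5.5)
Following the proof of Proposition~\ref{prop:bound_R_bar}, I would start from Lemma~\ref{lemma:sensitivity_discounted_eps}. It reduces the claim to bounding $\|p^{(0,0,Q\ust)} - p^{(\eps_n,\lambda_n,Q_n)}\|$. Since both kernels have the form $p(s,a,s')\pi(s',a')$, this distance is at most the sup over states of the distance between the action distributions $f^{(0,0,Q\ust)}(s,\cdot)$ and $f^{(\eps_n,\lambda_n,Q_n)}(s,\cdot)$. I would split that distance by the triangle inequality into three pieces: the exploration piece $f^{(\eps_n,\lambda_n,Q_n)}$ versus $f^{(0,\lambda_n,Q_n)}$, the iterate piece $\sigma(Q_n/\lambda_n)$ versus $\sigma(Q\ust/\lambda_n)$, and the temperature piece $\sigma(Q\ust/\lambda_n)$ versus the optimal policy.

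For the exploration piece, the definition \eqref{def:f_e_l_q} gives $|f^{(\eps,\lambda,Q)}(s,a) - \sigma(Q(s,\cdot)/\lambda)(a)| = \eps\,|1/|\cA| - \sigma(\cdot)(a)| \le \eps$. Summed over actions this is at most $2\eps_n = 2/(n+n_0)^{\frakd}$, which is the first term in the statement. For the iterate piece, I would use the derivative bound of Lemma~\ref{lemma:diff_softmax_Q} and the fundamental theorem of calculus, exactly as in Lemma~\ref{lemma:sensitivity_f}. This gives a bound of $\|Q_n - Q\ust\|_\infty/\lambda_n = (n+n_0)^{\frake}\|Q_n-Q\ust\|_\infty$. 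On $\cG$, I would then insert the explicit concentration bound \eqref{qn-q*_final}, i.e.\ Theorem~\ref{th:main_concentration} with $\kappa\es_1=\frakd$, $\kappa\es_3=\frake$. This yields $2\log(n+n_0)(n+n_0)^{2\kappa\es_1+\kappa\es_3+\frake}(C\es_4 g + C\es_5 g_1 + C\es_6 g_2)$, up to the normalising constant. I would absorb $C\es_6 g_2$ into $C\es_5 g_1$ via a condition on $n_0$ analogous to Condition~\ref{con:n0_BQL}-\eqref{con:n0_BQL-I}, which produces the factor $2C\es_5 g_1$. The stated $\frakc\es_{10}$ in the denominator would follow from the same constant bookkeeping used in Lemma~\ref{lemma:diff_kernels}.

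The temperature piece is the main obstacle. Lemma~\ref{lemma:sensitivity_f} gives $|\cA|e^{-gap(\cM)/\lambda_n} = |\cA|\exp(-gap(\cM)(n+n_0)^{\frake})$. This decays faster than any polynomial, so a condition on $n_0$ (parallel to Condition~\ref{cond:n0_seg-ql}-\eqref{cond:n0_seg-ql-IV}) lets it be dominated by $1/(n+n_0)^{\frakd}$. It would then be absorbed into the $2/(n+n_0)^{\frakd}$ term, with the constant adjusted; this is why no separate gap term appears in the statement. I also need to handle ties in $\cA\ust(s)$, since $f^{(0,0,Q\ust)}$ is uniform on $\cA\ust(s)$, and the softmax at $Q\ust$ splits mass equally among tied optimal actions. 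So $\sigma(Q\ust/\lambda)$ converges to exactly this uniform-on-$\cA\ust$ policy, and the bound still holds with the same exponential rate. Combining the three pieces and multiplying by $\gamma R_{\max}/(1-\gamma)$ gives the proposition.
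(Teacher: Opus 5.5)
Your proposal follows the paper's proof step for step: Lemma~\ref{lemma:sensitivity_discounted_eps}, then the three-way split of Lemma~\ref{lemma:sensitivity_f_eps} into $\eps_n + \|Q_n-Q\ust\|_\infty/\lambda_n + |\cA|e^{-gap(\cM)/\lambda_n}$, then the concentration bound on $\cG$, and finally Condition~\ref{cond:n0_seg-ql}-\eqref{cond:n0_seg-ql-II} to absorb $C\es_6 g_2$ into $C\es_5 g_1$ and the exponential gap term into $(n+n_0)^{-\frakd}$. Drop the claim that constant bookkeeping yields $\frakc\es_{10}$ in the denominator: your argument, like the paper's own proof, produces $\frakc\es_3$ from \eqref{qn-q*_final} (the $\frakc_{10}$ in Lemma~\ref{lemma:diff_kernels} is the same slip), so the $\frakc\es_{10}$ in the statement is a typo rather than something to derive.
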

\begin{proof}
    We have,
\al{	
	& (1-\gamma)\br{V\ust(s_{n}) - V^{(\eps_n,\lambda_n,Q_n)}(s_n)   }  \left[\frac{R_{\max} \gamma}{(1-\gamma)}\right]^{-1} \notag \\
    & \le \|p^{(0,0,Q\ust)} - p^{(\eps_n,\lambda_{n},Q_{n})} \| \notag\\
	& \le  \left[\eps_n + \frac{\|Q_{n}-Q\ust\|_{\infty}}{\lambda_{n}}+ |\cA|e^{-\frac{gap(\cM)}{\lambda_{n}}} \right] \notag\\
	& =  \left[\frac{1}{(n+n_0)^{\frakd}} + \frac{\|Q_{n}-Q\ust\|_{\infty}}{\lambda_{n}}+ |\cA|e^{-(n+n_0)^{\frake}gap(\cM)} \right] \notag\\    
	& \le  \frac{1}{(n+n_0)^{\frakd}} \notag \\
    &+ 2\log(n+n_0)\frac{\br{n+n_0}^{2\kappa\es_1+\kappa\es_{3}+\frake}}{\frakc\es_{3}} \br{C\es_4 g(n,\delta) + C\es_5 g_1(n)+C\es_6 g_2(n)} \notag \\
    &+ |\cA|e^{-(n+n_0)^{\frake}gap(\cM)} \notag\\ 
	& \le \frac{2}{(n+n_0)^{\frakd}} + 2\log(n+n_0)\frac{\br{n+n_0}^{2\kappa\es_1+\kappa\es_{3}+\frake}}{\frakc\es_{3}} \br{C\es_4 g(n,\delta) + 2 C\es_5 g_1(n)}, \notag     
}
where the first and second inequality follow from Lemma~\ref{lemma:sensitivity_discounted_eps} and Lemma~\ref{lemma:sensitivity_f_eps} respectively.~Third inequality follows by substituting the bound on $\|Q_{n}-Q\ust\|_{\infty}$ from Theorem~\ref{th:conc_q_eps_softmax}.~Last inequality follows by noting that $n_0$ is sufficiently large so that Condition~\ref{cond:n0_seg-ql}-\eqref{cond:n0_seg-ql-II} holds and that $\frake>0$.~This completes the proof. Note that the condition $2\fraka + 6\kappa\es_1+2\kappa\es_{3} + 2\frake<1$ (which is satisfied under Condition~\ref{con:seg_regret_1}) is required in order to ensure that our upper-bound decays with $n$.
\end{proof}
We will now bound the term~\eqref{adhoc:2_2_eps}, which will allow us to bound $\regn$.
\begin{proposition}\label{prop:1_eps}
	(Bounding~\eqref{adhoc:2_2_eps})
    For \seg~Q-learning (Algorithm~\ref{algo:eps-softmax}) we have,
	\nal{
		 \Bigg| \bE\br{\sum_{m=n}^{\infty} \gamma^{m-n} r(s_m) \Big| \cF_{n}} - V^{(\eps_n,\lambda_{n},Q_{n})}(s_{n}) \Bigg|   & \le    
         \br{R_{\max} \int_{0}^{\infty} x^2 e^{-x} dx } \Bigg\{\frac{3|\cA|(1+\frake)}{(n+n_0)^{\frakd-\frake}}\Bigg\},\\
         &\forall ~n\in\bN, a.s.
	}	
\end{proposition}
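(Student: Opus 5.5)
The plan is to proceed exactly as in the proof of Proposition~\ref{prop:1}, but with one change: the kernel drift is bounded \emph{deterministically} instead of through the concentration event $\cG$. That is why the claimed bound holds almost surely and carries no dependence on $g(n,\delta),g_1,g_2$. Write $\bE\br{\sum_{m\ge n}\gamma^{m-n} r(s_m)\mid\cF_n}=\sum_{k\ge 0}\gamma^k\sum_i \bP(s_{n+k}=i\mid\cF_n)r(i)$ and $V^{(\eps_n,\lambda_n,Q_n)}(s_n)=\sum_{k\ge0}\gamma^k\sum_i (p^{(\eps_n,\lambda_n,Q_n)})^k(s_n,i)r(i)$. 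It then suffices to show, for every $k$,
\[
\bigl|\bP(s_{n+k}=i\mid\cF_n)-(p^{(\eps_n,\lambda_n,Q_n)})^k(s_n,i)\bigr|\le c\,k^2\,(n+n_0)^{-(\frakd-\frake)}\quad\text{a.s.},
\]
with $c$ of order $3|\cA|(1+\frake)$. Summing $R_{\max}\gamma^k k^2$ over $k$ and bounding the sum by $\int_0^\infty x^2e^{-x}\,dx$ (the same Riemann-sum comparison used in Proposition~\ref{prop:1}) then gives the stated bound.

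\emph{Step 1 (one-step kernel drift, a.s.).} For $m\ge n$ I bound $\|p^{(\eps_m,\lambda_m,Q_m)}-p^{(\eps_n,\lambda_n,Q_n)}\|$ by splitting it into three pieces, changing one argument at a time as in the proof of Proposition~\ref{prop:q_learn_assum_eps}.
\begin{itemize}
\item[(i)] The $\eps$-piece is at most $|\eps_n-\eps_m|\le (m-n)\frakd (n+n_0)^{-1-\frakd}$.
\item[(ii)] The $\lambda$-piece uses the derivative bound of Lemma~\ref{lemma:diff_softmax_Q}, namely $\lambda^{-2}\|Q(s',\cdot)\|_1|\lambda_n-\lambda_m|$. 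This gives at most $|\cA|\tfrac{R_{\max}}{1-\gamma}(m-n)\frake(n+n_0)^{\frake-1}$.
\item[(iii)] The $Q$-piece uses the Lipschitz constant $L_2=1/\lambda_m\le (m+n_0)^{\frake}$ together with the a.s.\ increment bound $\|Q_{j+1}-Q_j\|\le\beta_j(\frakc_7+\frakc_8+\frakc_9)$ from the proof of Lemma~\ref{lemma:T2}. This gives $(m-n)(n+n_0)^{\frake}\beta(n+n_0)^{-(1-\fraka)}$ up to constants, after absorbing $(m+n_0)^{\frake}$ versus $(n+n_0)^{\frake}$ into a constant factor for $m-n\le n+n_0$. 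For $m-n> n+n_0$ the crude bound $2$ suffices, since $\gamma^{m-n}$ kills that range.
\end{itemize}
Condition~\ref{con:seg_regret_1} gives $\frakd\le\fraka$ and $2\fraka+6\frakd<1$, hence $1-\fraka>\frakd$. So every piece is at most $(m-n)$ times a constant times $(n+n_0)^{-(\frakd-\frake)}$, and for large $n_0$ the constants collect into $3|\cA|(1+\frake)$.

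\emph{Step 2 (telescoping).} Conditionally on $\cF_n$, the law of $s_{n+k}$ is obtained by composing the random kernels $p^{(\eps_\ell,\lambda_\ell,Q_\ell)}$, $n\le\ell<n+k$, through iterated conditional expectations. The bound of Step~1 is deterministic and holds pathwise, so the product-perturbation inequality \eqref{ineq:adhoc_1} from Lemma~\ref{lemma:diff_prob_transition} applies inside each conditional expectation. Each factor contributes a discrepancy of at most $j\Delta_n$ with $j\le k$, where $\Delta_n:=c(n+n_0)^{-(\frakd-\frake)}$, so the total discrepancy is at most $\sum_{j<k} j\Delta_n\le k^2\Delta_n$. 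This is the source of the $x^2$ weight, in contrast with the $x$ weight in Proposition~\ref{prop:1}.

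The main obstacle is Step~2. The kernels depend on the future iterates $Q_\ell$, so the law of $s_{n+k}$ is not a plain matrix product, and one has to verify that the telescoping survives the nested conditioning. My approach is to peel off one step at a time with the tower property. At each step I compare $\bE[\,h(s_{\ell+1})\mid\cF_\ell]=\sum_j p^{(\eps_\ell,\lambda_\ell,Q_\ell)}(s_\ell,j)h(j)$ against the frozen kernel, using that $\|h\|_\infty\le1$ is preserved by stochastic matrices. The a.s.\ drift bound of Step~1 then controls each error term without any reference to $\cG$.
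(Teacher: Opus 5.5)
Your core argument is sound, and it takes a genuinely different route from the paper at the key step.

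\textbf{How the routes differ.} The paper gets the kernel drift from Lemma~\ref{lemma:diff_kernels_eps}, which is proved on $\cG$. There, the concentration bound forces suboptimal action probabilities below $2(\ell+n_0)^{-\frakd}$, which makes the softmax derivatives small. This is then fed into the product-perturbation bound of Lemma~\ref{lemma:diff_prob_transition} as if the conditional law of $s_m$ were a fixed product of kernels. You instead bound the drift deterministically, using only:
\begin{itemize}
\item the crude Lipschitz constant $1/\lambda$;
\item the pathwise increment bound $\|Q_{j+1}-Q_j\|\le\beta_j(\frakc_7+\frakc_8+\frakc_9)$;
\item the smoothness of $\eps_n,\lambda_n$.
\end{itemize}
You then handle the random, path-dependent kernels with a tower-property telescope.

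\textbf{What your route buys.}
\begin{itemize}
\item The bound genuinely holds almost surely, as the statement claims. The paper's supporting lemmas only deliver it on $\cG$.
\item The nested conditioning is legitimate. An on-$\cG$ bound cannot be used inside $\bE[\cdot\mid\cF_n]$, because $\cG$ is not $\cF_n$-measurable.
\item Your drift is actually $O((m-n)(n+n_0)^{-(1-\fraka-\frake)})$. This is strictly faster than the paper's $(n+n_0)^{-(\frakd-\frake)}$, which is an artifact of the loose estimate $|\lambda_m-\lambda_n|\le(1+\frake)(m-n)(n+n_0)^{-\frake}$.
\end{itemize}
The concentration-based control of the softmax derivatives is simply unnecessary here, because $\beta_n/\lambda_n$ already decays fast enough.

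\textbf{Steps that need repair.} Two steps need fixing, both inherited from the paper.

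First, the final summation fails as written. We have $\sum_{k\ge0}\gamma^k k^2=\gamma(1+\gamma)/(1-\gamma)^3$, which exceeds $\int_0^\infty x^2e^{-x}\,dx=2$ unless $\gamma$ is small (roughly $\gamma\le e^{-1}$; at $\gamma=1/2$ the sum equals $6$). Your route can absorb this where the paper's cannot. Do not coarsen the drift to $(n+n_0)^{-(\frakd-\frake)}$. Using $\sum_{j<k}j=k(k-1)/2$, the left side is at most $R_{\max}\gamma^2(1-\gamma)^{-3}C(n+n_0)^{-(1-\fraka-\frake)}$. Because $\fraka+\frakd<1$, a $\gamma$-dependent enlargement of $n_0$ brings this below the stated right-hand side.

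Second, you claim that, given $\cF_n$, $s_{n+1}$ has law $p^{(\eps_n,\lambda_n,Q_n)}(s_n,\cdot)$. This requires that $\cF_n$ not reveal $a_n$. The filtration of Assumption~\ref{assum:1} contains $y_n=(s_n,a_n)$, so with it the very first step of your telescope incurs an error of order one. You must therefore condition on the history up to $s_n$, as Lemma~\ref{lemma:diff_prob_transition} implicitly does by conditioning on $s_n=j$.

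A minor point: for $m-n>n+n_0$, the right justification is not that $\gamma^{m-n}$ is small. It is that the linear target $c(m-n)(n+n_0)^{-(\frakd-\frake)}$ already exceeds $2$ in that range.
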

\begin{proof}
    We have,
	\nal{
		\bE\br{\sum_{m=n}^{\infty} \gamma^{m-n} r(s_m) \Big| \cF_{n}}  =  \sum_{m=n}^{\infty} \gamma^{m-n} \sum_{i\in\cS} \bP(s_m = i|\cF_{n} ) r(i).
	}
	Also,
	\nal{
		V^{(\eps_n,\lambda_{n},Q_{n})}(s) = \sum_{m=n}^{\infty} \gamma^{m-n} \sum_{i\in\cS} (p^{(\eps_n,\lambda_{n},Q_{n})})^{m-n}(s,i) r(i).
	}
	Note that $\bE\br{\sum_{m=n}^{\infty} \gamma^{m-n} r(s_m) \Big| \cF_{n}}$ depends upon the probabilities $\left\{\bP(s_m |\cF_{n})\right\}_{m>n}$, while $V^{(\eps_n,\lambda_{n},Q_{n})}(s_{n})$ depends upon $(p^{(\eps_n,\lambda_{n},Q_{n})})^{m-n}, m\ge n$.~Hence, in order to upper-bound $\Big| \bE\br{\sum_{m=n}^{\infty} \gamma^{m-n} r(s_m,a_m) \Big| \cF_{n}} - V^{(\eps_n,\lambda_{n},x_{n})}(s_{n})\Big|$, we will need to bound the quantities $|\bP(s_m |\cF_{n})-(p^{(\eps_n,\lambda_{n},Q_{n})})^{m-n}|$ for $m \ge n$. This has been done in Lemma~\ref{lemma:diff_prob_transition_eps}, and we will use those bounds here.
	
	Upon subtracting the above two expressions and taking magnitude, we get the following,
	\nal{
		& \Big|\bE\br{\sum_{m=n}^{\infty} \gamma^{m-n} r(s_m,a_m) \Big| \cF_{n}} - V^{(\eps_n,\lambda_{n},Q_{n})}(s_{n})  \Big| \\
		& \le \Big| \sum_{m=n}^{\infty} \gamma^{m-n} \br{\sum_{i\in\cS} \bP(s_m = i|\cF_{n} ) r(i) -   (p^{(\eps_n,\lambda_{n},Q_{n})})^{m-n}(s_{n},i) } r(i)\Big|  \\
		& \le R_{\max} \sum_{m=n}^{\infty} \gamma^{m-n} (m-n)^2\left\{ \frac{3|\cA|(1+\frake)}{(n+n_0)^{\frakd-\frake}} \right\} \\	
		& \le \br{R_{\max} \int_{0}^{\infty} x^2 e^{-x} dx } \Bigg\{\frac{3|\cA|(1+\frake)}{(n+n_0)^{\frakd-\frake}}\Bigg\},
	}	
	where in the second inequality we have used Lemma~\ref{lemma:diff_prob_transition_eps}. This completes the proof. We work under the assumption $\frakd > \frake$ (note that Condition~\ref{con:seg_regret_1} ensures this) since we want the bounds to decay with $n$. 
\end{proof}

\begin{proof}[\textbf{Proof of Theorem~\ref{th:regret_bound_eps_softmax} (Regret of \seg~Q-learning)}]
\label{pf:th_regret_eps_sm}
Recall the decomposition of the instantaneous regret $\regn$ into~\eqref{adhoc:2_1_eps}-\eqref{adhoc:2_2_eps}. The term~\eqref{adhoc:2_1_eps} is bounded in Proposition~\ref{prop:bound_R_bar_eps}, while the term~\eqref{adhoc:2_2_eps} is bounded in Proposition~\ref{prop:1_eps}. Upon substituting these bounds into the decomposition~\eqref{adhoc:2_1_eps}-\eqref{adhoc:2_2_eps}, we get the following,
\nal{
& \regn  \le \frac{\gamma R_{\max}}{(1-\gamma)} \times \\
&\left[\frac{2}{(n+n_0)^{\frakd}} + 2\log(n+n_0)\frac{\br{n+n_0}^{2\kappa\es_1 +\kappa\es_{3}+\frake}}{\frakc\es_{3}} \br{C\es_4 g(n,\delta) + 2 C\es_5 g_1(n)} \right]\\
& + \br{R_{\max} \int_{0}^{\infty} x^2 e^{-x} dx } \Bigg\{\frac{3|\cA|(1+\frake)}{(n+n_0)^{\frakd-\frake}}\Bigg\}.
}
Since $\cR_N  = \sum_{n=1}^{N} \reg^{(n)}$, upon summing the above bound we get (here we use the fact that if condition $2\fraka + 6\kappa\es_1 +2\kappa\es_{3} + 2\frake<1$ holds, then we have $1-\frakd >0$ and also $1-(\frakd - \frake)>0$), 
\nal{
& \cR_N \le \frac{\gamma R_{\max}\left[\br{N+n_0}^{1-\frakd}-n^{1-\frakd}_0\right]}{\br{1-\gamma}\br{1-\frakd}} \\
& + \frac{2\log(N+n_0)\sqrt{\log(N^2\slash \delta)}\gamma  R_{\max} C\es_4}{(1-\gamma)\br{\frac{1}{2}+ \fraka + 3\kappa\es_1 + \kappa\es_{3} +\frake}\frakc\es_{3}} \br{N+n_0}^{\frac{1}{2}+ \fraka + 3\kappa\es_1 +\kappa\es_3 +\frake}\\
& + \frac{4\log(N+n_0)^2 \gamma  R_{\max} C\es_5}{(1-\gamma)\br{2\fraka + 4\frakd + 2\frake +\frake } \frakc\es_{3}} \br{N+n_0}^{2\fraka + 4\kappa\es_1 + 2\kappa\es_3+\frake }\\
&+ \br{R_{\max} \int_{0}^{\infty} x^2 e^{-x} dx } \frac{3|\cA|(1+\frake)\br{ (N+n_0)^{1-(\frakd-\frake)} - n^{1-(\frakd-\frake)}_0 }}{1-(\frakd - \frake)},~\forall N\in\bN. 
}
Since $\kappa\es_1 = \frakd, \kappa\es_2=\frake,\kappa\es_3 = \frake$, the above bound reduces to, 
\nal{
& \cR_N \le \frac{\gamma R_{\max}\left[\br{N+n_0}^{1-\frakd}\right]}{\br{1-\gamma}\br{1-\frakd}}
+ \frac{2\log(N+n_0)\sqrt{\log(N^2\slash \delta)}\gamma R_{\max}  C\es_4}{(1-\gamma)\br{\frac{1}{2}+ \fraka + 3\frakd + \frake +\frake}\frakc\es_{3}} \br{N+n_0}^{\frac{1}{2}+ \fraka + 3\frakd + 2\frake}\\
& + \frac{4\log(N+n_0)^2 \gamma R_{\max}C\es_5 }{(1-\gamma)\br{2\fraka + 4\frakd + 2\frake +\frake }\frakc_{3}} \br{N+n_0}^{2\fraka + 4\frakd + 3\frake }\\
&+ \br{R_{\max} \int_{0}^{\infty} x^2 e^{-x} dx } \frac{3|\cA|(1+\frake)\br{ (N+n_0)^{1-(\frakd-\frake)} }}{1-(\frakd - \frake)},~\forall N\in\bN. 
}
Here $\frakc\es_{3} = \frac{2(1-\gamma)\mu\es_{\min,\cS}}{|\cA|}$ and also
\nal{
\frac{C\es_4}{\frakc_{3}} & = \br{\frac{|\cA|^2 R_{\max}\cD(\cM)}{2(1-\gamma)^2 \mu\es_{\min,\cS}}}\br{\frac{8 d R_{\max} \cD(\cM)}{\sqrt{1+2\fraka + 2\frakd}}},\\
\frac{C\es_5}{\frakc_{3}} &  = \br{\frac{|\cA|^2 R_{\max}\cD(\cM)}{2(1-\gamma)^2\mu\es_{\min,\cS}}}\br{ \frac{6 \cD(\cM)^2 R^2_{\max}  }{2\fraka+2\frakd+\frake }    } \br{ \mu\es_{\min,s} + \frac{2 R_{\max}}{1-\gamma} }.
} 

This also gives us the following order result, which is useful for inferring scaling properties w.r.t. $N$ and also $\frac{1}{1-\gamma}$:
\nal{
\cR_N \le \frac{\tilde{O}\br{N^{1-\frakd} }}{1-\gamma} + \frac{\tilde{O}\br{N^{\frac{1}{2}+ \fraka + 3\frakd + 2\frake }} }{(1-\gamma)^3} 
+ \frac{\tilde{O}\br{ N^{2\fraka + 4\frakd + 3\frake } }}{(1-\gamma)^4} + \tilde{O}\br{N^{1-(\frakd - \frake)} }.
}
\end{proof}

\begin{remark}[A note on optimal scaling with $N$]
We note that the hyperparameters satisfy the conditions $2\fraka + 6\frakd +4\frake <1,\frake < \frakd , \fraka -\frake <1, \frakd \le \fraka$.~For a fixed $\gamma<1$, the exponent associated with the optimal scaling with $N$ is given by the solution of the following optimization problem:
\nal{
\min_{\fraka,\frakd,\frake} & \max \left\{1-\frakd,\frac{1}{2}+ \fraka + 3\frakd + 2\frake,2\fraka + 4\frakd + 3\frake, 1-(\frakd - \frake) \right\}\\
& \mbox{ s.t. }  2\fraka + 6\frakd +4\frake <1,\frake < \frakd , \fraka -\frake <1, \frakd \le \fraka, 0\le \fraka, \frakd.
}
This minimization problem does not have an exact optimal solution. The optimal value as an infimum is $9\slash 10$.~There is no feasible point that attains it, because the constraints include the strict condition $\frake >0$.~Here we discuss a sequence of feasible points which yield an objective value arbitrary close to $.9$: Consider the sequence $\fraka_{\ell} = \frac{1}{10}$, $\frake_{\ell} = \frac{1}{\ell}$, $\frakd_{\ell} =\frac{1}{10} - \frac{1}{\ell}$. It is easily verified that for $\ell>20$, the tuple $(\fraka_{\ell},\frakd_{\ell},\frake_{\ell})$ is a valid assignment for $(\fraka,\frakd,\frake)$, i.e. it satisfies the constraints and hence is feasible.~The objective function evaluates to $\frac{9}{10}+\frac{2}{\ell}$, and regret bound is 
\nal{
\cR_N  \le \frac{\tilde{O}\br{N^{ \frac{9}{10} + \frake_{\ell}  }}  }{1-\gamma} + \frac{\tilde{O}\br{N^{ \frac{9}{10} - \frake_{\ell}  }}  }{\br{1-\gamma}^3 } + \frac{\tilde{O}\br{N^{ \frac{6}{10} - \frake_{\ell}  }}  }{(1-\gamma)^4} + 
\tilde{O}\br{N^{ \frac{9}{10} + 2\frake_{\ell} }},
}
and can be made arbitrarily close to $\frac{\tilde{O}\br{N^{ \frac{9}{10}}}  }{\br{1-\gamma}^3 }$ by letting $\ell \to \infty$ ($\frake_{\ell} \to 0^+$).

\end{remark}

\subsection{Conditions on $n_0$}\label{sec:cond_n0_eps_sm}
The regret analysis of \seg~Q-learning utilizes the concentration results from Section~\ref{sec:SA}, Hence, for analysis of \seg~Q-learning it is required that $n_0$ satisfy Condition~\ref{con:n0} of Section~\ref{sec:cond_n0_sa}. Additionally, the following conditions are also needed, and these are specific to \seg~Q-learning. Note that these are different from the two conditions presented in Section~\ref{sec:con_n0_sm}, which appeared while analysing \seg~Q-learning.
\begin{condition}[Condition on $n_0$ for~\seg~Q-learning]
\label{cond:n0_seg-ql}
We have the following conditions on $n_0$:
\begin{enumerate}[(I)]
    \item \label{cond:n0_seg-ql-I}
    \al{
    \frac{p(s,a,s') \|Q(s',\cdot)\|_{1}}{(\ell +n_0)^{1-\frake}} >\frac{p(s,a,s')\frakd}{(\ell +n_0)^{1+\frakd}} , \forall \ell \in\bN.\label{con:8}
    }
    \item \label{cond:n0_seg-ql-II}
    \al{
    C\es_{5} g_1(\ell) & > C\es_{6} g_2(\ell),\notag \\
    \mbox{ and } \frac{1}{(\ell+n_0)^{\frakd}}  & > |\cA|e^{-(\ell+n_0)^{\frake}gap(\cM)},~\forall \ell \in\bN.  \label{con:9}
    }
    \item \label{cond:n0_seg-ql-III}
    \al{
    \frac{gap(\cM)}{2} & > 4\log(n+n_0)\frac{\br{\ell+n_0}^{2\kappa\es_1+\kappa\es_{3}}}{\frakc\es_{3}}\notag \\
& \times \br{C\es_4 g(\ell,\delta) + C\es_5 g_1(\ell)+C\es_6 g_2(\ell)},~\forall \ell \in\bN.   \label{con:12}
    }
    \item \label{cond:n0_seg-ql-IV}
    \al{
     \frac{1}{(\ell + n_0)^{\frakd}} > e^{-\br{gap(\cM)\slash 2}(\ell+n_0)^{\frake}    } ,~\forall \ell \in\bN. \label{con:10}
    }
    \item \label{cond:n0_seg-ql-V}
    \al{
    \frac{|\cA|(1+\frake)}{(\ell+n_0)^{\frakd-\frake}} > \frac{1}{(\ell+n_0)^{\frakd}} + |\cA|\frac{\br{\frakc_1 +\frakc_2 +\frakc_3}}{(\ell+n_0)^{1+\frakd-\fraka}} , \forall \ell \in\bN.  \label{con:11}
    }
\end{enumerate}  
\end{condition}

%% file: appendix_eps_softmax_auxiliary.tex
\section{Auxiliary results for Appendix~\ref{app:eps_softmax}}
\label{sec:eps_softmax_auxiliary}
\begin{lemma}
\label{lemma:sensitivity_f_eps}
	The difference between the action probabilities of $f^{(\eps,\lambda,Q)}$ and the optimal policy $f^{(0,0,Q\ust)}$ can be bounded as follows,
	\nal{
		\|f^{(\eps,\lambda,Q)}- f^{(0,0,Q\ust)} \|  & \le \eps+ \frac{\|Q-Q\ust\|}{\lambda}+ |\cA| e^{\br{-\frac{gap(\cM)}{\lambda}}},\\		
		& \mbox{ where } (\eps,\lambda,Q) \in (0,1) \times \bR_+ \times \cQ.
	}
\end{lemma}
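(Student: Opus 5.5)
The plan mirrors Lemma~\ref{lemma:sensitivity_f}. Since $f^{(\eps,\lambda,Q)}$ is a convex combination of the uniform policy and $\sigma(Q/\lambda)$, I will insert the intermediate policy $\sigma(Q/\lambda)$ and use the triangle inequality:
\nal{
\|f^{(\eps,\lambda,Q)}- f^{(0,0,Q\ust)}\| \le \|f^{(\eps,\lambda,Q)} - \sigma(Q/\lambda)\| + \|\sigma(Q/\lambda) - f^{(0,0,Q\ust)}\|.
}
The first term equals $\eps\,\|u - \sigma(Q/\lambda)\|$, where $u$ is the uniform distribution on $\cA$. Both $u(s,\cdot)$ and $\sigma(Q(s,\cdot)/\lambda)$ are probability vectors, so every coordinate of their difference has magnitude at most $1$. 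In the entrywise (max) norm used in Lemma~\ref{lemma:sensitivity_f}, the first term is therefore at most $\eps$.

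For the second term I identify $f^{(0,0,Q\ust)}$ with the limiting greedy policy $\sigma(Q\ust/0)$. If $\cA\ust(s)$ has several elements, $f^{(0,0,Q\ust)}$ is uniform on $\cA\ust(s)$. This is exactly the $\lambda\downarrow 0$ limit of $\sigma(Q\ust(s,\cdot)/\lambda)$, so the argument handles ties automatically. I then split once more through $\sigma(Q\ust/\lambda)$.

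The piece $\|\sigma(Q/\lambda)-\sigma(Q\ust/\lambda)\|$ is bounded by $\|Q-Q\ust\|/\lambda$. This follows from the fundamental theorem of calculus along the segment from $Q\ust$ to $Q$, together with the derivative bound $|\partial \sigma(Q(s,\cdot)/\lambda)(a)/\partial Q(s,b)|\le 1/\lambda$ from Lemma~\ref{lemma:diff_softmax_Q}. The segment stays inside the convex set $\cQ$.

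The piece $\|\sigma(Q\ust/\lambda) - f^{(0,0,Q\ust)}\|$ is bounded as in Lemma~\ref{lemma:sensitivity_f}. For a suboptimal action $a$, compare with the numerator of any optimal action:
\nal{
\sigma(Q\ust(s,\cdot)/\lambda)(a)\le \exp\br{-\frac{V\ust(s)-Q\ust(s,a)}{\lambda}}\le e^{-gap(\cM)/\lambda}.
}
Consequently, the total mass on suboptimal actions is at most $|\cA|e^{-gap(\cM)/\lambda}$. Among the optimal actions, all have equal logits, so $\sigma(Q\ust/\lambda)$ distributes the remaining mass equally over $\cA\ust(s)$. Each optimal coordinate therefore deviates from $1/|\cA\ust(s)|$ by at most $|\cA|e^{-gap(\cM)/\lambda}$.

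Adding the three contributions gives $\eps + \|Q-Q\ust\|/\lambda + |\cA|e^{-gap(\cM)/\lambda}$, which is the claimed bound. The factor $(1-\eps)\le 1$ multiplying the softmax part is simply dropped.

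The only delicate step is the choice of norm. The bound $\eps$ on the mixing term and $|\cA|e^{-gap/\lambda}$ on the optimal coordinates are clean in the entrywise sup norm. In an $\ell_1$-type norm they would pick up extra factors of $2$ or $|\cA|$. I will therefore state explicitly that $\|\cdot\|$ is the same coordinatewise max norm as in Lemma~\ref{lemma:sensitivity_f}, and absorb any norm-equivalence constants into the $\tilde O$ bounds wherever the lemma is used.
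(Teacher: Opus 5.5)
Your proposal is correct and follows the paper's proof: the paper uses the same triangle inequality through $f^{(0,\lambda,Q)}=\sigma(Q/\lambda)$ and $f^{(0,\lambda,Q\ust)}=\sigma(Q\ust/\lambda)$, and yours is slightly more careful, since you justify the $\eps$ term and handle ties in $\cA\ust(s)$, whereas the paper assumes a unique optimal action. One small fix: the entrywise bound $1/\lambda$ on each partial derivative only gives $|\cA|\,\|Q-Q\ust\|_\infty/\lambda$ after the fundamental theorem of calculus, so to get $\|Q-Q\ust\|_\infty/\lambda$ use the row sum $\sum_b \bigl|\partial \sigma(Q(s,\cdot)/\lambda)(a)/\partial Q(s,b)\bigr| = 2\sigma(a)\bigl(1-\sigma(a)\bigr)/\lambda \le 1/(2\lambda)$, which follows directly from the formulas in Lemma~\ref{lemma:diff_softmax_Q}.
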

\begin{proof}
	We have,
	\al{
		& \|f^{(\eps,\lambda,Q)}- f^{(0,0,Q\ust)} \| \notag \\
		& =  \| f^{(\eps,\lambda,Q)} - f^{(0,\lambda,Q)} + f^{(0,\lambda,Q)} - f^{(0,\lambda,Q\ust)}  + f^{(0,\lambda,Q\ust)} -  f^{(0,0,Q\ust)} \| \notag\\
		& \le \| f^{(\eps,\lambda,Q)} - f^{(0,\lambda,Q)}\| + \|f^{(0,\lambda,Q)} - f^{(0,\lambda,Q\ust)} \| + \| f^{(0,\lambda,Q\ust)} -  f^{(0,0,Q\ust)} \| \notag\\		
		& \le \eps + \frac{\|Q-Q\ust\|}{\lambda}+ \|f^{(0,\lambda,Q\ust)} - f^{(0,0,Q\ust)}\| ,\label{eq:adhoc_1_1}
	}
	where the second inequality follows since from Lemma~\ref{lemma:diff_softmax_Q} we have that $\|\frac{\partial \sigma(Q(s,\cdot)\slash \lambda)(a)}{\partial Q(s,b)}\| \le \frac{1}{\lambda}$.~To bound $\|f^{(0,\lambda,Q\ust)} - f^{(0,0,Q\ust)}\|$ we note that the probability assigned by $f^{(0,\lambda,Q\ust)}$ to a sub-optimal action $a$ in state $s\in\cS$ can be upper-bounded as,
	\nal{
		f^{(0,\lambda,Q\ust)}(s,a) & = \frac{\exp\br{Q\ust(s,a)\slash \lambda}}{\sum_{b\in\cA} \exp\br{Q\ust(s,b)\slash \lambda}} \\
		& \le \frac{\exp\br{Q\ust(s,a)\slash \lambda}}{ \exp\br{\max_{b\in\cA}Q\ust(s,b)\slash \lambda}} \\
		& \le \exp\br{-\frac{\br{\max_{b\in\cA} Q\ust(s,b)-Q\ust(s,a)}}{\lambda}}\\
		& \le \exp\br{\frac{-gap(\cM)}{\lambda}}.
	}
	This also gives $f^{(0,\lambda,Q\ust)}(s,a\ust(s)) \ge 1 - |\cA| \exp\br{\frac{-gap(\cM)}{\lambda}}$.
	For the vector $f^{(0,0,Q\ust)}(s,\cdot)$ all elements except that of $a\ust(s)$ are $0$, while that corresponding to $a\ust(s)$ is equal to $1$. Thus the magnitude of the elements of the vector $f^{(0,\lambda,Q\ust)} - f^{(0,0,Q\ust)}$ can be upper-bounded by $|\cA| \exp\br{\frac{-gap(\cM)}{\lambda}}$.
	
	The proof is completed by substituting this bound in~\eqref{eq:adhoc_1_1}.
\end{proof}

\begin{lemma}
\label{lemma:diff_kernels_eps}
	Consider the \seg~Q-learning algorithm and assume that the hyperparameters satisfy $\fraka -\frake < 1$ and $\frakd > \frake$.~We have the following bound for $m>n$,
	\nal{
		\Big| p^{(\eps_m,\lambda_{m},Q_{m})}(j,i) -  p^{(\eps_n,\lambda_{n},Q_{n})}(j,i) \Big| \le \frac{3|\cA|(1+\frake)(m-n)}{(n+n_0)^{\frakd-\frake}},~i,j\in\cS.
	}
\end{lemma}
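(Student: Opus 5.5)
The bound is deterministic, and the plan is to prove it pathwise, without using the event $\cG$. Write the kernel as in~\eqref{def:CTP_1}, $p^{(\eps,\lambda,Q)}((s,a),(s',a')) = p(s,a,s')f^{(\eps,\lambda,Q)}(s',a')$, where the entry $(j,i)$ is read on the state or state-action level. Since $\sum_{s'}p(s,a,s')\le 1$, it suffices to bound $|f^{(\eps_m,\lambda_m,Q_m)}(s',a') - f^{(\eps_n,\lambda_n,Q_n)}(s',a')|$ uniformly in $(s',a')$, and then absorb the factor $|\cA|$ from summing over actions.

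The key steps are as follows.

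\textbf{Step 1: telescope.} Telescope over consecutive times, using
\[
|f^{(\eps_m,\lambda_m,Q_m)}-f^{(\eps_n,\lambda_n,Q_n)}| \le \sum_{\ell=n}^{m-1}|f^{(\eps_{\ell+1},\lambda_{\ell+1},Q_{\ell+1})}-f^{(\eps_\ell,\lambda_\ell,Q_\ell)}|.
\]
This explains the factor $(m-n)$ in the statement. It therefore suffices to show that each one-step increment is at most $3(1+\frake)/(\ell+n_0)^{\frakd-\frake}$. Since $\ell\ge n$, each such increment is bounded by the same expression with $n$ in place of $\ell$.

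\textbf{Step 2: one-step increment.} Split the one-step increment into three parts, exactly as in the proof of Proposition~\ref{prop:q_learn_assum_eps}.
\begin{itemize}
\item The $\eps$-change is $|\eps_\ell-\eps_{\ell+1}|\le \frakd/(\ell+n_0)^{1+\frakd}$.
\item The $\lambda$-change is controlled through the derivative bound of Lemma~\ref{lemma:diff_softmax_Q} with respect to $\lambda$, namely $\|Q\|_1/\lambda^2$. Together with $\lambda_\ell-\lambda_{\ell+1}\approx \frake/(\ell+n_0)^{1+\frake}$, this gives $O\big(\frake\,|\cA| R_{\max}(1-\gamma)^{-1}(\ell+n_0)^{\frake-1}\big)$.
\item The $Q$-change is $(1-\eps)\frac{1}{\lambda_\ell}\|Q_{\ell+1}-Q_\ell\|_\infty$, by the $Q$-derivative bound of Lemma~\ref{lemma:diff_softmax_Q}. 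The iteration gives $\|Q_{\ell+1}-Q_\ell\|\le \beta_\ell(\frakc_7+\frakc_8+\frakc_9)$, so this part is $O\big((\ell+n_0)^{\frake}\beta(\ell+n_0)^{\fraka-1}\big)$.
\end{itemize}

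\textbf{Step 3: compare exponents.} Each of these three pieces decays at least as fast as $(\ell+n_0)^{-(\frakd-\frake)}$. For the $\eps$-piece, $1+\frakd\ge\frakd-\frake$. For the $\lambda$-piece, $1-\frake\ge\frakd-\frake$ because $\frakd\le1$. For the $Q$-piece, $1-\fraka-\frake\ge \frakd-\frake$, which needs $\fraka+\frakd\le1$; this is guaranteed under Condition~\ref{con:hyper_seg}, and the hypothesis $\fraka-\frake<1$ ensures it is a decaying term. Absorb the constants, namely $R_{\max}/(1-\gamma)$, $\beta$ and $\frakc_7+\frakc_8+\frakc_9$, into the numerator $3(1+\frake)$ using $n_0$ large, via Condition~\ref{cond:n0_seg-ql}-\eqref{cond:n0_seg-ql-V}, which is tailored precisely to this comparison with $(1+\frake)/(\ell+n_0)^{\frakd-\frake}$.

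\textbf{Main obstacle: bookkeeping of constants.} The rate itself is loose, and the hypothesis $\frakd>\frake$ only ensures that the resulting bound decays. The difficulty is arranging the constants so that the stated clean constant $3|\cA|(1+\frake)$ emerges. If the dependence on $R_{\max}/(1-\gamma)$ from the $\lambda$-derivative does not fit into that constant, I would rely on the choice of $n_0$: the extra factor $(\ell+n_0)^{-(1-\frakd)}$ makes this term negligible for $n_0$ large.
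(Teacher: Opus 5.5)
Your argument is correct, but it takes a different route from the paper. The paper proves the bound only on the event $\cG$. It uses the concentration bound and $gap(\cM)$, through the largeness conditions on $n_0$ in Condition~\ref{cond:n0_seg-ql}, to show that suboptimal actions have probability at most $2/(\ell+n_0)^{\frakd}$. This makes the softmax derivatives small: of order $|\cA|(\ell+n_0)^{-\frakd}$ in $Q$ and $|\cA|(\ell+n_0)^{2\frake-\frakd}$ in $\lambda$.

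The paper then applies the triangle inequality directly between times $n$ and $m$, without telescoping. It uses the crude increments $\eps_n-\eps_m\le (m-n)(n+n_0)^{-\frakd}$, $|\lambda_m-\lambda_n|\le(1+\frake)(m-n)(n+n_0)^{-\frake}$ and $\|Q_m-Q_n\|\le(m-n)(\frakc_7+\frakc_8+\frakc_9)(n+n_0)^{\fraka-1}$. The leading rate $(n+n_0)^{-(\frakd-\frake)}$ comes from the loose $\lambda$-increment. The $Q$-term, of order $(n+n_0)^{-(1+\frakd-\fraka)}$, is dominated using only $\fraka-\frake<1$ and Condition~\ref{cond:n0_seg-ql}-(V).

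Your one-step telescoping with the global derivative bounds needs neither $\cG$ nor the gap, so it gives the bound pathwise. That matches the lemma as literally stated, with no event qualifier, and it is what Proposition~\ref{prop:1_eps} actually asserts almost surely. Your route also shows the per-step increment is really $O\bigl((\ell+n_0)^{-(1-\fraka-\frake)}\bigr)$, far smaller than the stated rate.

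There is a price, and two small corrections follow from it:
\begin{itemize}
\item Without the factor $(\ell+n_0)^{-\frakd}$ in the $Q$-derivative, your $Q$-piece is dominated by $(\ell+n_0)^{-(\frakd-\frake)}$ only when $\fraka+\frakd<1$. You must import this from the standing Condition~\ref{con:hyper_seg}.
\item The lemma's hypothesis $\fraka-\frake<1$ is not what makes your $Q$-piece decay; that requires $\fraka+\frake<1$.
\item Condition~\ref{cond:n0_seg-ql}-(V) is calibrated to the paper's terms (in particular its $(\ell+n_0)^{-(1+\frakd-\fraka)}$ term), not to yours. Instead, impose directly that $n_0$ is large enough that $C\beta(\ell+n_0)^{-(1-\fraka-\frake)}$ and $C\frake\,|\cA|R_{\max}(1-\gamma)^{-1}(\ell+n_0)^{\frake-1}$ are each at most $(1+\frake)(\ell+n_0)^{-(\frakd-\frake)}$. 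This is possible because $\fraka+\frakd<1$ and $\frakd<1$ hold strictly under Condition~\ref{con:hyper_seg}.
\end{itemize}
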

\begin{proof}
	For simplicity suppose that $a\ust(s)$ consists of only a single action in every state. 
	
	\emph{Step I}: We will derive upper-bounds on $\Big| \frac{\partial f^{(\eps,\lambda,Q)}(s,a)}{\partial Q} |_{(\eps_n,\lambda_n,Q_n)} \Big|$ and $\Big| \frac{\partial f^{(\eps,\lambda,Q)}(s,a)}{\partial \lambda} |_{(\eps_n,\lambda_n,Q_n)}  \Big|$.~The concentration result of Theorem~\ref{th:conc_q_eps_softmax} gives us the following for $n\in\bN$,
	\nal{
         & \|Q_n - Q\ust\|_{\infty}  \\
         & \le 2\log(n+n_0)\frac{\br{n+n_0}^{2\kappa\es_1+\kappa\es_{3}}}{\frakc\es_{3}} \br{C\es_4 g(n,\delta) + C\es_5 g_1(n)+C\es_6 g_2(n)}\\
         & = : bound(n,\delta),~\forall n\in\bN.
	}
	Note that $Q\ust$ is the optimal Q-function. Hence for an action $a$ that is sub-optimal in state $s$, we have the following upper-bound on its probability 
	\nal{
		f^{(\eps_\ell,\lambda_\ell,Q_\ell)}(s,a) & = \eps_\ell + \frac{\exp\br{Q_\ell(s,a)\slash \lambda_\ell}}{\sum_{b\in\cA} \exp\br{Q_\ell(s,b)\slash \lambda_\ell}} \\
		& \le \eps_\ell + \frac{\exp\br{Q\ust(s,a)\slash \lambda_\ell + bound(\ell,\delta) \slash \lambda_\ell}}{ \exp\br{ Q\ust(s,a\ust(s))\slash \lambda_\ell - bound(\ell,\delta) \slash \lambda_\ell } } \\
		& = \eps_\ell + e^{-gap(\cM)\slash \lambda_\ell + 2 bound(\ell,\delta) \slash \lambda_\ell}\\
		& \le \eps_\ell + e^{(\ell+n_0)^{\frake}\br{ -gap(\cM) + 2 bound(\ell,\delta) } }.
	}
	Note that $n_0$ is sufficiently large so that Condition~\ref{cond:n0_seg-ql}-\eqref{cond:n0_seg-ql-IV} and hence
	\nal{
		f^{(\eps_\ell,\lambda_\ell,Q_\ell)}(s,a)  \le \frac{2}{(\ell + n_0)^{\frakd}}.
	}
    Hence, the action probabilities are upper-bounded by $\frac{2}{(\ell + n_0)^{\frakd}}$ for sub-optimal actions, and lower bounded by $1- |\cA| \frac{2}{(\ell + n_0)^{\frakd}}$ for optimal actions. We use this bound on the action probabilities in conjunction with Lemma~\ref{lemma:diff_softmax_Q} to infer the following:
	\begin{enumerate}[(i)]
		\item 
		\nal{
			\Bigg|  \frac{\partial \sigma(Q(s,\cdot)\slash \lambda)(a)}{\partial Q_\ell(s,a')} \Big|_{\lambda = \lambda_{\ell},Q=Q_{\ell}}  \Bigg| \le  
            |\cA|\frac{1}{(\ell+n_0)^{\frakd}}.
		}
		\item 
		\nal{
			\Bigg| \frac{\partial \sigma(Q(s,\cdot)\slash \lambda)(a) }{\partial \lambda} \Big|_{\lambda = \lambda_\ell,Q=Q_\ell } \Bigg|		
			\le 
            \frac{2|\cA|}{(\ell+n_0)^{\frakd}} (\ell+n_0)^{2\frake}.
		}     
	\end{enumerate}
	
	\emph{Step II}: On the set $\cG$ we have, 
	\al{
		& \Big|   p^{(\eps_m,\lambda_{m},Q_{m})}(j,i) -p^{(\eps_n,\lambda_{n},Q_{n})}(j,i) \Big| 
         \le \Big|   p^{(\eps_m,\lambda_{m},Q_{m})}(j,i) -p^{(\eps_n,\lambda_{m},Q_{m})}(j,i) \Big| \notag\\
        & + \Big|   p^{(\eps_n,\lambda_{m},Q_{m})}(j,i) -p^{(\eps_n,\lambda_{n},Q_{m})}(j,i) \Big| 
         + \Big|   p^{(\eps_n,\lambda_{n},Q_{m})}(j,i) -p^{(\eps_n,\lambda_{n},Q_{n})}(j,i) \Big| \notag\\
        & \le \br{\eps_n - \eps_m} +  \frac{2|\cA|}{(m+n_0)^{\frakd-2\frake}}  |\lambda_m - \lambda_n | + |\cA|\frac{1}{(n+n_0)^{\frakd}} \|Q_m - Q_n\| \notag\\
        & \le \frac{m-n}{(n+n_0)^{\frakd}} +  \frac{2|\cA|(1+\frake)(m-n)}{(m+n_0)^{\frakd-2\frake}(n+n_0)^{\frake}}  + |\cA|\frac{\br{\frakc_7 +\frakc_8 +\frakc_9}(m-n)}{(n+n_0)^{1+\frakd-\fraka}}  \notag \\ 
        & \le \frac{m-n}{(n+n_0)^{\frakd}} +  \frac{2|\cA|(1+\frake)(m-n)}{(n+n_0)^{\frakd-\frake}} + |\cA|\frac{\br{\frakc_7 +\frakc_8 +\frakc_9}(m-n)}{(n+n_0)^{1+\frakd-\fraka}},      \label{ineq:adhoc_regret_eps_sm}
	}
	where in the second inequality we have used the bounds (i), (ii) on derivatives derived in Step I above along with the fundamental theorem of calculus~\citep{folland1999real}, while third inequality is obtained by noting that Assumption~\ref{assum:9} implies $\|F(x_n,y_n) -x_n +M_{n+1}\| \le \frakc_1 +\frakc_2 +\frakc_3$ so that
	we have $\|x_m - x_{n}\|\le \br{\sum_{\ell=n}^{m} \beta_{\ell}}\br{\frakc_1 +\frakc_2 +\frakc_3}$. We have also used the following: Since $\beta_{\ell} = \frac{1}{(\ell +n_0)^{1-\fraka}}, \sum_{\ell=n}^{m} \beta_{\ell} \le \frac{m-n}{(n+n_0)^{1-a}}$. Since $\lambda_{\ell} = \frac{1}{(\ell+n_0)^{\frake}}$, we have $|\lambda_m -\lambda_n| \le \frac{(1+\frake)(m-n)}{(n+n_0)^{\frake}}$. Also, $\eps_{\ell}=\frac{1}{(\ell + n_0)^{\frakd}}$, and hence $\eps_n - \eps_m \le \frac{m-n}{(n+n_0)^{\frakd}}$. We observe that the condition $\fraka -\frake < 1$ implies $1+\frakd -\fraka > \frakd -\frake$. Proof is then completed by noting that $n_0$ is taken to be sufficiently large so that Condition~\ref{cond:n0_seg-ql}-\eqref{cond:n0_seg-ql-V} is satisfied.~Proof then follows by substituting this into~\eqref{ineq:adhoc_regret_eps_sm}. 
\end{proof}
Proof of the next result is omitted since it is similar to that of Proposition~\ref{lemma:diff_prob_transition}.
\begin{lemma}\label{lemma:diff_prob_transition_eps}
	For \seg~Q-learning we have the following for $m > n$,
	\nal{
| \bP(s_{m+1}= i| s_{n}=j) - (p^{(\eps_n,\lambda_{n},Q_n)})^{m+1-n}(j,i)|   \le  (m-n) \left\{ \frac{3|\cA|(1+\frake)(m-n)}{(n+n_0)^{\frakd-\frake}} \right\},
	}
    where $i,j \in\cS$.
\end{lemma}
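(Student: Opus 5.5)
Following the remark after the statement, the plan is to mirror the proof of Lemma~\ref{lemma:diff_prob_transition}, replacing the Boltzmann kernel bounds with the \seg\ kernel bound of Lemma~\ref{lemma:diff_kernels_eps}. The argument has two ingredients: a telescoping perturbation bound for products of stochastic matrices, and the one-step kernel drift estimate.

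First, I express the conditional law of $s_{m+1}$ given $s_n=j$ through the time-inhomogeneous kernels $p_\ell := p^{(\eps_\ell,\lambda_\ell,Q_\ell)}$ for $\ell=n+1,\dots,m$. On the state–action chain, conditioning on $\cF_n$ makes the distribution of $y_{m+1}$ equal to $\mathbb{E}\bigl[\Pi_{\ell=n}^{m} p_\ell \mid \cF_n\bigr]$ applied to $y_n$. The random kernels $p_\ell$ are then controlled pathwise on $\cG$, and the state marginal is obtained by summing over actions.

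The comparison kernel is $p := p^{(\eps_n,\lambda_n,Q_n)}$. I use the telescoping identity
\[
p^{k}-\Pi_{\ell=1}^{k}p_\ell=\sum_{r=1}^{k}p^{r-1}(p-p_r)\,p_{r+1}\cdots p_k .
\]
Since every product of stochastic matrices has $\|\cdot\|_\infty = 1$, this gives
\[
\Bigl\|p^{k}-\Pi_{\ell=1}^{k} p_\ell\Bigr\|_\infty \le \sum_{r=1}^k \|p-p_r\|_\infty ,
\]
as in~\eqref{ineq:adhoc_1}. Here $k = m+1-n$, and the $r=1$ term vanishes because $p_n = p$. Lemma~\ref{lemma:diff_kernels_eps} bounds each remaining term by
\[
\|p-p_\ell\|_\infty \le \frac{3|\cA|(1+\frake)(\ell-n)}{(n+n_0)^{\frakd-\frake}} \le \frac{3|\cA|(1+\frake)(m-n)}{(n+n_0)^{\frakd-\frake}} .
\]
Summing at most $m-n$ such terms yields exactly the claimed bound $(m-n)\cdot\frac{3|\cA|(1+\frake)(m-n)}{(n+n_0)^{\frakd-\frake}}$. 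The extra factor $(m-n)$ relative to the Boltzmann case comes from the drift bound in Lemma~\ref{lemma:diff_kernels_eps} growing linearly in $m-n$. That same factor is why Proposition~\ref{prop:1_eps} integrates $x^2e^{-x}$.

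The main obstacle is bookkeeping rather than estimation. The bound of Lemma~\ref{lemma:diff_kernels_eps} is stated entrywise, so I must check it holds in the row-sum ($\ell_\infty$ operator) norm required by the telescoping argument. I would redo its Step II summing over $(s',a')$: each of the three perturbation pieces ($\eps$, $\lambda$, $Q$) is multiplied by $p(s,a,s')$, which sums to one over $s'$. The action sum then contributes at most the $|\cA|$ factor already present. If a stray $|\cA|$ or $|\cS|$ appears, it is absorbed into the constant, which is harmless for the $\tilde O$ use in Proposition~\ref{prop:1_eps}.

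The event $\cG$ is needed only through the bound on $\|Q_m-Q_n\|$ and the suboptimal-action probability estimate inside Lemma~\ref{lemma:diff_kernels_eps}. The present lemma is therefore stated on $\cG$, consistent with its use in Proposition~\ref{prop:1_eps}.
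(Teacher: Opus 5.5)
Your overall route is the one the paper intends (it omits the proof and refers to Lemma~\ref{lemma:diff_prob_transition}): telescope as in \eqref{ineq:adhoc_1} with $p=p^{(\eps_n,\lambda_n,Q_n)}$, and bound each $\|p-p_\ell\|$ by Lemma~\ref{lemma:diff_kernels_eps}. Your count of $m-n$ nonzero terms, each at most $3|\cA|(1+\frake)(m-n)/(n+n_0)^{\frakd-\frake}$, reproduces the stated bound. What does not go through as written is the passage from kernel bounds to the conditional law, for two reasons.

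First, the law of $y_{m+1}$ given $\cF_n$ is not $\delta_{y_n}\bE[\Pi_{\ell=n}^{m}p_\ell\mid\cF_n]$. The kernel $p_\ell$ depends on $Q_\ell$, hence on the states visited after time $n$; already $Q_{n+1}$ depends on $s_{n+1}$. So the random kernels are correlated with the very transitions they multiply.

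Second, Lemma~\ref{lemma:diff_kernels_eps} is only available on $\cG$, an event that constrains the whole future trajectory and is not $\cF_n$-measurable. A bound valid pathwise on $\cG$ cannot be moved inside $\bE[\,\cdot\mid\cF_n]$, because the conditional expectation also averages over continuations that leave $\cG$. This leaves an uncontrolled term of order $\bP(\cG^c\mid\cF_n)$. Note also that Proposition~\ref{prop:1_eps} uses this lemma almost surely, not on $\cG$, so ``stated on $\cG$'' does not match how it is used.

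To repair this, work with $\nu_k:=\bP(y_{n+k}\in\cdot\mid\cF_n)$. By Assumption~\ref{assum:1} and the tower property, $\nu_{k+1}=\bE[p_{n+k}(y_{n+k},\cdot)\mid\cF_n]$. Since $p$ is $\cF_n$-measurable, this gives
\[
\|\nu_{k+1}-\nu_0p^{k+1}\|_1\le \bE\bigl[\|p_{n+k}-p\|_\infty\mid\cF_n\bigr]+\|\nu_k-\nu_0p^{k}\|_1 .
\]
So the telescoping survives, provided the drift bound holds almost surely; state marginals then follow by summing over actions.

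For \seg~Q-learning such a bound does hold without $\cG$:
\begin{itemize}
\item $\|Q_{\ell+1}-Q_\ell\|_\infty\le\beta_\ell(\frakc_7+\frakc_8+\frakc_9)$ deterministically, so $\cG$ is not what controls $\|Q_m-Q_n\|$ here, contrary to your last paragraph.
\item $\|\sigma(Q(s,\cdot)/\lambda)-\sigma(Q'(s,\cdot)/\lambda)\|_1\le 2\|Q-Q'\|_\infty/\lambda$.
\item The $\lambda$-derivative of the softmax has $\ell_1$-norm at most $R_{\max}/((1-\gamma)\lambda^2)$ on $\cQ$.
\item $|\lambda_\ell-\lambda_{\ell+1}|\le\frake(\ell+n_0)^{-1-\frake}$.
\end{itemize}
Summing one-step drifts and using $\fraka+\frakd<1$ gives $\|p_\ell-p_n\|_\infty\le C(\ell-n)(n+n_0)^{\frake-\frakd}$ almost surely. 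This yields the lemma almost surely, with some constant $C$ in place of $3|\cA|(1+\frake)$; that change is harmless for Proposition~\ref{prop:1_eps}. The paper's own sketch glosses over the same point, so this is a gap in the shared argument rather than in your choice of strategy.
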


%% file: appendix_miscellaneous.tex
\section{Miscellaneous technical results}\label{app:miscellaneous}
\begin{lemma}\label{lemma:chi_approx}
	With $\beta_{\ell} = \frac{\beta}{(\ell+n_0)^{1-\fraka}}$ we have 
	\nal{
		\chi(i+1,n)   \le \exp\br{-\sum_{j=i+1}^{n}\frac{\beta}{(n_0 + j)^{1-\fraka}}}  \le \exp\br{-\beta \sum_{j=i+1}^{n}\frac{1}{j+n_0}} = \br{\frac{i+1+n_0}{n+n_0}}^\beta.
	}
	Hence $\beta_i \chi(i+1,n) \le \frac{\beta}{(i+n_0)^{1-\fraka}}\br{\frac{i+1+n_0}{n+n_0}}^\beta$.
\end{lemma}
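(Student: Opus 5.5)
The plan is to prove each of the three relations in the display directly, using only elementary inequalities, and then multiply by $\beta_i$ at the end.

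\emph{First inequality.} I will use $1-x\le e^{-x}$, valid for all real $x$, applied factor by factor to $\chi(i+1,n)=\Pi_{j=i+1}^{n}(1-\beta_j)$. Passing from the factorwise bounds to a bound on the product requires every factor to be nonnegative, i.e.\ $\beta_j\le 1$ for all $j$. Since $\beta_j=\beta/(j+n_0)^{1-\fraka}\le \beta/n_0^{1-\fraka}$, this holds whenever $n_0$ is large enough; I will state this explicitly as part of the standing requirement that $n_0$ be sufficiently large. With it, $\chi(i+1,n)\le \exp(-\sum_{j=i+1}^{n}\beta_j)$, which is the first inequality. When the product is empty ($i+1>n$) both sides equal $1$, so there is nothing to prove.

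\emph{Second inequality.} Because $\fraka\in[0,1)$ and $j+n_0\ge 1$, we have $(j+n_0)^{1-\fraka}\le j+n_0$. Hence $\beta_j\ge \beta/(j+n_0)$ termwise. Summing and using that $x\mapsto e^{-x}$ is decreasing gives the second inequality.

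\emph{Third relation.} I will bound the harmonic-type sum from below by an integral. Since $t\mapsto 1/(t+n_0)$ is decreasing,
\[
\sum_{j=i+1}^{n}\frac{1}{j+n_0}\ \ge\ \int_{i+1}^{n+1}\frac{dt}{t+n_0}\ =\ \log\frac{n+1+n_0}{i+1+n_0}\ \ge\ \log\frac{n+n_0}{i+1+n_0}.
\]
Exponentiating $-\beta$ times this gives $\bigl(\frac{i+1+n_0}{n+n_0}\bigr)^{\beta}$. This shows that the displayed ``$=$'' in the statement is really an upper bound ``$\le$'', which is all that later uses of the lemma need.

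\emph{Final claim.} Multiplying the chain by $\beta_i=\beta/(i+n_0)^{1-\fraka}$ gives the stated bound on $\beta_i\chi(i+1,n)$.

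The only delicate point is the nonnegativity of the factors $1-\beta_j$, which is what the large-$n_0$ requirement handles; everything else is routine.
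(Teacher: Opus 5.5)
Your proof is correct and follows exactly the chain the paper displays without further argument: $1-x\le e^{-x}$ factorwise, then $(j+n_0)^{1-\fraka}\le j+n_0$, then comparing $\sum_{j=i+1}^{n}\frac{1}{j+n_0}$ with a logarithm. You also make explicit two points the paper glosses over: $\beta_j\le 1$ is needed for the factorwise bound to pass to the product, and the final ``$=$'' is only a ``$\le$'', which your integral comparison correctly justifies. One minor edge case: the paper defines $\chi(n,n):=1$ rather than $1-\beta_n$. So at $i=n-1$ the first inequality of the chain is actually false (since $e^{-\beta_n}<1$), but the end bound $\chi(n,n)\le\bigl(\frac{n+n_0}{n+n_0}\bigr)^{\beta}=1$ holds trivially. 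This case, rather than an empty product (which lies outside the domain $m\le n$), is the boundary case your write-up should treat.
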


\begin{lemma}\label{lemma:diff_softmax_Q}
	Recall $\sigma(Q(s,\cdot)\slash \lambda)(a) = \frac{e^{Q(s,a)\slash \lambda}}{\sum_{b\in\cA} e^{Q(s,b)\slash \lambda}}$. 
	We have the following derivatives:	
	\nal{
		\frac{\partial \sigma(Q(s,\cdot)\slash \lambda)(a)  }{\partial Q(s,b)} & = 
		\begin{cases}
			\frac{1}{\lambda}\sigma(Q(s,\cdot)\slash \lambda)(a) \br{1-\sigma(Q(s,\cdot)\slash \lambda)(a) } \mbox{ if } b = a,\\
			-\frac{1}{\lambda} \sigma(Q(s,\cdot)\slash \lambda)(a)\sigma(Q(s,\cdot)\slash \lambda)(b) \mbox{ if } b \neq a.
		\end{cases}\\
		\frac{\partial \sigma(Q(s,\cdot)\slash \lambda)(a) }{\partial \lambda}  
		& = \sigma(Q(s,\cdot)\slash \lambda)(a)\left[\sum_{b\neq a}  \sigma(Q(s,\cdot)\slash \lambda)(b)  \frac{Q(s,b)}{\lambda^2} \right].    
	}
\end{lemma}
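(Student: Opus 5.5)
The statement is Lemma~\ref{lemma:diff_softmax_Q}, the computation of the partial derivatives of the softmax probabilities. I would prove it by direct differentiation of the closed form. Fix a state $s$. Write $z_b := Q(s,b)/\lambda$, $Z := \sum_{b\in\cA} e^{z_b}$ and $\sigma_a := e^{z_a}/Z$. Since $Q(s,b)$ enters only through $z_b$, the chain rule gives $\partial \sigma_a/\partial Q(s,b) = \lambda^{-1}\,\partial\sigma_a/\partial z_b$. It therefore suffices to compute the derivative of the standard softmax with respect to its logits.

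\textbf{Derivatives in $Q$.} By the quotient rule,
\[
\frac{\partial \sigma_a}{\partial z_b}=\frac{\mathbf{1}\{a=b\}e^{z_a}Z-e^{z_a}e^{z_b}}{Z^2}=\sigma_a\bigl(\mathbf{1}\{a=b\}-\sigma_b\bigr).
\]
For $b=a$ this equals $\sigma_a(1-\sigma_a)$, and for $b\neq a$ it equals $-\sigma_a\sigma_b$. Multiplying by $1/\lambda$ gives the first two cases of the statement. These are also the bounds used elsewhere in the paper, namely $|\partial\sigma_a/\partial Q(s,b)|\le 1/\lambda$, which holds because $\sigma_a(1-\sigma_a)\le 1/4$ and $\sigma_a\sigma_b\le 1$.

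\textbf{Derivative in $\lambda$.} Here every logit depends on $\lambda$, with $\partial z_b/\partial\lambda=-Q(s,b)/\lambda^2$. By the chain rule,
\[
\frac{\partial\sigma_a}{\partial\lambda}=\sum_b \sigma_a(\mathbf{1}\{a=b\}-\sigma_b)\Bigl(-\frac{Q(s,b)}{\lambda^2}\Bigr)=\frac{\sigma_a}{\lambda^2}\Bigl(\sum_b\sigma_bQ(s,b)-Q(s,a)\Bigr).
\]

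\textbf{Main obstacle.} The only step needing care is reconciling this expression with the displayed formula. Using $\sum_b\sigma_b=1$, I can rewrite $\sum_b\sigma_bQ(s,b)-Q(s,a)=\sum_{b\neq a}\sigma_b\bigl(Q(s,b)-Q(s,a)\bigr)$. This agrees with the statement's $\sum_{b\neq a}\sigma_b Q(s,b)/\lambda^2$ only up to the term $-Q(s,a)\sum_{b\ne a}\sigma_b$.

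I would therefore state the exact identity as
\[
\frac{\partial\sigma_a}{\partial\lambda}=\frac{\sigma_a}{\lambda^2}\sum_{b\neq a}\sigma_b\bigl(Q(s,b)-Q(s,a)\bigr),
\]
which is the correct derivative. I would then note that the bound actually used downstream still follows: $|\partial\sigma_a/\partial\lambda|\le \sigma_a\|Q(s,\cdot)\|_1/\lambda^2$ up to a factor $2$, since $Q\ge 0$ on $\cQ$. This is the form invoked in Proposition~\ref{prop:q_learn_assum}, where the absolute value of the derivative is bounded by $\|Q(s',\cdot)\|_1/\lambda^2$.
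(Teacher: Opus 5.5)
Your argument is the natural one (the paper states this lemma without proof), and your $Q$-derivatives are correct. You are also right that the displayed $\lambda$-derivative is false as printed. The softmax is invariant under $Q(s,\cdot)\mapsto Q(s,\cdot)+c$, so its $\lambda$-derivative can depend only on the differences $Q(s,b)-Q(s,a)$. Concretely, for $Q(s,\cdot)\equiv c>0$ (which lies in $\cQ$ when $c\le R_{\max}/(1-\gamma)$) the policy is uniform for every $\lambda$, so $\partial\sigma_a/\partial\lambda=0$, while the displayed formula gives $(|\cA|-1)c/(|\cA|^2\lambda^2)>0$.

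Your corrected identity $\partial\sigma_a/\partial\lambda=\lambda^{-2}\sigma_a\sum_{b\neq a}\sigma_b(Q(s,b)-Q(s,a))$ is the right replacement. You can state the downstream consequence more sharply: it gives $|\partial\sigma_a/\partial\lambda|\le\lambda^{-2}\sigma_a(1-\sigma_a)\max_{b,b'}|Q(s,b)-Q(s,b')|\le\lambda^{-2}\sigma_a(1-\sigma_a)\|Q(s,\cdot)\|_\infty$ on $\cQ$. No factor $2$ is needed once $Q\ge 0$. This covers the bound used in Proposition~\ref{prop:q_learn_assum}. The factor $1-\sigma_a$ also keeps the derivative small for the greedy action, so the derivative estimates in Step~I of Lemmas~\ref{lemma:diff_kernels} and~\ref{lemma:diff_kernels_eps} survive at the same order.
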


\begin{lemma}\label{lemma:lower_bound_action_prob}
	We have 
	\nal{
		\min_{a\in\cA, Q\in\cQ} \sigma(Q(s,\cdot)\slash \lambda)(a)> \frac{1}{|\cA|\exp\br{\frac{R_{\max}}{\lambda(1-\gamma)}}},~\forall s\in\cS.
	}
\end{lemma}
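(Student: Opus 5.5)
The plan is a direct computation from the definition of the softmax together with the fact that every $Q\in\cQ$ has entries in $[0,R_{\max}/(1-\gamma)]$. Fix $s\in\cS$, $a\in\cA$, $\lambda>0$ and $Q\in\cQ$. I will divide numerator and denominator of $\sigma(Q(s,\cdot)/\lambda)(a)$ by $e^{Q(s,a)/\lambda}$, which gives
\[
\sigma(Q(s,\cdot)/\lambda)(a)=\frac{1}{\sum_{b\in\cA}\exp\br{(Q(s,b)-Q(s,a))/\lambda}} .
\]
This form isolates the only quantities that matter, namely the pairwise gaps $Q(s,b)-Q(s,a)$.

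The key step is to bound each gap. Since $0\le Q(s,\cdot)\le R_{\max}/(1-\gamma)$, we have $Q(s,b)-Q(s,a)\le R_{\max}/(1-\gamma)$ for every $b$. Hence each of the $|\cA|$ exponentials in the denominator is at most $\exp\br{R_{\max}/(\lambda(1-\gamma))}$. The sum is therefore at most $|\cA|\exp\br{R_{\max}/(\lambda(1-\gamma))}$, which yields the bound with $\ge$. Because this holds uniformly in $a$, $s$ and $Q\in\cQ$, it passes to the minimum over $a\in\cA$ and $Q\in\cQ$.

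The only delicate point is the strict inequality. I will get it from the term $b=a$, which equals $e^{0}=1$. This is strictly smaller than $\exp\br{R_{\max}/(\lambda(1-\gamma))}$ whenever $R_{\max}>0$, so the denominator is strictly below $|\cA|\exp\br{R_{\max}/(\lambda(1-\gamma))}$. For the minimum itself, I note that $\cQ$ is compact and $Q\mapsto\sigma(Q(s,\cdot)/\lambda)(a)$ is continuous, so the infimum is attained at some $Q\in\cQ$ and $a\in\cA$. At that minimizer the pointwise strict bound holds, so the strict inequality survives taking the minimum. In the degenerate case $R_{\max}=0$ the set $\cQ$ is $\{\mathbf{0}\}$ and the softmax is exactly uniform, so the inequality becomes an equality. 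I will remark that the statement implicitly assumes $R_{\max}>0$, and that only the non-strict version is used in Propositions~\ref{prop:q_learn_assum} and~\ref{prop:q_learn_assum_eps}.
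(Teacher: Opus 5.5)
Your proof is correct and takes essentially the same route as the paper: the paper bounds the numerator below by $1$ (from $Q\ge 0$) and each denominator term above by $\exp\br{R_{\max}/(\lambda(1-\gamma))}$ (from $Q\le R_{\max}/(1-\gamma)$), and dividing through by $e^{Q(s,a)/\lambda}$ is only a cosmetic variant of this. Your handling of the strict inequality is more careful than the paper's, which glosses over it: you use the $b=a$ term, attainment of the minimum over the compact set $\cQ$, and you note that equality holds when $R_{\max}=0$.
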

\begin{proof} 
	Since the rewards are non-negative, we have $Q(s,a)\ge 0$, hence $\exp\br{\frac{Q(s,a)}{\lambda}}\ge 1$; so that the numerator in the definition of $\sigma(Q(s,\cdot)\slash \lambda)(a)$ is greater than or equal to $1$.~Moreover, since $r(s,a)\le R_{\max}$, we have $Q(s,b) \le \frac{R_{\max}}{1-\gamma}$ or $\exp\br{\frac{Q(s,b)}{\lambda}} \le \exp\br{\frac{R_{\max}}{(1-\gamma)\lambda}}$. Thus the denominator in the definition of $\sigma(Q(s,\cdot)\slash \lambda)(a)$ is less than $|\cA|\exp\br{\frac{R_{\max}}{(1-\gamma)\lambda}}$.~Proof then follows by substituting these bounds in the expression for $\sigma(Q(s,\cdot)\slash \lambda)$.
\end{proof}

\begin{lemma}(Lemma 11 from \cite{chandak_cdc})
\label{lemma:recursion_bound}
Consider the sequences 
\nal{
a_n=\frac{a}{(n+n_0)^{\rho}},\qquad  b_n=\frac{b}{(n+n_0)^{\rho'}},~n\in\bN.
}
Define 
\nal{
S_n : =\sum_{i=0}^{n-1} b_i\prod_{j=i+1}^{n-1}(1-a_j), n\in \bN.
}

Part 1 - 
Suppose $\rho=1$ and $\rho'\in(1,2]$. If $a\geq 2(\rho'-1)$,
then 
\nal{
S_n\leq \frac{2b_n}{a_n}.
}

Part 2 - Suppose $\rho<1$. If $n_0\geq \left(\frac{2(\rho'-\rho)}{a}\right)^{1/(1-\rho)}$ and $\rho'>\rho$, then 
\nal{
S_n\leq \frac{2b_n}{a_n}.
}
\end{lemma}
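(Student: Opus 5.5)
The plan is to prove both parts at once by induction on $n$, using the one-step recursion satisfied by $S_n$. Put $c_n := 2b_n/a_n$. Splitting off the $i=n$ term in the definition gives
\[
S_{n+1} = (1-a_n)S_n + b_n, \qquad S_0 = 0,
\]
since the sum defining $S_0$ is empty. The base case $S_0 = 0 \le c_0$ is then trivial.

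For the inductive step, assume $S_n \le c_n$ and $a_n \le 1$. Then
\[
S_{n+1} \le (1-a_n)c_n + b_n = c_n - b_n,
\]
so it suffices to show $c_n - c_{n+1} \le b_n$. Write $k = n+n_0$ and $\eta = \rho'-\rho > 0$; in Part 1 this holds because $\rho' > 1 = \rho$, and in Part 2 it is assumed. Then $c_n = (2b/a)k^{-\eta}$. By the mean value theorem,
\[
k^{-\eta}-(k+1)^{-\eta} \le \eta\, k^{-\eta-1}.
\]
So the required inequality follows from
\[
\frac{2b\eta}{a}\,k^{-\eta-1} \le b\,k^{-\rho'} = b\,k^{-\eta-\rho},
\quad\text{i.e.}\quad 2\eta \le a\,k^{1-\rho}.
\]
This last condition is exactly where the two hypotheses enter.

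In Part 1 we have $\rho = 1$, so the condition reads $a \ge 2(\rho'-1)$, which is assumed. (The restriction $\rho'\le 2$ only guarantees $\eta\le 1$ and plays no further role.) In Part 2 we have $1-\rho>0$ and $k \ge n_0 \ge (2\eta/a)^{1/(1-\rho)}$, so $a k^{1-\rho} \ge 2\eta$ holds for every $n$.

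The main obstacle is the step $(1-a_n)S_n \le (1-a_n)c_n$, which requires $a_n \le 1$. I would first take $n_0$ large enough that $a/n_0^{\rho} \le 1$, which is harmless in the paper since $n_0$ is always chosen sufficiently large. Otherwise I would handle the case $a_n > 1$ separately. Since $b_i > 0$ and every factor $1-a_j$ with $j>n$ is at most $1$ in absolute value, the terms from the first few indices with $a_j>1$ can only shrink in magnitude. The idea is then to restart the induction at the first index with $a_n \le 1$, after checking $S_n \le b_{n-1} \le c_n$ there, which again reduces to the same inequality $2\eta \le a k^{1-\rho}$ up to constants.
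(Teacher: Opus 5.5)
Your induction is correct: with $c_n=2b_n/a_n$, the recursion $S_{n+1}=(1-a_n)S_n+b_n$ with $S_0=0$, the identity $(1-a_n)c_n+b_n=c_n-b_n$, and the mean-value reduction of $c_n-c_{n+1}\le b_n$ to $2(\rho'-\rho)\le a\,(n+n_0)^{1-\rho}$ prove both parts, \emph{provided} $a_n\le 1$ for every $n$. You are also right that $\rho'\le 2$ is never used. The paper gives no proof of its own (it imports Lemma~11 of Chandak et al.), so this induction is the whole argument.

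The gap is your fallback for $a_n>1$. It cannot be repaired, because the statement is false without $a_n\le 1$. Take Part~1 with $\rho=1$, $\rho'=2$, $n_0=1$, $b=1$, $a=4$, so that $a\ge 2(\rho'-1)$. Then $a_1=2$, $a_2=\tfrac43$, $a_3=1$, $S_2=-\tfrac34$, and
\[
S_3=b_0(1-a_1)(1-a_2)+b_1(1-a_2)+b_2=\tfrac13-\tfrac1{12}+\tfrac19=\tfrac{13}{36}>\tfrac18=\frac{2b_3}{a_3}.
\]
Here $n^\ast=3$ is exactly the first index with $a_{n^\ast}\le 1$, yet $S_3>b_2=\tfrac19$. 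So your restart claim $S_{n^\ast}\le b_{n^\ast-1}$ fails.

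The mechanism is this. Before $n^\ast$ the factors $1-a_j$ are negative, and their modulus exceeds $1$ once $a_j>2$. So $S_n$ can turn negative, and the next negative factor flips it to a positive value that $b_{n-1}$ does not control. The same data with $\rho=\tfrac12$, $\rho'=\tfrac32$ satisfy the hypotheses of Part~2 and give $S_3\approx 2.12>\tfrac18=2b_3/a_3$.

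The right fix is your first option, stated as an explicit hypothesis of the lemma: $a_n\le 1$ for all $n$, i.e.\ $a\le n_0^{\rho}$ when $\rho>0$. In every application in the paper $a_n\le\beta_n$, so this amounts to $\beta\le n_0^{1-\fraka}$. The paper already needs this implicitly elsewhere (e.g.\ to keep the Q-learning iterates bounded, which uses $1-\beta_n\ge 0$), and it should be added to the conditions on $n_0$.
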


\begin{lemma}\label{lemma:sensitivity_prob_dist}
    Let $P = \{p(i,j)\}_{i,j\in\cS},P' = \{p'(i,j)\}_{i,j\in\cS}$ be two Markov chain transition matrices. Then we have $\| (P)^{\ell}(i,j) -  (P')^{\ell}(i,j)   \| \le \ell \|P-P'\|$. 
\end{lemma}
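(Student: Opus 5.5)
The plan is a telescoping (hybrid) argument for powers of stochastic matrices, the same device used in~\eqref{ineq:adhoc_1} in the proof of Lemma~\ref{lemma:diff_prob_transition}. I take $\|\cdot\|$ on matrices to be the induced $\ell_\infty$ operator norm, i.e.\ the maximum absolute row sum. This is the norm implicitly used in Lemma~\ref{lemma:sensitivity_discounted} and Lemma~\ref{lemma:diff_prob_transition}. For any matrix $A$, each entry satisfies $|A(i,j)|\le \|A\|$, so it suffices to bound $\|P^{\ell}-(P')^{\ell}\|$ and then read off the $(i,j)$ entry.

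First I would write the algebraic identity
\[
P^{\ell}-(P')^{\ell}=\sum_{k=0}^{\ell-1} P^{k}\,(P-P')\,(P')^{\ell-1-k},
\]
with the convention $P^{0}=(P')^{0}=I$. It is verified by expanding the sum: consecutive terms cancel, leaving $P^{\ell}(P')^{0}-P^{0}(P')^{\ell}$. Alternatively, one can induct on $\ell$ using $P^{\ell+1}-(P')^{\ell+1}=P\,(P^{\ell}-(P')^{\ell})+(P-P')(P')^{\ell}$.

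Next I would take norms and use submultiplicativity of the induced norm:
\[
\|P^{\ell}-(P')^{\ell}\|\le \sum_{k=0}^{\ell-1}\|P^{k}\|\,\|P-P'\|\,\|(P')^{\ell-1-k}\|.
\]
Products of stochastic matrices are stochastic, and a stochastic matrix has nonnegative entries with unit row sums, so its $\ell_\infty$ operator norm equals $1$. Hence every summand is at most $\|P-P'\|$, and there are $\ell$ summands. This gives $\|P^{\ell}-(P')^{\ell}\|\le \ell\|P-P'\|$, and the entrywise claim follows.

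There is no real obstacle here. The only point needing care is the choice of norm, since stochasticity gives unit norm exactly for the $\ell_\infty$ operator norm. If a different norm $\|\cdot\|$ is intended, I would pass through $\|\cdot\|_\infty$ using norm equivalence on the finite state space, which changes the bound only by a dimension-dependent constant. That constant can be absorbed in the same way the paper absorbs such constants into $\frakc_{10}$ in Assumption~\ref{assum:14}.
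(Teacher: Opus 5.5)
Your proposal is correct and follows essentially the same route as the paper: the telescoping identity $P^{\ell}-(P')^{\ell}=\sum_{k=0}^{\ell-1}P^{k}(P-P')(P')^{\ell-1-k}$, then submultiplicativity of the $\ell_\infty$ operator norm and the fact that products of stochastic matrices have unit norm. You also fix the norm explicitly and read off the $(i,j)$ entry, which makes your write-up slightly cleaner than the paper's; the paper's final bound has $n$ where it should have $\ell$.
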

\begin{proof}
Then, the $\ell$-step transition probabilities of first chain is given by $P^{\ell}$, while that of the second is given by $(P')^{\ell}$. Now,
\nal{
(P)^{\ell} - (P')^{\ell}  = (P-P')P' \cdots P' + P(P-P')P' \cdots P' + \cdots + P^{n-1}(P-P'),
}
and hence,
\nal{
\|(P)^{\ell} - (P')^{\ell} \|_{\infty} \le \sum_{k=1}^{\ell} \|(P)^{k-1}\|_{\infty} \|P-P'\|_{\infty}\|P' P'\cdots P'\|_{\infty} \le n \|P-P'\|,
}
where we have used $ \|P^{k-1}\|_{\infty}=1$ and $\|P' P'\cdots P\|_{\infty} =1$.

\end{proof}